\documentclass{article}

\usepackage[preprint]{neurips_2026}

\usepackage{microtype}
\usepackage{graphicx}
\usepackage{subcaption}
\usepackage{booktabs} 
\usepackage{hyperref}
\usepackage{url}

\usepackage{apxproof}
\usepackage{amsmath}
\usepackage{amssymb}
\usepackage{mathtools}
\usepackage{amsthm}

\usepackage{amsfonts}  
\usepackage{commath}   
\usepackage{cancel}    
\usepackage{bm}        
\usepackage{xcolor}    
\usepackage{etoolbox}  
\usepackage{comment}   
\usepackage{mdframed}  
\usepackage{xspace}    
\usepackage{siunitx}   
\usepackage{float}     

\usepackage{doi}          

\newtheorem{problem}{Problem}

\newtheorem{example}{Example}

\DeclareMathOperator{\expect}{\mathbb{E}}
\DeclareMathOperator{\probability}{\mathbb{P}}
\DeclareMathOperator{\Var}{\operatorname{Var}}
\DeclareMathOperator{\Cov}{\operatorname{Cov}}

\usepackage[capitalize,noabbrev]{cleveref}

\theoremstyle{plain}
\newtheorem*{informal*}{Informal Theorem}
\newtheorem{theorem}{Theorem}[section]

\newtheorem{lemma}[theorem]{Lemma}

\theoremstyle{definition}
\newtheorem{definition}[theorem]{Definition}

\theoremstyle{remark}
\newtheorem{remark}[theorem]{Remark}

\usepackage[textsize=tiny]{todonotes}
\setuptodonotes{inline}

\title{Exact Gaussian Moment Matching for\\ Residual Networks: a Second-Order Method}

\author{%
  Simon Kuang and Xinfan Lin \\
  Department of Mechanical and Aerospace Engineering\\
  University of California, Davis\\
  Davis, CA 95616 \\
  \texttt{\{slku, lxflin\}@ucdavis.edu} \\
}

\counterwithin{figure}{section}
\counterwithin{table}{section}

\usepackage{multirow}

\DeclareMathOperator{\normal}{\mathrm N}

\newenvironment{after}{}{}

\begin{document}
\maketitle

\begin{abstract}
    We study the problem of propagating the mean and covariance of a general multivariate Gaussian distribution through a deep (residual) neural network using layer-by-layer moment matching.
    We close a longstanding gap 
    by deriving exact moment matching for the probit, GeLU, ReLU (as a limit of GeLU), Heaviside (as a limit of probit), and sine activation functions;
    for both feedforward and generalized residual layers.
    On random networks, we find orders-of-magnitude improvements in the KL divergence error metric, up to a millionfold, over popular alternatives.
    On a variational Bayes neural network, we show that our method attains hundredfold improvements in KL divergence from Monte Carlo ground truth over a state-of-the-art deterministic inference method.
    We also give a smooth-distance error bound showing that, under regularity assumptions, moment matching removes the leading low-variance errors and propagates higher-order local accuracy through the layers of a network.
\end{abstract}

\section{Introduction}
Given a Gaussian input distribution, what is the output distribution of a neural network?
This fundamental problem arises in robustness analysis of a network under typical perturbations \citep{wright_analytic_2024},
making predictions on noisy data using a neural network trained on noiseless data \citep{bibi_analytic_2018},
computing the loss function of a variational Bayes neural network \citep{frey_variational_1999,wu_deterministic_2019,petersen_uncertainty_2024, wright_analytic_2024,rui_li_streamlining_2025},
and understanding the wide limit of randomly initialized deep networks \citep{he_delving_2015}.


\begin{table*}
    \begin{center}
      \begin{tabular}{ll}
      \toprule
      Assumption & References \\
      \midrule
      \(\Sigma\) is small (linearized)  & \citet{titensky_uncertainty_2018, nagel_kalman-bucy-informed_2022}\\
      & \citet{petersen_uncertainty_2024, jungmann_analytical_2025} \\
      & 
        \citet{bergna_post-hoc_2025,rui_li_streamlining_2025}
      \\
      \(\Sigma\) is small (unscented) & \citet{astudillo_propagation_2011, abdelaziz_uncertainty_2015} \\
      \(\Sigma\) is diagonal          &  \citet{abdelaziz_uncertainty_2015,huber_bayesian_2020}\\
      & 
      \citet{goulet_tractable_2021, rui_li_streamlining_2025}
      \\
      & \citet{wagner_kalman_2022,akgul_deterministic_2025} \\
      \(\mu = 0\)    & \citet{cho_kernel_2009,bibi_analytic_2018}
      \\ 
      & NNGP literature \citep[Table 1]{han_fast_2022} \\
      \(\mu \to \infty\) & \citet{wu_deterministic_2019} \\
      \textbf{no assumptions}                 &  \citet{wright_analytic_2024}\\
      & this paper \\
      \bottomrule
      \end{tabular}
    \end{center}
    
    \caption{\label{tab:covariance-assumptions} Comparison of assumptions imposed on Gaussian approximations of neural network layers with input \(\mathcal N (\mu, \Sigma)\).}
\end{table*}


  The baseline method is Monte Carlo, whose sample mean and covariance are guaranteed to be asymptotically exact in the limit of infinitely many samples.
  But Monte Carlo is computationally expensive and scales poorly with input dimension \citep{tang_note_2024}.
  It is for this reason that all of the works cited above opt to approximate the mean and covariance of neural network outputs deterministically.
  Despite the fact that exactly evaluating a network's pushforward distribution is NP-hard, 
  a practically successful approximation can be found in moment matching, which approximates the distribution of a hidden layer as Gaussian and uses the Gaussian Ansatz to derive a closed form for the moments of the activation function.

  Two objectives must be met for moment matching to succeed:
  (i) compute the mean vector $\mu$ and variance-covariance matrix $\Sigma$ of any hidden layer whose pre-activations have an arbitrary multivariate Gaussian distribution,
  and
  (ii) certify that the true post-activation distribution is approximately Gaussian.
  
  The interdisciplinary literature on moment matching in neural networks has responded to these objectives by restricting the mean and covariance of the input distribution.
  Thus, the methodological and theoretical states of the art are inexact on networks where the hidden layer has more than one neuron, and omit the linear (residual) bypass architecture altogether.
  Given that many neurons are needed to learn arbitrary nonlinear functions, the prevailing assumptions severely limit the scope of moment matching.

In this work, we address both challenges.
We derive the exact closed-form, analytical expressions for the means and covariances of both feedforward and
residual layers, and certify that on multi-layer networks, moment matching is provably asymptotically more accurate than the delta method.

\paragraph{Contribution}
For an unrestricted multivariate Gaussian input, we derive closed-form mean and covariance matching for feedforward and residual layers with probit, GeLU, ReLU, Heaviside, and sine activation functions.
The derivations rely on novel techniques involving integral and differential identities of the Gaussian function, 
resolving (i).
We then prove that exact moment matching cancels the leading terms in a smooth fourth-order probability distance, and that the resulting higher-order local accuracy composes across layers, resolving (ii).

\paragraph{Related work}
Existing results on distribution propagation
restrict the parameters of the Gaussian input distribution (Table~\ref{tab:covariance-assumptions}).
%
%
\begin{table*}
    \begin{center}
      \begin{tabular}{ll}
      \toprule
      Activation function & References \\
      \midrule
      piecewise linear & \citet{cho_kernel_2009,bibi_analytic_2018}\\
      & \citet{huber_bayesian_2020,wright_analytic_2024}\\
      & \citet{akgul_deterministic_2025,wu_deterministic_2019} \\
      logistic (\(\approx\) piecewise exponential) &  \citet{astudillo_propagation_2011,abdelaziz_uncertainty_2015}\\
      logistic (\(\approx\) \(\Phi\)) & \citet{huber_bayesian_2020} \\
      Heaviside &  \citet{wu_deterministic_2019, wright_analytic_2024}\\
      GeLU & \citet{wright_analytic_2024}\\
      (\(\sin\), \(\Phi\), GeLU, ReLU, Heaviside) \textbf{+ affine} & this paper (\textbf{exact}) \\
      \bottomrule
      \end{tabular}
    \end{center}
    \caption{\label{tab:activation-functions} Activation functions for which moment propagation has been approximated.
    Note two distinct approaches to approximating the logistic function.}
\end{table*}
For certain activation functions including ReLU and GeLU,
the mean and the diagonal of the covariance matrix can be computed explicitly,
but (prior to this paper) there is no closed-form known for off-diagonal covariances of a hidden layer.
Some works take the mean-field assumption by setting them to zero \citep{huber_bayesian_2020,goulet_tractable_2021,wagner_kalman_2022,rui_li_streamlining_2025,bergna_post-hoc_2025,akgul_deterministic_2025,rui_li_streamlining_2025}.
\citet{bibi_analytic_2018} uses an analytical approximation around zero mean, and \citet{wu_deterministic_2019} uses an analytical approximation around infinite mean.
For the logistic activation function \(\sigma(x) = (1 + e^{-x})^{-1}\);
\citet{astudillo_propagation_2011,abdelaziz_uncertainty_2015}, and \citet{huber_bayesian_2020} approximate \(\sigma\) with another function having closed-form Gaussian moments,
such as a piecewise exponential function or a rescaled Normal CDF \(\Phi\).
Table~\ref{tab:activation-functions} catalogs the literature on moment approximations for activation functions.
We exemplify the failure modes of the above methods by giving single-hidden-layer counterexamples in \S\ref{sec:adversariality}.

In these prior works, the primary reason for taking the distributional restrictions is to get around the apparent intractability of \(\mathbb R^2\) improper integrals which arise in moment matching.
For example, \citet{wu_deterministic_2019} expands the covariance to second order around degenerate cases \(\mu=0\) or \(\mu=\infty\).
The \(\mu=0\) case, frequently encountered in neural network Gaussian Process (NNGP) literature \citep{han_fast_2022}, allows the simplification of two-dimensional integrals using polar coordinates.
For a general activation function, \citet{wright_analytic_2024} uses a Fourier transform to derive the exact mean and covariance matrix of the output as a formal power series in inter-neuron correlation \(\rho \approx 0\). 
The series, however, must be truncated, as each coefficient needs to be derived algebraically by hand.
In this paper, we eschew the \(\mathbb{R}^2\) setup altogether by working with probabilistic properties of Gaussian measure, such as the Stein's integration-by-parts identity and the Gaussian ODE \(\phi'(x) = -x \phi(x)\).

The closest theoretical result is \citet{petersen_uncertainty_2024}, which establishes optimal distributional approximation for a single ReLU neuron.
In contrast, our theoretical guarantee applies jointly to all of the neurons in a layer, and all of the layers in a network.
We are also not aware of prior work on moment matching in residual networks, which requires resolving cross-covariances between the pre-activation and the activation.


\paragraph{Roadmap}
\S\ref{sec:methodology} states the full uncertainty-propagation problem (Problem~\ref{prob:problem}) and the reduced single-layer Gaussian moment matching problem (Problem~\ref{prob:problem-gaussian}), and then resolves the latter with exact mean and covariance formulas for feedforward and residual layers (Lemma~\ref{lem:moment-maps}).
\S\ref{sec:adversariality} clarifies the limitations of various Gaussian moment propagation methods, including our own, through example problems.
\S\ref{sec:theoretical-guarantees} applies these formulas to address the full multilayer problem by recursive moment matching over layers and bounds the error in a smooth fourth-order integral probability metric.
\S\ref{sec:examples} validates the moment matching accuracy on single- and multilayer random networks, applies them to variational Bayes neural networks, and sketches an extension to stochastic activations.

\section{Exact, closed-form analytical expressions for mean and covariance}
\label{sec:methodology}


A neural network is a composition of layer functions \(\mathbb R ^n \to \mathbb R ^m\)
\begin{equation}
    g(x; A, b, C, d) = \sigma(A x + b) + C x + d, \label{eq:layer-function}
\end{equation}
where \(A \in \mathbb R^{m \times n}, b \in \mathbb R^m, C \in \mathbb R^{m \times n}, d \in \mathbb R^m\) are parameters;
the activation function of a neural network is denoted by \(\sigma:\mathbb R \to \mathbb R\) and applies elementwise.
Except for parameters \(A, b, C, d\), capital letters refer to random variables.
The layers of a neural network are indicated by superscripts, e.g.~\(A^k\) is a matrix of parameters for the \(k\)th layer.
If \(X\) is a square-integrable random vector, the notation \(\normal X\) refers to a random variable distributed as \(\mathcal N(\expect X, \Cov X)\).


\begin{definition}
    \label{def:neural-network}
    A neural network with \(\ell \) layers is the function \(f: \mathbb R^{n_x} \to \mathbb R^{n_y}\) defined by
    \begin{align*}
        f(x) &= f^\ell(x) \\
        f^k(x) &= g(f^{k-1}(x); A^k, b^k, C^k, d^k) & \forall k &\in \cbr{1 \ldots \ell} \\
        f^0(x) &= x
    \end{align*}
\end{definition}

Stated formally, the problem of uncertainty propagation studied in this paper is:
\begin{problem}
    \label{prob:problem}
    Let \(f\) be a neural network with \(\ell\) layers.
    Given \(X \sim \mathcal N(\mu, \Sigma)\), characterize the distribution of \(Y_\mathrm{true} = f(X)\).
\end{problem}
After layer-wise Gaussian approximation, this problem reduces to:
\begin{problem}
  \label{prob:problem-gaussian}
  Given \(X \sim \mathcal N(\mu, \Sigma)\) and \(A, b, C, d\);
  find exact expressions for \(\expect{g(X; A, b, C, d)}\) and \(\Cov g(X; A, b, C, d)\).
\end{problem}

\subsection{Our analytic moment-matching method \(Y_\mathrm{ana}\)}
In moment matching, we re-approximate each layer by a Gaussian sharing its first two moments:
\begin{definition}
    Let \(f\) be a neural network with \(\ell\) layers.
    Given \(X \sim \mathcal N(\mu, \Sigma)\), the moment-matching Gaussian approximation of \(f(X)\), 
    is the random variable \(Y_\mathrm{ana}\) defined by
    \begin{align*}
        Y_\mathrm{ana} &=  Y^\ell\\
        Y^k &= \normal g(Y^{k-1}; A^k, b^k, C^k, d^k) & \forall k &\in \cbr{1 \ldots \ell} \\
        Y^0 &= X
    \end{align*}
\end{definition}

We specify three transcendental functions needed to compute the first two Gaussian moments of a layer defined by \eqref{eq:layer-function}.
\begin{definition}
  \label{def:moment-maps}
  Given a nonlinear function \(\sigma: \mathbb{R} \to \mathbb R\),
  the functions \(M_\sigma: \mathbb{R}  \times \mathbb{R}_+ \to
  \mathbb{R}\) and \(K_\sigma, L_\sigma: \mathbb{R}^2 \times \mathbb
  R_{\geq 0}^{2 \times 2} \to \mathbb{R}\) are
  \begin{gather*}
    M_\sigma(\mu; \nu) = \expect{\sigma(X)},
    \qquad
     X \sim \mathcal N(\mu, \nu);
  \end{gather*}
  \begin{align*}
    K_\sigma(\mu_1, \mu_2; \nu_{11}, \nu_{22}, \nu_{12})
    &= \Cov(\sigma(X_1), \sigma(X_2)),
    &
    \begin{pmatrix} X_1 \\ X_2 \end{pmatrix}
    &\sim \mathcal N\del{
      \begin{pmatrix} \mu_1 \\ \mu_2 \end{pmatrix},
      \begin{pmatrix} \nu_{11} & \nu_{12} \\ \nu_{12} & \nu_{22} \end{pmatrix}
    };
  \end{align*}
  and
  \begin{align*}
    L_\sigma(\mu_1; \nu_{11}, \nu_{22}, \nu_{12})
    &= \Cov(\sigma(X_1), X_2),
    &
    \begin{pmatrix} X_1 \\ X_2 \end{pmatrix}
    &\sim \mathcal N\del{
      \begin{pmatrix} \mu_1 \\ \star \end{pmatrix},
      \begin{pmatrix} \nu_{11} & \nu_{12} \\ \nu_{12} & \nu_{22} \end{pmatrix}
    }.
  \end{align*}
\end{definition}

Approximating or computing these functions enables moment matching:
\begin{lemma}
\label{lem:moment-maps}
  For some activation function \(\sigma\), let \(g_\sigma(x; A, b, C, d) = \sigma(Ax + b) + Cx + d\).
  Let \(X \sim \mathcal N(\mu, \Sigma)\).
  Then
  \begin{align*}
    \del{\expect g_\sigma(X; A, b, C, d)}_i
    &= M_\sigma(\mu_i; \nu_{ii}) + (C\mu)_i + d_i
  \end{align*}
  and
  \begin{multline*}
      \del{\Cov g_\sigma(X; A, b, C, d)}_{i, j}
      =
        K_\sigma\del{\mu_i, \mu_j; \nu_{ii}, \nu_{jj}, \nu_{ij}}
        \\
        + L_\sigma\del{\mu_i; \nu_{ii}, \tau_{jj},\kappa_{ij}}
        + L_\sigma\del{\mu_j; \nu_{jj}, \tau_{ii},\kappa_{ji}}
        + \tau_{ij},
  \end{multline*}
  where for all valid indices \((i, j)\),
  \begin{align*}
    \mu_i &= (A\mu + b)_i
    &
    \tau_{ij}
    &= (C\Sigma C^\intercal)_{i,j}
    \\
    \nu_{ij} &= (A\Sigma A^\intercal)_{i,j}
    &
    \kappa_{ij}
    &= (A \Sigma C^\intercal)_{i,j}
  \end{align*}
\end{lemma}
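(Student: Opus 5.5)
The plan is to introduce the pre-activation and bypass variables \(U = AX + b\) and \(V = CX\). Since \((U, V)\) is an affine image of the Gaussian vector \(X\), it is jointly Gaussian. Its moments are \(\expect U = A\mu + b\), \(\Cov U = A\Sigma A^\intercal\), \(\Cov V = C\Sigma C^\intercal\) and \(\Cov(U, V) = A\Sigma C^\intercal\). In the notation of the statement these entries are exactly \(\mu_i\), \(\nu_{ij}\), \(\tau_{ij}\) and \(\kappa_{ij}\). The layer output is then \(g_\sigma(X) = \sigma(U) + V + d\), with \(\sigma\) applied elementwise. Both claims follow from bilinearity of expectation and covariance, plus the observation that each resulting term involves only a one- or two-dimensional Gaussian marginal, which is precisely the object in Definition~\ref{def:moment-maps}.

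For the mean, linearity gives \(\expect g_\sigma(X)_i = \expect \sigma(U_i) + (C\mu)_i + d_i\). Since \(U_i \sim \mathcal N(\mu_i, \nu_{ii})\), the first term is \(M_\sigma(\mu_i; \nu_{ii})\) by definition.

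For the covariance, I expand
\[
\Cov(\sigma(U_i) + V_i, \sigma(U_j) + V_j) = \Cov(\sigma(U_i), \sigma(U_j)) + \Cov(\sigma(U_i), V_j) + \Cov(V_i, \sigma(U_j)) + \Cov(V_i, V_j).
\]
The constant \(d\) drops out. Each term is then identified as follows.
\begin{itemize}
\item The pair \((U_i, U_j)\) is bivariate Gaussian with means \(\mu_i, \mu_j\) and covariance entries \(\nu_{ii}, \nu_{jj}, \nu_{ij}\). The first term is therefore \(K_\sigma(\mu_i, \mu_j; \nu_{ii}, \nu_{jj}, \nu_{ij})\).
\item The pair \((U_i, V_j)\) is bivariate Gaussian with \(\expect U_i = \mu_i\), \(\operatorname{Var} U_i = \nu_{ii}\), \(\operatorname{Var} V_j = \tau_{jj}\) and \(\Cov(U_i, V_j) = (A\Sigma C^\intercal)_{ij} = \kappa_{ij}\). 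The second term is therefore \(L_\sigma(\mu_i; \nu_{ii}, \tau_{jj}, \kappa_{ij})\).
\item The third term is the same with \(i\) and \(j\) swapped. This uses \(\Cov(U_j, V_i) = (A\Sigma C^\intercal)_{ji} = \kappa_{ji}\).
\item The last term is \(\tau_{ij}\).
\end{itemize}

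The main point needing care is that \(L_\sigma\) is well defined, i.e.\ independent of the unspecified mean \(\star\) of the second coordinate. This holds because \(\Cov(\sigma(X_1), X_2) = \Cov(\sigma(X_1), X_2 - \expect X_2)\), and the joint law of \((X_1, X_2 - \expect X_2)\) depends only on \(\mu_1\) and the covariance entries. This justifies dropping the mean of \(V_j\), namely \((C\mu)_j\).

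I will also state the standing integrability assumption that \(\sigma(U_i)\) is square integrable. This holds for all the activations considered, which have at most linear growth, and it is needed for the bilinear expansion to be legitimate. Degenerate cases such as \(\nu_{ii} = 0\) are covered provided the moment maps are read with the obvious point-mass conventions. No other obstacle is expected, since the argument is purely algebraic bookkeeping.
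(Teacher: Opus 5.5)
Your proof is correct and is the argument the paper leaves implicit. The lemma is stated in \S2 without a written proof, as an immediate consequence of the joint Gaussianity of \((AX+b, CX)\) and bilinearity of covariance, and you carry this out term by term with the right index placement (\(\kappa_{ij}\) in one cross term, \(\kappa_{ji}\) in the other). Your remarks on why \(L_\sigma\) does not depend on the unspecified mean \(\star\), and on square-integrability of \(\sigma(U_i)\), fill in details the paper only assumes implicitly.
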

Given that their domain is five-dimensional, these functions are not amenable to tabulation, and are typically approximated in the state of the art. 
In this work, instead, we compute them analytically for hidden layers with the following activation functions:
\begin{description}
  \item[probit] in App.~\ref{app:probit} by expressing \(\Phi(x) = \expect\sbr{Z \leq x}\) in terms of an auxiliary standard Normal \(Z\),
  generalizing \citet[\S~III.B]{huber_bayesian_2020} and \citet[App.~B.4]{wright_analytic_2024}.
  \item[GeLU] in App.~\ref{app:gelu} by repeated applications of the multivariate Stein's lemma and the Gaussian ODE \(\phi'(x) + x\phi(x)= 0\), generalizing \citet[App.~B.2]{wright_analytic_2024}.
  \item[ReLU] in App.~\ref{app:relu} by using the Dominated Convergence Theorem to take the pointwise limit 
  \begin{gather*}
  \operatorname{ReLU}(x) = \lim_{\lambda \to \infty} \lambda^{-1} \operatorname{GeLU}(\lambda x),
  \end{gather*}
  generalizing \citet[App.~C.3]{frey_variational_1999}, \citet[App.~A]{cho_kernel_2009}, \citet[App~A.2.2]{wu_deterministic_2019}, \citet[\S~III.C]{huber_bayesian_2020}, and \citet[App.~B.2]{wright_analytic_2024}.
  \item[Heaviside] in App.~\ref{app:heaviside} by using the Dominated Convergence Theorem to take the pointwise limit 
  \begin{gather*}
  \operatorname{Heaviside}(x) = \lim_{\lambda \to \infty} \Phi(\lambda x),
  \end{gather*}
  generalizing \citet[App.~C.2]{frey_variational_1999}, \citet[App.~A.2.1]{wu_deterministic_2019}, and \citet[App.~B.2]{wright_analytic_2024}.
  \item[sine] in App.~\ref{app:sine} by combining the characteristic function of the Normal distribution with the trigonometric identity \(\sin(x) = \del{e^{ix} - e^{-ix}} /(2i)\), generalizing \citet[App.~1]{sitzmann_implicit_2020}.
\end{description}
The calculations are interesting in themselves because they are probabilistic in nature:
we never resort to Riemann integrals against the Gaussian density.
Instead we only use the fact that Gaussian density satisfies the ordinary differential equation \(\phi'(x) = -x\phi(x)\) and the closely related Stein's identity \(\expect X f(Y) = \Cov(X, Y) \expect f'(Y)\) for jointly Normal \((X, Y)\) and differentiable \(f\), and derive from these a panoply of Gaussian integral identities in App.~\ref{app:gaussian-integrals} which are ultimately used to solve all of the required integrals.
To convey the flavor of these derivations,
we list the most original result:
{\footnotesize
\begin{multline*}
  K_\text{GeLU}(\mu_1, \mu_2; \nu_{11}, \nu_{22}, \nu_{12})
    \\
    = 
    \del{
      \mu_1 \nu_{12} + \mu_{2}\nu_{11}  - \frac{\mu_{1} \nu_{12} \nu_{11}}{
        1 + \nu_{11}
      }
    }
    \frac{1}{\sqrt{1 + \nu_{11}}}
    \Phi_{2;1}
    \del{
      \frac{\mu_1}{\sqrt{1 + \nu_{11}}}
      , \frac{\mu_2}{\sqrt{1 + \nu_{22}}};
      \frac{\nu_{12}}{\sqrt{(1 + \nu_{11})(1 + \nu_{22})}}
    }
    \\
    + \del{
      \mu_2 \nu_{12} + \mu_{1}\nu_{22} - \frac{\mu_{2} \nu_{12} \nu_{22}}{
        1 + \nu_{22}
      }
    }
    \frac{1}{\sqrt{1 + \nu_{22}}}
    \Phi_{2;1}
    \del{
      \frac{\mu_2}{\sqrt{1 + \nu_{22}}}
      , \frac{\mu_1}{\sqrt{1 + \nu_{11}}};
      \frac{\nu_{12}}{\sqrt{(1 + \nu_{11})(1 + \nu_{22})}}
    }
    \\
    + 
    \frac{
      \nu_{11} \nu_{22} + \nu_{12}^2
      \del{
        1 - \frac{\nu_{11}}{1 + \nu_{11}}
        - \frac{\nu_{22}}{1 + \nu_{22}}
      }
    }{\sqrt{(1 + \nu_{11})(1 + \nu_{22})}}
    \phi_{2}\del{
      \frac{\mu_1}{\sqrt{1 + \nu_{11}}},
      \frac{\mu_2}{\sqrt{1 + \nu_{22}}};
      \frac{\nu_{12}}{\sqrt{(1 + \nu_{11})(1 + \nu_{22})}}
    }
    \\
    + \del{\mu_1\mu_2 + \nu_{12}}
    \Phi_2\del{
      \frac{\mu_1}{\sqrt{1 + \nu_{11}}},
      \frac{\mu_2}{\sqrt{1 + \nu_{22}}};
      \frac{\nu_{12}}{\sqrt{(1 + \nu_{11})(1 + \nu_{22})}}
    }
    -M_\text{GeLU}(\mu_1; \nu_{11}) M_\text{GeLU}(\mu_2; \nu_{22}).
\end{multline*}
}

\subsection{Targets: ground truth \(Y_\mathrm{true}\) and pseudo-truth \(Y_\mathrm{pseudo}\)}
We benchmark our analytic approach and baseline methods (listed in Appendix~\ref{sec:baselines})
against the \textbf{true distribution}
\(Y_\mathrm{true} = f(X)\)
and the \textbf{pseudo-true distribution}
\(Y_\mathrm{pseudo} = \mathcal N\del{\expect Y_\mathrm{true}, \Cov Y_\mathrm{true}}\).
While \(Y_\mathrm{true}\) is the ideal answer to Problem~\ref{prob:problem}, \(Y_\mathrm{pseudo}\) is the closest (by KL divergence) Gaussian approximation to \(Y_\mathrm{true}\).



\section{Explicit examples}
\label{sec:adversariality}
In order to motivate our theoretical results, we present some explicit examples which both show why analytic moment propagation is necessary and define its fundamental limitations.
First, linear, unscented, and mean-field propagation (baselines presented in \cref{sec:baselines}) can be arbitrarily wrong:
\begin{example}[for linear propagation]
  Consider the network \(Y = \sin(X)\), where \(X \sim \mathcal N ({0, \sigma^2})\).
  Then \(Y_\mathrm{lin} = \mathcal{N}(0, \sigma^2)\).
  But in fact \(\Var Y = (1 - e^{-2\sigma^2})/2\) which tends to \(1/2\) for large \(\sigma^2\).
  So by increasing \(\sigma^2\), we can make \(Y_\mathrm{lin}\) arbitrarily wrong.
\end{example}
\begin{example}[for unscented propagation]
  Suppose that \(X \sim \mathcal N(0, 1)\), and the sigma points are \(X \in \{-\alpha, 0, \alpha\}\) for some \(\alpha > 0\).
  Then on the neural network \(Y = \sin(\alpha^{-1}\pi X)\), \(Y_\mathrm{u95}\) and \(Y_\mathrm{u02}\) will be identically zero and arbitrarily wrong.
\end{example}
\begin{example}[for mean-field propagation]
  \label{ex:mean-field}
  Consider the following (linear) network, with scalar input \(X\), hidden \(Y^1\in \mathbb{R}^m\), and scalar output \(Y\):
  \begin{align*}
    Y &= \frac{1}{m} \sum_{i=1}^m Y^1_i,
    &
    Y^1_i &= X,
    &
    X &\sim \mathcal N(0, 1).
  \end{align*}
  The mean-field approximation treats each \(Y_i^1\) as independent \(\mathcal N(0, 1)\), so it concludes \(Y\sim \mathcal N(0, m^{-1})\).
  But \(Y\) is identical to \(X\sim \mathcal N(0, 1)\).
  So by increasing \(m\), we can make \(Y_\mathrm{mfa}\) arbitrarily wrong.
\end{example}
Whereas mean-field propagation is exact only for single neurons,
the mean and variance of our method \(Y_\mathrm{ana}\) are exact 
  on all single-hidden-layer networks, which includes all of the examples above.
But there is no free lunch (moment matching is \(\sharp P\)-hard), so we cannot expect it to be exact for all multi-layer networks.
In order to understand our method's limitations, we push it past the breaking point by constructing an explicit multi-layer network combining strong non-normality with strong nonlinearity.

\begin{example}[for \(Y_\mathrm{ana}\)]
  Consider the following network, where input \(X\), output \(Y\), and hidden \(Y^1\) are scalars; \(u(x) = \bm{1}_{x \geq 0}\) is the Heaviside function; and \(\alpha\) is a weight:
  \begin{align*}
    Y &= \alpha u(Y^1 - 3),
    &
    Y^1 &= 2 u(X),
    &
    X &\sim \mathcal N(0, 1).
  \end{align*}
  At the hidden layer, \(Y^1 \) is approximated by \(\mathcal N(1, 1)\).
  Therefore \(Y_\mathrm{ana}\) is approximated by some nondegenerate Normal distribution, scaled by \(\alpha\).
  By increasing \(\alpha\), we can make \(Y_\mathrm{ana}\) arbitrarily wrong:
  \(Y_\mathrm{true}\) is identically zero because \(Y^1 - 3 \) is always negative.  
\end{example}

\section{Second-order accuracy of moment matching}
\label{sec:theoretical-guarantees}
We quantify the error from the Gaussian Ansatz using a smooth integral probability metric.
For \(h:\mathbb R^d\to\mathbb R\), define the 
the homogeneous Sobolev seminorm
\begin{align*}
  \left\|h\right\|_{\dot W^{4,\infty}}
  :=
  \max_{1\leq j\leq 4}
  \sup_{x\in\mathbb R^d}
  \|D^j h(x)\|_{\mathrm{op}}.
\end{align*}
\begin{definition}[Fourth-order smooth distance]
\label{def:smooth-distance}
For random variables \(U,V\) taking values in \(\mathbb R^d\), define
\begin{align*}
  d_4(U,V)
  :=
  \sup_{\left\|h\right\|_{\dot W^{4,\infty}}\leq 1}
  \left|
  \expect h(U)-\expect h(V)
  \right|.
\end{align*}
\end{definition}

This is the integral probability metric generated by the homogeneous unit ball of \(\dot W^{4,\infty}\), equivalently smooth test functions with uniformly bounded derivatives of orders one through four.
The smoothness assumption covers the probit, GeLU, and sine activations.

\begin{theorem}[Single-layer higher-order accuracy]
\label{thm:smooth-single-layer}
Let \(X=\mu_X+\lambda^{-1}\xi\), where \(\xi\sim\mathcal N(0,\Sigma_X)\), and let \(Y=f(X)\).
Assume that \(f:\mathbb R^n\to\mathbb R^d\) has bounded derivatives of orders \(1,\ldots,4\).
Let \(\gamma_Y\sim\mathcal N(\expect Y,\Cov Y)\) be the moment-matched Gaussian.
Then
\begin{align*}
  d_4(Y,\gamma_Y)
  =
  O(\lambda^{-4}).
\end{align*}
By contrast, the linearized Gaussian generically has \(O(\lambda^{-2})\) error in the same smooth distance because it misses the second-order mean correction.
\end{theorem}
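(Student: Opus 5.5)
The plan is to compare \(\expect h(Y)\) and \(\expect h(\gamma_Y)\) by Taylor-expanding the test function \(h\) around the common mean \(m:=\expect Y=\expect\gamma_Y\), not around \(\mu_X\). Fix \(h\) with \(\|h\|_{\dot W^{4,\infty}}\le 1\). For any square-integrable \(V\) with \(\expect V=m\), Taylor's theorem with the integral remainder gives
\begin{align*}
  \expect h(V) = h(m) + \tfrac12 D^2h(m)[\Cov V] + \tfrac16 D^3h(m)\big[\expect (V-m)^{\otimes 3}\big] + R_h(V),
  \qquad |R_h(V)|\le \tfrac1{24}\expect|V-m|^4 .
\end{align*}
We apply this with \(V=Y\) and \(V=\gamma_Y\). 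The zeroth- and second-order terms are identical because moment matching equates the mean and covariance. The third central moment of \(\gamma_Y\) vanishes. Since \(\|D^3h\|_{\mathrm{op}}\le 1\), it follows that
\[
  |\expect h(Y)-\expect h(\gamma_Y)|\le \tfrac16\big\|\expect(Y-m)^{\otimes3}\big\| + \tfrac1{24}\big(\expect|Y-m|^4+\expect|\gamma_Y-m|^4\big).
\]
This bound is uniform in \(h\), so it remains to show that each term on the right is \(O(\lambda^{-4})\).

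For the fourth moments, \(f\) is Lipschitz with constant \(L=\sup\|Df\|\). Hence \(|f(X)-f(\mu_X)|\le L\lambda^{-1}|\xi|\) and \(|m-f(\mu_X)|\le L\lambda^{-1}\expect|\xi|\). It follows that \(\expect|Y-m|^4=O(\lambda^{-4})\) because \(\xi\) has all Gaussian moments. For the Gaussian, \(\Cov Y=O(\lambda^{-2})\) gives \(\expect|\gamma_Y-m|^4 = O(\|\Cov Y\|^2)=O(\lambda^{-4})\).

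For the third central moment, write \(J=Df(\mu_X)\) and decompose \(Y-m=\lambda^{-1}J\xi+R\), where
\[
  R=\big(f(X)-f(\mu_X)-\lambda^{-1}J\xi\big)-\big(m-f(\mu_X)\big).
\]
Because \(D^2f\) is bounded, the first bracket is at most \(\tfrac12\sup\|D^2f\|\,\lambda^{-2}|\xi|^2\), and the second is its expectation. Therefore \(\expect|R|^p=O(\lambda^{-2p})\) for every \(p\). Expanding \(\expect(\lambda^{-1}J\xi+R)^{\otimes3}\) gives the leading term \(\lambda^{-3}\expect(J\xi)^{\otimes3}\), which is zero by the symmetry \(\xi\overset{d}{=}-\xi\). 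Every other term contains at least one factor of \(R\) and at most two factors of \(\lambda^{-1}J\xi\). By H\"older's inequality, each such term is \(O(\lambda^{-1}\cdot\lambda^{-1}\cdot\lambda^{-2})=O(\lambda^{-4})\). Combining these estimates gives \(d_4(Y,\gamma_Y)=O(\lambda^{-4})\).

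This cancellation of the odd cubic term is the step I expect to need care. A naive bound gives only \(O(\lambda^{-3})\), and the improvement comes entirely from the symmetry of \(\xi\) together with centering at the exact mean \(m\). Centering at \(m\) is what lets the first-order term cancel and avoids any \(\lambda^{-2}\) cross terms. Only boundedness of \(Df\) and \(D^2f\) is actually used; the hypothesis on orders three and four is more than needed, and I would note this.

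For the contrast with linearization, the linearized Gaussian is \(\mathcal N(f(\mu_X),\lambda^{-2}J\Sigma_XJ^\intercal)\). A second-order Taylor expansion with bounded \(D^3f\) and the vanishing odd moments of \(\xi\) gives \(m=f(\mu_X)+\tfrac12\lambda^{-2}\expect D^2f(\mu_X)[\xi,\xi]+O(\lambda^{-4})\). Choose a linear test function \(h(y)=v^\intercal y\) with \(|v|\le1\); it lies in the unit ball since \(D^jh=0\) for \(j\ge2\). This test function detects the mean discrepancy \(\tfrac12\lambda^{-2}v^\intercal\expect D^2f(\mu_X)[\xi,\xi]\), which is of exact order \(\lambda^{-2}\) whenever \(\expect D^2f(\mu_X)[\xi,\xi]\neq0\). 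That condition is the generic case, so the linearized Gaussian has error of order \(\lambda^{-2}\).
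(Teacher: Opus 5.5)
Your proposal is correct and follows the paper's proof. You expand \(h\) about the common mean \(\expect Y\), cancel the constant, linear and quadratic terms by moment matching, bound the quartic remainder by fourth central moments, and get \(\expect(Y-\expect Y)^{\otimes 3}=O(\lambda^{-4})\) because the \(\lambda^{-3}\) term \(\expect(J\xi)^{\otimes 3}\) vanishes by the symmetry of \(\xi\); lumping everything beyond the linear term into a single \(R\) with \(\|R\|_{L^p}=O(\lambda^{-2})\) is a tidier version of the paper's \((\mathrm{B})\)/\((\mathrm{C})\)/\((\mathrm{D})\) bookkeeping, with \(L^p\) bounds in place of \(O_p\) terms. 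One small slip in the linearization contrast: bringing the mean remainder down to \(O(\lambda^{-4})\) via vanishing odd moments needs bounded \(D^4 f\), which is assumed, not just \(D^3 f\), although the \(O(\lambda^{-3})\) remainder you get from bounded \(D^3 f\) already suffices for exact order \(\lambda^{-2}\).
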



The single-layer theorem is the local statement behind recursive moment matching.
To propagate it through the network, we need the analogue of the Lipschitz property of Wasserstein distance.
To propagate this result through the moment-matching recursion, we 
define the fourth-order composition constant of a \(C^4\) map \(g\):
\(
  C_4(g)
  :=
  \sup_{\left\|h\right\|_{\dot W^{4,\infty}}\leq 1}
  \left\|h\circ g\right\|_{\dot W^{4,\infty}}
\).
By the chain rule and the Fa\`a di Bruno formula applied to \(g(x;A,b,C,d)=\sigma(Ax+b)+Cx+d\), \(C_4(g)\) is bounded by a polynomial in \(\|A\|\), \(\|C\|\), and \(\left\|\sigma\right\|_{\dot W^{4, \infty}}\).

\begin{theorem}[Recursive smooth-distance error bound]
\label{thm:recursive-smooth-bound}
Let \(Y_\mathrm{true}^k\) and \(Y^k\) be true and moment-matched distributions at layer \(k\) in a multi-layer network.
If \(X\) has covariance \(O(\lambda^{-2})\), then \(d_4(Y_\mathrm{true}^k,Y^k)=O(\lambda^{-4})\) for all \(k \in [1\ldots \ell]\).
\end{theorem}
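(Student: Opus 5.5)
We argue by induction on \(k\), splitting the error at each layer by the triangle inequality:
\begin{align*}
  d_4(Y_\mathrm{true}^k,Y^k)
  \leq
  d_4\bigl(g^k(Y_\mathrm{true}^{k-1}),\,g^k(Y^{k-1})\bigr)
  +
  d_4\bigl(g^k(Y^{k-1}),\,\normal g^k(Y^{k-1})\bigr),
\end{align*}
where \(g^k=g(\cdot;A^k,b^k,C^k,d^k)\), \(Y_\mathrm{true}^k=g^k(Y_\mathrm{true}^{k-1})\), and \(Y^k=\normal g^k(Y^{k-1})\). The first term is the \emph{propagated} error and the second is the \emph{local} error. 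For the propagated term we use the composition constant. If \(\|h\|_{\dot W^{4,\infty}}\leq 1\), then \(h\circ g^k/C_4(g^k)\) lies in the unit ball, so
\[
  d_4\bigl(g^k(U),g^k(V)\bigr)\leq C_4(g^k)\,d_4(U,V).
\]
The constant \(C_4(g^k)\) is finite and independent of \(\lambda\), being polynomial in \(\|A^k\|,\|C^k\|,\|\sigma\|_{\dot W^{4,\infty}}\). Hence the propagated error is \(C_4(g^k)\cdot O(\lambda^{-4})\) by the induction hypothesis. The base case \(k=0\) is trivial, since \(Y^0=Y^0_\mathrm{true}=X\).

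For the local term we apply Theorem~\ref{thm:smooth-single-layer} with \(f=g^k\), which has bounded derivatives of orders \(1\) to \(4\) because \(\sigma\) does. The input to this layer is \(Y^{k-1}\), which is Gaussian by construction. So we must check that it has the required form \(\mu+\lambda^{-1}\xi\) with \(\Cov\xi\) bounded uniformly in \(\lambda\), that is, \(\Cov Y^{k-1}=O(\lambda^{-2})\). We therefore strengthen the induction to also track this covariance.

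\textbf{Covariance bound.} Since \(g^k\) is Lipschitz with constant \(L_k\leq \|\sigma'\|_\infty\|A^k\|+\|C^k\|\), the Gaussian Poincaré inequality, or simply \(\Var h(Z)\leq \expect\|h(Z)-h(z_0)\|^2\) together with the Lipschitz bound, gives
\[
  \|\Cov g^k(Y^{k-1})\|\lesssim L_k^2\,\operatorname{tr}\Cov Y^{k-1}.
\]
Moment matching preserves this covariance exactly, so \(\Cov Y^{k}=O(\lambda^{-2})\) inductively.

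\textbf{Uniformity of the constants.} This is the main obstacle. The \(O(\lambda^{-4})\) in Theorem~\ref{thm:smooth-single-layer} must hold uniformly over the family of inputs that arises. The means \(\expect Y^{k-1}\) vary with \(\lambda\), and the rescaled covariances \(\lambda^2\Cov Y^{k-1}\) vary too, though they stay bounded. I would revisit the single-layer proof, which is a Taylor expansion of \(h\circ f\) about the mean to fourth order. There, the first-order, second-order (mean-correction), and Gaussian-cumulant terms cancel up to the matched moments. The remainder is controlled by \(\sup\|D^4(h\circ f)\|\,\expect\|X-\mu\|^4\), plus cross terms involving third moments. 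The third moments vanish for symmetric Gaussian input, and the third moments of \(Y\) versus \(\gamma_Y\) differ only at order \(\lambda^{-4}\).

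Written out, this gives a bound of the form
\[
  C\bigl(\|f\|_{\dot W^{4,\infty}}\bigr)\,\bigl(\operatorname{tr}\Cov X\bigr)^2.
\]
The constant depends only on derivative bounds of \(f\) and not on the mean, because all derivatives are globally bounded. With this explicit form, the local error is \(O(\lambda^{-4})\) uniformly. Summing over the finitely many layers gives
\[
  d_4(Y_\mathrm{true}^k,Y^k)\leq \sum_{j\leq k}\Bigl(\prod_{i=j+1}^{k}C_4(g^i)\Bigr)\,O(\lambda^{-4})=O(\lambda^{-4}).
\]
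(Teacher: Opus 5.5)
Your proposal is correct and follows the paper's proof: the triangle inequality for $d_4$, the composition bound $d_4(g^k(U),g^k(V))\leq C_4(g^k)\,d_4(U,V)$, Theorem~\ref{thm:smooth-single-layer} for the local term $d_4(g^k(Y^{k-1}),Y^k)$, and unrolling the recursion from $d_4(Y^0_\mathrm{true},Y^0)=0$. Your extra steps, propagating $\Cov Y^{k-1}=O(\lambda^{-2})$ through the Lipschitz bound and making the single-layer constant uniform in the input mean and rescaled covariance, supply justification that the paper omits when it simply invokes Theorem~\ref{thm:smooth-single-layer} at every layer.
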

\begin{proof}
The proof is the triangle inequality for the integral probability metric \(d_4\), followed by the composition bound
\begin{align*}
  d_4(g(U),g(V))
  \leq
  C_4(g)d_4(U,V).
\end{align*}
The details are given in Appendix~\ref{app:theoretical-guarantees}.
\end{proof}

Theorems~\ref{thm:smooth-single-layer} and~\ref{thm:recursive-smooth-bound} formalize the main accuracy claim.
Exact moment matching cancels the first two smooth-test terms at each layer, so its local Gaussianization error is higher order than the generic error of perturbative (linear and unscented) approximations that do not exactly match the true layer moments.
Fig.~\ref{fig:deep-sine-convergence} confirms that analytic moment matching converges quadratically, rather than linearly, in the input covariance.
The recursion shows that this higher-order local accuracy is not merely a single-layer phenomenon: it is chained through the full network, with amplification controlled by explicit fourth-order composition constants depending on the weights and activation regularity.

\begin{figure}
  \centering
  \includegraphics[width=\linewidth]{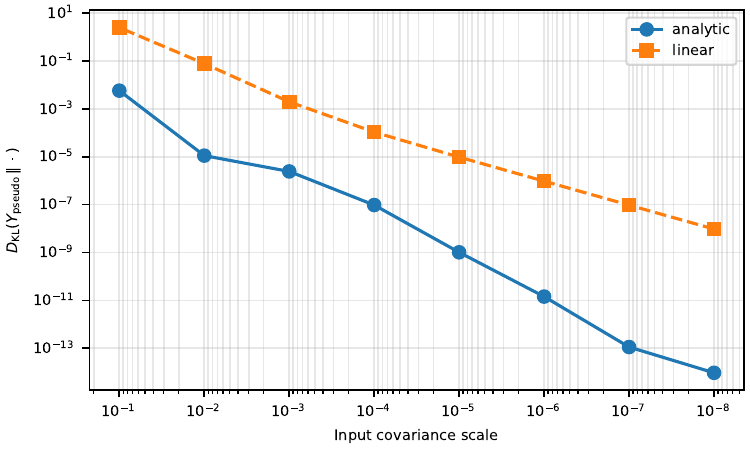}
  \caption{\label{fig:deep-sine-convergence}Empirical small-covariance convergence for a fixed deep sine network. Methodology is described in Appendix~\ref{app:deep-sine-convergence}.}
\end{figure}

\section{Examples, applications, and extensions}
\label{sec:examples}
In numerical results, we obtain \(Y_\mathrm{true}\) and \(Y_\mathrm{pseudo}\) by quasi-Monte Carlo simulation.
We report the KL divergence from \(Y_\mathrm{pseudo}\) as a measure of how accurately our method computes means and covariances.
Moreover, we report the Wasserstein distance from \(Y_\mathrm{true}\) in order to assess whether our method's Gaussian Ansatz at the output layer is close to the true, non-Gaussian distribution.
The Wasserstein distance is a metric on probability distributions whose induced topology is strictly stronger than weak convergence, and we use it because it can be computed efficiently in one dimension.
\begin{definition}
Let \(X\) and \(Y\) be random variables taking values  \(\mathbb{R}^n\).
The Wasserstein distance \(d_\mathrm{W} (X, Y)\) is 
\begin{align*}
  d_\mathrm{W}(X, Y)
  &=
  \sup_{\left\|\nabla h\right\|_{\infty} \leq 1} \expect (h(X) - h(Y)).
\end{align*}

\end{definition}
\subsection{Random networks}
\label{sec:random-networks}


We apply our method and other benchmarks to 36 different ensembles of random neural networks with more than one hidden layer.
They are designed to cover a range of architectures, weights, and activations, and to stress-test the assumptions of layer-by-layer moment matching.
We sample one neural network from each ensemble and evaluate the goodness of approximation of the output distribution for three input distributions.
In each case we evaluate baselines (\cref{sec:baselines}) \(Y_\mathrm{true}\), \(Y_\mathrm{pseudo}\), \(Y_\mathrm{ana}\), \(Y_\mathrm{mfa}\), \(Y_\mathrm{lin}\), \(Y_\mathrm{u95}\), and \(Y_\mathrm{u02}\), and compare distance to the true distribution \(d_\mathrm W(\cdot, Y_\mathrm{true})\) and KL divergence from the pseudo-true distribution \(d_\mathrm{KL}(\cdot, Y_\mathrm{pseudo})\).

\begin{figure}
\centering
  \includegraphics[width=.48\linewidth]{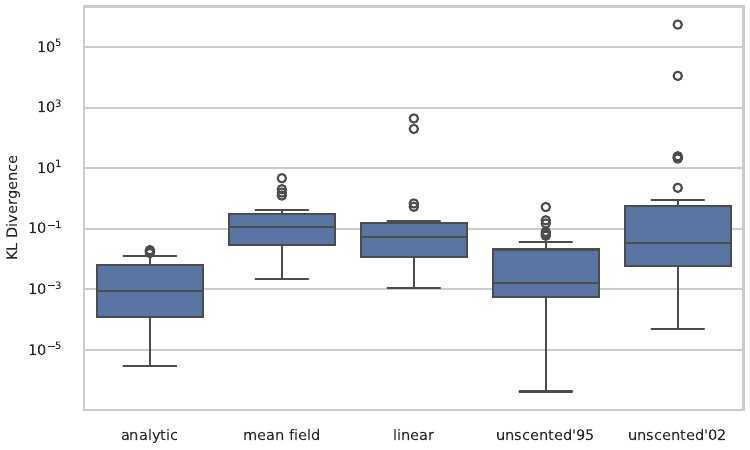}
  \hfill
  \includegraphics[width=.48\linewidth]{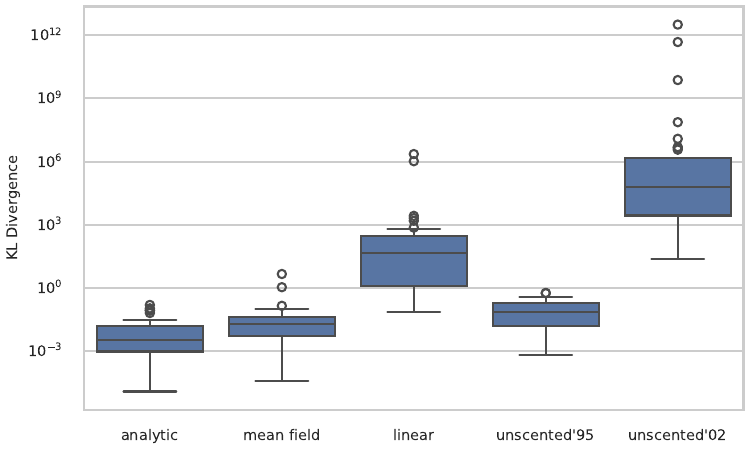}
\caption{\label{fig:kl_boxplot}Comparison of goodness of approximation (lower KL divergence is better) for all random neural networks, grouped by approximation method, in the small (left) and large (right) input variance scenarios.}
\end{figure}

Specifications of the 108 test cases, the Monte Carlo methodology, and the full results can be found in Appendix~\ref{app:random-neural-networks}.
For the case of large input variance, across all networks, our method is typically \textbf{one hundred} times better than Unscented'95, \textbf{ten thousand} times better than linear, and \textbf{one million} times better than Unscented'02 (measured by KL divergence from the pseudo-true distribution), as seen in Fig.~\ref{fig:kl_boxplot}.
Even when the variance is small enough that the delta method justifies the linearization, our method is still better than linearization, as seen in Fig.~\ref{fig:kl_boxplot}.
The medium-variance case is visualized in Appendix~\ref{sec:random-neural-networks-summaries}.


Many examples, such as the small-variance test case with wide architecture, trained weights, and sine activation, exhibit the underdispersion of the mean-field approximation predicted in Example~\ref{ex:mean-field}.
It is noteworthy that even in the small-variance regime, where our method, linearization, and the unscented transformations are first-order asymptotically equivalent, our method is often still closest to the pseudo-true distribution.
This confirms our theoretical claim that moment matching is a second-order accurate method, while linearization is only first-order.

\subsection{Weight uncertainty: variational Bayesian neural networks}
\label{sec:bayesian-networks}
We apply our distributional approximation to variational inference for Bayesian neural networks.
In this case, the input \(x\) is deterministic and the weights \(w\) are random with a prior distribution \(p(w)\).
The network predicts \(y \sim p(y \mid w, x)\).
The posterior distribution is approximated variationally as \(q(w; \theta)\) by optimizing the evidence lower bound \citep{blundell_weight_2015}:
\begin{align}
  \theta^* =
  \arg\min_\theta \Bigg\{
    -\expect _{w \sim q(w; \theta)} \log p(y \mid w, x)
    + D_{\text{KL}, w}(q(w;\theta) \mid p(w))
  \Bigg\}.
  \label{eq:elbo}
\end{align}
Our example uses the GeLU activation function and, like \citet[\S4.3]{wright_analytic_2024}, a single hidden layer; for other details, see App.~\ref{app:bayesian-networks}.

Monte Carlo Variational Inference (MCVI), a stochastic gradient method that approximates \(\expect_{w\sim q(w; \theta)}\) with Monte Carlo samples, is the gold standard for training variational Bayes neural networks.
In order to reduce the gradient variance and computational cost, \citet{wu_deterministic_2019,petersen_uncertainty_2024,wright_analytic_2024,rui_li_streamlining_2025} use  deterministic moment-matching for training (to compute gradients of evidence lower bound), and for inference 
(generate the predictive distribution).
While this setup evaluates the end-to-end learning process, it does not directly measure the accuracy of the underlying distributional approximation.
In particular, a sufficiently expressive variational model may compensate for systematic errors in distribution propagation, rendering the overall inference procedure robust to approximation bias.
However, strong end-to-end performance does not necessarily imply that the propagated distributions themselves are accurate.
\footnote{\,``Learning can still improve a bound on the log likelihood of the data even when the posterior distribution over hidden states is computed incorrectly''; see also the surrounding discussion in \citet[\S1.2]{frey_variational_1999}.
For a toy example, take \(Z \sim \mathcal N(0, 1)\); if the uncertainty propagation formula were \(aZ \sim \mathcal N(0, 2a)\), which is patently incorrect, the variational model could learn to approximate the target distribution \(\mathcal N(0, \sigma^2)\) using the parameter \(a = \sigma^2/2\).
If the approximations were accurate, one would expect the different inference methods to be indistinguishable from each other and from MCVI, as they are in \citet[Table 3]{wright_analytic_2024}.
}

Because our interest is in the correctness of distributional approximation, our experiments differ from the works cited in the following ways:
\begin{enumerate}
  \item We train the Bayesian neural network using MCVI with a large Monte Carlo batch size.
  After training has converged, the Monte Carlo predictions from this network are taken as ground truth mean and variance.
  \item We test the Bayesian neural network using Monte Carlo on the variational posterior, as well as six different techniques to propagate distributions through activation functions: the power series of \citet{wright_analytic_2024}, expanded to 1--5 terms, and our method.
  \item The figure of merit is the KL divergence (lower is better) between the pseudo-true Gaussian distribution (via Monte Carlo) and each deterministic approximation.
\end{enumerate}

We apply this method to eleven regression datasets from the UCI Machine Learning Repository.
The dataset list and full results are in Appendix~\ref{app:bayesian-networks};
we highlight the Parkinsons telemonitoring and concrete compressive strength datasets here.
As Fig.~\ref{fig:bayesian-network-telemonitoring} shows, the power series expansion of \citet{wright_analytic_2024} shows a ``dose response,'' i.e., it becomes more accurate as more terms are added.
But the most accurate approximation to the Monte Carlo predictive distribution is attained using exact moment matching, our method.
It is not exact because, for a Bayesian neural network with even a single hidden layer, 
Gaussian moment matching introduces additional error due to multiplication of random weights and layer inputs
\citep[eqq.~3--6]{goulet_tractable_2021}.

\begin{figure}
  \centering
  \begin{subfigure}{.49\linewidth}
    \centering
    \includegraphics[width=\linewidth]{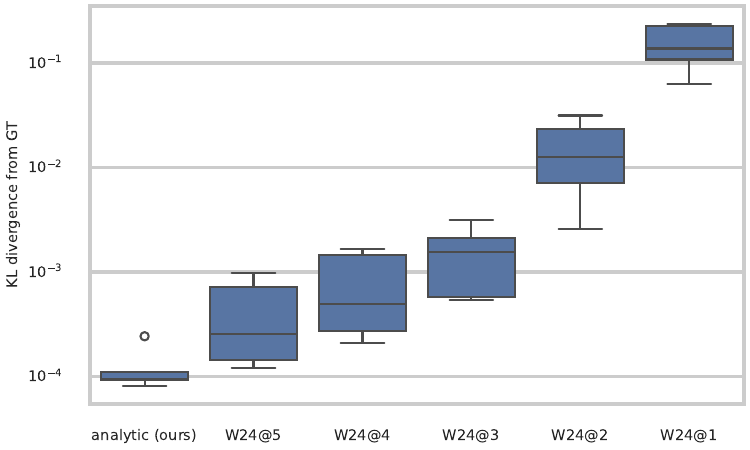}
    \caption{Parkinsons telemonitoring}
  \end{subfigure}
  \hfill
  \begin{subfigure}{.49\linewidth}
    \centering
    \includegraphics[width=\linewidth]{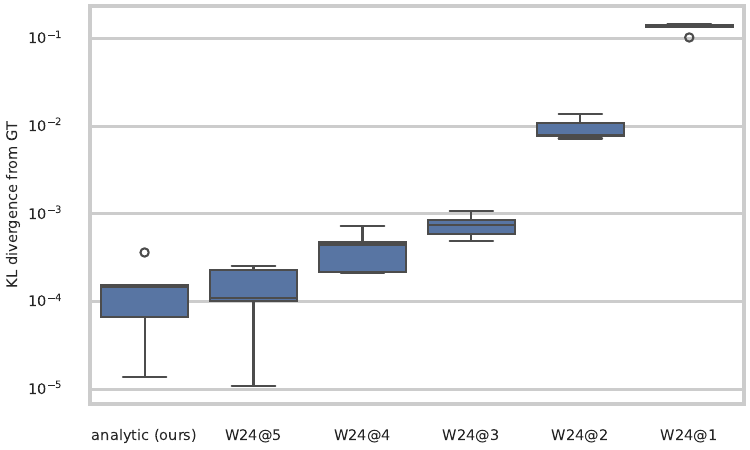}
    \caption{Concrete compressive strength}
  \end{subfigure}
  \caption{\label{fig:bayesian-network-telemonitoring} KL divergence between ground-truth predictive distribution (by Monte Carlo) and approximations for two UCI regression datasets. W24@$k$ means \citet{wright_analytic_2024} with $k$ terms in the series expansion.}
\end{figure}

\subsection{Stochastic activations}
\label{sec:stochastic-networks}
The 2024 Nobel Prize in Physics recognized the stochastic, biological inspiration for artificial neurons in machine learning \citep{davour_nobel_nodate}.
Even though biological motivation is no longer front of mind, as a curiosity we analyze the output distribution of a neural network whose activations are random processes modeled after a stochastic neuron:
\begin{align}
  \tilde \sigma(x, U) &= 2 \bm{1}_{U < \Phi(x)} - 1,
\end{align}
where at each artificial neuron, \(U \sim \operatorname{Uniform}(0,1)\) is an independent random variable.

In Appendix~\ref{app:stochastic-networks} we derive a moment-matching approximation to the distribution of the output of a stochastic neural network.
Applying this formula to 1 million samples from a stochastic version of the ``deep'' neural network of \S\ref{sec:random-networks} with a constant zero input results in a normal distribution with a good agreement (Fig.~\ref{fig:stochastic}).
Whether this line of inquiry deserves further methodological development is reserved for future work.

\section{Limitations}
A limitation of our moment matching derivation is that it does not apply to softmax (including attention), which appears to remain an open problem.
Because the numerical examples are scoped to presenting our theory and derivations, we do not test on large problems or benchmark against other methodologies for applied machine learning.

\section{Novelty and significance}


Our theoretical analysis shows that Gaussian moment matching is second-order accurate.
Our moment matching method is the first to achieve this guarantee.

Prior works on uncertainty propagation in neural networks have relied on approximations for the first and second moments under Gaussian inputs, including independence assumptions, approximate activation functions, and truncated series expansions.
In contrast, we derive exact, closed-form expressions for these moments for a range of widely used activation functions, enabling layer-wise moment propagation in deep networks without restrictive assumptions.
Our moment matching methods support general multivariate Gaussian input distributions and residual network architectures.
Empirically, we show that even beyond the guaranteed regime, the resulting method achieves orders-of-magnitude improvements in distributional accuracy over existing approaches.
We further demonstrate its effectiveness in uncertainty-aware inference on real data and in deterministic inference for variational Bayesian neural networks.
\clearpage


\section*{Reproducibility statement}
In the supplementary material, 
executable Python scripts are in \texttt{demo/}.
They reference libraries in \texttt{neural\_uncertainty\_propagation/} and generate output in \texttt{docs/manuscript/generated/}.
\texttt{deep\_sine\_convergence.py} generates Fig.~\ref{fig:deep-sine-convergence} and the results described in Appendix~\ref{app:deep-sine-convergence}.
\texttt{test\_case.py} generates the random-network tables and figures in Appendix~\ref{app:random-neural-networks}.
To reproduce \S\ref{app:stochastic-networks}, run \texttt{stochastic.py}.
To reproduce \S\ref{app:bayesian-networks}, run \texttt{bayesian.py}.


\clearpage
\bibliographystyle{plainnat}
\bibliography{nn-filtering.bib}


\appendix
\clearpage
\onecolumn
\section*{Supplementary Material}
Owing to the sheer length of the supplementary material, which includes a programmatically generated section of exhaustive random neural network test cases (\S\ref{app:random-neural-networks}), we begin with a table of contents to re-orient the reader.

\tableofcontents
\clearpage

\listoffigures
\clearpage

\section{Baselines}
\label{sec:baselines}
\subsection{The mean-field approximation \(Y_\mathrm{mfa}\)}
Following \citet{huber_bayesian_2020, wagner_kalman_2022,akgul_deterministic_2025}, we define the mean-field analytic approximation by assuming that neurons in the same hidden layer are independent:
\begin{align*}
  Y_\mathrm{mfa} &=  Y^\ell\\
  Y^k &= \mathcal N(\mu^k, \Sigma^k) & \forall k &\in \cbr{1 \ldots \ell} \\
  \mu^k &= \expect g(Y^{k-1}; A^k, b^k, C^k, d^k)
  \\
  \Sigma^k_{ij} &= \begin{cases}
    \sbr{\Cov g(Y^{k-1}; A^k, b^k, C^k, d^k)}_{ij}, & i=j
    \\
    0, &\text{else}
    \tag{mean-field assumption}
  \end{cases}
  \\
  Y^0 &= X
\end{align*}

\subsection{The linear approximation \(Y_\mathrm{lin}\)}
Following \citet{titensky_uncertainty_2018, nagel_kalman-bucy-informed_2022,petersen_uncertainty_2024, jungmann_analytical_2025}, we define the linear approximation as the asymptotically Normal output distribution (pursuant to the delta method) in the limit of small input variance:
\begin{align*}
  Y_\mathrm{lin} &= \mathcal N(f(\expect X), \nabla f(\expect X) \Cov X \nabla f(\expect X)^\intercal)
\end{align*}

\subsection{The unscented approximation(s) \(Y_\mathrm{u95}\) and \(Y_\mathrm{u02}\)}
An unscented transformation is a quadrature rule for approximating a probability measure on \(\mathbb{R}^n\) by \(2n + 1\) point masses whose locations (called sigma points) and weights satisfy certain first- and second-order moment matching conditions.
This technique was developed to improve upon linearization in nonlinear Kalman filtering \citep{julier_new_1995,julier_new_1997,julier_new_2000,julier_scaled_2002,wan_unscented_2000,julier_unscented_2004}.

There are (at least) two unscented transformations: a one-parameter family, which we refer to as Unscented'95; and a three-parameter family, which we refer to as Unscented'02.
\begin{itemize}
  \item Unscented'95 \citep{julier_new_1995,julier_new_1997,julier_new_2000}, used in \citet{astudillo_propagation_2011,abdelaziz_uncertainty_2015}, is a one-parameter family of unscented transformations, parameterized by a single shape hyperparameter \(\kappa\).
  \item Unscented'02 \citep{julier_scaled_2002,wan_unscented_2000} is a three-parameter family of unscented transformations, parameterized by three shape hyperparameters \((\alpha, \beta, \kappa)\).
  This version, with default values tuned to be drastically more localized in \(X\)-space than Unscented'95, is usually called ``the'' unscented transformation in current filtering research \citep{jiang_new_2025} and tooling \citep{ljung_unscentedkalmanfilter_2025}.
\end{itemize}

\section{Preliminaries: Gaussian integrals}
\label{app:gaussian-integrals}
\begin{definition}
  The bivariate normal CDF \(\Phi_2\) is defined by
  \begin{align*}
    \Phi_2(h, k; \rho)
    &= \probability\sbr{Z_1 \leq h, Z_2\leq k},
    \intertext{where}
    \begin{pmatrix}
      Z_1 \\ Z_2
    \end{pmatrix}
    &\sim \mathcal N\del{\begin{pmatrix} 0 \\ 0 \end{pmatrix}, \begin{pmatrix}
      1 & \rho \\ \rho & 1
    \end{pmatrix}}.
  \end{align*}
\end{definition}
\begin{remark}
  While we consider it a valid ``closed form'' atom, the bivariate normal CDF can be a difficult transcendental function
  to evaluate.
  Many software packages such as SciPy \citep{wagner_kalman_2022}
  implement the multivariate normal CDF by a (quasi-) Monte Carlo
  integration over \(\mathbb{R}^n\), which is too expensive for our purposes.
  Furthermore, we frequently use the expression \(\Phi(h, k; \rho) - \Phi(h, k; 0)\), which is
  vulnerable to cancellation error for extreme values of \(h\),
  \(k\), and \(\rho\).
  To avoid this, we use 10-point Gaussian quadrature
  of the one-dimensional proper integral
  \begin{align*}
    \Phi_2(h, k; \rho) -
    \Phi_2(h, k; 0)
    &= \int_0^\rho \partial_\rho' \Phi_2(h, k; \rho') \dif\rho',
  \end{align*}
  using
  \begin{align*}
    \partial_\rho \Phi_2(h, k; \rho)
    &= \frac{1}{2\pi \sqrt{1 - \rho^2}} e^{-\frac{1}{2(1 - \rho^2)}
    \del{h^2 + k^2 - 2 \rho h k}},
  \end{align*}
  a helpful identity found in \citet{drezner_computation_1990}.
\end{remark}

\begin{definition}
  The bivariate Normal density is
  \begin{align*}
    \phi_2(h, k; \rho)
    &= \dmd{}{2}{h}{}{k}{}\Phi_2(h, k; \rho).
  \end{align*}
\end{definition}
\begin{definition}
  The partial derivative of the joint CDF is
  \begin{align}
    \Phi_{2;1}(h, k; \rho) = \dpd{}{h}
    \Phi_2(h, k ; \rho)
  \end{align}
\end{definition}
\begin{lemma}
  The function \(\Phi_{2;1}\) satisfies
  \begin{align*}
    \Phi_{2;1}(h, k; \rho)
    &= \phi(h) \Phi\del{\frac{k - \rho h}{\sqrt{1 - \rho^2}}}.
  \end{align*}
\end{lemma}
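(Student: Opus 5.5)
We prove the identity by conditioning on the first coordinate. Write \(\Phi_2(h,k;\rho) = \probability\sbr{Z_1 \leq h, Z_2 \leq k}\) with \((Z_1, Z_2)\) standard bivariate Normal with correlation \(\rho\), and assume \(|\rho| < 1\) so that the conditional law is nondegenerate. The boundary cases \(\rho = \pm 1\) can then be read off as limits, with \(\Phi\) of \(\pm\infty\) interpreted as \(0\) or \(1\).

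\textbf{Step 1: representation.} Represent \(Z_2 = \rho Z_1 + \sqrt{1-\rho^2}\, W\), where \(W \sim \mathcal N(0,1)\) is independent of \(Z_1\). This pair has the right covariance matrix, so it has the required joint Gaussian law.

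\textbf{Step 2: conditioning.} Conditioning on \(Z_1 = z\) gives
\begin{align*}
  \probability\sbr{Z_2 \leq k \mid Z_1 = z} = \Phi\del{\frac{k - \rho z}{\sqrt{1-\rho^2}}}.
\end{align*}
By the tower property, integrating over \(z \leq h\) against the law of \(Z_1\) yields
\begin{align*}
  \Phi_2(h,k;\rho) = \int_{-\infty}^{h} \phi(z)\, \Phi\del{\frac{k-\rho z}{\sqrt{1-\rho^2}}} \dif z .
\end{align*}
This is the one place where a one-dimensional integral against the density appears; it is unavoidable, since \(\Phi_{2;1}\) is by definition a derivative in \(h\).

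\textbf{Step 3: differentiation.} Differentiate in \(h\) using the fundamental theorem of calculus. The integrand \(z \mapsto \phi(z)\Phi(\cdot)\) is continuous, so the derivative of the integral at \(h\) is the integrand evaluated at \(z = h\), which is exactly the claimed formula.

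The only point needing care is this justification of the fundamental theorem of calculus. It is routine: the integrand is continuous and bounded by \(\phi(z)\), which is integrable, so no dominated-convergence subtlety arises. I therefore expect no real obstacle beyond handling the degenerate correlation \(|\rho| = 1\), which the statement implicitly excludes.
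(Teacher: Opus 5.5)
Your proof is correct and follows the paper's argument with the details filled in. The paper simply asserts \(\Phi_{2;1}(h,k;\rho) = \phi(h)\,\probability(Z_2 \leq k \mid Z_1 = h)\) (marginal density times conditional CDF), and your representation \(\Phi_2(h,k;\rho) = \int_{-\infty}^{h} \phi(z)\,\Phi\bigl((k-\rho z)/\sqrt{1-\rho^2}\bigr)\,\mathrm{d}z\) followed by the fundamental theorem of calculus is exactly the justification of that step which the paper leaves implicit.

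One small quibble: your aside that \(\rho = \pm 1\) can be read off as a limit fails at \(h = \rho k\). There \(\Phi_2(\cdot,k;\pm 1)\) has a kink in \(h\), and the limiting formula gives \(\phi(h)/2\) rather than a derivative. Since the lemma only makes sense for \(|\rho|<1\), this does not affect the proof.
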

\begin{proof}
  Letting \(Z_1, Z_2\) be standard normal with correlation \(\rho\), there is a probabilistic interpretation of \(\Phi_{2;1}\):
  \begin{align*}
    \Phi_{2;1}(h, k; \rho)
    &= \underbrace{f_{Z_1}(h)}_{\text{marginal density}}
    \underbrace{\probability\del{Z_2 \leq k \mid Z_1 = h}}_{\text{conditional cdf}}
    \\
    &= \phi(h) \Phi\del{\frac{k - \rho h}{\sqrt{1 - \rho^2}}}.
  \end{align*}
\end{proof}

\begin{lemma}[Multivariate Stein's lemma]
\label{lem:stein-multivariate}
  Let \((X_1,\ldots, X_n)\) be a multivariate normal random vector.
  Then
  \begin{align*}
    \expect (X_1-\expect X_1) f(X_1,\ldots, X_n)
    &=
    \sum_{i = 1}^n \Cov(X_1, X_i) \expect \sbr{
      \partial_{X_i} f(X_1,\ldots, X_n)
    }
  \end{align*}
\end{lemma}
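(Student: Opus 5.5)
The plan is to prove Lemma~\ref{lem:stein-multivariate} by reducing it to the univariate Gaussian integration-by-parts identity, using a linear decomposition of the vector \((X_1,\ldots,X_n)\). Throughout, assume \(f\) is differentiable with partial derivatives whose Gaussian expectations exist; a polynomial-growth condition on \(f\) and \(\nabla f\) is enough.

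First, center all variables by writing \(X=m+W\) with \(W\sim\mathcal N(0,\Sigma)\), \(\Sigma_{1i}=\Cov(X_1,X_i)\). Then regress \(W\) on \(W_1\). If \(\Sigma_{11}=0\), then \(X_1-\expect X_1=0\) almost surely and every \(\Cov(X_1,X_i)=0\), so both sides vanish. Otherwise define \(\beta_i=\Sigma_{1i}/\Sigma_{11}\) and write \(W_i=\beta_i W_1+R_i\). Here \(R=(R_1,\ldots,R_n)\) is jointly Gaussian and uncorrelated with \(W_1\), hence independent of it, and \(R_1=0\).

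Second, condition on \(R\). Define \(\psi(w)=f(m+\beta w+R)\), viewed as a function of a scalar \(w\) for fixed \(R\). Apply the univariate Stein identity \(\expect[W_1\psi(W_1)]=\Sigma_{11}\expect[\psi'(W_1)]\). This identity follows from \(\phi'(x)=-x\phi(x)\) and one integration by parts; the boundary terms vanish because of the growth assumption. The chain rule gives \(\psi'(w)=\sum_i\beta_i\,\partial_i f(m+\beta w+R)\). Therefore
\[
\expect\sbr{W_1 f(X)\mid R}=\Sigma_{11}\sum_i \beta_i\,\expect\sbr{\partial_i f(X)\mid R}=\sum_i \Sigma_{1i}\,\expect\sbr{\partial_i f(X)\mid R}.
\]

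Third, take expectations over \(R\) using Fubini, justified by integrability, to obtain the claim.

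The main obstacle is not algebraic but technical: justifying the integration by parts, meaning that the boundary terms \(\phi(w)\psi(w)\to0\) and that the conditional expectations and Fubini interchange are valid. The paper leaves the regularity class implicit, so I would state the lemma under a growth hypothesis. For the Heaviside limits later in the paper, this could be extended by approximation together with dominated convergence.
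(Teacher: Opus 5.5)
The paper gives no proof of this lemma. It is stated in Appendix~\ref{app:gaussian-integrals} as a standard fact, with no hypotheses on \(f\), and then invoked (after permuting variables, and only for univariate and bivariate vectors) throughout Appendices~\ref{app:probit}--\ref{app:sine}. There is therefore no argument to compare against, and your proposal correctly supplies one.

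\textbf{Correctness.} The regression step is sound. \((W_1,R)\) is a linear image of \(W\) and the cross-covariance \(\Sigma_{1i}-\beta_i\Sigma_{11}\) vanishes, so \(R\) is independent of \(W_1\). A single univariate Gaussian integration by parts along the direction \(\beta\), followed by the chain rule, turns \(\Sigma_{11}\beta_i\) back into \(\Cov(X_1,X_i)\).

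\textbf{Comparison with standard proofs.} The textbook proof uses the density identity \(\nabla\phi_\Sigma(w)=-\Sigma^{-1}w\,\phi_\Sigma(w)\). Your route never inverts \(\Sigma\) and never needs a density for \(R\). This matters here: the moment maps of Definition~\ref{def:moment-maps} are evaluated at covariance matrices that may be singular, and pairs of pre-activations are exactly collinear, e.g.\ whenever the network input is scalar. An equally short alternative is to write \(W=BZ\) with \(BB^\intercal=\Sigma\) and \(Z\) standard, apply the univariate identity in each coordinate of \(Z\), and collect \(\sum_j B_{1j}B_{ij}=\Sigma_{1i}\). That version avoids your case split on \(\Sigma_{11}=0\) but is otherwise equivalent.

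\textbf{Two small precisions.}
\begin{itemize}
\item For the step \(\psi'(w)=\sum_i\beta_i\,\partial_i f\), ``differentiable'' must mean totally differentiable (e.g.\ \(f\in C^1\)). Existence of the partial derivatives alone does not give the chain rule.
\item The growth hypothesis is needed only for your boundary-term version of the univariate identity. Writing \(\psi(w)-\psi(0)=\int_0^w\psi'\) and using Fubini gives \(\expect[W_1\psi(W_1)]=\Sigma_{11}\expect[\psi'(W_1)]\) for any absolutely continuous \(\psi\) with \(\expect\lvert\psi'(W_1)\rvert<\infty\).
\end{itemize}
Either class covers every function the paper feeds into the lemma: products of \(\Phi\), \(\phi\), \(\phi'\), affine factors, \(\sin\), and \(\cos\). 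The paper never applies the lemma to a nonsmooth \(f\). The ReLU and Heaviside formulas are obtained as dominated limits of the GeLU and probit formulas, so the extension you mention is not needed.
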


\begin{lemma}[Stein simplification of \(L_\sigma\)]
  \label{lem:stein}
  Let \(L_\sigma\) be defined as in Definition \ref{def:moment-maps}, and suppose that \(\sigma\) is differentiable.
  Then
  \begin{align*}
    L_\sigma(\mu_1; \nu_{11}, \nu_{22}, \nu_{12}) &= \nu_{12}
    \expect \sigma'(Z_1),
    & Z_1 &\sim \mathcal N(\mu_1, \nu_{11}).
  \end{align*}
\end{lemma}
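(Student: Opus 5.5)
The plan is to apply the multivariate Stein's lemma (Lemma~\ref{lem:stein-multivariate}) to the pair \((X_2, X_1)\), using \(X_2\) as the centered variable. By Definition~\ref{def:moment-maps}, \((X_1, X_2)\) is jointly Gaussian with \(X_1 \sim \mathcal N(\mu_1, \nu_{11})\) and \(\Cov(X_1, X_2) = \nu_{12}\). The mean of \(X_2\) is left unspecified (\(\star\)), which is harmless because a covariance does not depend on it. First rewrite the covariance as
\begin{align*}
  L_\sigma(\mu_1; \nu_{11}, \nu_{22}, \nu_{12})
  = \Cov(\sigma(X_1), X_2)
  = \expect\sbr{(X_2 - \expect X_2)\,\sigma(X_1)}.
\end{align*}
This identity holds whenever \(\sigma(X_1)\) is integrable, since \(\expect\sbr{(X_2 - \expect X_2)\expect\sigma(X_1)} = 0\).

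Next, apply Lemma~\ref{lem:stein-multivariate} with the vector ordered as \((X_2, X_1)\) and the test function \(f(x_2, x_1) = \sigma(x_1)\). Since \(\partial_{x_2} f = 0\) and \(\partial_{x_1} f = \sigma'(x_1)\), only one term of the sum survives:
\begin{align*}
  \expect\sbr{(X_2 - \expect X_2)\,\sigma(X_1)}
  = \Cov(X_2, X_2)\cdot 0 + \Cov(X_2, X_1)\,\expect \sigma'(X_1)
  = \nu_{12}\,\expect\sigma'(X_1).
\end{align*}
Because \(X_1 \sim \mathcal N(\mu_1, \nu_{11})\), this expectation equals \(\expect \sigma'(Z_1)\), which gives the claim. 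The parameter \(\nu_{22}\) drops out, as the statement indicates it should.

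The main obstacle is technical: Stein's lemma needs integrability of \(\sigma'(X_1)\) and of \((X_2 - \expect X_2)\sigma(X_1)\), which differentiability alone does not supply. For the activations considered (probit, GeLU, sine, and their limits) these conditions hold because of polynomial growth, so I would make this assumption explicit. The degenerate case \(\nu_{11} = 0\) also needs checking separately. There \(X_1\) is almost surely the constant \(\mu_1\), so both sides vanish: the left because the covariance with a constant is zero, and the right because \(\nu_{12}^2 \le \nu_{11}\nu_{22} = 0\) forces \(\nu_{12} = 0\).

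Alternatively, one can avoid the multivariate lemma by regression. Write \(X_2 - \expect X_2 = (\nu_{12}/\nu_{11})(X_1 - \mu_1) + W\), where \(W\) is independent of \(X_1\) and has mean zero. Then the univariate Stein identity \(\expect\sbr{(X_1 - \mu_1)\sigma(X_1)} = \nu_{11}\expect\sigma'(X_1)\) gives the same result, and this route treats the case \(\nu_{11} > 0\) cleanly.
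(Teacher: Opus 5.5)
Your proof is correct and is essentially the paper's own argument: the paper proves this in one line as a straightforward application of the multivariate Stein's lemma (Lemma~\ref{lem:stein-multivariate}), and you carry that out explicitly by centering \(X_2\) and taking \(f(x_2,x_1)=\sigma(x_1)\). Your remarks on integrability of \(\sigma'(X_1)\) and on the degenerate case \(\nu_{11}=0\) fill in details that the paper leaves implicit.
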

\begin{proof}
  Straightforward application of Lemma~\ref{lem:stein-multivariate}.
\end{proof}

\begin{lemma}[Univariate integrals]
\label{lem:gaussian-univariate}
  If \(Z \sim \mathcal N(\mu, \nu)\), then
  \begin{subequations}
  \begin{align}
    \expect \Phi(Z) &= \Phi\del{\frac{\mu}{\sqrt{1 + \nu}}}
    \label{eq:gaussian-Phi}
    \\
    \expect \phi(Z) &= \frac{1}{\sqrt{1 + \nu}}
    \phi\del{\frac{\mu}{\sqrt{1 + \nu}}}
    \label{eq:gaussian-phi}
    \\
    \expect Z \Phi(Z)
    &= \frac{\mu}{(1 + \nu)^{3/2}}
    \phi\del{\frac{\mu}{\sqrt{1 + \nu}}}
    \label{eq:gaussian-ZPhi}
  \end{align}
  \end{subequations}
\end{lemma}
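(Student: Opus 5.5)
The three identities follow from one device. We represent \(\Phi\) and \(\phi\) as expectations over an auxiliary standard Normal \(W\) independent of \(Z\). We then use the fact that a linear combination of independent Gaussians is Gaussian, and finish \eqref{eq:gaussian-ZPhi} with Stein's lemma. Write \(Z = \mu + \sqrt{\nu}\,\xi\) with \(\xi\) standard Normal.

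For \eqref{eq:gaussian-Phi}, write \(\Phi(z) = \probability[W \le z]\). Then by the tower property
\begin{align*}
  \expect \Phi(Z) = \probability[W - Z \le 0].
\end{align*}
Since \(W - Z \sim \mathcal N(-\mu, 1+\nu)\), this equals \(\Phi\del{\mu/\sqrt{1+\nu}}\).

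For \eqref{eq:gaussian-phi}, I would avoid direct integration and differentiate \eqref{eq:gaussian-Phi} in \(\mu\). Differentiating under the expectation is justified by dominated convergence, since \(\phi\) is bounded. Because \(\partial_\mu \Phi(\mu + \sqrt\nu\,\xi) = \phi(Z)\), we get
\begin{align*}
  \expect\phi(Z) = \partial_\mu \Phi\del{\mu/\sqrt{1+\nu}} = (1+\nu)^{-1/2}\phi\del{\mu/\sqrt{1+\nu}}.
\end{align*}
This is exactly the claim.

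For \eqref{eq:gaussian-ZPhi}, decompose \(\expect Z\Phi(Z) = \mu\,\expect\Phi(Z) + \expect (Z-\mu)\Phi(Z)\). The univariate case of Lemma~\ref{lem:stein-multivariate} gives \(\expect (Z-\mu)\Phi(Z) = \nu\, \expect \phi(Z)\), which by \eqref{eq:gaussian-phi} equals \(\nu(1+\nu)^{-1/2}\phi\del{\mu/\sqrt{1+\nu}}\).

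The main obstacle is reconciling this with the stated right-hand side, which contains only a \(\phi\) term with coefficient \(\mu/(1+\nu)^{3/2}\). The Stein computation instead gives
\begin{align*}
  \expect Z\Phi(Z) = \mu\,\Phi\del{\frac{\mu}{\sqrt{1+\nu}}} + \frac{\nu}{\sqrt{1+\nu}}\,\phi\del{\frac{\mu}{\sqrt{1+\nu}}}.
\end{align*}
A check at \(\mu = 0\) shows the two disagree. The stated formula gives \(0\). The true value is \(\expect \xi\sqrt\nu\,\Phi(\sqrt\nu\,\xi) > 0\) for \(\nu > 0\), because \(\Phi\) is increasing.

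So I would first re-check whether the intended integrand is \(Z\phi(Z)\) rather than \(Z\Phi(Z)\). That version is also handled by Stein's lemma. Since \(\phi'(z) = -z\phi(z)\), we have \(\expect(Z-\mu)\phi(Z) = -\nu\,\expect Z\phi(Z)\), so \((1+\nu)\expect Z\phi(Z) = \mu\,\expect\phi(Z)\). Hence
\begin{align*}
  \expect Z\phi(Z) = \frac{\mu}{(1+\nu)^{3/2}}\,\phi\del{\frac{\mu}{\sqrt{1+\nu}}},
\end{align*}
which matches the displayed right-hand side exactly.

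I would therefore prove \eqref{eq:gaussian-ZPhi} in the form \(\expect Z\phi(Z)\), flag the likely typo, and also record the correct \(\expect Z\Phi(Z)\) formula above for use in the GeLU computations.
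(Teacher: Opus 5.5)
Your proposal is correct and follows the paper's own proof on all three items: an auxiliary independent standard Normal for \eqref{eq:gaussian-Phi}, differentiating \eqref{eq:gaussian-Phi} under the expectation in the mean (the paper shifts by \(t\)) for \eqref{eq:gaussian-phi}, and centering, Stein's lemma and the Gaussian ODE, then solving for the repeated term, for \eqref{eq:gaussian-ZPhi}. Your diagnosis of \eqref{eq:gaussian-ZPhi} is right: the paper's proof actually computes \(\expect Z\phi(Z)\) and Lemma~\ref{lem:gelu L} uses it in that form, so the \(\Phi\) in the statement is a typo, and your formula for \(\expect Z\Phi(Z)\) is exactly the one derived in Lemma~\ref{lem:gelu M}.
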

\begin{proof}[Proof of \eqref{eq:gaussian-Phi}]
  Introducing an independent \(Z \sim \mathcal{N}(0, 1)\), we have
  \begin{align}
    \expect \Phi(Z)
    &= \expect \probability \sbr{{Z \leq X} \mid X}
    \\
    &= \probability\sbr{{Z \leq X}}
    \tag{by the law of total probability}
    \\
    &=  \probability \sbr{Z - X \leq 0}
  \end{align}
  We conclude by noting that the random variable \(Z - X\) has a
  Normal distribution with mean \(-\mu\) and variance \(1 + \nu\).
\end{proof}
\begin{proof}[Proof of \eqref{eq:gaussian-phi}]
  We use \(\phi = \Phi'\).
  \begin{align}
    \expect \phi(Z)
    &= \expect \left.\dod{}{t}\right|_{t = 0}
    \Phi(Z+t)
    \\
    &= \left.\dod{}{t}\right|_{t = 0}
    \expect \Phi(Z+t)
    \tag{dominated convergence theorem}
    \\
    &= \left.\dod{}{t}\right|_{t = 0}
    \Phi\del{\frac{\mu + t}{\sqrt{1 + \nu}}}
    \tag{by \eqref{eq:gaussian-Phi}}
  \end{align}
\end{proof}
\begin{proof}[Proof of \eqref{eq:gaussian-ZPhi}]
  We use the Gaussian ODE
  \begin{align}
    \phi'(x) + x \phi(x) &= 0.
  \end{align}
  Centering and applying Lemma~\ref{lem:stein-multivariate},
  \begin{align}
    \expect Z \phi(Z)
    &= \expect (Z - \mu) \phi(Z) + \mu \expect \phi(Z)
    \tag{centering}
    \\
    &= \nu\expect \phi'(Z) + \mu \expect \phi(Z)
    \tag{Stein's}
    \\
    &= -\nu \expect Z \Phi (Z) + \mu \expect \phi(Z)
    \tag{Gaussian ODE}
    \intertext{Collecting like terms and solving for (I),}
    \del{1 + \nu} \expect Z \phi (Z)
    &= \mu \expect \phi(Z)
    \\
    \expect Z \phi (Z)
    &= \frac{\mu}{1 + \nu} \expect \phi(Z)
    \\
    &= \frac{\mu}{(1 + \nu)^{3/2}} \phi\del{
      \frac{\mu}{\sqrt{1 + \nu}}
    }
    \tag{by \eqref{eq:gaussian-phi}}
  \end{align}
\end{proof}

\begin{lemma}[Bivariate integrals]
  \label{lem:gaussian-bivariate}
  Let
  \begin{align*}
    \begin{pmatrix}
      X_1 \\ X_2
    \end{pmatrix}
    \sim \mathcal{N}\del{
      \begin{pmatrix}
        \mu_1 \\ \mu_2
      \end{pmatrix},
      \begin{pmatrix}
        \nu_{11} & \nu_{12} \\ \nu_{12} & \nu_{22}
      \end{pmatrix}
    }.
  \end{align*}
  Then
  \begin{subequations}
  \begin{align}
    \expect \Phi(X_1) \Phi(X_2)
    &=
    \Phi_2\del{
      \frac{\mu_1}{\sqrt {1+ \nu_{11}}},
      \frac{\mu_2}{\sqrt {1+ \nu_{22}}};
      \frac{\nu_{12}}{\sqrt{(1 + \nu_{11})(1 + \nu_{22})}}
    }
    \label{eq:gaussian-Phi-Phi}\\
    \expect \phi(X_1) \Phi(X_2)
    &= 
    \frac{1}{\sqrt{1 + \nu_{11}}}
    \Phi_{2;1}
    \del{
      \frac{\mu_1}{\sqrt{1 + \nu_{11}}}
      , \frac{\mu_2}{\sqrt{1 + \nu_{22}}};
      \frac{\nu_{12}}{\sqrt{(1 + \nu_{11})(1 + \nu_{22})}}
    }
    \label{eq:gaussian-phi-Phi}
    \\
    \expect \phi(X_1) \phi(X_2)
    &=
    \frac{1}{\sqrt{(1 + \nu_{11})( 1 + \nu_{22})}}
    \phi_{2}\del{
      \frac{\mu_1}{\sqrt{1 + \nu_{11}}},
      \frac{\mu_2}{\sqrt{1 + \nu_{22}}};
      \frac{\nu_{12}}{\sqrt{(1 + \nu_{11})(1 + \nu_{22})}}
    }
    \label{eq:gaussian-phi-phi}
    \\
    \expect \phi'(X_1) \Phi(X_2)
    &= 
    \frac{-\mu_1}{1 + \nu_{11}} \expect \phi(X_1) \Phi(X_2)
    + \frac{-\nu_{12}}{1 + \nu_{11}} \expect \phi(X_1) \phi(X_2)
    \label{eq:gaussian-phi'-Phi}
  \end{align}
  \end{subequations}
\end{lemma}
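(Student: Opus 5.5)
We prove the four identities in order, each reducing to the univariate results of Lemma~\ref{lem:gaussian-univariate} or to the preceding identity.

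\textbf{Identity \eqref{eq:gaussian-Phi-Phi}.} We mirror the proof of \eqref{eq:gaussian-Phi}. Introduce independent standard normals \(W_1, W_2\), independent of \((X_1, X_2)\). Then
\[
\Phi(X_1)\Phi(X_2) = \probability[W_1 \le X_1,\ W_2 \le X_2 \mid X_1, X_2].
\]
Taking expectations and applying the law of total probability gives \(\probability[W_1 - X_1 \le 0,\ W_2 - X_2 \le 0]\). The vector \((X_1 - W_1, X_2 - W_2)\) is bivariate normal with mean \((\mu_1,\mu_2)\), variances \(1+\nu_{11}\) and \(1+\nu_{22}\), and covariance \(\nu_{12}\). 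Standardizing each coordinate yields \(\Phi_2\) evaluated at \(\mu_i/\sqrt{1+\nu_{ii}}\), with the stated correlation.

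\textbf{Identities \eqref{eq:gaussian-phi-Phi} and \eqref{eq:gaussian-phi-phi}.} As in the proof of \eqref{eq:gaussian-phi}, write \(\phi(X_1) = \partial_t|_{t=0}\Phi(X_1+t)\). The shift \(t\) changes \(\mu_1\) to \(\mu_1 + t\) and leaves the covariance unchanged. Because \(\phi\) is bounded, dominated convergence lets us exchange derivative and expectation. Differentiating \eqref{eq:gaussian-Phi-Phi} in \(\mu_1\) then gives, by the chain rule, the factor \(1/\sqrt{1+\nu_{11}}\) times \(\Phi_{2;1}\), which is \eqref{eq:gaussian-phi-Phi}. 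Applying the same shift to \(X_2\) in \eqref{eq:gaussian-phi-Phi} produces \(\partial_k \Phi_{2;1} = \phi_2\) times \(1/\sqrt{1+\nu_{22}}\), which is \eqref{eq:gaussian-phi-phi}. The only point needing care here is that the correlation argument does not depend on the means, so no additional terms arise when differentiating.

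\textbf{Identity \eqref{eq:gaussian-phi'-Phi}.} This step is the main obstacle, and we follow the approach of \eqref{eq:gaussian-ZPhi}.

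1. By the Gaussian ODE, \(\phi'(X_1) = -X_1\phi(X_1)\). Centering gives
\[
\expect \phi'(X_1)\Phi(X_2) = -\mu_1\expect\phi(X_1)\Phi(X_2) - \expect (X_1-\mu_1)\phi(X_1)\Phi(X_2).
\]
2. Apply Lemma~\ref{lem:stein-multivariate} to the second term with \(f(x_1,x_2)=\phi(x_1)\Phi(x_2)\):
\[
\expect (X_1-\mu_1)\phi(X_1)\Phi(X_2) = \nu_{11}\expect\phi'(X_1)\Phi(X_2) + \nu_{12}\expect\phi(X_1)\phi(X_2).
\]
3. Substituting back, the unknown \(I := \expect\phi'(X_1)\Phi(X_2)\) appears on both sides, with coefficient \(1\) on the left and \(-\nu_{11}\) on the right. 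Moving it to the left gives \((1+\nu_{11})I = -\mu_1\expect\phi(X_1)\Phi(X_2) - \nu_{12}\expect\phi(X_1)\phi(X_2)\). Dividing by \(1+\nu_{11}\) gives exactly \eqref{eq:gaussian-phi'-Phi}.

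We must also justify applying Stein's lemma. The function \(f\) has bounded derivatives, so the required integrability holds.

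Degenerate covariances, where \(\nu_{11}=0\) or \(|\rho|=1\), follow from these identities by continuity in the parameters.
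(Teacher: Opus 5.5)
Your proof is correct and follows the paper's argument essentially step for step. You use auxiliary independent standard normals for \eqref{eq:gaussian-Phi-Phi}, and differentiate under the expectation in the means for \eqref{eq:gaussian-phi-Phi} and \eqref{eq:gaussian-phi-phi}; differentiating \eqref{eq:gaussian-phi-Phi} in \(\mu_2\) is the same computation as the paper's mixed partial of \eqref{eq:gaussian-Phi-Phi}. You then use the Gaussian ODE plus the multivariate Stein's lemma, solving for the self-referential term, for \eqref{eq:gaussian-phi'-Phi}. Your closing continuity remark is harmless but unnecessary: the transformed correlation \(\nu_{12}/\sqrt{(1+\nu_{11})(1+\nu_{22})}\) always has modulus strictly less than one, and none of your steps require \((X_1,X_2)\) to be nondegenerate.
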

\begin{proof}[Proof of \eqref{eq:gaussian-Phi-Phi}]
  Introduce independent
  \(Z_1, Z_2 \sim \mathcal N(0, 1)\).
  Then
  \begin{align}
    \expect \Phi(X_1) \Phi (X_2) 
    &= \expect \probability\sbr{Z_1 \leq X_1 \mid X_1}
    \probability\sbr{Z_2 \leq X_2 \mid X_2}
    \\
    &= \expect \probability\sbr{Z_1 \leq X_1, Z_2 \leq X_2 \mid X_1, X_2}
    \tag{independence}
    \\
    &= \probability\sbr{Z_1 \leq X_1, Z_2 \leq X_2}
    \tag{by the law of total probability}
    \\
    &= \probability\sbr{Z_1 - X_1 \leq 0, Z_2 - X_2 \leq 0}.
  \end{align}
  We conclude by using the fact that \((Z_1 - X_1, Z_2 - X_2)\) is
  jointly Normal with distribution
  \begin{align}
    \begin{pmatrix}
      Z_1 - X_1
      \\
      Z_2 - X_2
    \end{pmatrix}
    \sim
    \mathcal{N}\del{
      \begin{pmatrix}
        -\mu_1
        \\
        -\mu_2
      \end{pmatrix},
      \begin{pmatrix}
        1 + \nu_{11}
        &
        \nu_{12}
        \\
        \nu_{12}
        &
        1 + \nu_{22}
      \end{pmatrix}
    }.
  \end{align}
\end{proof}
\begin{proof}[Proof of \eqref{eq:gaussian-phi-Phi}]
  Using \(\phi = \Phi'\), we have
   \begin{align}
    \MoveEqLeft \expect \phi(X_1) \Phi(X_2)
    \notag
    \\
    &= 
    \expect \left. \dod{}{t}\right|_{t=0} \Phi(X_1 + t) \Phi(X_2)
    \\
    &= \left. \dod{}{t}\right|_{t=0}
    \expect \Phi(X_1 + t) \Phi(X_2)
    \tag{dominated convergence theorem}\\
    &= \left. \dod{}{t}\right|_{t=0}
    \probability \del{
      Z_1 - X_1 - t \leq 0, Z_2 - X_2 \leq 0
    }
    \tag{introducing \(Z_1, Z_2 \sim \mathcal N(0, 1)\)}
    \\
    &= \left. \dod{}{t}\right|_{t=0}
    \Phi_2
    \del{
      \frac{\mu_1 + t}{\sqrt{1 + \nu_{11}}}
      , \frac{\mu_2}{\sqrt{1 + \nu_{22}}};
      \frac{\nu_{12}}{\sqrt{(1 + \nu_{11})(1 + \nu_{22})}}
    }
    \\
    &= \frac{1}{\sqrt{1 + \nu_{11}}}
    \Phi_{2;1}
    \del{
      \frac{\mu_1}{\sqrt{1 + \nu_{11}}}
      , \frac{\mu_2}{\sqrt{1 + \nu_{22}}};
      \frac{\nu_{12}}{\sqrt{(1 + \nu_{11})(1 + \nu_{22})}}
    }
  \end{align}
\end{proof}
\begin{proof}[Proof of \eqref{eq:gaussian-phi-phi}]
Using \(\phi = \Phi'\) in both terms,
\begin{align}
  \expect \phi(X_1) \phi(X_2)
    &= 
    \expect \left.\dmd{}{2}{t}{}{s}{}\right|_{t=0, s=0}
    \Phi(X_1 + t) \Phi(X_2 + s)
    \\
    &= \left.\dmd{}{2}{t}{}{s}{}\right|_{t=0, s=0}
    \expect \Phi(X_1 + t) \Phi(X_2 + s)
    \tag{dominated convergence theorem}
    \\
    &= \left.\dmd{}{2}{t}{}{s}{}\right|_{t=0, s=0}
    \probability\del{
      Z_1 - X_1 - t \leq 0, Z_2 - X_2 - s \leq 0
    }
    \tag{introducing \(Z_1, Z_2 \sim \mathcal N(0, 1)\)}
    \\
    &= \left.\dmd{}{2}{t}{}{s}{}\right|_{t=0, s=0}
    \Phi_{2}\del{
      \frac{\mu_1 + t}{\sqrt{1 + \nu_{11}}},
      \frac{\mu_2 + s}{\sqrt{1 + \nu_{22}}};
      \frac{\nu_{12}}{\sqrt{(1 + \nu_{11})(1 + \nu_{22})}}
    }
    \\
    &=
    \frac{1}{\sqrt{1 + \nu_{11}} \, \sqrt{1 + \nu_{22}}}
    \phi_{2}\del{
      \frac{\mu_1}{\sqrt{1 + \nu_{11}}},
      \frac{\mu_2}{\sqrt{1 + \nu_{22}}};
      \frac{\nu_{12}}{\sqrt{(1 + \nu_{11})(1 + \nu_{22})}}
    }
    \tag{pdf is derivative of cdf}
\end{align}
\end{proof}
\begin{proof}[Proof of \eqref{eq:gaussian-phi'-Phi}]
  We use the Gaussian ODE
  \begin{align}
    \phi'(x) + x \phi(x) &= 0.
  \end{align}
  Using this fact,
  \begin{align}
    \expect \phi'(X_1) \Phi(X_2)
    &= -\expect X_1 \phi(X_1) \Phi(X_2)
    \\
    &= -\mu_1 \expect \phi(X_1) \Phi(X_2)
      - \expect (X_1 - \mu_1) \phi(X_1) \Phi(X_2)
    \tag{centering}
    \\
    &= -\mu_1 \expect \phi(X_1) \Phi(X_2)
      - \nu_{11} \underbrace{
        \expect \phi'(X_1) \Phi(X_2)
      }_{\text{same as LHS}}
      - \nu_{12} \expect \phi(X_1) \phi(X_2)
      \tag{Lemma~\ref{lem:stein-multivariate}}
    \\
    \intertext{Collecting like terms and solving for \(\expect \phi'(X_1) \Phi(X_2)\),,}
    \expect \phi'(X_1) \Phi(X_2)
    &= \frac{-\mu_1}{1 + \nu_{11}} \expect \phi(X_1) \Phi(X_2)
      + \frac{-\nu_{12}}{1 + \nu_{11}} \expect \phi(X_1) \phi(X_2).
  \end{align}
\end{proof}

\section{Derivation of uncertainty propagation formulas for probit activation}
\label{app:probit}
In this appendix, we derive the \(M_\sigma\), \(K_\sigma\), and
\(L_\sigma\) functions (Def.~\ref{def:moment-maps}) for the normal
CDF activation function
\begin{align}
  \sigma(x) &= \sqrt{\frac{2}{\pi}} \int_{0}^x e^{-\frac{1}{2} u^2}
  du = 2 \Phi(x) - 1, \quad \Phi(x) = \probability_{Z \sim \mathcal
  N(0, 1)}(Z \leq x)
  \label{eq:activation}
\end{align}

\begin{lemma}
\label{lem:mean-probit}
  Let \(M_\sigma\) be defined as in Def.~\ref{def:moment-maps}, with
  \(\sigma\) as in \eqref{eq:activation}.
  \begin{align*}
    M_\sigma(\mu; \nu) &= \sigma\del{\frac{\mu}{\sqrt{1 + \nu}}}
  \end{align*}
  \label{lem:mean}
\end{lemma}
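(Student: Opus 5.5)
The plan is to reduce the claim to the univariate identity \eqref{eq:gaussian-Phi} of Lemma~\ref{lem:gaussian-univariate} using linearity of expectation. By \eqref{eq:activation}, \(\sigma = 2\Phi - 1\) is bounded, so for \(X \sim \mathcal N(\mu, \nu)\) the expectation \(\expect \sigma(X)\) exists. Linearity then gives
\begin{align*}
  M_\sigma(\mu; \nu) = \expect \sigma(X) = 2\expect \Phi(X) - 1 .
\end{align*}

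Next I apply \eqref{eq:gaussian-Phi}, which says \(\expect \Phi(X) = \Phi\del{\mu/\sqrt{1+\nu}}\). Its proof introduces an auxiliary standard normal \(Z\) independent of \(X\), writes \(\expect\Phi(X) = \probability\sbr{Z - X \leq 0}\), and uses \(Z - X \sim \mathcal N(-\mu, 1+\nu)\). Substituting this into the display above yields
\begin{align*}
  M_\sigma(\mu; \nu) = 2\Phi\del{\frac{\mu}{\sqrt{1+\nu}}} - 1 .
\end{align*}
By \eqref{eq:activation} again, the right-hand side is exactly \(\sigma\del{\mu/\sqrt{1+\nu}}\), which is the claim.

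There is no real obstacle here. The only point that needs care is the degenerate case \(\nu = 0\), which the domain \(\mathbb R_+\) may or may not include. In that case \(X = \mu\) almost surely, and the formula reduces to \(\sigma(\mu)\), which is trivially correct. The argument for \eqref{eq:gaussian-Phi} also covers this case, since \(Z - X \sim \mathcal N(-\mu, 1)\) remains valid when \(\nu = 0\).
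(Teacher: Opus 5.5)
Your proposal is correct and is essentially the paper's proof: the paper likewise writes $\expect \sigma(X) = 2\expect \Phi(X) - 1$ by linearity and then applies \eqref{eq:gaussian-Phi} of Lemma~\ref{lem:gaussian-univariate} directly. Your remarks on integrability and on the degenerate case $\nu = 0$ are harmless extras that the paper leaves implicit.
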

\begin{proof}
  Let \(X \sim \mathcal N(\mu, \nu)\).
  \begin{align}
    \expect \sigma(X)
    &= -1+2\expect \Phi\del{X}
  \end{align}
  This is a direct application of Lemma~\ref{lem:gaussian-univariate}, \eqref{eq:gaussian-Phi}.
\end{proof}

\begin{lemma}
  Let \(K_\sigma\) be defined as in Def.~\ref{def:moment-maps}, with
  \(\sigma\) as in \eqref{eq:activation}.
  \begin{align*}
    K(\mu_1, \mu_2; \nu_{11}, \nu_{22}, \nu_{12}) &=
    4 \left.
    \Phi_2\del{
      \frac{\mu_1}{\sqrt{1 + \nu_{11}}},
      \frac{\mu_2}{\sqrt{1 + \nu_{22}}};
      \rho'
    }
    \right|^{\rho' = \frac{\nu_{12}}{\sqrt{(1 +
    \nu_{11})(1 + \nu_{22})}}}_{\rho' = 0},
  \end{align*}
  where \(\Phi_2\) is the bivariate normal CDF.
\end{lemma}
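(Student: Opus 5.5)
Since \(\sigma = 2\Phi - 1\), covariance is bilinear and insensitive to additive constants, so
\begin{align*}
  K_\sigma(\mu_1,\mu_2;\nu_{11},\nu_{22},\nu_{12})
  = \Cov(2\Phi(X_1)-1,\, 2\Phi(X_2)-1)
  = 4\del{\expect \Phi(X_1)\Phi(X_2) - \expect\Phi(X_1)\,\expect\Phi(X_2)}.
\end{align*}
The plan is to evaluate each of the two terms in the bracket with a result already in Appendix~\ref{app:gaussian-integrals}.

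For the first term, the joint expectation, we apply Lemma~\ref{lem:gaussian-bivariate}, \eqref{eq:gaussian-Phi-Phi}. It gives \(\Phi_2\) evaluated at the standardized arguments \(h = \mu_1/\sqrt{1+\nu_{11}}\) and \(k = \mu_2/\sqrt{1+\nu_{22}}\), with correlation \(\rho' = \nu_{12}/\sqrt{(1+\nu_{11})(1+\nu_{22})}\).

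For the product of marginal expectations, we apply Lemma~\ref{lem:gaussian-univariate}, \eqref{eq:gaussian-Phi}, to each factor. This gives \(\Phi(h)\Phi(k)\).

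Next we identify \(\Phi(h)\Phi(k)\) with \(\Phi_2(h,k;0)\). This holds because, at zero correlation, a standard bivariate normal has independent components, so its joint CDF factorizes. Substituting both pieces gives
\[
  4\del{\Phi_2(h,k;\rho') - \Phi_2(h,k;0)},
\]
which is exactly the claimed evaluation bracket from \(\rho'=0\) to \(\rho'=\nu_{12}/\sqrt{(1+\nu_{11})(1+\nu_{22})}\).

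There is no real obstacle here. The only points to check are bookkeeping:
\begin{itemize}
  \item the constants \(-1\) drop out of the covariance, and the factor \(2\cdot 2 = 4\) is correct;
  \item \(\rho'\) lies in \([-1,1]\), so \(\Phi_2\) is well defined. This follows from Cauchy--Schwarz on the PSD matrix \((\nu_{ij}) + I\), which is the covariance used in the proof of \eqref{eq:gaussian-Phi-Phi}.
\end{itemize}
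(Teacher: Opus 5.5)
Your proposal is correct and follows the paper's route: reduce to $4\,\Cov(\Phi(X_1),\Phi(X_2))$, evaluate the joint term with \eqref{eq:gaussian-Phi-Phi}, and evaluate the product of marginals with \eqref{eq:gaussian-Phi}. Your version also makes explicit two points the paper leaves implicit: the identification $\Phi(h)\Phi(k)=\Phi_2(h,k;0)$ and the check that $|\rho'|\le 1$.
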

\begin{proof}
  The covariance may be expressed as
  \begin{align}
    \Cov (\sigma(X_1), \sigma(X_2))
    &= 4 \Cov (\Phi(X_1), \Phi(X_2))
    \\
    &= 4\expect \Phi(X_1) \Phi (X_2) - 4\expect \Phi(X_1) \expect \Phi(X_2)
  \end{align}
  Now apply Lemma~\ref{lem:gaussian-bivariate} to the first term.
\end{proof}

\begin{lemma}
  Let \(L_\sigma\) be defined as in Def.~\ref{def:moment-maps}, with
  \(\sigma\) as in \eqref{eq:activation}.
  Then
  \begin{align*}
    L_\sigma(\mu_1; \nu_{11}, \nu_{22}, \nu_{12}) &= 2
    \frac{\nu_{12}}{\sqrt{1 + \nu_{11}}}
    \phi\del{\frac{\mu_1}{\sqrt{1 + \nu_{11}}}}.
  \end{align*}
\end{lemma}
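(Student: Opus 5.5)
The plan is to apply Lemma~\ref{lem:stein} and then a univariate Gaussian identity. The probit activation \(\sigma(x) = 2\Phi(x) - 1\) is smooth, with derivative \(\sigma'(x) = 2\phi(x)\) bounded by \(2\phi(0)\). The hypotheses of Lemma~\ref{lem:stein} therefore hold, since Stein's identity only needs \(\expect\abs{\sigma'(Z_1)} < \infty\), which is immediate. Lemma~\ref{lem:stein} gives
\begin{align*}
  L_\sigma(\mu_1; \nu_{11}, \nu_{22}, \nu_{12}) = \nu_{12}\, \expect \sigma'(Z_1) = 2\nu_{12}\, \expect \phi(Z_1),
  \qquad Z_1 \sim \mathcal N(\mu_1, \nu_{11}).
\end{align*}
Note that the unspecified mean \(\star\) of \(X_2\) and the variance \(\nu_{22}\) drop out. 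This is consistent with the covariance being invariant under shifts of \(X_2\), and with Stein's lemma involving only \(\Cov(X_1, X_2) = \nu_{12}\).

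Next I would invoke Lemma~\ref{lem:gaussian-univariate}, equation \eqref{eq:gaussian-phi}, with \(\mu = \mu_1\) and \(\nu = \nu_{11}\). This gives \(\expect\phi(Z_1) = (1+\nu_{11})^{-1/2}\phi\bigl(\mu_1/\sqrt{1+\nu_{11}}\bigr)\). Substituting yields exactly the claimed formula
\begin{align*}
  L_\sigma = 2\frac{\nu_{12}}{\sqrt{1+\nu_{11}}}\,\phi\del{\frac{\mu_1}{\sqrt{1+\nu_{11}}}}.
\end{align*}

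There is essentially no obstacle. The only point needing care is the justification of Stein's lemma in the degenerate case \(\nu_{11} = 0\). In that case both sides must be checked directly. The left side is \(\Cov(\sigma(\mu_1), X_2) = 0\), and positive semidefiniteness forces \(\nu_{12} = 0\), so the right side vanishes as well. Alternatively, one can avoid Stein's lemma entirely. Write \(X_2 - \expect X_2 = (\nu_{12}/\nu_{11})(X_1 - \mu_1) + W\) with \(W\) independent of \(X_1\). Then apply the univariate Stein identity \(\expect (X_1-\mu_1)\sigma(X_1) = \nu_{11}\expect\sigma'(X_1)\), which reaches the same expression.
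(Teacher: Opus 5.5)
Your proof is correct and follows the paper's argument: Lemma~\ref{lem:stein} reduces $L_\sigma$ to $2\nu_{12}\expect\phi(X_1)$, and equation~\eqref{eq:gaussian-phi} finishes it. Your remarks on the degenerate case $\nu_{11}=0$ and on the regression-decomposition route via the univariate Stein identity are also correct; the paper does not spell these out.
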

\begin{proof}
  Using Lemma \ref{lem:stein}, we have
  \begin{align}
    L_\sigma(\mu_1; \nu_{11}, \nu_{22}, \nu_{12})
    &= \nu_{12} \expect \sigma'(X_1)
    \\
    &= 2\nu_{12} \expect \phi(X_1)
  \end{align}
  where \(\phi = \Phi'\).
  By Lemma~\ref{lem:gaussian-univariate}, applied to \(X_1 \sim \mathcal N(\mu_1, \nu_{11})\),
  \begin{align}
    \expect \phi(X_1) = \frac{1}{\sqrt{1 + \nu_{11}}}
    \phi\del{\frac{\mu_1}{\sqrt{1 + \nu_{11}}}}.
  \end{align}
\end{proof}

\section{Derivation of uncertainty propagation formulas for GeLU activation}
\label{app:gelu}
In this appendix, we derive the \(M_\sigma\), \(K_\sigma\) and
\(L_\sigma\) functions (Def.~\ref{def:moment-maps}) for the GeLU activation function
\begin{align}
  \label{eq:activation-gelu}
  \sigma(x) &= x \Phi(x).
\end{align}



\begin{lemma}
\label{lem:gelu M}
  Let \(M_\sigma\) be defined as in Def.~\ref{def:moment-maps}, with
  \(\sigma\) as in \eqref{eq:activation-gelu}.
  Then
  \begin{align*}
    M_\sigma(\mu; \nu)
    &= \frac{\nu}{\sqrt{1 + \nu}} \phi\del{\frac{\mu}{\sqrt{1 + \nu}}}
    + \mu \Phi\del{\frac{\mu}{\sqrt{1 + \nu}}}.
  \end{align*}
\end{lemma}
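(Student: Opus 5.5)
Let \(X \sim \mathcal N(\mu, \nu)\). The goal is \(M_\sigma(\mu;\nu) = \expect X\Phi(X)\). The plan is to split off the mean,
\begin{align*}
  \expect X\Phi(X) = \mu\,\expect\Phi(X) + \expect (X-\mu)\Phi(X),
\end{align*}
and handle the two pieces separately. The first piece is immediate from Lemma~\ref{lem:gaussian-univariate}, \eqref{eq:gaussian-Phi}: it equals \(\mu\,\Phi\del{\mu/\sqrt{1+\nu}}\). This is the second term of the claimed formula.

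For the centered piece, I will apply Stein's lemma (Lemma~\ref{lem:stein-multivariate} with \(n=1\), \(f=\Phi\)). This gives
\begin{align*}
  \expect (X-\mu)\Phi(X) = \nu\,\expect\phi(X).
\end{align*}
Lemma~\ref{lem:gaussian-univariate}, \eqref{eq:gaussian-phi}, then evaluates \(\expect\phi(X)\) as \((1+\nu)^{-1/2}\phi\del{\mu/\sqrt{1+\nu}}\). The result is \(\frac{\nu}{\sqrt{1+\nu}}\phi\del{\frac{\mu}{\sqrt{1+\nu}}}\), the first term of the claim. Adding the two pieces completes the proof.

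There is essentially no obstacle here. The only point worth checking is that Stein's lemma applies, which requires \(\Phi\) to be differentiable with bounded derivative \(\phi\); that makes all expectations finite and the integration by parts valid. Note that \eqref{eq:gaussian-ZPhi} as printed is not what we want: its proof actually computes \(\expect Z\phi(Z)\). I therefore avoid citing it and go through Stein directly as above.
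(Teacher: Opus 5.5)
Your proposal is correct and matches the paper's proof step for step. Like the paper, you center $X$, evaluate $\mu\,\expect\Phi(X)$ via \eqref{eq:gaussian-Phi}, use Stein's lemma to turn $\expect(X-\mu)\Phi(X)$ into $\nu\,\expect\phi(X)$, and finish with \eqref{eq:gaussian-phi}; your remark that \eqref{eq:gaussian-ZPhi} is misprinted (its proof actually computes $\expect Z\phi(Z)$) is accurate, and the paper's proof of this lemma does not use it either.
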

\begin{proof}
  Let \(X \sim \mathcal N(\mu, \nu)\) and \(Z \sim \mathcal N(0, 1)\).
  Centering \(X\),
  \begin{align*}
    M_\sigma(\mu; \nu) 
    = \expect X \Phi(X)
    &= \expect (X - \mu) \Phi(X) + \mu \expect \Phi(X)
    \\
    &= \expect (X - \mu) \Phi(X) + \mu \Phi\del{\frac{\mu}{\sqrt{1 + \nu}}}
    \tag{using Lemma~\ref{lem:gaussian-univariate}, \eqref{eq:gaussian-Phi}}
    \\
    &= \nu\expect \phi(X) + \mu \Phi\del{\frac{\mu}{\sqrt{1 + \nu}}}
    \tag{using Lemma~\ref{lem:stein-multivariate}}
    \\
    &=
    \frac{\nu}{\sqrt{1 + \nu}}
    \phi\del{\frac{\mu}{\sqrt{1 + \nu}}}
    + \mu \Phi\del{\frac{\mu}{\sqrt{1 + \nu}}}
    \tag{using Lemma~\ref{lem:gaussian-univariate}, \eqref{eq:gaussian-phi}}
  \end{align*}
\end{proof}


\begin{lemma}
\label{lem:gelu K}
  Let \(K_\sigma\) be defined as in Def.~\ref{def:moment-maps}, with
  \(\sigma\) as in \eqref{eq:activation-gelu}.
  Then
  \begin{multline*}
    K_\sigma(\mu_1, \mu_2; \nu_{11}, \nu_{22}, \nu_{12})
    \\
    = 
    \del{
      \mu_1 \nu_{12} + \mu_{2}\nu_{11}  - \frac{\mu_{1} \nu_{12} \nu_{11}}{
        1 + \nu_{11}
      }
    }
    \frac{1}{\sqrt{1 + \nu_{11}}}
    \Phi_{2;1}
    \del{
      \frac{\mu_1}{\sqrt{1 + \nu_{11}}}
      , \frac{\mu_2}{\sqrt{1 + \nu_{22}}};
      \frac{\nu_{12}}{\sqrt{(1 + \nu_{11})(1 + \nu_{22})}}
    }
    \\
    + \del{
      \mu_2 \nu_{12} + \mu_{1}\nu_{22} - \frac{\mu_{2} \nu_{12} \nu_{22}}{
        1 + \nu_{22}
      }
    }
    \frac{1}{\sqrt{1 + \nu_{22}}}
    \Phi_{2;1}
    \del{
      \frac{\mu_2}{\sqrt{1 + \nu_{22}}}
      , \frac{\mu_1}{\sqrt{1 + \nu_{11}}};
      \frac{\nu_{12}}{\sqrt{(1 + \nu_{11})(1 + \nu_{22})}}
    }
    \\
    + 
    \frac{
      \nu_{11} \nu_{22} + \nu_{12}^2
      \del{
        1 - \frac{\nu_{11}}{1 + \nu_{11}}
        - \frac{\nu_{22}}{1 + \nu_{22}}
      }
    }{\sqrt{(1 + \nu_{11})(1 + \nu_{22})}}
    \phi_{2}\del{
      \frac{\mu_1}{\sqrt{1 + \nu_{11}}},
      \frac{\mu_2}{\sqrt{1 + \nu_{22}}};
      \frac{\nu_{12}}{\sqrt{(1 + \nu_{11})(1 + \nu_{22})}}
    }
    \\
    + \del{\mu_1\mu_2 + \nu_{12}}
    \Phi_2\del{
      \frac{\mu_1}{\sqrt{1 + \nu_{11}}},
      \frac{\mu_2}{\sqrt{1 + \nu_{22}}};
      \frac{\nu_{12}}{\sqrt{(1 + \nu_{11})(1 + \nu_{22})}}
    }
    \\
    -M_\sigma(\mu_1; \nu_{11}) M_\sigma(\mu_2; \nu_{22})
  \end{multline*}
\end{lemma}
\begin{proof}
  Let
  \begin{align}
  \begin{pmatrix} X_1 \\ X_2 \end{pmatrix}
   \sim \mathcal N\left(\begin{pmatrix}
    \mu_1
    \\
    \mu_2
  \end{pmatrix},
  \begin{pmatrix}
    \nu_{11}
    &
    \nu_{12}
    \\
    \nu_{12}
    &
    \nu_{22}
  \end{pmatrix}\right).
  \end{align}
  Our strategy is to use the formula
  \begin{align*}
    \Cov (\sigma(X_1), \sigma(X_2))
    = \expect \sigma(X_1) \sigma(X_2) - \expect \sigma(X_1) \expect \sigma(X_2)
  \end{align*}
  and use Lemma~\ref{lem:gelu M} for the second term.
  For the cross term, we first center the random variables:
  \begin{multline}
    \expect X_1 \Phi(X_1) X_2 \Phi(X_2)
    = \underbrace{
      \expect (X_1 - \mu_1) \Phi(X_1) (X_2 - \mu_2) \Phi(X_2)
    }_{\text{(I)}}
    \\
    +
    \underbrace{
      \expect \mu_1 \Phi(X_1) (X_2 - \mu_2) \Phi(X_2)
    }_{\text{(II)}}
    \\
    +
    \underbrace{
      \expect (X_1 - \mu_1)\Phi(X_1) \mu_2  \Phi(X_2)
    }_{\text{(III)}}
    \\
    +
    \underbrace{
      \expect \mu_1 \Phi(X_1) \mu_2 \Phi(X_2)
    }_{\text{(IV)}}
    \label{eq:gelu K 1}
  \end{multline}
  For term (I) of \eqref{eq:gelu K 1},
  \begin{align}
    \text{(I)} &= \expect (X_1 - \mu_1) \Phi(X_1) (X_2 - \mu_2) \Phi(X_2)
    \\
    &= \sum_{i = 1}^2 \Cov(X_1, X_i)
    \expect \partial_{X_i} \del{\Phi(X_1) (X_2 - \mu_2) \Phi(X_2)}
    \tag{Lemma~\ref{lem:stein-multivariate}}
    \\
    &= 
    \underbrace{
      \nu_{11} \expect \phi(X_1) (X_2 - \mu_2) \Phi(X_2)
    }_{\text{(I.a)}}
    + \nu_{12} \expect 
      \Phi(X_1) \Phi(X_2)
      + \underbrace{
        \nu_{12}
         \expect \Phi(X_1) (X_2 - \mu_2) \phi(X_2)
      }_{\text{(I.b)}}
      \label{eq:gelu K 2}
  \end{align}
  We apply Lemma~\ref{lem:stein-multivariate} to (I.a) and (I.b) to get
  \begin{align}
    \text{(I.a)} + \text{(I.b)}
    &= 
    \nu_{11} \expect \phi(X_1) (X_2 - \mu_2) \Phi(X_2)
    + \nu_{12} \expect \Phi(X_1) (X_2 - \mu_2) \phi(X_2)
    \\
    \begin{split}
      &= \nu_{11} \expect \sbr{
        \nu_{12} \phi'(X_1) \Phi(X_2) + \nu_{22} \phi(X_1) \phi(X_2)
      }
      \\
      &\quad + \nu_{12} \expect \sbr{
        \nu_{12}\phi(X_1) \phi(X_2) + \nu_{22} \Phi(X_1) \phi'(X_2)
      }
    \end{split}
    \\
    \begin{split}
      &= \nu_{12} \nu_{11} \underbrace{
        \expect \phi'(X_1) \Phi(X_2)
      }_{\text{(I.c)}}
      + \nu_{12} \nu_{22} \underbrace{
        \expect \Phi(X_1) \phi'(X_2)
      }_{\text{(I.d)}}
      \\&\quad+\del{
        \nu_{11} \nu_{22} + \nu_{12}^2
      }
      \expect \phi(X_1) \phi(X_2)
    \end{split}
    \label{eq:gelu K 3}
  \end{align}
  By Lemma~\ref{lem:gaussian-bivariate}, \eqref{eq:gaussian-phi'-Phi}, we have
  \begin{align*}
    \text{(I.c)} &= \frac{-\mu_1}{1 + \nu_{11}} \expect \phi(X_1) \Phi(X_2)
      + \frac{-\nu_{12}}{1 + \nu_{11}} \expect \phi(X_1) \phi(X_2)
      \\
    \text{(I.d)}
    &= \frac{-\mu_2}{1 + \nu_{22}} \expect \Phi(X_1) \phi(X_2)
      + \frac{-\nu_{12}}{1 + \nu_{22}} \expect \phi(X_1) \phi(X_2)
  \end{align*}
  Combining these last two equations with \eqref{eq:gelu K 2} and \eqref{eq:gelu K 3}, (I) becomes
  \begin{multline}
    \text{(I)}
    = \nu_{12} \nu_{11} \del{
        \frac{-\mu_1}{1 + \nu_{11}} \expect \phi(X_1) \Phi(X_2)
        + \frac{-\nu_{12}}{1 + \nu_{11}} \expect \phi(X_1) \phi(X_2)
      }
      \\
      + \nu_{12} \nu_{22} \del{
        \frac{-\mu_2}{1 + \nu_{22}} \expect \Phi(X_1) \phi(X_2)
      + \frac{-\nu_{12}}{1 + \nu_{22}} \expect \phi(X_1) \phi(X_2)
      }
      \\
      +\del{
        \nu_{11} \nu_{22} + \nu_{12}^2
      }
      \expect \phi(X_1) \phi(X_2)
      \\
      + \nu_{12} \expect 
      \Phi(X_1) \Phi(X_2)
  \end{multline}
  which simplifies to
  \begin{multline}
    \text{(I)}
    = 
    \frac{-\mu_{1}\nu_{12} \nu_{11}}{
      1 + \nu_{11}
    }
    \expect \phi(X_1) \Phi(X_2)
    + \frac{-\mu_2 \nu_{12} \nu_{22}}{
      1 + \nu_{22}
    }
    \expect \Phi(X_1) \phi(X_2)
    \\
    + \sbr{
      \nu_{11} \nu_{22} + \nu_{12}^2
      \del{
        1 - \frac{\nu_{11}}{1 + \nu_{11}}
        - \frac{\nu_{22}}{1 + \nu_{22}}
      }
    } \expect \phi(X_1) \phi(X_2)
    \\
    + \nu_{12} \expect \Phi(X_1) \Phi(X_2)
  \end{multline}

  For terms (II-III) of \eqref{eq:gelu K 1}, we apply Lemma~\ref{lem:stein-multivariate} to get
  \begin{align*}
    \text{(II)} = \expect \mu_1 \Phi(X_1) (X_2 - \mu_2) \Phi(X_2)
    &= 
    \mu_1 \del{
      \nu_{12} \expect \phi(X_1) \Phi(X_2)
      +
      \nu_{22} \expect \Phi(X_1) \phi(X_2)
    }
    \\
    \text{(III)} = \expect (X_1 - \mu_1)\Phi(X_1) \mu_2  \Phi(X_2)
    &= 
    \mu_2 \del{
      \nu_{11} \expect \phi(X_1) \Phi(X_2)
      +
      \nu_{12} \expect \Phi(X_1) \phi(X_2)
    }
    \\
  \end{align*}
  Now \eqref{eq:gelu K 1} becomes:
  \begin{multline}
    \expect X_1 \Phi(X_1) X_2 \Phi(X_2)
    \\
    = 
    \del{
      \mu_1 \nu_{12} + \mu_{2}\nu_{11} - \frac{\mu_{1} \nu_{12} \nu_{11}}{
        1 + \nu_{11}
      }
    }
    \underbrace{
      \expect \phi(X_1) \Phi(X_2)
    }_{\text{(V.a)}}
    \\
    + \del{
      \mu_2 \nu_{12} + \mu_{1}\nu_{22} - \frac{\mu_{2} \nu_{12} \nu_{22}}{
        1 + \nu_{22}
      }
    }
    \underbrace{
      \expect \Phi(X_1) \phi(X_2)
    }_{\text{(V.b)}}
    \\
    + \sbr{
      \nu_{11} \nu_{22} + \nu_{12}^2
      \del{
        1 - \frac{\nu_{11}}{1 + \nu_{11}}
        - \frac{\nu_{22}}{1 + \nu_{22}}
      }
    }
    \underbrace{
      \expect \phi(X_1) \phi(X_2)
    }_{\text{(V.c)}}
    \\
    + \del{\mu_1\mu_2 + \nu_{12}} \underbrace{
      \expect \Phi(X_1) \Phi(X_2)
    }_{\text{(V.d)}}
    \label{eq:gelu K 5}
  \end{multline}
  By Lemma~\ref{lem:gaussian-bivariate}, \eqref{eq:gaussian-phi-Phi},
  \begin{align}
    \text{(V.a)}
    &=
    \frac{1}{\sqrt{1 + \nu_{11}}}
    \Phi_{2;1}
    \del{
      \frac{\mu_1}{\sqrt{1 + \nu_{11}}}
      , \frac{\mu_2}{\sqrt{1 + \nu_{22}}};
      \frac{\nu_{12}}{\sqrt{(1 + \nu_{11})(1 + \nu_{22})}}
    }
    \label{eq:gelu K 4}
    \intertext{and}
    \text{(V.b)} &= \frac{1}{\sqrt{1 + \nu_{22}}}
    \Phi_{2;1}
    \del{
      \frac{\mu_2}{\sqrt{1 + \nu_{22}}}
      , \frac{\mu_1}{\sqrt{1 + \nu_{11}}};
      \frac{\nu_{12}}{\sqrt{(1 + \nu_{11})(1 + \nu_{22})}}
    }.
  \end{align}
  By Lemma~\ref{lem:gaussian-bivariate}, \eqref{eq:gaussian-phi-phi},
  \begin{align}
    \text{(V.c)}
    &=
    \frac{1}{\sqrt{1 + \nu_{11}} \, \sqrt{1 + \nu_{22}}}
    \phi_{2}\del{
      \frac{\mu_1}{\sqrt{1 + \nu_{11}}},
      \frac{\mu_2}{\sqrt{1 + \nu_{22}}};
      \frac{\nu_{12}}{\sqrt{(1 + \nu_{11})(1 + \nu_{22})}}
    }.
  \end{align}
  By Lemma~\ref{lem:gaussian-bivariate}, \eqref{eq:gaussian-Phi-Phi},
  \begin{align}
    \text{(V.d)}
    &= \Phi_2\del{
      \frac{\mu_1}{\sqrt{1 + \nu_{11}}},
      \frac{\mu_2}{\sqrt{1 + \nu_{22}}};
      \frac{\nu_{12}}{\sqrt{(1 + \nu_{11})(1 + \nu_{22})}}
    }
  \end{align}
  
  The final form of \eqref{eq:gelu K 5} is
  \begin{multline}
    \expect X_1 \Phi(X_1) X_2 \Phi(X_2)
    \\
    = 
    \del{
      \mu_1 \nu_{12} + \mu_{2}\nu_{11}  - \frac{\mu_{1} \nu_{12} \nu_{11}}{
        1 + \nu_{11}
      }
    }
    \frac{1}{\sqrt{1 + \nu_{11}}}
    \Phi_{2;1}
    \del{
      \frac{\mu_1}{\sqrt{1 + \nu_{11}}}
      , \frac{\mu_2}{\sqrt{1 + \nu_{22}}};
      \frac{\nu_{12}}{\sqrt{(1 + \nu_{11})(1 + \nu_{22})}}
    }
    \\
    + \del{
      \mu_2 \nu_{12} + \mu_{1}\nu_{22} - \frac{\mu_{2} \nu_{12} \nu_{22}}{
        1 + \nu_{22}
      }
    }
    \frac{1}{\sqrt{1 + \nu_{22}}}
    \Phi_{2;1}
    \del{
      \frac{\mu_2}{\sqrt{1 + \nu_{22}}}
      , \frac{\mu_1}{\sqrt{1 + \nu_{11}}};
      \frac{\nu_{12}}{\sqrt{(1 + \nu_{11})(1 + \nu_{22})}}
    }
    \\
    + 
    \frac{
      \nu_{11} \nu_{22} + \nu_{12}^2
      \del{
        1 - \frac{\nu_{11}}{1 + \nu_{11}}
        - \frac{\nu_{22}}{1 + \nu_{22}}
      }
    }{\sqrt{(1 + \nu_{11})(1 + \nu_{22})}}
    \phi_{2}\del{
      \frac{\mu_1}{\sqrt{1 + \nu_{11}}},
      \frac{\mu_2}{\sqrt{1 + \nu_{22}}};
      \frac{\nu_{12}}{\sqrt{(1 + \nu_{11})(1 + \nu_{22})}}
    }
    \\
    + \del{\mu_1\mu_2 + \nu_{12}}
    \Phi_2\del{
      \frac{\mu_1}{\sqrt{1 + \nu_{11}}},
      \frac{\mu_2}{\sqrt{1 + \nu_{22}}};
      \frac{\nu_{12}}{\sqrt{(1 + \nu_{11})(1 + \nu_{22})}}
    }
    \label{eq:gelu K 6}
  \end{multline}
  The conclusion follows from subtracting \(M(\mu_1; \nu_{11}) M(\mu_2; \nu_{22})\).
\end{proof}



\begin{lemma}
\label{lem:gelu L}
  Let \(L_\sigma\) be defined as in Def.~\ref{def:moment-maps}, with
  \(\sigma\) as in \eqref{eq:activation-gelu}.
  Then
  \begin{align}
    L_\sigma(\mu_1; \nu_{11}, \nu_{22}, \nu_{12})
    &= \nu_{12}
    \frac{\mu_1}{(1 + \nu_{11})^{3/2}} \phi\del{
      \frac{\mu_1}{\sqrt{1 + \nu_{11}}}
    }
    +
    \nu_{12}
    \Phi\del{
      \frac{\mu_1}{\sqrt{1 + \nu_{11}}}
    }.
  \end{align}
\end{lemma}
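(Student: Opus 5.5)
The plan is to reduce \(L_\sigma\) to a univariate Gaussian expectation with Lemma~\ref{lem:stein}, and then evaluate that expectation with the univariate integrals of Lemma~\ref{lem:gaussian-univariate}. GeLU \(\sigma(x) = x\Phi(x)\) is differentiable, with
\begin{align*}
  \sigma'(x) = \Phi(x) + x\phi(x).
\end{align*}
Lemma~\ref{lem:stein} therefore gives, for \(Z_1 \sim \mathcal N(\mu_1, \nu_{11})\),
\begin{align*}
  L_\sigma(\mu_1; \nu_{11}, \nu_{22}, \nu_{12})
  = \nu_{12}\,\expect \Phi(Z_1) + \nu_{12}\,\expect Z_1 \phi(Z_1).
\end{align*}
The integrability needed for Stein's lemma is immediate, since \(\sigma'\) is bounded by \(1 + |x|\phi(x)\).

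The first expectation is exactly \eqref{eq:gaussian-Phi}:
\begin{align*}
  \expect \Phi(Z_1) = \Phi\del{\mu_1/\sqrt{1+\nu_{11}}},
\end{align*}
which produces the second term of the claimed formula. For the second expectation I would use the Gaussian ODE \(\phi' = -x\phi\) together with Stein's lemma, exactly as in the proof of \eqref{eq:gaussian-ZPhi}. Centering gives
\begin{align*}
  \expect Z_1\phi(Z_1) = \mu_1\expect\phi(Z_1) + \nu_{11}\expect\phi'(Z_1) = \mu_1\expect\phi(Z_1) - \nu_{11}\expect Z_1\phi(Z_1).
\end{align*}
Solving for \(\expect Z_1\phi(Z_1)\) and applying \eqref{eq:gaussian-phi} then gives
\begin{align*}
  \expect Z_1\phi(Z_1) = \frac{\mu_1}{(1+\nu_{11})^{3/2}}\phi\del{\mu_1/\sqrt{1+\nu_{11}}}.
\end{align*}
Multiplying by \(\nu_{12}\) yields the first term of the claim.

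The main point needing care is a notational mismatch. Equation \eqref{eq:gaussian-ZPhi} is stated as \(\expect Z\Phi(Z)\), but its proof, and the value it produces, concern \(\expect Z\phi(Z)\), which is the quantity we need. I would cite the proof of \eqref{eq:gaussian-ZPhi} for \(\expect Z\phi(Z)\), or simply redo the three-line computation above, so that the lemma's apparent typo does not propagate into this proof. With that settled, the two terms add to the stated expression and the proof is complete.
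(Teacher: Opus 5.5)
Your proposal is correct and matches the paper's proof step for step. Both use Lemma~\ref{lem:stein} with the product rule $\sigma'(x)=\Phi(x)+x\phi(x)$, then \eqref{eq:gaussian-Phi} for $\expect\Phi(X_1)$ and \eqref{eq:gaussian-ZPhi} for $\expect X_1\phi(X_1)$. You are also right about the typo: \eqref{eq:gaussian-ZPhi} is stated for $\expect Z\Phi(Z)$, but its proof, and the paper's use of it here, concern $\expect Z\phi(Z)$, so redoing the short Stein/ODE computation is the clean way to invoke it.
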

\begin{proof}
  By Lemma~\ref{lem:stein},
  \begin{align*}
    L_\sigma(\mu_1; \nu_{11}, \nu_{22}, \nu_{12}) &= \nu_{12}
    \expect \sigma'(X_1),
    & X_1 &\sim \mathcal N(\mu_1, \nu_{11})
    \\
    &= \nu_{12}
    \underbrace{\expect X_1 \phi(X_1)}_{\text{(I)}}
    + \nu_{12} \underbrace{\expect \Phi(X_1)}_{\text{(II)}}
    \tag{product rule}
  \end{align*}
  By Lemma~\ref{lem:gaussian-univariate}, \eqref{eq:gaussian-ZPhi}
  \begin{align}
    \text{(I)}    
    &= \frac{\mu_1}{(1 + \nu_{11})^{3/2}} \phi\del{
      \frac{\mu_1}{\sqrt{1 + \nu_{11}}}
    }.
  \end{align}
  Term (II) is given by
  \begin{align}
    \text{(II)} = \expect \Phi(X_1)
    &= \Phi\del{
      \frac{\mu_1}{\sqrt{1 + \nu_{11}}}
    }
  \end{align}
  Applying terms (I)--(II),
  \begin{align}
    L_\sigma(\mu_1; \nu_{11}, \nu_{22}, \nu_{12})
    &= \nu_{12}
    \frac{\mu_1}{(1 + \nu_{11})^{3/2}} \phi\del{
      \frac{\mu_1}{\sqrt{1 + \nu_{11}}}
    }
    +
    \nu_{12}
    \Phi\del{
      \frac{\mu_1}{\sqrt{1 + \nu_{11}}}
    }
  \end{align}

\end{proof}

\section{Derivation of uncertainty propagation formulas for ReLU activation}
\label{app:relu}
In this appendix, we derive the uncertainty propagation formulas for ReLU activation 
\begin{align}
  \sigma(x)
  &= \max(0, x)
  \label{eq:activation-relu}
\end{align}
using the identity
\begin{align}
  \operatorname{ReLU}(x)
  &= \lim_{\alpha \to \infty} \alpha^{-1} \operatorname{GeLU}(\alpha x).
\end{align}
See \citet{muthen_moments_1990} for another way to derive \(M\) and \(K\).

\begin{lemma}
  \label{lem:relu M}
  Let \(M_\sigma\) be defined as in Def.~\ref{def:moment-maps}, with
  \(\sigma\) as in \eqref{eq:activation-relu}.
  Then
  \begin{align*}
    M_\sigma(\mu; \nu)
    &= \sqrt{\nu} \phi\del{\frac{\mu}{\sqrt{\nu}}}
    + \mu \Phi\del{\frac{\mu}{\sqrt{\nu}}}.
  \end{align*}
\end{lemma}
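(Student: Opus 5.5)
The plan is to follow the route announced for this appendix: write ReLU as the pointwise limit \(\operatorname{ReLU}(x) = \lim_{\alpha\to\infty}\alpha^{-1}\operatorname{GeLU}(\alpha x)\), apply the GeLU mean formula of Lemma~\ref{lem:gelu M} to the rescaled input, and then pass to the limit. Let \(X\sim\mathcal N(\mu,\nu)\) with \(\nu>0\).

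The first step is to set \(\alpha X\sim\mathcal N(\alpha\mu,\alpha^2\nu)\). Lemma~\ref{lem:gelu M} then gives
\begin{align*}
  \expect \alpha^{-1}\operatorname{GeLU}(\alpha X)
  = \frac{\alpha\nu}{\sqrt{1+\alpha^2\nu}}\,\phi\del{\frac{\alpha\mu}{\sqrt{1+\alpha^2\nu}}}
  + \mu\,\Phi\del{\frac{\alpha\mu}{\sqrt{1+\alpha^2\nu}}}.
\end{align*}
As \(\alpha\to\infty\), we have \(\alpha/\sqrt{1+\alpha^2\nu}\to 1/\sqrt\nu\). Continuity of \(\phi\) and \(\Phi\) therefore sends the right-hand side to \(\sqrt\nu\,\phi(\mu/\sqrt\nu)+\mu\,\Phi(\mu/\sqrt\nu)\), using \(\nu/\sqrt\nu=\sqrt\nu\).

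The second step, and the only real obstacle, is to justify exchanging the limit with the expectation. Pointwise convergence is immediate: \(\alpha^{-1}\operatorname{GeLU}(\alpha x)=x\,\Phi(\alpha x)\to x\,\bm 1_{x>0}\) for \(x\neq 0\), and both sides equal \(0\) at \(x=0\). For domination, \(|x\,\Phi(\alpha x)|\le |x|\) for all \(\alpha\), and \(|X|\) is integrable under the Gaussian. The Dominated Convergence Theorem then gives \(\expect\max(0,X)=\lim_{\alpha\to\infty}\expect\alpha^{-1}\operatorname{GeLU}(\alpha X)\), which combined with the first step proves the claim.

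As a sanity check, the formula agrees with the direct computation \(\expect (X)_+ = \mu\,\probability(X>0)+\expect(X-\mu)\bm 1_{X>0}\). Here \(\probability(X>0)=\Phi(\mu/\sqrt\nu)\), and the second term equals \(\sqrt\nu\,\phi(\mu/\sqrt\nu)\), which can be seen from Stein's identity with the density of \(X\) evaluated at \(0\).
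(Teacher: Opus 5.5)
Your proposal is correct and follows the paper's proof. Both write ReLU as the limit of $\alpha^{-1}\operatorname{GeLU}(\alpha x)$, apply Lemma~\ref{lem:gelu M} at $(\alpha\mu,\alpha^2\nu)$, and pass to the limit by dominated convergence. You also supply the explicit dominating function $|x|$, which the paper leaves implicit, and your rescaled expression correctly carries $\mu\,\Phi(\cdot)$ after absorbing $\alpha^{-1}$ (the paper's intermediate display has $\mu$ where $\alpha\mu$ belongs inside the bracket).
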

\begin{proof}
  By the dominated convergence theorem,
  \begin{align}
    M_\sigma(\mu; \nu)
    &= \lim_{\alpha \to \infty} \alpha^{-1} \expect M_{\operatorname{GeLU}}(\alpha \mu; \alpha^2 \nu)
    \\
    &= \lim_{\alpha \to \infty} \alpha^{-1}
    \sbr{
      \frac{\alpha^2 \nu}{\sqrt{1 + \alpha^2 \nu}} \phi\del{\frac{\alpha \mu}{\sqrt{1 + \alpha^2 \nu}}}
      + \mu \Phi\del{\frac{\alpha \mu}{\sqrt{1 + \alpha^2 \nu}}}
    }
    \tag{by Lemma~\ref{lem:gelu M}}
    \\
    &= 
    \sqrt{\nu} \phi\del{\frac{\mu}{\sqrt{\nu}}}
    + \mu \Phi\del{\frac{\mu}{\sqrt{\nu}}}
  \end{align}
\end{proof}

\begin{lemma}
\label{lem:relu K}
  Let \(K_\sigma\) be defined as in Def.~\ref{def:moment-maps}, with
  \(\sigma\) as in \eqref{eq:activation-relu}.
  Then
  \begin{gather*}
  \begin{split}
    K_\sigma(\mu_{1}, \mu_{2}; \nu_{11}, \nu_{22}, \nu_{12})
    &=
    \mu_2 \sqrt{\nu_{11}}
    \Phi_{2;1}
  \del{
    \frac{\mu_1}{\sqrt{\nu_{11}}}
    , \frac{\mu_2}{\sqrt{\nu_{22}}};
    \frac{\nu_{12}}{\sqrt{\nu_{11}\nu_{22}}}
  }
  \\
  &\quad +
  \mu_1 \sqrt{\nu_{22}}
  \Phi_{2;1}
  \del{
    \frac{\mu_2}{\sqrt{\nu_{22}}}
    , \frac{\mu_1}{\sqrt{\nu_{11}}};
    \frac{\nu_{12}}{\sqrt{\nu_{11}\nu_{22}}}
  }
  \\
  &\quad + 
  \del{
    \sqrt{\nu_{11}\nu_{22}}
    - \frac{\nu_{12}^2}{\sqrt{\nu_{11}\nu_{22}}}
  }
  \phi_{2}\del{
    \frac{\mu_1}{\sqrt{\nu_{11}}},
    \frac{\mu_2}{\sqrt{\nu_{22}}};
    \frac{\nu_{12}}{\sqrt{\nu_{11}\nu_{22}}}
  }
  \\
  &\quad + \del{ \mu_1\mu_2 +  \nu_{12}}
  \Phi_2\del{
    \frac{\mu_1}{\sqrt{\nu_{11}}},
    \frac{\mu_2}{\sqrt{\nu_{22}}};
    \frac{\nu_{12}}{\sqrt{\nu_{11}\nu_{22}}}
  }
  \\
  &\quad - M_\sigma(\mu_1; \nu_{11}) M_\sigma(\mu_2; \nu_{22})
  \end{split}
  \end{gather*}
\end{lemma}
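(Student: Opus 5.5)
Following the proof of Lemma~\ref{lem:relu M}, the plan is to obtain $K_{\mathrm{ReLU}}$ as a scaling limit of $K_{\mathrm{GeLU}}$ from Lemma~\ref{lem:gelu K}, via $\operatorname{ReLU}(x)=\lim_{\alpha\to\infty}\alpha^{-1}\operatorname{GeLU}(\alpha x)$. Let $(X_1,X_2)$ be the bivariate normal in Definition~\ref{def:moment-maps}. Then $\alpha^{-1}\operatorname{GeLU}(\alpha X_i)\to\operatorname{ReLU}(X_i)$ pointwise. We also have the uniform bound $|\alpha^{-1}\operatorname{GeLU}(\alpha x)|\le |x|$, so the products are dominated by $|X_1X_2|$, which is integrable. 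Dominated convergence therefore gives
\begin{align*}
K_{\mathrm{ReLU}}(\mu_1,\mu_2;\nu_{11},\nu_{22},\nu_{12})
=\lim_{\alpha\to\infty}\alpha^{-2}K_{\mathrm{GeLU}}(\alpha\mu_1,\alpha\mu_2;\alpha^2\nu_{11},\alpha^2\nu_{22},\alpha^2\nu_{12}),
\end{align*}
because $(\alpha X_1,\alpha X_2)$ has means $\alpha\mu_i$ and covariances $\alpha^2\nu_{ij}$. The product of means converges to $M_{\mathrm{ReLU}}(\mu_1;\nu_{11})M_{\mathrm{ReLU}}(\mu_2;\nu_{22})$ by Lemma~\ref{lem:relu M}.

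Next I would take the limit term by term in the closed form of Lemma~\ref{lem:gelu K}. The arguments behave as follows:
\begin{align*}
\frac{\alpha\mu_i}{\sqrt{1+\alpha^2\nu_{ii}}}&\to\frac{\mu_i}{\sqrt{\nu_{ii}}}, &
\frac{\alpha^2\nu_{12}}{\sqrt{(1+\alpha^2\nu_{11})(1+\alpha^2\nu_{22})}}&\to\frac{\nu_{12}}{\sqrt{\nu_{11}\nu_{22}}}.
\end{align*}
By continuity of $\Phi_2$, $\Phi_{2;1}$ and $\phi_2$ (for $|\rho|<1$, assuming $\nu_{11},\nu_{22}>0$), these functions converge to their values at the limiting arguments. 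What remains is to track the scalar prefactors, each multiplied by $\alpha^{-2}$.
\begin{itemize}
\item First $\Phi_{2;1}$ coefficient. It is $\alpha^{-2}\bigl(\alpha^3\mu_1\nu_{12}+\alpha^3\mu_2\nu_{11}-\alpha^5\mu_1\nu_{12}\nu_{11}/(1+\alpha^2\nu_{11})\bigr)/\sqrt{1+\alpha^2\nu_{11}}$. The first and third terms nearly cancel: $\alpha^3\mu_1\nu_{12}\bigl(1-\alpha^2\nu_{11}/(1+\alpha^2\nu_{11})\bigr)=\alpha^3\mu_1\nu_{12}/(1+\alpha^2\nu_{11})$, which contributes $O(\alpha^{-1})\to0$. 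The surviving piece tends to $\mu_2\nu_{11}/\sqrt{\nu_{11}}=\mu_2\sqrt{\nu_{11}}$.
\item Second $\Phi_{2;1}$ coefficient. It tends to $\mu_1\sqrt{\nu_{22}}$ by symmetry.
\item $\phi_2$ coefficient. Write $1-\tfrac{a}{1+a}-\tfrac{b}{1+b}$ with $a=\alpha^2\nu_{11}$, $b=\alpha^2\nu_{22}$; it tends to $-1$. The bracket is therefore $\alpha^4(\nu_{11}\nu_{22}-\nu_{12}^2)+O(\alpha^2)$. Dividing by $\alpha^2\sqrt{(1+a)(1+b)}\sim\alpha^4\sqrt{\nu_{11}\nu_{22}}$ gives $\sqrt{\nu_{11}\nu_{22}}-\nu_{12}^2/\sqrt{\nu_{11}\nu_{22}}$.
\item $\Phi_2$ coefficient. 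It is $\alpha^{-2}(\alpha^2\mu_1\mu_2+\alpha^2\nu_{12})=\mu_1\mu_2+\nu_{12}$.
\end{itemize}
Assembling these limits yields exactly the claimed formula.

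The main obstacle is the first two coefficients. Naively each term there is $O(\alpha)$, so the leading orders must cancel exactly. I would handle this by combining the first and third terms algebraically before taking the limit, as above. A secondary point is justifying the dominating function and continuity of $\Phi_{2;1}$ and $\phi_2$ in the correlation argument. This requires nondegeneracy ($\nu_{11},\nu_{22}>0$ and $|\nu_{12}|<\sqrt{\nu_{11}\nu_{22}}$); degenerate cases I would treat as limits or exclude.
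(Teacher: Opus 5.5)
Your proposal is correct and follows essentially the same route as the paper. Both use dominated convergence via $\operatorname{ReLU}(x)=\lim_{\alpha\to\infty}\alpha^{-1}\operatorname{GeLU}(\alpha x)$, then take a term-by-term $\alpha\to\infty$ limit of the GeLU closed form with means scaled by $\alpha$ and covariances by $\alpha^2$, arriving at the same four coefficient limits. The differences are cosmetic: the paper passes to the limit in the second-moment identity from the proof of Lemma~\ref{lem:gelu K} and subtracts the product of ReLU means at the end, whereas you work with $K_{\mathrm{GeLU}}$ directly, and you make explicit the dominating bound $|\alpha^{-1}\operatorname{GeLU}(\alpha x)|\le|x|$ and the nondegeneracy condition $|\nu_{12}|<\sqrt{\nu_{11}\nu_{22}}$, which the paper leaves implicit.
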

\begin{proof}
By the dominated convergence theorem,
\begin{align}
  \expect \sigma(X_1) \sigma(X_2)
  &= \expect \lim_{\alpha \to \infty} \alpha^{-2} \operatorname{GeLU}(\alpha X_1) \operatorname{GeLU}(\alpha X_2)
  \\
  &= \lim_{\alpha \to \infty} \alpha^{-2} \expect  \operatorname{GeLU}(\alpha X_1) \operatorname{GeLU}(\alpha X_2)
\end{align}
To compute \(\alpha^{-2} \expect  \operatorname{GeLU}(\alpha X_1) \operatorname{GeLU}(\alpha X_2)\), we use \eqref{eq:gelu K 6} (Proof of Lemma~\ref{lem:gelu K}) while scaling \(\mu\)s by \(\alpha\) and \(\nu\)s by \(\alpha^2\).
\begin{align}
\begin{split}
    \MoveEqLeft\alpha^{-2} \expect  \operatorname{GeLU}(\alpha X_1) \operatorname{GeLU}(\alpha X_2)
    \\
    &= 
    \frac{
      \alpha \mu_1 \nu_{12} +\alpha \mu_{2}\nu_{11}  - \alpha^3 \frac{\mu_{1} \nu_{12} \nu_{11}}{
        1 + \alpha^2 \nu_{11}
      }
    }
    {\sqrt{1 + \alpha^2 \nu_{11}}}
    \\
     &\qquad \Phi_{2;1}
    \del{
      \frac{\alpha \mu_1}{\sqrt{1 + \alpha^2 \nu_{11}}}
      , \frac{\alpha \mu_2}{\sqrt{1 + \alpha^2 \nu_{22}}};
      \frac{\alpha^2 \nu_{12}}{\sqrt{(1 + \alpha^2 \nu_{11})(1 + \alpha^2 \nu_{22})}}
    }
    \\
    &\quad + \frac{
      \alpha \mu_2 \nu_{12} + \alpha \mu_{1}\nu_{22} - \alpha^3 \frac{\mu_{2} \nu_{12} \nu_{22}}{
        1 + \alpha^2 \nu_{22}
      }
    }{\sqrt{1 + \alpha^2 \nu_{22}}}
    \\
    &\qquad \Phi_{2;1}
    \del{
      \frac{\alpha \mu_2}{\sqrt{1 + \alpha^2 \nu_{22}}}
      , \frac{\alpha \mu_1}{\sqrt{1 + \alpha^2 \nu_{11}}};
      \frac{\alpha^2 \nu_{12}}{\sqrt{(1 + \alpha^2 \nu_{11})(1 + \alpha^2 \nu_{22})}}
    }
    \\
    &\quad + 
    \frac{
      \alpha^2 \nu_{11} \nu_{22} + \alpha^2\nu_{12}^2
      \del{
        1 - \frac{\alpha^2 \nu_{11}}{1 + \alpha^2 \nu_{11}}
        - \frac{\alpha^2 \nu_{22}}{1 + \alpha^2 \nu_{22}}
      }
    }{\sqrt{(1 + \alpha^2 \nu_{11})(1 + \alpha^2 \nu_{22})}}
    \\
    &\qquad \phi_{2}\del{
      \frac{\alpha \mu_1}{\sqrt{1 + \alpha^2 \nu_{11}}},
      \frac{\alpha \mu_2}{\sqrt{1 + \alpha^2 \nu_{22}}};
      \frac{\alpha^2 \nu_{12}}{\sqrt{(1 + \alpha^2 \nu_{11})(1 + \alpha^2 \nu_{22})}}
    }
    \\
    &\quad + \del{ \mu_1\mu_2 +  \nu_{12}}
    \Phi_2\del{
      \frac{\alpha \mu_1}{\sqrt{1 + \alpha^2 \nu_{11}}},
      \frac{\alpha \mu_2}{\sqrt{1 + \alpha^2 \nu_{22}}};
      \frac{\alpha^2 \nu_{12}}{\sqrt{(1 + \alpha^2 \nu_{11})(1 + \alpha^2 \nu_{22})}}
    }
\end{split}
\end{align}
  
Taking \(\alpha \to \infty\),
\begin{align}
\begin{split}
   \expect \sigma(X_1) \sigma(X_2)
  &=\lim_{\alpha \to \infty}\alpha^{-2} \expect  \operatorname{GeLU}(\alpha X_1) \operatorname{GeLU}(\alpha X_2)
  \\
  &=
  \mu_2 \sqrt{\nu_{11}}
    \Phi_{2;1}
  \del{
    \frac{\mu_1}{\sqrt{\nu_{11}}}
    , \frac{\mu_2}{\sqrt{\nu_{22}}};
    \frac{\nu_{12}}{\sqrt{\nu_{11}\nu_{22}}}
  }
  \\
  &\quad +
  \mu_1 \sqrt{\nu_{22}}
  \Phi_{2;1}
  \del{
    \frac{\mu_2}{\sqrt{\nu_{22}}}
    , \frac{\mu_1}{\sqrt{\nu_{11}}};
    \frac{\nu_{12}}{\sqrt{\nu_{11}\nu_{22}}}
  }
  \\
  &\quad + 
  \del{
    \sqrt{\nu_{11}\nu_{22}}
    - \frac{\nu_{12}^2}{\sqrt{\nu_{11}\nu_{22}}}
  }
  \phi_{2}\del{
    \frac{\mu_1}{\sqrt{\nu_{11}}},
    \frac{\mu_2}{\sqrt{\nu_{22}}};
    \frac{\nu_{12}}{\sqrt{\nu_{11}\nu_{22}}}
  }
  \\
  &\quad + \del{ \mu_1\mu_2 +  \nu_{12}}
  \Phi_2\del{
    \frac{\mu_1}{\sqrt{\nu_{11}}},
    \frac{\mu_2}{\sqrt{\nu_{22}}};
    \frac{\nu_{12}}{\sqrt{\nu_{11}\nu_{22}}}
  }
  \end{split}
\end{align}
The conclusion follows after subtracting \(\expect \sigma(X_1) \expect \sigma(X_2)\).
\end{proof}

\begin{lemma}
  \label{lem:relu L}
  Let \(L_\sigma\) be defined as in Def.~\ref{def:moment-maps}, with
  \(\sigma\) as in \eqref{eq:activation-relu}.
  Then
  \begin{align*}
    L_\sigma(\mu_1; \nu_{11}, \nu_{22}, \nu_{12})
    &= 
    \nu_{12}
    \Phi\del{
      \frac{\mu_1}{\sqrt{\nu_{11}}}
    }.
  \end{align*}
\end{lemma}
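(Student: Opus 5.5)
We prove Lemma~\ref{lem:relu L} in the same way as Lemmas~\ref{lem:relu M} and~\ref{lem:relu K}: treat ReLU as the limit of rescaled GeLU and pass to the limit in the closed form of Lemma~\ref{lem:gelu L}. As a cross-check we also give the direct Stein argument. Fix \((X_1, X_2)\) jointly Normal with \(\Var X_1 = \nu_{11} > 0\), \(\Var X_2 = \nu_{22}\), and \(\Cov(X_1,X_2)=\nu_{12}\). Set \(\sigma_\alpha(x) = \alpha^{-1}\operatorname{GeLU}(\alpha x) = x\Phi(\alpha x)\). Then \(\sigma_\alpha \to \operatorname{ReLU}\) pointwise as \(\alpha\to\infty\), and \(|\sigma_\alpha(x)| \le |x|\). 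Hence \(|\sigma_\alpha(X_1)(X_2 - \expect X_2)| \le |X_1|\,|X_2-\expect X_2|\), which is integrable by Cauchy--Schwarz. Dominated convergence therefore gives
\begin{align*}
  L_{\operatorname{ReLU}}(\mu_1;\nu_{11},\nu_{22},\nu_{12}) = \lim_{\alpha\to\infty} \Cov(\sigma_\alpha(X_1), X_2).
\end{align*}
The means converge as well, by the same domination.

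Next we express \(\Cov(\sigma_\alpha(X_1),X_2)\) through the GeLU formula. We have \(\Cov(\sigma_\alpha(X_1),X_2) = \alpha^{-1}\Cov(\operatorname{GeLU}(\alpha X_1), X_2)\). The pair \((\alpha X_1, X_2)\) has parameters \(\alpha\mu_1\), \(\alpha^2\nu_{11}\), \(\nu_{22}\) and cross-covariance \(\alpha\nu_{12}\). Lemma~\ref{lem:gelu L} then yields
\begin{align*}
  \Cov(\sigma_\alpha(X_1),X_2)
  = \nu_{12}\frac{\alpha\mu_1}{(1+\alpha^2\nu_{11})^{3/2}}\phi\del{\frac{\alpha\mu_1}{\sqrt{1+\alpha^2\nu_{11}}}}
  + \nu_{12}\Phi\del{\frac{\alpha\mu_1}{\sqrt{1+\alpha^2\nu_{11}}}}.
\end{align*}
As \(\alpha\to\infty\), the first term is \(O(\alpha^{-2})\) and vanishes, because \(\phi\) is bounded. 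In the second term, \(\alpha\mu_1/\sqrt{1+\alpha^2\nu_{11}} \to \mu_1/\sqrt{\nu_{11}}\), and \(\Phi\) is continuous. This gives \(\nu_{12}\Phi(\mu_1/\sqrt{\nu_{11}})\), as claimed.

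For the cross-check, apply Lemma~\ref{lem:stein} directly. ReLU is Lipschitz with a.e.\ derivative \(\bm 1_{x>0}\), so \(L = \nu_{12}\probability(X_1>0) = \nu_{12}\Phi(\mu_1/\sqrt{\nu_{11}})\). Stein's identity holds for absolutely continuous functions, but Lemma~\ref{lem:stein} is stated only for differentiable \(\sigma\), so we keep the limit argument as the main proof.

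The only delicate step is justifying the interchange of limit and expectation, which the bound \(|x\Phi(\alpha x)|\le|x|\) handles. We also need \(\nu_{11}>0\); if \(\nu_{11}=0\), \(X_1\) is degenerate, both sides are \(0\) (then \(\nu_{12}=0\)), and the \(\Phi\) argument should be read with this convention.
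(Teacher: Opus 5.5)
Your proof is correct and takes the paper's route: dominated convergence for $\alpha^{-1}\operatorname{GeLU}(\alpha x)\to\operatorname{ReLU}(x)$, then the closed form of Lemma~\ref{lem:gelu L}, then a limit in which the $\phi$ term vanishes and the $\Phi$ argument tends to $\mu_1/\sqrt{\nu_{11}}$. The paper rescales both coordinates, writing $\alpha^{-2}L_{\operatorname{GeLU}}(\alpha\mu_1;\alpha^2\nu_{11},\alpha^2\nu_{22},\alpha^2\nu_{12})$, whereas you rescale only $X_1$, which is equivalent by bilinearity of covariance; your explicit bound $|x\Phi(\alpha x)|\le|x|$ supplies the domination the paper leaves implicit.
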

\begin{proof}
  By the dominated convergence theorem,
  \begin{align}
    \MoveEqLeft
    L_\sigma(\mu_1; \nu_{11}, \nu_{22}, \nu_{12})
    \notag\\
    &= \lim_{\alpha \to \infty} \alpha^{-2} L_{\operatorname{GeLU}}(\alpha \mu_1; \alpha^2 \nu_{11}, \alpha^2 \nu_{22}, \alpha^2 \nu_{12}) 
    \\
    &=
    \lim_{\alpha \to \infty} \alpha^{-2} 
    \sbr{
      \alpha^2 \nu_{12}
      \frac{\alpha \mu_1}{(1 + \alpha^2 \nu_{11})^{3/2}} \phi\del{
        \frac{\alpha \mu_1}{\sqrt{1 + \alpha^2 \nu_{11}}}
      }
      +
      \alpha^2 \nu_{12}
      \Phi\del{
        \frac{\alpha \mu_1}{\sqrt{1 + \alpha^2 \nu_{11}}}
      }
    }
    \tag{by Lemma~\ref{lem:gelu L}}
    \\
    &=
    \nu_{12}
    \Phi\del{
      \frac{\mu_1}{\sqrt{\nu_{11}}}
    } 
  \end{align}
\end{proof}

\section{Derivation of uncertainty propagation formulas for Heaviside activation}
\label{app:heaviside}
In this appendix, we state the \(M_\sigma\), \(K_\sigma\) and
\(L_\sigma\) functions (Def.~\ref{def:moment-maps}) for the Heaviside activation function
\begin{align}
  \label{eq:activation-heaviside}
  \sigma(x) &= \bm{1}_{x > 0}
  \\
  &= \lim_{\alpha \to \infty} \del{1/2  + 1/2 \hat\sigma(\alpha x)}
\end{align}
where \(\hat\sigma\) is the odd probit sigmoid defined as in \eqref{eq:activation-relu}.
The idea of the proofs is to take dominated limits of the results in Appendix.~\ref{app:probit}, similar to Appendix~\ref{app:relu}'s treatment of Appendix~\ref{app:gelu}, so we omit them.
\begin{lemma}
  \label{lem:heaviside-MKL}
  Let \(\sigma\) be the Heaviside function as in \eqref{eq:activation-heaviside}.
  Then the functions defined in Def.~\ref{def:moment-maps} are
  \begin{align*}
    M_\sigma(\mu; \nu) &= \Phi\del{
      \frac{\mu}{\sqrt{\nu}}
    },
    \\
    K_\sigma(\mu_1, \mu_2; \nu_{11}, \nu_{22}, \nu_{12}) &=
    \left.\Phi_2\del{
      \frac{\mu_1}{\sqrt{\nu_{11}}},
      \frac{\mu_2}{\sqrt{\nu_{22}}};
      \rho'
    }
    \right|_{\rho' = 0}^{\rho' = \frac{\nu_{12}}{\sqrt{\nu_{11}\nu_{22}}}},
    \quad\text{and}
    \\
    L_\sigma(\mu_1; \nu_{11}, \nu_{22}, \nu_{12}) &= 
    \frac{\nu_{12}}{\sqrt{\nu_{11}}}
    \phi\del{
      \frac{\mu_1}{\sqrt{\nu_{11}}}
    }.
  \end{align*}
\end{lemma}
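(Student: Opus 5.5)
Following the template of Appendix~\ref{app:relu}, I will write the Heaviside function as a pointwise limit of rescaled probits and pass to the limit with the dominated convergence theorem. With \(\hat\sigma = 2\Phi - 1\) the odd probit of \eqref{eq:activation}, we have \(\tfrac12 + \tfrac12\hat\sigma(\alpha x) = \Phi(\alpha x) \to \bm 1_{x>0}\) for every \(x \neq 0\) as \(\alpha\to\infty\). The exceptional set \(\{0\}\) has Gaussian measure zero whenever \(\nu > 0\), which I will assume throughout (for \(\nu_{11}, \nu_{22} > 0\)). The family \(\Phi(\alpha\,\cdot)\) is bounded by \(1\). Hence dominated convergence applies to \(\expect \Phi(\alpha X)\) and to \(\expect \Phi(\alpha X_1)\Phi(\alpha X_2)\).

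\textbf{Mean.} By \eqref{eq:gaussian-Phi} applied to \(\alpha X\sim\mathcal N(\alpha\mu,\alpha^2\nu)\), \(\expect\Phi(\alpha X) = \Phi(\alpha\mu/\sqrt{1+\alpha^2\nu})\). This tends to \(\Phi(\mu/\sqrt\nu)\) by continuity of \(\Phi\), giving \(M_\sigma\).

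\textbf{Covariance.} By \eqref{eq:gaussian-Phi-Phi} applied to \((\alpha X_1,\alpha X_2)\), \(\expect\Phi(\alpha X_1)\Phi(\alpha X_2)\) equals \(\Phi_2\) evaluated at \(\alpha\mu_i/\sqrt{1+\alpha^2\nu_{ii}}\) with correlation \(\alpha^2\nu_{12}/\sqrt{(1+\alpha^2\nu_{11})(1+\alpha^2\nu_{22})}\). These arguments converge to \(\mu_i/\sqrt{\nu_{ii}}\) and \(\rho = \nu_{12}/\sqrt{\nu_{11}\nu_{22}}\).

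Here lies the main obstacle: \(\Phi_2\) must be jointly continuous in \((h,k,\rho)\) up to the limiting correlation. For \(|\rho|<1\) this follows from the integral formula for \(\partial_\rho\Phi_2\) in the remark of Appendix~\ref{app:gaussian-integrals}, or more simply from the fact that \(\Phi_2(h,k;\rho)=\probability[Z_1\le h, Z_2\le k]\) is the CDF of a Gaussian vector depending continuously on \(\rho\), evaluated at continuity points. In the degenerate case \(|\rho|=1\), I would instead argue directly: the limit of the probabilities \(\probability[Z_1-\alpha X_1\le 0,\, Z_2-\alpha X_2\le 0]\) is \(\probability[X_1>0, X_2>0]\), which is the intended value. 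This value is then interpreted as \(\Phi_2\) at \(\rho=\pm1\) via weak convergence, since the boundary of the limiting orthant has measure zero. Subtracting the product of means, which is \(\Phi_2(\cdot,\cdot;0)\), gives the stated difference form of \(K_\sigma\).

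\textbf{Cross-covariance.} Since the Heaviside function is not differentiable, I will not use Lemma~\ref{lem:stein} directly. Instead I will take the limit of the probit formula. Writing \(\Phi(\alpha x) = \tfrac12 + \tfrac12\hat\sigma(\alpha x)\), the cross-covariance is \(\Cov(\Phi(\alpha X_1), X_2) = \tfrac12 L_{\hat\sigma}(\alpha\mu_1;\alpha^2\nu_{11},\nu_{22},\alpha\nu_{12})\), and the probit lemma makes this equal to \(\alpha\nu_{12}(1+\alpha^2\nu_{11})^{-1/2}\phi(\alpha\mu_1/\sqrt{1+\alpha^2\nu_{11}})\). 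As \(\alpha\to\infty\), this converges to \(\tfrac{\nu_{12}}{\sqrt{\nu_{11}}}\phi(\mu_1/\sqrt{\nu_{11}})\).

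Interchanging the limit with the covariance is justified by dominated convergence, with \(|\Phi(\alpha X_1)(X_2-\expect X_2)| \le |X_2-\expect X_2|\) integrable. As a sanity check, the same answer follows from Stein's lemma interpreted distributionally, \(\nu_{12}\expect\delta(X_1)=\nu_{12}f_{X_1}(0)\).
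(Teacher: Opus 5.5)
Your proposal is correct and follows the paper's omitted argument. You write the Heaviside function as the pointwise limit of $\Phi(\alpha x) = \tfrac12 + \tfrac12\hat\sigma(\alpha x)$ and pass dominated limits through the probit formulas for $M_\sigma$, $K_\sigma$, and $L_\sigma$, supplying the details the paper leaves out: $\nu>0$, the null set $\{0\}$, continuity of $\Phi_2$ at the limiting correlation, and the integrable dominator $|X_2-\expect X_2|$ for the cross-covariance.
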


\clearpage
\section{Derivation of uncertainty propagation formulas for sine activation}
\label{app:sine}
In this appendix, we derive the \(M_\sigma\), \(K_\sigma\) and
\(L_\sigma\) functions (Def.~\ref{def:moment-maps}) for the sinusoidal activation function
\begin{align}
  \label{eq:activation-sin}
  \sigma(x) &= \sin(x).
\end{align}
We begin by recalling the identities that if \(Z \sim \mathcal N(\mu, \nu)\),
\begin{align}
\label{eq:expect-sin}
  \expect \sin(Z) &= e^{-\nu/2} \sin(\mu)
  \\
  \label{eq:expect-cos}
  \expect \cos(Z) &= e^{-\nu/2} \cos(\mu)
\end{align}
These formulas, which can be derived from the characteristic function of the normal distribution, allow moments of \(\sin(Z)\) and \(\cos(Z)\) to be computed exactly.
There is no need, as in \citet[Lemma~1.6]{sitzmann_implicit_2020} to use analytic approximations in this step.

From \eqref{eq:expect-sin} immediately follows:
\begin{lemma}
\label{lem:sine M}
  Let \(M_\sigma\) be defined as in Def.~\ref{def:moment-maps}, with
  \(\sigma\) as in \eqref{eq:activation-sin}.
  Then
  \begin{align*}
    M_\sigma(\mu; \nu) &= e^{-\nu/2}\sin(\mu).
  \end{align*}
\end{lemma}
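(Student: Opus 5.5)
The claim $M_\sigma(\mu;\nu)=e^{-\nu/2}\sin(\mu)$ is just \eqref{eq:expect-sin} restated in the notation of Def.~\ref{def:moment-maps}. The plan is to prove \eqref{eq:expect-sin} directly from the characteristic function of the normal distribution, using the identity $\sin(x)=(e^{ix}-e^{-ix})/(2i)$ as announced in the roadmap.

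Let $X\sim\mathcal N(\mu,\nu)$. The first step is to recall the characteristic function $\expect e^{itX}=e^{it\mu-\nu t^2/2}$, valid for all real $t$, including the degenerate case $\nu=0$. Since $|e^{\pm iX}|=1$, both $e^{iX}$ and $e^{-iX}$ are integrable, so linearity of expectation applies to complex-valued bounded random variables. We get
\begin{align*}
  \expect \sin(X) = \frac{1}{2i}\del{\expect e^{iX}-\expect e^{-iX}}
  = \frac{1}{2i}\del{e^{i\mu-\nu/2}-e^{-i\mu-\nu/2}}
  = e^{-\nu/2}\,\frac{e^{i\mu}-e^{-i\mu}}{2i}
  = e^{-\nu/2}\sin(\mu).
\end{align*}
Here the evaluations at $t=\pm1$ contribute the common factor $e^{-\nu/2}$.

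There is essentially no obstacle. The only points needing care are using the characteristic function with the paper's variance parametrization $\nu$ (not the standard deviation), and confirming the sign at $t=-1$. The same computation with the cosine identity gives \eqref{eq:expect-cos}, which will be needed later for $K_\sigma$ and $L_\sigma$ (via Lemma~\ref{lem:stein}, since $\sigma'=\cos$).
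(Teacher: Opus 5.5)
Your proof is correct and follows the paper's route. The paper obtains the lemma immediately from \eqref{eq:expect-sin}, which it attributes to the normal characteristic function combined with $\sin(x)=(e^{ix}-e^{-ix})/(2i)$; you have written out exactly that evaluation at $t=\pm 1$.
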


\begin{lemma}
\label{lem:sine K}
  Let \(K_\sigma\) be defined as in Def.~\ref{def:moment-maps}, with
  \(\sigma\) as in \eqref{eq:activation-sin}.
  Then
  \begin{align*}
    \begin{split}
      K_\sigma(\mu_1, \mu_2; \nu_{11}, \nu_{22}, \nu_{12}) 
      &= \frac{1}{2} \sbr{e^{\nu^* + \nu_{12}} - e^{\nu^*}} \cos(\mu_1 - \mu_2) \\
      &\quad - \frac{1}{2} \sbr{e^{\nu^* - \nu_{12}} - e^{\nu^*}} \cos(\mu_1 + \mu_2),
    \end{split}
    \intertext{where}
    \nu^* &= -\frac{\nu_{11} + \nu_{22}}{2}.
  \end{align*}
\end{lemma}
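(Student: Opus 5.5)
We will prove Lemma~\ref{lem:sine K} by reducing the product \(\sin(X_1)\sin(X_2)\) to cosines of the linear combinations \(X_1 \mp X_2\). These combinations are univariate Gaussians, so \eqref{eq:expect-cos} applies to each directly. The plan is to compute \(\expect \sin(X_1)\sin(X_2)\) in closed form and then subtract \(M_\sigma(\mu_1;\nu_{11})M_\sigma(\mu_2;\nu_{22})\) using Lemma~\ref{lem:sine M}.

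First, by the product-to-sum identity
\begin{align*}
  \sin(X_1)\sin(X_2) = \tfrac12\cos(X_1 - X_2) - \tfrac12\cos(X_1 + X_2).
\end{align*}
Since \((X_1,X_2)\) is jointly Normal, \(X_1 - X_2 \sim \mathcal N(\mu_1-\mu_2,\ \nu_{11}+\nu_{22}-2\nu_{12})\) and \(X_1 + X_2 \sim \mathcal N(\mu_1+\mu_2,\ \nu_{11}+\nu_{22}+2\nu_{12})\). Applying \eqref{eq:expect-cos} to each term gives
\begin{align*}
  \expect \sin(X_1)\sin(X_2) = \tfrac12 e^{\nu^* + \nu_{12}}\cos(\mu_1-\mu_2) - \tfrac12 e^{\nu^* - \nu_{12}}\cos(\mu_1+\mu_2),
\end{align*}
with \(\nu^* = -(\nu_{11}+\nu_{22})/2\). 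Here \(-\tfrac12(\nu_{11}+\nu_{22}\mp 2\nu_{12}) = \nu^*\pm\nu_{12}\).

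Next, the product of means. By Lemma~\ref{lem:sine M},
\begin{align*}
  M_\sigma(\mu_1;\nu_{11})M_\sigma(\mu_2;\nu_{22}) = e^{\nu^*}\sin\mu_1\sin\mu_2 = \tfrac12 e^{\nu^*}\bigl[\cos(\mu_1-\mu_2)-\cos(\mu_1+\mu_2)\bigr],
\end{align*}
using the same product-to-sum identity on the deterministic means. Subtracting this from the previous display and grouping the \(\cos(\mu_1-\mu_2)\) and \(\cos(\mu_1+\mu_2)\) coefficients yields exactly \(\tfrac12[e^{\nu^*+\nu_{12}}-e^{\nu^*}]\cos(\mu_1-\mu_2) - \tfrac12[e^{\nu^*-\nu_{12}}-e^{\nu^*}]\cos(\mu_1+\mu_2)\), as claimed.

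There is no real obstacle. The only point needing care is the variance of \(X_1 \pm X_2\), namely the sign of the cross term \(\pm 2\nu_{12}\), since a sign slip there would swap which cosine carries \(e^{\nu^*+\nu_{12}}\). As a sanity check, setting \(\nu_{12}=0\) makes the covariance vanish, consistent with independence. Setting \(X_1=X_2\) with \(\mu=0\) recovers \(\Var\sin X = (1-e^{-2\nu})/2\), matching the example in \S\ref{sec:adversariality}.
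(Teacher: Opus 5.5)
Your proof is correct and follows the paper's argument essentially step for step: the product-to-sum identity on \(\sin(X_1)\sin(X_2)\), the variances \(\nu_{11}+\nu_{22}\mp 2\nu_{12}\) of \(X_1 \mp X_2\), the identity \eqref{eq:expect-cos}, and subtraction of the product of means written in the same cosine basis. Your sanity checks (\(\nu_{12}=0\) giving zero covariance, and recovering \(\Var \sin X = (1-e^{-2\nu})/2\)) also check out.
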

\begin{proof}
  Let
  \begin{align}
  \begin{pmatrix} X_1 \\ X_2 \end{pmatrix}
   \sim \mathcal N\left(\begin{pmatrix}
    \mu_1
    \\
    \mu_2
  \end{pmatrix},
  \begin{pmatrix}
    \nu_{11}
    &
    \nu_{12}
    \\
    \nu_{12}
    &
    \nu_{22}
  \end{pmatrix}\right).
  \end{align}
  Then by inserting \eqref{eq:activation-sin}, we have
  \begin{align}
    \Cov(\sigma(X_1), \sigma(X_2))
    &= \expect \sin(X_1) \sin(X_2) - \expect \sin(X_1) \expect \sin(X_2).
    \label{eq:cov-sin-sin}
  \end{align}
  Using some trigonometric identities and \eqref{eq:expect-cos}, the first term of \eqref{eq:cov-sin-sin} becomes
  \begin{align}
    \expect \sin(X_1) \sin(X_2)
    &= \frac{1}{2} \expect \cos(\underbrace{X_1 - X_2}_{
      \mathcal N(\mu_1 - \mu_2, \nu_{11} + \nu_{22} - 2\nu_{12})
      })
    - \frac{1}{2} \expect \cos(\underbrace{X_1 + X_2}_{\mathcal N(\mu_1 + \mu_2, \nu_{11} + \nu_{22} + 2\nu_{12})})
    \\
    \begin{split}
      &= \frac{1}{2} \exp\del{-\frac{\nu_{11} + \nu_{22}}{2} + \nu_{12}} \cos(\mu_1 - \mu_2) \\
      &\quad - \frac{1}{2} \exp\del{ -\frac{\nu_{11} + \nu_{22}}{2} - \nu_{12}} \cos(\mu_1 + \mu_2)
    \end{split}
  \end{align}
  The second term of \eqref{eq:cov-sin-sin} becomes
  \begin{align}
    \begin{split}
      \expect \sin(X_1) \expect \sin(X_2)
      &= \frac{1}{2} \exp\del{-\frac{\nu_{11} + \nu_{22}}{2}} \cos(\mu_1 - \mu_2) \\
      &\quad - \frac{1}{2} \exp\del{ -\frac{\nu_{11} + \nu_{22}}{2}} \cos(\mu_1 + \mu_2)
    \end{split}
  \end{align}
  The result follows from collecting like terms.
\end{proof}

\begin{lemma}
\label{lem:sine L}
  Let \(L_\sigma\) be defined as in Def.~\ref{def:moment-maps}, with
  \(\sigma\) as in \eqref{eq:activation-sin}.
  Then
  \begin{align*}
    L_\sigma(\mu_1, \mu_2; \nu_{11}, \nu_{22}, \nu_{12})
    &=
    \nu_{12} e^{-\nu_{11}/2} \cos(\mu_1). 
  \end{align*}
\end{lemma}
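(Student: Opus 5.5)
The plan is to reduce the claim to a one-dimensional expectation via Lemma~\ref{lem:stein}, and then evaluate that expectation with the cosine identity \eqref{eq:expect-cos}.

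Since \(\sigma = \sin\) is smooth with bounded derivative, Lemma~\ref{lem:stein} applies. It gives
\begin{align*}
  L_\sigma(\mu_1; \nu_{11}, \nu_{22}, \nu_{12}) = \nu_{12}\,\expect \sigma'(X_1) = \nu_{12}\,\expect \cos(X_1),
  \qquad X_1 \sim \mathcal N(\mu_1, \nu_{11}).
\end{align*}
The mean of \(X_2\) (the \(\mu_2\) slot in the statement's argument list) plays no role, because \(\Cov(\sigma(X_1), X_2)\) is invariant under shifting \(X_2\). The dependence on \(\nu_{22}\) also drops out, consistent with Lemma~\ref{lem:stein}.

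Next, I apply \eqref{eq:expect-cos} with \(Z = X_1\):
\begin{align*}
  \expect \cos(X_1) = e^{-\nu_{11}/2}\cos(\mu_1).
\end{align*}
Multiplying by \(\nu_{12}\) yields the claimed formula \(\nu_{12} e^{-\nu_{11}/2}\cos(\mu_1)\).

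The only step needing care is the hypothesis of Lemma~\ref{lem:stein}, namely that Stein's identity holds for \(\sigma = \sin\). This is immediate: \(\sin\) and \(\cos\) are bounded, so all expectations are finite and integration by parts against the Gaussian density is justified. Alternatively, one can verify the result independently by writing \(X_2 - \expect X_2 = (\nu_{12}/\nu_{11})(X_1-\mu_1) + W\), with \(W\) independent of \(X_1\), and using \(\expect (X_1-\mu_1)\sin X_1 = \nu_{11}\expect\cos X_1\). I do not expect a genuine obstacle; the argument is a two-line corollary of earlier results.
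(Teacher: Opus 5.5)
Your proof is correct and matches the paper's argument exactly: Lemma~\ref{lem:stein} reduces $L_\sigma$ to $\nu_{12}\expect\cos(X_1)$, and \eqref{eq:expect-cos} then evaluates that expectation. Your alternative check via the regression decomposition of $X_2$ is also sound, provided you treat the degenerate case $\nu_{11}=0$ separately; positive semidefiniteness then forces $\nu_{12}=0$, so the identity holds trivially.
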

\begin{proof}
   Let
  \begin{align}
  \begin{pmatrix} X_1 \\ X_2 \end{pmatrix}
   \sim \mathcal N\left(\begin{pmatrix}
    \mu_1
    \\
    \mu_2
  \end{pmatrix},
  \begin{pmatrix}
    \nu_{11}
    &
    \nu_{12}
    \\
    \nu_{12}
    &
    \nu_{22}
  \end{pmatrix}\right).
  \end{align}
  Using Lemma \ref{lem:stein}, we have
  \begin{align}
    L_\sigma(\mu_1, \mu_2; \nu_{11}, \nu_{22}, \nu_{12})
    &= \nu_{12} \expect \sigma'(X_1)
    \\
    &= \nu_{12} \expect \cos(X_1)
    \\
    &= \nu_{12} e^{-\nu_{11}/2} \cos(\mu_1)
  \end{align}
  by \eqref{eq:expect-cos}.
\end{proof}

\clearpage
\section{Theoretical Guarantees}
\label{app:theoretical-guarantees}
This section proves the smooth-distance bounds stated in \S\ref{sec:theoretical-guarantees}.
We first prove the single-layer higher-order expansion, and then prove that the smooth distance satisfies the layerwise recursion.

\subsection{Single-layer smooth-distance expansion}

Let \(X=\mu_X+\lambda^{-1}\xi\), where \(\xi\sim\mathcal N(0,\Sigma_X)\), and set \(\epsilon=\lambda^{-1}\).
Let \(Y=f(X)\), \(\mu_Y=\expect Y\), \(\Sigma_Y=\Cov Y\), and \(\gamma_Y\sim\mathcal N(\mu_Y,\Sigma_Y)\).
For \(h\) satisfying \(\left\|h\right\|_{\dot W^{4,\infty}}\leq 1\), Taylor expansion around \(\mu_Y\) gives
\begin{align*}
  h(y)
  &=
  \underbrace{h(\mu_Y)}_{(\mathrm{I})}
  +
  \underbrace{Dh(\mu_Y)[y-\mu_Y]}_{(\mathrm{II})}
  +
  \underbrace{\frac12 D^2h(\mu_Y)[y-\mu_Y,y-\mu_Y]}_{(\mathrm{III})}
  \\
  &\quad+
  \underbrace{\frac{1}{6}D^3h(\mu_Y)[y-\mu_Y,y-\mu_Y,y-\mu_Y]}_{(\mathrm{IV})}
  +
  \underbrace{R_4(y)}_{(\mathrm{V})},
\end{align*}
with \(|R_4(y)|\lesssim \|y-\mu_Y\|^4\).
Term \((\mathrm{I})\) is deterministic and therefore cancels when subtracting \(\expect h(\gamma_Y)\) from \(\expect h(Y)\).
Term \((\mathrm{II})\) cancels because \(Y\) and \(\gamma_Y\) have the same mean \(\mu_Y\).
Term \((\mathrm{III})\) cancels because \(Y\) and \(\gamma_Y\) have the same covariance \(\Sigma_Y\).
Term \((\mathrm{V})\) is controlled by the fourth central moments:
\begin{align*}
  \expect \|Y-\mu_Y\|^4 + \expect \|\gamma_Y-\mu_Y\|^4
  =
  O(\epsilon^4),
\end{align*}
because \(Y-\mu_Y=O_p(\epsilon)\) and \(\Sigma_Y=O(\epsilon^2)\).
It remains to identify the size of term \((\mathrm{IV})\), the cubic term.

Taylor expansion of \(f\) around \(\mu_X\) gives
\begin{align*}
  Y
  =
  \underbrace{f(\mu_X)}_{(\mathrm{A})}
  +
  \underbrace{\epsilon J\xi}_{(\mathrm{B})}
  +
  \underbrace{\epsilon^2 Q(\xi)}_{(\mathrm{C})}
  +
  \underbrace{O_p(\epsilon^3)}_{(\mathrm{D})},
\end{align*}
where \(J=\nabla f(\mu_X)^\intercal\) and \(Q(\xi)=\frac12\nabla^2 f(\mu_X)[\xi,\xi]\).
Taking expectations gives
\begin{align*}
  \mu_Y
  =
  \underbrace{f(\mu_X)}_{(\mathrm{A})}
  +
  \epsilon^2\expect Q(\xi)
  +
  O(\epsilon^4),
\end{align*}
because the centered Gaussian term \((\mathrm{B})\) has mean zero and the cubic-order term in \((\mathrm{D})\) is odd to leading order.
Consequently, subtracting \(\mu_Y\) cancels term \((\mathrm{A})\) and centers term \((\mathrm{C})\):
\begin{align*}
  Y-\mu_Y
  =
  \underbrace{\epsilon J\xi}_{(\mathrm{B})}
  +
  \underbrace{\epsilon^2\del{Q(\xi)-\expect Q(\xi)}}_{(\mathrm{C})-\expect(\mathrm{C})}
  +
  \underbrace{O_p(\epsilon^3)}_{(\mathrm{D})}.
\end{align*}
The cubic contribution in term \((\mathrm{IV})\) is linear in \(\expect(Y-\mu_Y)^{\otimes 3}\).
Using the previous display,
\begin{align*}
  (Y-\mu_Y)^{\otimes 3}
  &=
  \underbrace{(\mathrm{B})^{\otimes 3}}_{\epsilon^3}
  +
  \underbrace{
    \operatorname{Sym}\del{(\mathrm{B})^{\otimes 2}\otimes\del{(\mathrm{C})-\expect(\mathrm{C})}}
  }_{\epsilon^4}
  +
  O_p(\epsilon^5),
\end{align*}
where \(\operatorname{Sym}\) denotes the sum over the three placements of the \((\mathrm{C})-\expect(\mathrm{C})\) factor.
The first term has expectation zero because \((\mathrm{B})=\epsilon J\xi\) is centered Gaussian.
The displayed symmetrized term is therefore the first possible nonzero contribution: it contains two factors of \((\mathrm{B})\), each of size \(O(\epsilon)\), and one factor of \((\mathrm{C})-\expect(\mathrm{C})\), of size \(O(\epsilon^2)\).
All remaining terms contain either one factor of \((\mathrm{D})=O_p(\epsilon^3)\) or at least two factors of \((\mathrm{C})-\expect(\mathrm{C})\), and hence are \(O_p(\epsilon^5)\) or smaller.
Thus
\begin{align*}
  \expect (Y-\mu_Y)^{\otimes 3}
  =
  O(\epsilon^4).
\end{align*}
Since \(\gamma_Y\) is Gaussian, its third central moment is zero.
Thus term \((\mathrm{IV})\) contributes only \(O(\epsilon^4)\) to \(\expect h(Y)-\expect h(\gamma_Y)\).
Therefore, uniformly over \(\left\|h\right\|_{\dot W^{4,\infty}}\leq 1\),
\begin{align*}
  \left|
  \expect h(Y)-\expect h(\gamma_Y)
  \right|
  =
  O(\epsilon^4)
  =
  O(\lambda^{-4}).
\end{align*}
Taking the supremum over the homogeneous unit ball of \(\dot W^{4,\infty}\) proves Theorem~\ref{thm:smooth-single-layer}.

The same expansion also explains the comparison with linearization.
The linearized Gaussian \(G_\mathrm{lin}\) has mean \(f(\mu_X)\), whereas
\begin{align*}
  \mu_Y
  =
  f(\mu_X)
  +
  \epsilon^2\expect Q(\xi)
  +
  O(\epsilon^4).
\end{align*}
For test functions with gradient aligned with \(\expect Q(\xi)\), this missing second-order mean correction produces a generic \(O(\epsilon^2)=O(\lambda^{-2})\) smooth-distance error.
Standard fixed sigma-point approximations that do not exactly match the true layer mean and covariance have the same obstruction in this expansion.

\subsection{Layerwise recursion in smooth distance}

We recall the layer-by-layer Normal approximation resulting in \(Y_\mathrm{ana}\) (henceforth, \(Y\)) alongside the exact neural network formula resulting in \(Y_\mathrm{true}\):
\begin{subequations}
  \begin{align}
    Y &= Y^\ell & Y_\mathrm{true} &= Y_\mathrm{true}^\ell
    \\
    Y^k &= \normal g^k(Y^{k-1}), & Y_\mathrm{true}^k &= g^k(Y_\mathrm{true}^{k-1}), & k \in
    \cbr{1 \ldots \ell},
    \label{eq:hidden-layer-comparison}
    \\
    Y^0 &= X & Y_\mathrm{true}^0 &= X,
    \intertext{where \(g^k\) is the function}
    g^k(x) &= g(x; A^k, b^k, C^k, d^k).
\end{align}
\end{subequations}
The smooth distance is an integral probability metric, so for any random variables \(U,V,W\),
\begin{align*}
  d_4(U,W)
  \leq
  d_4(U,V)+d_4(V,W).
\end{align*}
Moreover, for any \(C^4\) map \(g\),
\begin{align*}
  d_4(g(U),g(V))
  &=
  \sup_{\left\|h\right\|_{\dot W^{4,\infty}}\leq 1}
  \left|
  \expect h(g(U))-\expect h(g(V))
  \right|
  \\
  &\leq
  C_4(g)d_4(U,V),
\end{align*}
where
\begin{align*}
  C_4(g)
  =
  \sup_{\left\|h\right\|_{\dot W^{4,\infty}}\leq 1}
  \left\|h\circ g\right\|_{\dot W^{4,\infty}}.
\end{align*}
For a layer \(g(x;A,b,C,d)=\sigma(Ax+b)+Cx+d\), the Fa\`a di Bruno formula gives
\begin{align*}
  C_4(g)
  \leq
  P_4\del{\|A\|,\|C\|,\|\sigma'\|_\infty,\ldots,\|\sigma^{(4)}\|_\infty},
\end{align*}
for a polynomial \(P_4\) depending only on the input and output dimensions.

Define
\begin{align}
  \Delta_4^k &:= d_4(Y_\mathrm{true}^k, Y^k),
  &
  E_{4,k} &:= d_4(g^k(Y^{k-1}),Y^k).
  \label{eq:smooth-discrepancy}
\end{align}
Then the basic triangle inequality is
\begin{align}
  \Delta_4^k
  &= d_4(g^k(Y_\mathrm{true}^{k-1}), Y^k)
  \tag{definition of hidden layer \eqref{eq:hidden-layer-comparison}}
  \\
  &\leq d_4(g^k(Y_\mathrm{true}^{k-1}), g^k(Y^{k-1})) + d_4(g^k(Y^{k-1}), Y^k)
  \tag{triangle inequality}
  \\
  &\leq C_4(g^k)d_4(Y_\mathrm{true}^{k-1}, Y^{k-1}) + d_4(g^k(Y^{k-1}), Y^k)
  \tag{composition property of \(d_4\)}
  \\
  &\leq C_4(g^k)\Delta_4^{k-1} + E_{4,k}.
  \label{eq:smooth-discrepancy-recursion}
\end{align}
If \(X\) is Gaussian, then \(\Delta_4^0 = 0\).
The forcing term \(E_{4,k}=d_4(g^k(Y^{k-1}),Y^k)\) is exactly the single-layer Gaussianization error for the Normal input \(Y^{k-1}\).
In the low-variance regime, Theorem~\ref{thm:smooth-single-layer} gives \(E_{4,k}=O(\lambda^{-4})\).
Therefore
\begin{align}
  \Delta_4^k \leq C_4(g^k) \Delta_4^{k-1} + E_{4,k}.
\end{align}
Unrolling this recursion and using \(\Delta_4^0 = 0\) yields
\begin{align*}
  \Delta_4^\ell
  \leq
  \sum_{k=1}^{\ell}
  \del{\prod_{r=k+1}^{\ell} C_4(g^r)}
  E_{4,k},
\end{align*}
which proves Theorem~\ref{thm:recursive-smooth-bound}.

\clearpage
\section{Supplement to Fig.~\ref{fig:deep-sine-convergence}}
\label{app:deep-sine-convergence}
The convergence experiment in Fig.~\ref{fig:deep-sine-convergence} is generated by \texttt{demo/deep\_sine\_convergence.py}.
The purpose is to visualize the small-input-covariance regime in a fixed nonlinear network where both exact Gaussian moment matching and linearization can be compared against a quasi-Monte Carlo pseudo-truth.

The network is a randomly initialized deep sine network generated by the same \texttt{build\_network} utility used in the random-network experiments.
Specifically, the script fixes the JAX random seed to \(1\), uses input dimension \(3\), chooses the \texttt{deep} architecture, and uses the sine activation.
For a covariance scale \(c>0\), the input distribution is
\begin{align*}
  X_c \sim \mathcal N(0,cI_3).
\end{align*}
In the full experiment, the covariance scales are logarithmically spaced as
\begin{align*}
  c \in \operatorname{geomspace}(10^{-1},10^{-8},8).
\end{align*}
The script also includes a faster diagnostic mode using \(5\) covariance scales from \(10^{-1}\) to \(10^{-5}\).

For each covariance scale, one quasi-Monte Carlo run is performed.
The full experiment uses \(2^{20}\) samples; the fast diagnostic mode uses \(2^{12}\) samples.
These samples are propagated through the deterministic neural network to produce samples from \(Y_\mathrm{true}=f(X_c)\).
The pseudo-true Gaussian is then
\begin{align*}
  Y_\mathrm{pseudo}
  =
  \mathcal N\del{\widehat{\expect}Y_\mathrm{true},\widehat{\Var}Y_\mathrm{true}},
\end{align*}
where the mean and variance are estimated from the quasi-Monte Carlo output samples.

The two approximations compared are
\begin{itemize}
  \item \(Y_\mathrm{ana}\), produced by analytic layerwise Gaussian moment matching, and
  \item \(Y_\mathrm{lin}\), produced by linearized covariance propagation.
\end{itemize}
For each method, the reported error is
\begin{align*}
  D_{\mathrm{KL}}(Y_\mathrm{pseudo}\parallel Y_\mathrm{method}).
\end{align*}
The raw results, sorted summary table, and figure are written to \texttt{docs/manuscript/generated/convergence/}.

\clearpage
\section{Supplement to \S\ref{sec:bayesian-networks}}
\label{app:bayesian-networks}
We expand on each of the terms in the evidence lower bound \eqref{eq:elbo}, repeated below.
\begin{align*}
  \theta^* =
  \arg\min_\theta \cbr{
    -\expect _{w \sim q(w; \theta)} \log p(y \mid w, x) + D_{\text{KL}, w}(q(w;\theta) \mid p(w))
  }.
\end{align*}
Our neural network \(f(x; w)\) consists of a single hidden layer.
The weights \(w\) encompass:
\begin{itemize}
  \item \(A\), \(b\): pre-activation weights,
  \item \(C\), \(d\): post-activation weights, as well as
  \item \(\lambda\): (homoscedastic) output log precision.
\end{itemize}
The log likelihood of \(w\) is Normal:
\begin{align*}
  \log p(y \mid w, x)
  &= \frac{1}{2} \del{
    -\lambda
    + e^\lambda \sbr{y - f(x;w )}^{2}
  }
  + \frac{1}{2} \log \sbr{ 2\pi } .
\end{align*}
The prior distribution \(p(w)\) is the Kaiming initialization of independent Normal distributions, zero mean in each matrix entry and variance equal to \(2 / \text{fan-in}\).
Like \citet{wu_deterministic_2019,petersen_uncertainty_2024,wright_analytic_2024}, the variational distribution \(q(w; \theta)\) is a Gaussian matrix with independent entries, parameterized by means and log precisions.

We train using the Adam optimizer with a learning rate of 0.1 for 10\,000 iterations.
Each gradient estimate averages 10 stochastic ELBO estimates, each drawing one random sample from the variational distribution \(q(w; \theta)\) per training instance.

The implementation uses Equinox and JAX \citep{kidger_equinox_2021,deepmind_deepmind_2020}. 
Each network takes about 2 minutes on an Nvidia T1200 GPU.

In reporting, we draw one million random variates from the variational posterior distribution for up to 100 instances from the test set.

The datasets are
\begin{itemize}
  \item \href{https://archive.ics.uci.edu/dataset/87/servo}{Servo} \citep{ulrich_servo_1986} (CC BY 4.0)
  \item \href{https://archive.ics.uci.edu/dataset/60/liver+disorders}{Liver disorders} \citep{unknown_liver_2016}
  \item \href{https://archive.ics.uci.edu/dataset/9/auto+mpg}{Auto MPG} \citep{r_quinlan_auto_1993} (CC BY 4.0)
  \item \href{https://archive.ics.uci.edu/dataset/477/real+estate+valuation+data+set}{Real estate valuation} \citep{i-cheng_yeh_real_2018} (CC BY 4.0)
  \item \href{https://archive.ics.uci.edu/dataset/162/forest+fires}{Forest fires}
  \citep{paulo_cortez_forest_2007} (CC BY 4.0)
  \item \href{https://archive.ics.uci.edu/dataset/925/infrared+thermography+temperature}{Infrared thermography temperature}
  \cite{wang_facial_nodate} (CC0)
  \item \href{https://archive.ics.uci.edu/dataset/294/combined+cycle+power+plant}{Combined cycle power plant} \citep{pnar_tfekci_combined_2014}
  (CC BY 4.0)
  \item \href{https://archive.ics.uci.edu/dataset/165/concrete+compressive+strength}{Concrete compressive strength} \citep{i-cheng_yeh_concrete_1998}
  (CC BY 4.0)
  \item \href{https://archive.ics.uci.edu/dataset/291/airfoil+self+noise}{Airfoil self-noise}
  \citep{thomas_brooks_airfoil_1989}
  (CC BY 4.0)
  \item \href{https://archive.ics.uci.edu/dataset/183/communities+and+crime}{Communities and crime}
  \citep{redmond_communities_2002}
  (CC BY 4.0)
  \item \href{https://archive.ics.uci.edu/dataset/189/parkinsons+telemonitoring}{Parkinsons telemonitoring}
  \citep{athanasios_tsanas_parkinsons_2009}
  (CC BY 4.0)
\end{itemize}
\paragraph{Impact/ethics statement}
Some of these datasets, served on the UC Irvine Machine Learning Repository, 
are based on observing humans.
This choice of datasets follows \citet{rui_li_streamlining_2025}.
While we defer to UCI's custodianship and ICLR 2025's review process in deeming them ethically appropriate for methodology research, 
we are open to evidence to the contrary.

\clearpage
\subsection{Results}

\begin{figure}
  \centering
  \includegraphics[width=\linewidth]{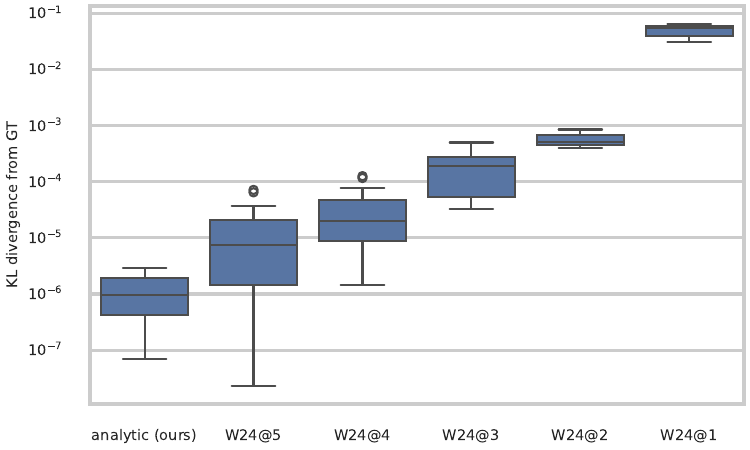}
  \caption{KL divergence between ground truth predictive distribution (by Monte Carlo) and approximations for the Servo dataset. W24@$k$ means \citet{wright_analytic_2024} with $k$ terms in the series expansion.}
\end{figure}

\begin{figure}
  \centering
  \includegraphics[width=\linewidth]{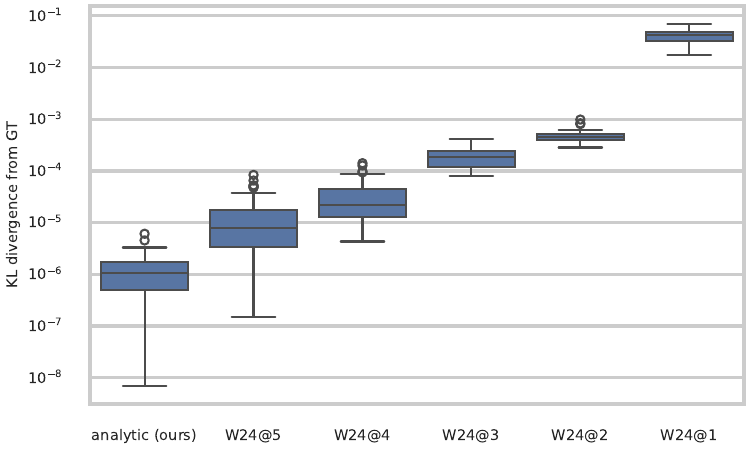}
  \caption{KL divergence between ground truth predictive distribution (by Monte Carlo) and approximations for the liver disorders dataset. W24@$k$ means \citet{wright_analytic_2024} with $k$ terms in the series expansion.}
\end{figure}

\begin{figure}
  \centering
  \includegraphics[width=\linewidth]{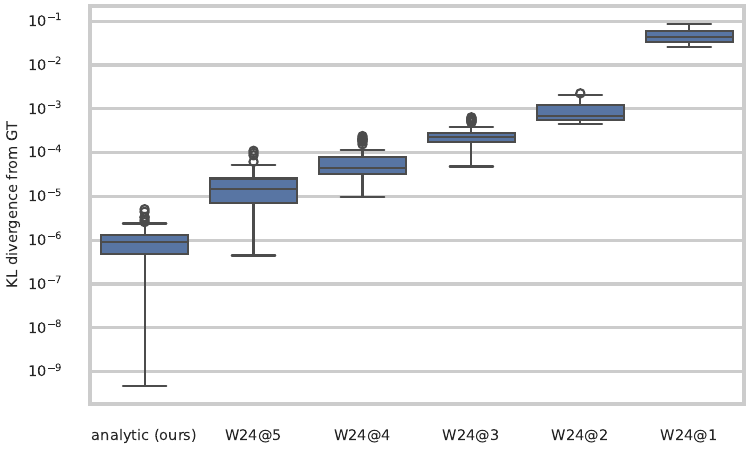}
  \caption{KL divergence between ground truth predictive distribution (by Monte Carlo) and approximations for the Auto MPG dataset. W24@$k$ means \citet{wright_analytic_2024} with $k$ terms in the series expansion.}
\end{figure}

\begin{figure}
  \centering
  \includegraphics[width=\linewidth]{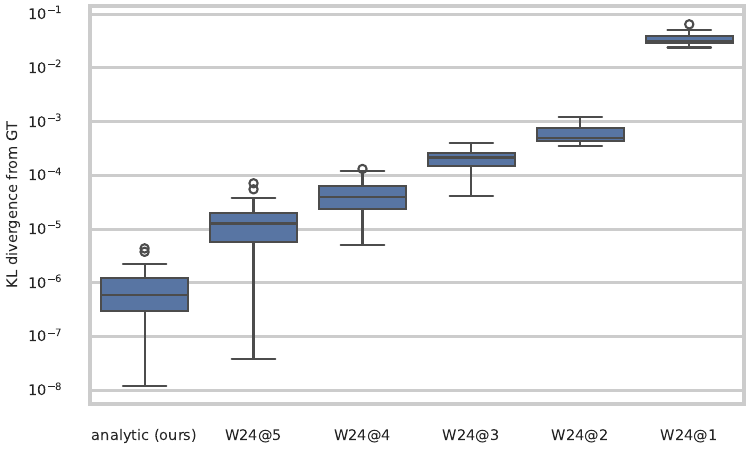}
  \caption{KL divergence between ground truth predictive distribution (by Monte Carlo) and approximations for the real estate valuation dataset. W24@$k$ means \citet{wright_analytic_2024} with $k$ terms in the series expansion.}
\end{figure}

\begin{figure}
  \centering
  \includegraphics[width=\linewidth]{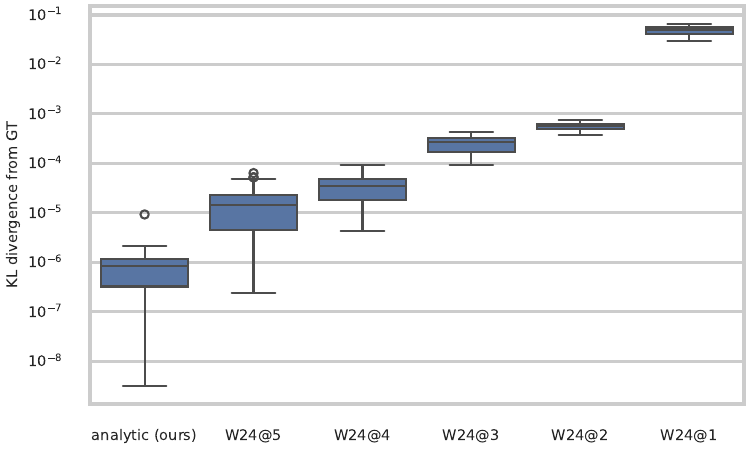}
  \caption{KL divergence between ground truth predictive distribution (by Monte Carlo) and approximations for the forest fires dataset. W24@$k$ means \citet{wright_analytic_2024} with $k$ terms in the series expansion.}
\end{figure}

\begin{figure}
  \centering
  \includegraphics[width=\linewidth]{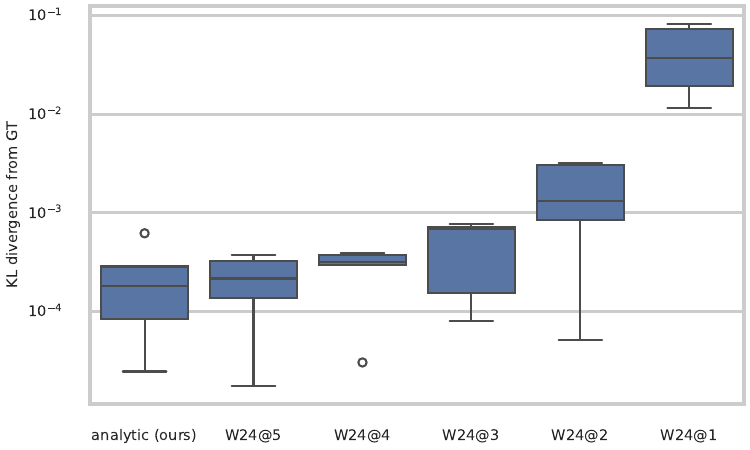}
  \caption{KL divergence between ground truth predictive distribution (by Monte Carlo) and approximations for the infrared thermography temperature dataset. W24@$k$ means \citet{wright_analytic_2024} with $k$ terms in the series expansion.}
\end{figure}

\begin{figure}
  \centering
  \includegraphics[width=\linewidth]{generated/bayes/boxplots/concrete-compressive-strength.pdf}
  \caption{KL divergence between ground truth predictive distribution (by Monte Carlo) and approximations for the concrete compressive strength dataset. W24@$k$ means \citet{wright_analytic_2024} with $k$ terms in the series expansion.}
\end{figure}

\begin{figure}
  \centering
  \includegraphics[width=\linewidth]{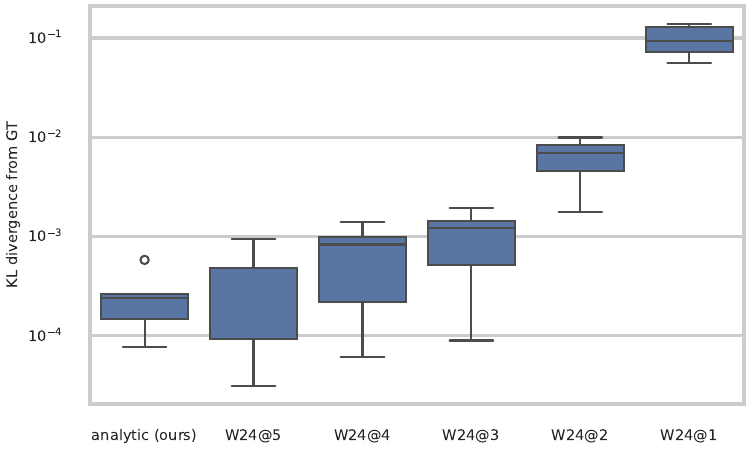}
  \caption{KL divergence between ground truth predictive distribution (by Monte Carlo) and approximations for the airfoil self-noise dataset. W24@$k$ means \citet{wright_analytic_2024} with $k$ terms in the series expansion.}
\end{figure}

\begin{figure}
  \centering
  \includegraphics[width=\linewidth]{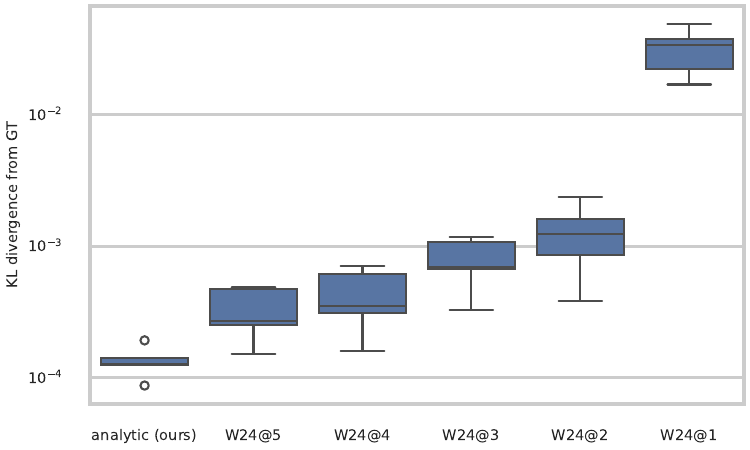}
  \caption{KL divergence between ground truth predictive distribution (by Monte Carlo) and approximations for the communities and crime dataset. W24@$k$ means \citet{wright_analytic_2024} with $k$ terms in the series expansion.}
\end{figure}

\begin{figure}
  \centering
  \includegraphics[width=\linewidth]{generated/bayes/boxplots/parkinsons-telemonitoring.pdf}
  \caption{KL divergence between ground truth predictive distribution (by Monte Carlo) and approximations for the Parkinsons telemonitoring dataset. W24@$k$ means \citet{wright_analytic_2024} with $k$ terms in the series expansion.}
\end{figure}

\begin{figure}
  \centering
  \includegraphics[width=\linewidth]{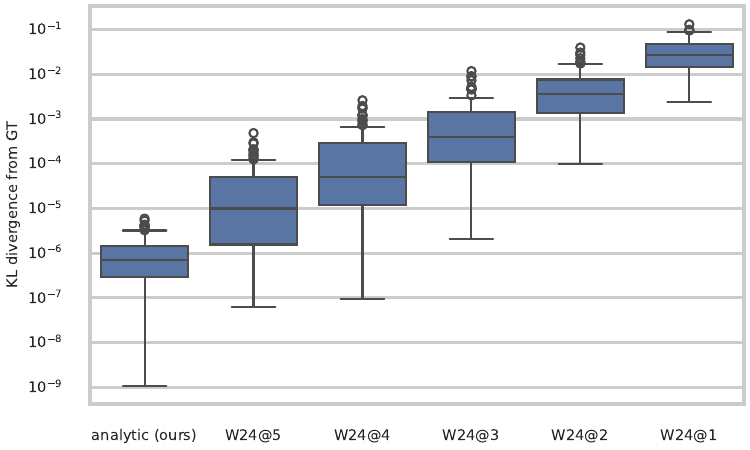}
  \caption{KL divergence between ground truth predictive distribution (by Monte Carlo) and approximations for the combined cycle power plant dataset. W24@$k$ means \citet{wright_analytic_2024} with $k$ terms in the series expansion.}
\end{figure}

\clearpage

\section{Supplement to \S\ref{sec:stochastic-networks}}
\label{app:stochastic-networks}
The random variable \(\bm{1}_{U < \Phi(x)}\) has a Bernoulli distribution with parameter \(\Phi(x)\),
which contributes an additional diagonal variance term to Lemma~\ref{lem:moment-maps} as we see from applying the Law of Total Covariance:
\begin{multline}
  \del{\Cov g_{\tilde \sigma}(X; A, b, C, d)}_{i, j}
  =
    \del{\Cov \expect \sbr{g_{\tilde \sigma}(X; A, b, C, d) \mid X}}_{i, j}
    \\
    +
    \del{\expect \Cov \sbr{g_{\tilde \sigma}(X; A, b, C, d) \mid X}}_{i, j}
\end{multline}
\begin{align}
\del{\Cov \expect \sbr{g_{\tilde \sigma}(X; A, b, C, d) \mid X}}_{i, j}
&= \del{\Cov g_{\sigma}(X; A, b, C, d)}_{i, j}
  \intertext{since \(\expect \del{\tilde \sigma(X, U) \mid X} = \sigma(X)\) and}
  \del{\expect \Cov \sbr{g_{\tilde \sigma}(X; A, b, C, d) \mid X}}_{i, j}
  &=
    4\delta_{ij}
    \expect \Var \sbr{\del{g_{\tilde \sigma}(X; A, b, C, d)}_i \mid X}
    \\
    &= 4\delta_{ij} \expect \Phi(\xi_i) \del{1 - \Phi(\xi_i)}
    \\
    \intertext{where \(\xi_i \sim \mathcal N(\mu_i, \nu_{ii})\).
    This can be expressed using Owen's T function,}
    \expect \Phi(\xi_i) \del{1 - \Phi(\xi_i)}
    &= 2 T\del{
      \frac{\mu_i}{\sqrt{1 + \nu_{ii}}},
      \frac{\nu_{ii}}{1 + 2\nu_{ii}}
    },
\end{align}
which satisfies \citep[equation 20,010.4]{owen_table_1980}
\begin{align*}
  \int \Phi(\mu + \sqrt \nu z)^2 \phi(z) dz
  =
  \Phi\left(\frac{\mu}{\sqrt{1+\nu}}\right)
  -
  2T\left(
  \frac{\mu}{\sqrt{1+\nu}},
  \frac{1}{\sqrt{1+2\nu}}
  \right).
\end{align*}

\begin{figure}
  \centering
  \includegraphics[width=.7\linewidth]{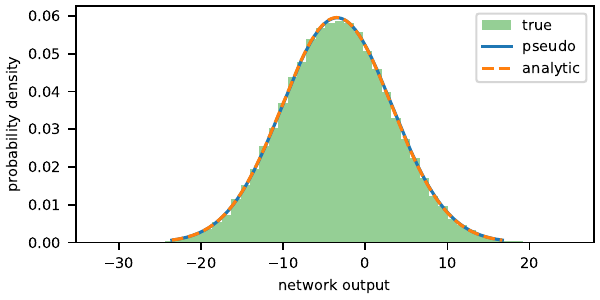}
  \caption{\label{fig:stochastic} Output distribution of a stochastic neural network, pseudo-true Normal distribution (``pseudo''), and layer-by-layer moment-matched Normal distribution (``analytic'').}
\end{figure}

\clearpage
\section{Supplement to \S\ref{sec:random-networks}}
\label{app:random-neural-networks}
We apply our method and other benchmarks to the random-network ensembles generated by \texttt{demo/test\_case.py}.
The script iterates over \(36\) ensembles:
\begin{itemize}
  \item network architecture \( \in \{\text{wide}, \text{deep}\} \)
  \item weights \( \in \{\text{initialized}, \text{trained}\} \)
  \item activation function \(\in\) \{probit, probit residual, sine, sine residual, GeLU, GeLU residual, ReLU, ReLU residual, Heaviside, Heaviside residual\}.
\end{itemize}
(The trained Heaviside and Heaviside-residual ensembles are skipped.)
From each ensemble, we sample one neural network and evaluate the goodness of approximation of the output distribution for input distributions: 
\begin{itemize}
  \item variance \( \in \{\text{small}, \text{medium}, \text{large}\} \).
\end{itemize}
In each case we compare the distributions of:
\begin{itemize}
  \item \(Y_\mathrm{true}\), the true distribution
  \item \(Y_\mathrm{pseudo}\), the pseudo-true Gaussian distribution
  \item \(Y_\mathrm{ana}\), the analytic layer-wise Gaussian approximation (our method)
  \item \(Y_\mathrm{mfa}\), mean-field: applying our method for moment propagation and setting off-diagonal layer covariances to zero
  \item \(Y_\mathrm{lin}\), linearization-based moment propagation
  \item \(Y_\mathrm{u'95}\), unscented transform of the whole network using \(\alpha=1, \beta=0, \kappa=0\)
  \item \(Y_\mathrm{u'02}\), unscented transform of the whole network using \(\alpha=0.001, \beta=2, \kappa=0\)
\end{itemize}
\subsection*{Ensembles of neural networks}
This section specifies the \(36\) ensembles of random neural networks and the three ensembles of inputs used to produce the \(108\) test cases that follow.
The neural networks in this example are parameterized as in Def.~\ref{def:neural-network}.
Let \(d_\mathrm{hidden}\) and \(w_\mathrm{hidden}\) be the depth and width of the hidden layers, respectively.
A random neural network \(f:\mathbb{R}^3 \to \mathbb{R}\) is specified with \(d_\mathrm{hidden} + 1\) layers.
The output layer is linear, with weights \(C^{d_\mathrm{hidden}+1}\) and biases \(d^{d_\mathrm{hidden}+1}\).
The first hidden layer has \(C^1 = 0_{w_\mathrm{hidden} \times 3}\), \(d^1 = 0_{w_\mathrm{hidden}}\).
The four degrees of freedom in the test cases are:
\begin{description}
  \item[architecture] 
  \begin{itemize}
    \item if architecture = wide, then \(d_\mathrm{hidden} = 5\), \(w_\mathrm{hidden} = 400\);
    \item if architecture = deep, then \(d_\mathrm{hidden} = 20\), \(w_\mathrm{hidden} = 100\).
  \end{itemize}
  \item[weights]
  \begin{itemize}
    \item if weights = initialized:
    \begin{itemize}
      \item \(A\) matrices are initialized with i.i.d.~Gaussian entries having mean zero and variance equal to \(2\) divided by the number of columns
      \item if \(\text{activation} \in \{\text{probit}, \text{probit residual}\}\), then \(b\) vectors are initialized with independently sampled entries from \(\mathcal N(0, 1)\)
      \item if \(\text{activation} \in \{\text{sine}, \text{sine residual}\}\), then \(b\) vectors are initialized with independently sampled entries from \(\mathcal U(-\pi, \pi)\)
      \item if \(\text{activation} \in \{\text{GeLU}, \text{GeLU residual}, \text{ReLU}, \text{ReLU residual}, \text{Heaviside}, \text{Heaviside residual}\}\), then \(b\) vectors are initialized with independently sampled entries from \(\mathcal N(0, 1)\)
      \item if the activation is non-residual, then hidden-layer \(C\) matrices are initialized to the zero matrix.
      \item if the activation is residual, then square hidden-layer \(C\) matrices after the first hidden layer are initialized to the identity matrix, and all other hidden-layer \(C\) matrices are initialized to the zero matrix.
      \item the output-layer \(C\) matrix is initialized with i.i.d.~Gaussian entries having mean zero and variance equal to the reciprocal of the number of columns.
      \item \(d\) vectors are initialized to the zero vector.
    \end{itemize}
    \item if weights = trained and \(\text{activation} \not\in \{\text{Heaviside}, \text{Heaviside residual}\}\), after initialization as above, the neural network is trained to minimize the mean squared error loss on a pseudorandomly generated dataset of ten \((x, y)\) samples drawn from \(\mathcal N(0, 1)\).
    Training consists of using the AdamW optimizer \citep{loshchilov_decoupled_2019} with a learning rate of \(10^{-6}\) for 30,000 iterations and until the loss is less than \(10^{-8}\) (whichever is later).
    Implementation is due to Optax, \citet{deepmind_deepmind_2020}.
  \end{itemize}
  \item[activation function] 
  \begin{itemize}
    \item if \(\text{activation} \in \{\text{probit}, \text{probit residual}\}\), then \(\sigma(x) = 2\Phi(x) - 1\) where \(\Phi\) is the cumulative distribution function of the standard normal distribution
    \item if \(\text{activation} \in \{\text{sine}, \text{sine residual}\}\), then \(\sigma(x) = \sin(x)\)
    \item if \(\text{activation} \in \{\text{GeLU}, \text{GeLU residual}\}\), then \(\sigma(x) = x \Phi(x)\).
    \item if \(\text{activation} \in \{\text{ReLU}, \text{ReLU residual}\}\), then \(\sigma(x) = \max(0, x)\).
    \item if \(\text{activation} \in \{\text{Heaviside}, \text{Heaviside residual}\}\), then \(\sigma(x) = \bm{1}_{\{x \geq 0\}}\)
  \end{itemize}
  \item[variance]
  \begin{itemize}
    \item if variance = small, then \(X \sim \mathcal N(0, 10^{-2}I)\)
    \item if variance = medium, then \(X \sim \mathcal N(0, I)\)
    \item if variance = large, then \(X \sim \mathcal N(0, 10^2I)\)
  \end{itemize}
\end{description}

\subsection*{Simulation and reporting}
The only stochastic uncertainty in these numerical results comes from estimating \(Y_\mathrm{true}=f(X)\) and \(Y_\mathrm{pseudo}\).
For each test case, \texttt{demo/test\_case.py} uses twenty independently scrambled quasi-Monte Carlo realizations of \(N=2^{16}\) samples \citep{virtanen_scipy_2020}.
The pseudo-true Gaussian \(Y_\mathrm{pseudo}\) is recomputed from each realization using the sample mean and covariance.
Tables report statistical uncertainty in the form \(\text{mean} \pm \text{standard error}\) across the twenty realizations.
Distribution plots pool the twenty realizations and display the resulting \(Y_\mathrm{true}\) sample with a histogram.

The reported Wasserstein statistic between a Normal distribution and \(Y_\mathrm{true}\) is computed by
\begin{align*}
  d_\mathrm{W}(\mathcal N(\mu, \sigma^2), Y_\mathrm{true}) 
  &\approx s_Y^{-1/2}\frac{1}{N}\sum_{i=1}^N \left|y_{(i)} - Q\del{\frac{i -1/2}{N}}\right|
\end{align*}
where \(\{y_{(i)}\}_{i=1}^N\) are the quasi-Monte Carlo samples of \(Y_\mathrm{true}\) sorted in ascending order, \(Q\) is the quantile function of \(\mathcal N(\mu, \sigma^2)\), and \(s_Y\) is the sample standard deviation of the same realization.
When plotting distributions, we show \(Y_\mathrm{true}\) using a histogram with 50 bins and overlay the pseudo-true, analytic, mean-field, linear, and two unscented Gaussian densities.

The horizontal axes are scaled to include the 0.5th (99.5th) percentiles of the quasi-Monte Carlo samples of \(Y_\mathrm{true}\), or the mean minus (plus) three standard deviations of \(Y_\mathrm{pseudo}\), whichever is smaller (greater).

The entire suite of 108 test cases is laptop-scale; including initialization, training, quasi-Monte Carlo, and reporting, a representative run took 18 minutes on an Ubuntu system with an 11th Gen Intel® Core™ i7-11850H CPU and 32GB of RAM.

\clearpage
\subsection{Summaries}
\label{sec:random-neural-networks-summaries}
\begin{figure}[H]
\begin{center}
  \includegraphics{generated/kl_boxplot_small_variance.pdf}
\end{center}
\caption{Comparison of goodness of approximation (lower KL divergence is better) for all random neural networks, grouped by approximation method, in the small input variance scenario.}
\end{figure}
\begin{figure}[H]
\begin{center}
  \includegraphics{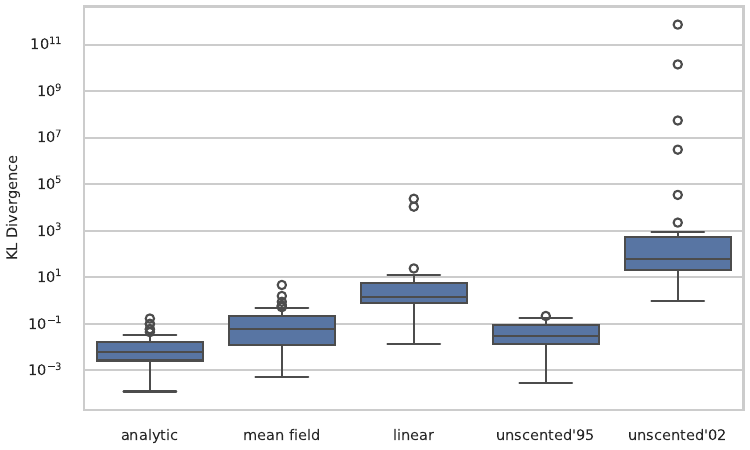}
\end{center}
\caption{Comparison of goodness of approximation (lower KL divergence is better) for all random neural networks, grouped by approximation method, in the medium input variance scenario.}
\end{figure}
\begin{figure}[H]
\begin{center}
  \includegraphics{generated/kl_boxplot_large_variance.pdf}
\end{center}
\caption{Comparison of goodness of approximation (lower KL divergence is better) for all random neural networks, grouped by approximation method, in the large input variance scenario.}
\end{figure}
\clearpage
\begin{table}[H]\begin{center}\input{generated/tables/moments/RandomNeuralNetworkTestCase__network=wide_initialized_probit,variance=Variance.SMALL.tex}
\end{center}
\caption{Comparison of moments for Network(architecture=wide, weights=initialized, activation=probit), variance=small}
\end{table}\begin{table}[H]\begin{center}\input{generated/tables/divergences/RandomNeuralNetworkTestCase__network=wide_initialized_probit,variance=Variance.SMALL.tex}
\end{center}
\caption{Comparison of statistical distances for Network(architecture=wide, weights=initialized, activation=probit), variance=small}
\end{table}\begin{figure}[H]\begin{center}
\includegraphics{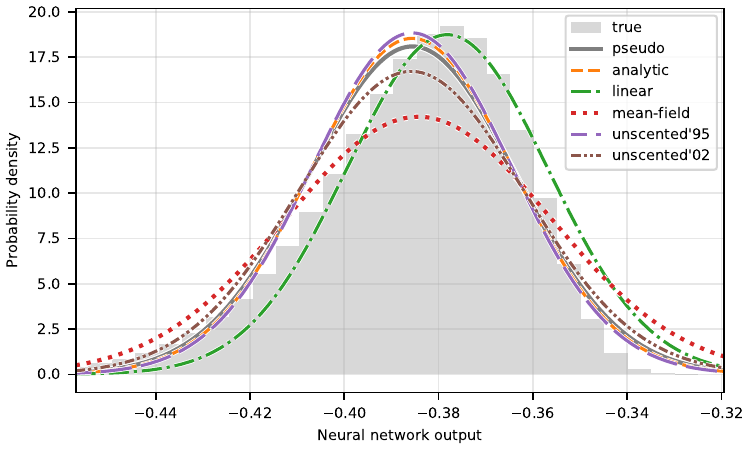}
\end{center}
\caption{Probability distributions for Network(architecture=wide, weights=initialized, activation=probit), variance=small}
\end{figure}\clearpage
\begin{table}[H]\begin{center}\input{generated/tables/moments/RandomNeuralNetworkTestCase__network=wide_initialized_probit,variance=Variance.MEDIUM.tex}
\end{center}
\caption{Comparison of moments for Network(architecture=wide, weights=initialized, activation=probit), variance=medium}
\end{table}\begin{table}[H]\begin{center}\input{generated/tables/divergences/RandomNeuralNetworkTestCase__network=wide_initialized_probit,variance=Variance.MEDIUM.tex}
\end{center}
\caption{Comparison of statistical distances for Network(architecture=wide, weights=initialized, activation=probit), variance=medium}
\end{table}\begin{figure}[H]\begin{center}
\includegraphics{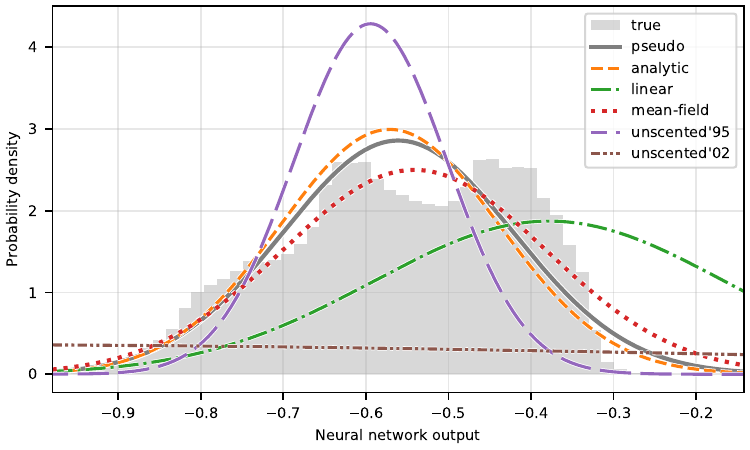}
\end{center}
\caption{Probability distributions for Network(architecture=wide, weights=initialized, activation=probit), variance=medium}
\end{figure}\clearpage
\begin{table}[H]\begin{center}\input{generated/tables/moments/RandomNeuralNetworkTestCase__network=wide_initialized_probit,variance=Variance.LARGE.tex}
\end{center}
\caption{Comparison of moments for Network(architecture=wide, weights=initialized, activation=probit), variance=large}
\end{table}\begin{table}[H]\begin{center}\input{generated/tables/divergences/RandomNeuralNetworkTestCase__network=wide_initialized_probit,variance=Variance.LARGE.tex}
\end{center}
\caption{Comparison of statistical distances for Network(architecture=wide, weights=initialized, activation=probit), variance=large}
\end{table}\begin{figure}[H]\begin{center}
\includegraphics{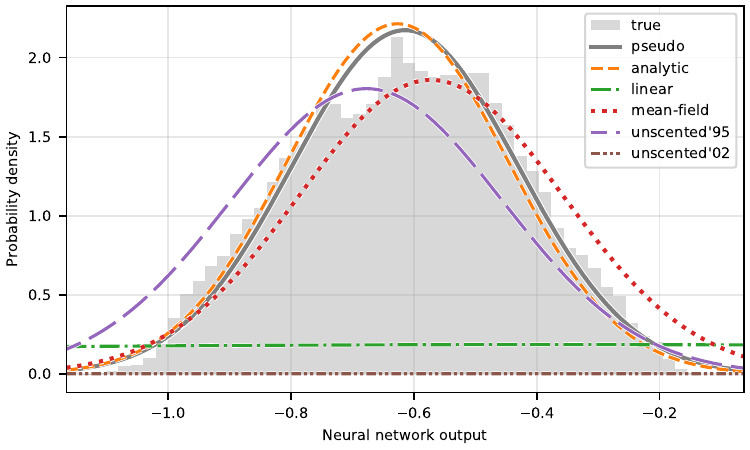}
\end{center}
\caption{Probability distributions for Network(architecture=wide, weights=initialized, activation=probit), variance=large}
\end{figure}\clearpage
\begin{table}[H]\begin{center}\input{generated/tables/moments/RandomNeuralNetworkTestCase__network=wide_trained_probit,variance=Variance.SMALL.tex}
\end{center}
\caption{Comparison of moments for Network(architecture=wide, weights=trained, activation=probit), variance=small}
\end{table}\begin{table}[H]\begin{center}\input{generated/tables/divergences/RandomNeuralNetworkTestCase__network=wide_trained_probit,variance=Variance.SMALL.tex}
\end{center}
\caption{Comparison of statistical distances for Network(architecture=wide, weights=trained, activation=probit), variance=small}
\end{table}\begin{figure}[H]\begin{center}
\includegraphics{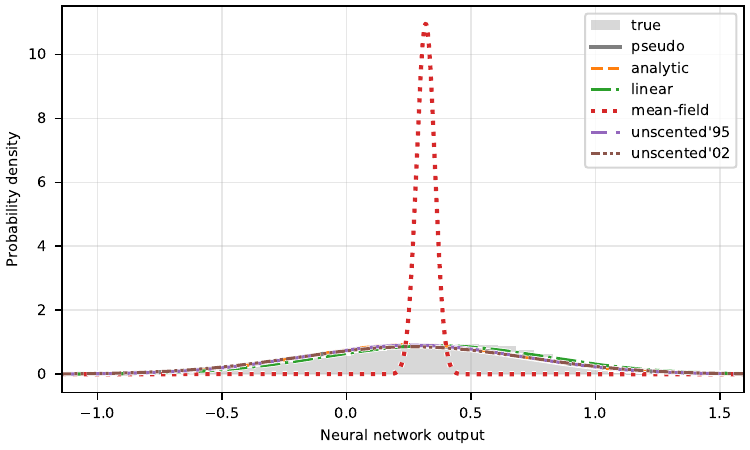}
\end{center}
\caption{Probability distributions for Network(architecture=wide, weights=trained, activation=probit), variance=small}
\end{figure}\clearpage
\begin{table}[H]\begin{center}\input{generated/tables/moments/RandomNeuralNetworkTestCase__network=wide_trained_probit,variance=Variance.MEDIUM.tex}
\end{center}
\caption{Comparison of moments for Network(architecture=wide, weights=trained, activation=probit), variance=medium}
\end{table}\begin{table}[H]\begin{center}\input{generated/tables/divergences/RandomNeuralNetworkTestCase__network=wide_trained_probit,variance=Variance.MEDIUM.tex}
\end{center}
\caption{Comparison of statistical distances for Network(architecture=wide, weights=trained, activation=probit), variance=medium}
\end{table}\begin{figure}[H]\begin{center}
\includegraphics{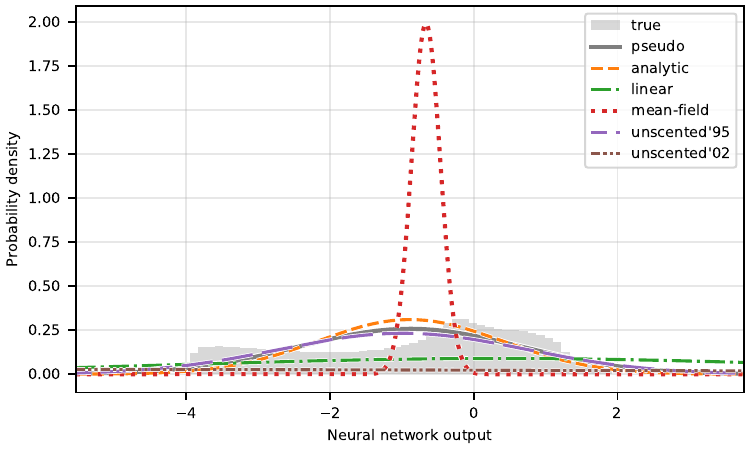}
\end{center}
\caption{Probability distributions for Network(architecture=wide, weights=trained, activation=probit), variance=medium}
\end{figure}\clearpage
\begin{table}[H]\begin{center}\input{generated/tables/moments/RandomNeuralNetworkTestCase__network=wide_trained_probit,variance=Variance.LARGE.tex}
\end{center}
\caption{Comparison of moments for Network(architecture=wide, weights=trained, activation=probit), variance=large}
\end{table}\begin{table}[H]\begin{center}\input{generated/tables/divergences/RandomNeuralNetworkTestCase__network=wide_trained_probit,variance=Variance.LARGE.tex}
\end{center}
\caption{Comparison of statistical distances for Network(architecture=wide, weights=trained, activation=probit), variance=large}
\end{table}\begin{figure}[H]\begin{center}
\includegraphics{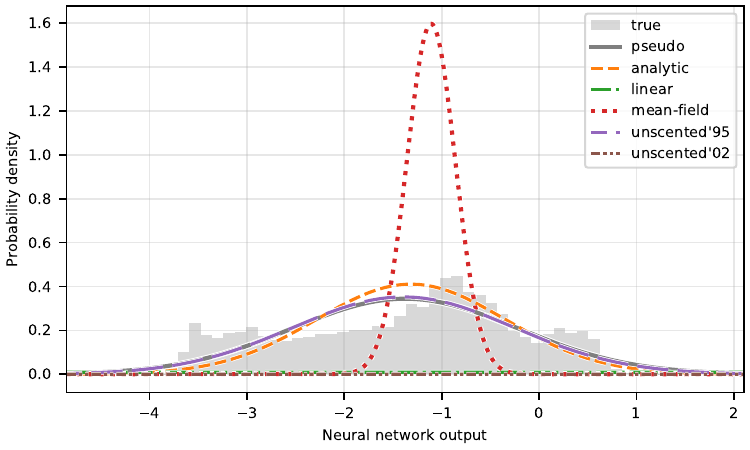}
\end{center}
\caption{Probability distributions for Network(architecture=wide, weights=trained, activation=probit), variance=large}
\end{figure}\clearpage
\begin{table}[H]\begin{center}\input{generated/tables/moments/RandomNeuralNetworkTestCase__network=wide_initialized_probit_residual,variance=Variance.SMALL.tex}
\end{center}
\caption{Comparison of moments for Network(architecture=wide, weights=initialized, activation=probit residual), variance=small}
\end{table}\begin{table}[H]\begin{center}\input{generated/tables/divergences/RandomNeuralNetworkTestCase__network=wide_initialized_probit_residual,variance=Variance.SMALL.tex}
\end{center}
\caption{Comparison of statistical distances for Network(architecture=wide, weights=initialized, activation=probit residual), variance=small}
\end{table}\begin{figure}[H]\begin{center}
\includegraphics{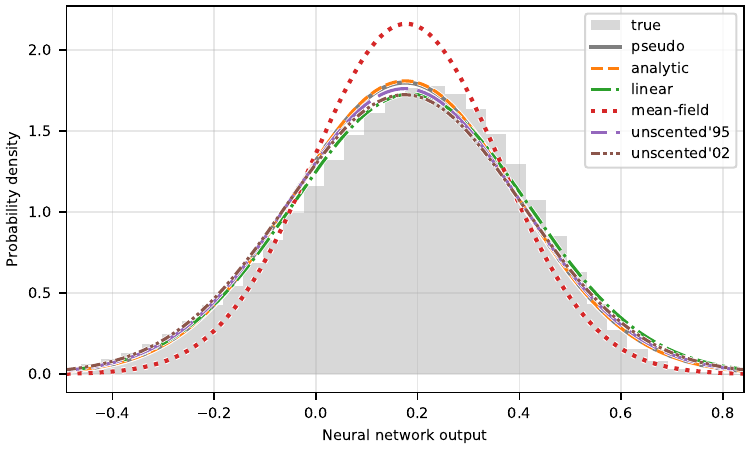}
\end{center}
\caption{Probability distributions for Network(architecture=wide, weights=initialized, activation=probit residual), variance=small}
\end{figure}\clearpage
\begin{table}[H]\begin{center}\input{generated/tables/moments/RandomNeuralNetworkTestCase__network=wide_initialized_probit_residual,variance=Variance.MEDIUM.tex}
\end{center}
\caption{Comparison of moments for Network(architecture=wide, weights=initialized, activation=probit residual), variance=medium}
\end{table}\begin{table}[H]\begin{center}\input{generated/tables/divergences/RandomNeuralNetworkTestCase__network=wide_initialized_probit_residual,variance=Variance.MEDIUM.tex}
\end{center}
\caption{Comparison of statistical distances for Network(architecture=wide, weights=initialized, activation=probit residual), variance=medium}
\end{table}\begin{figure}[H]\begin{center}
\includegraphics{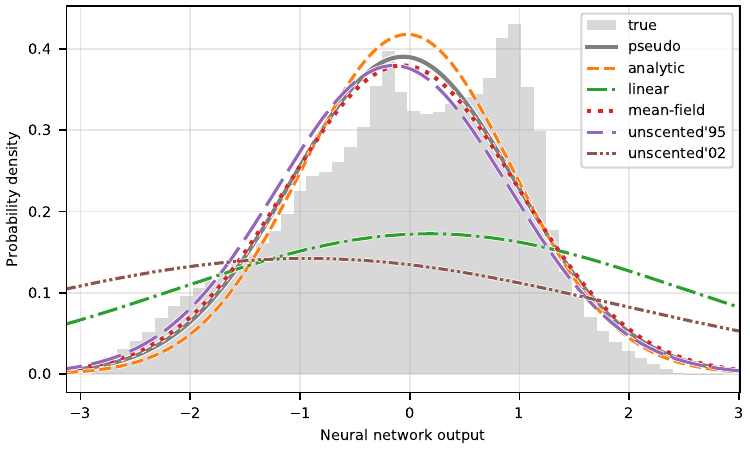}
\end{center}
\caption{Probability distributions for Network(architecture=wide, weights=initialized, activation=probit residual), variance=medium}
\end{figure}\clearpage
\begin{table}[H]\begin{center}\input{generated/tables/moments/RandomNeuralNetworkTestCase__network=wide_initialized_probit_residual,variance=Variance.LARGE.tex}
\end{center}
\caption{Comparison of moments for Network(architecture=wide, weights=initialized, activation=probit residual), variance=large}
\end{table}\begin{table}[H]\begin{center}\input{generated/tables/divergences/RandomNeuralNetworkTestCase__network=wide_initialized_probit_residual,variance=Variance.LARGE.tex}
\end{center}
\caption{Comparison of statistical distances for Network(architecture=wide, weights=initialized, activation=probit residual), variance=large}
\end{table}\begin{figure}[H]\begin{center}
\includegraphics{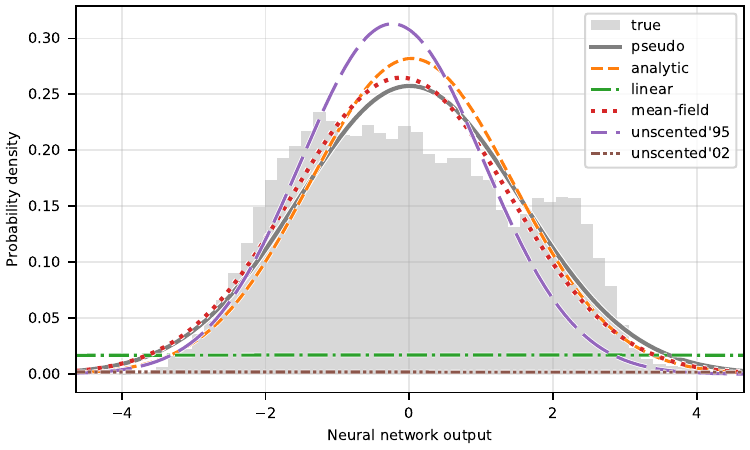}
\end{center}
\caption{Probability distributions for Network(architecture=wide, weights=initialized, activation=probit residual), variance=large}
\end{figure}\clearpage
\begin{table}[H]\begin{center}\input{generated/tables/moments/RandomNeuralNetworkTestCase__network=wide_trained_probit_residual,variance=Variance.SMALL.tex}
\end{center}
\caption{Comparison of moments for Network(architecture=wide, weights=trained, activation=probit residual), variance=small}
\end{table}\begin{table}[H]\begin{center}\input{generated/tables/divergences/RandomNeuralNetworkTestCase__network=wide_trained_probit_residual,variance=Variance.SMALL.tex}
\end{center}
\caption{Comparison of statistical distances for Network(architecture=wide, weights=trained, activation=probit residual), variance=small}
\end{table}\begin{figure}[H]\begin{center}
\includegraphics{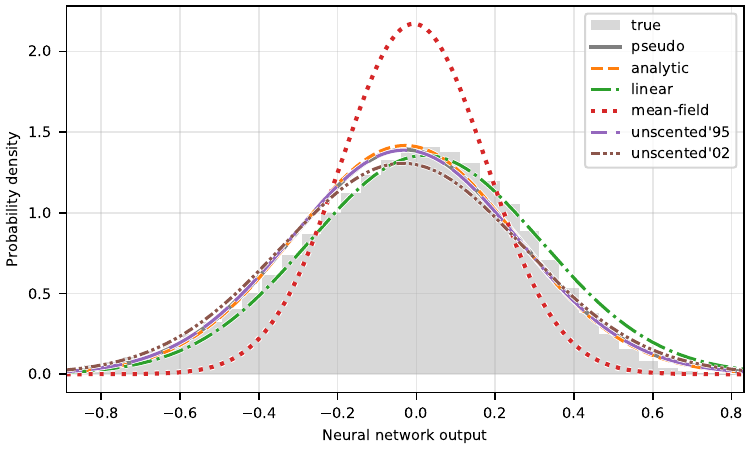}
\end{center}
\caption{Probability distributions for Network(architecture=wide, weights=trained, activation=probit residual), variance=small}
\end{figure}\clearpage
\begin{table}[H]\begin{center}\input{generated/tables/moments/RandomNeuralNetworkTestCase__network=wide_trained_probit_residual,variance=Variance.MEDIUM.tex}
\end{center}
\caption{Comparison of moments for Network(architecture=wide, weights=trained, activation=probit residual), variance=medium}
\end{table}\begin{table}[H]\begin{center}\input{generated/tables/divergences/RandomNeuralNetworkTestCase__network=wide_trained_probit_residual,variance=Variance.MEDIUM.tex}
\end{center}
\caption{Comparison of statistical distances for Network(architecture=wide, weights=trained, activation=probit residual), variance=medium}
\end{table}\begin{figure}[H]\begin{center}
\includegraphics{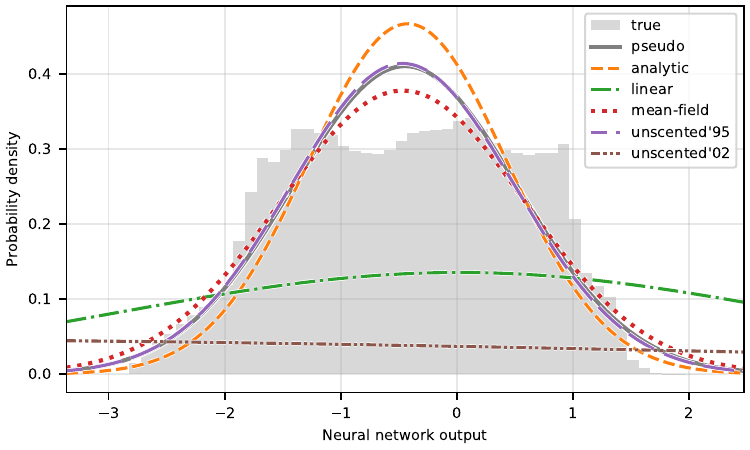}
\end{center}
\caption{Probability distributions for Network(architecture=wide, weights=trained, activation=probit residual), variance=medium}
\end{figure}\clearpage
\begin{table}[H]\begin{center}\input{generated/tables/moments/RandomNeuralNetworkTestCase__network=wide_trained_probit_residual,variance=Variance.LARGE.tex}
\end{center}
\caption{Comparison of moments for Network(architecture=wide, weights=trained, activation=probit residual), variance=large}
\end{table}\begin{table}[H]\begin{center}\input{generated/tables/divergences/RandomNeuralNetworkTestCase__network=wide_trained_probit_residual,variance=Variance.LARGE.tex}
\end{center}
\caption{Comparison of statistical distances for Network(architecture=wide, weights=trained, activation=probit residual), variance=large}
\end{table}\begin{figure}[H]\begin{center}
\includegraphics{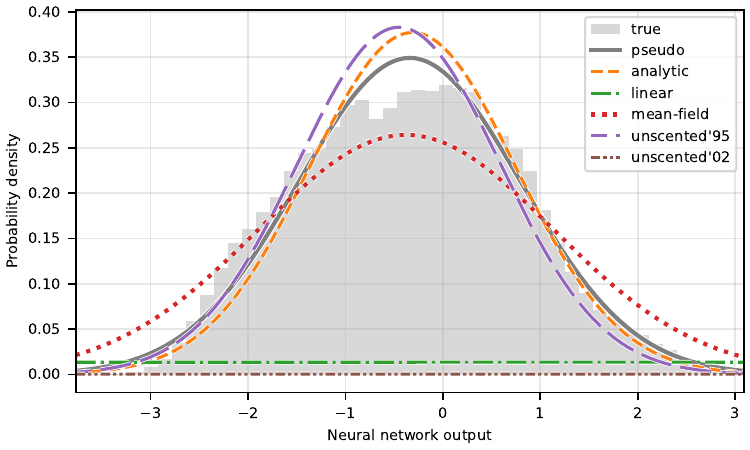}
\end{center}
\caption{Probability distributions for Network(architecture=wide, weights=trained, activation=probit residual), variance=large}
\end{figure}\clearpage
\begin{table}[H]\begin{center}\input{generated/tables/moments/RandomNeuralNetworkTestCase__network=wide_initialized_sine,variance=Variance.SMALL.tex}
\end{center}
\caption{Comparison of moments for Network(architecture=wide, weights=initialized, activation=sine), variance=small}
\end{table}\begin{table}[H]\begin{center}\input{generated/tables/divergences/RandomNeuralNetworkTestCase__network=wide_initialized_sine,variance=Variance.SMALL.tex}
\end{center}
\caption{Comparison of statistical distances for Network(architecture=wide, weights=initialized, activation=sine), variance=small}
\end{table}\begin{figure}[H]\begin{center}
\includegraphics{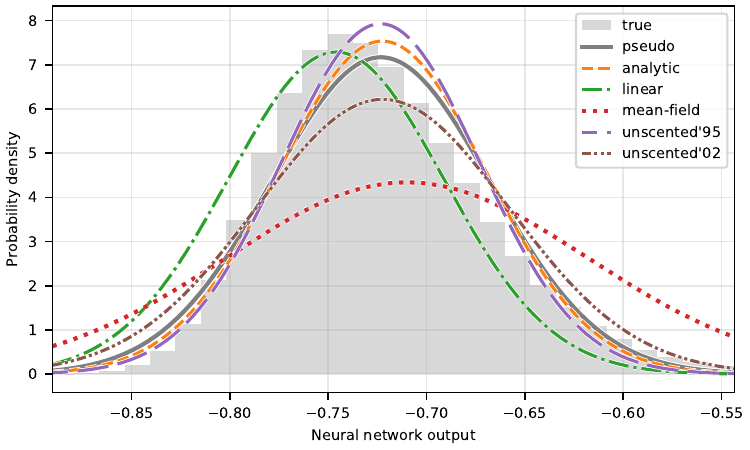}
\end{center}
\caption{Probability distributions for Network(architecture=wide, weights=initialized, activation=sine), variance=small}
\end{figure}\clearpage
\begin{table}[H]\begin{center}\input{generated/tables/moments/RandomNeuralNetworkTestCase__network=wide_initialized_sine,variance=Variance.MEDIUM.tex}
\end{center}
\caption{Comparison of moments for Network(architecture=wide, weights=initialized, activation=sine), variance=medium}
\end{table}\begin{table}[H]\begin{center}\input{generated/tables/divergences/RandomNeuralNetworkTestCase__network=wide_initialized_sine,variance=Variance.MEDIUM.tex}
\end{center}
\caption{Comparison of statistical distances for Network(architecture=wide, weights=initialized, activation=sine), variance=medium}
\end{table}\begin{figure}[H]\begin{center}
\includegraphics{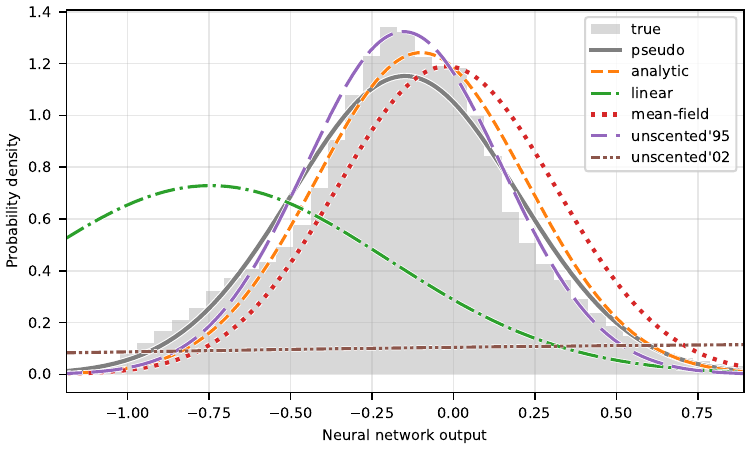}
\end{center}
\caption{Probability distributions for Network(architecture=wide, weights=initialized, activation=sine), variance=medium}
\end{figure}\clearpage
\begin{table}[H]\begin{center}\input{generated/tables/moments/RandomNeuralNetworkTestCase__network=wide_initialized_sine,variance=Variance.LARGE.tex}
\end{center}
\caption{Comparison of moments for Network(architecture=wide, weights=initialized, activation=sine), variance=large}
\end{table}\begin{table}[H]\begin{center}\input{generated/tables/divergences/RandomNeuralNetworkTestCase__network=wide_initialized_sine,variance=Variance.LARGE.tex}
\end{center}
\caption{Comparison of statistical distances for Network(architecture=wide, weights=initialized, activation=sine), variance=large}
\end{table}\begin{figure}[H]\begin{center}
\includegraphics{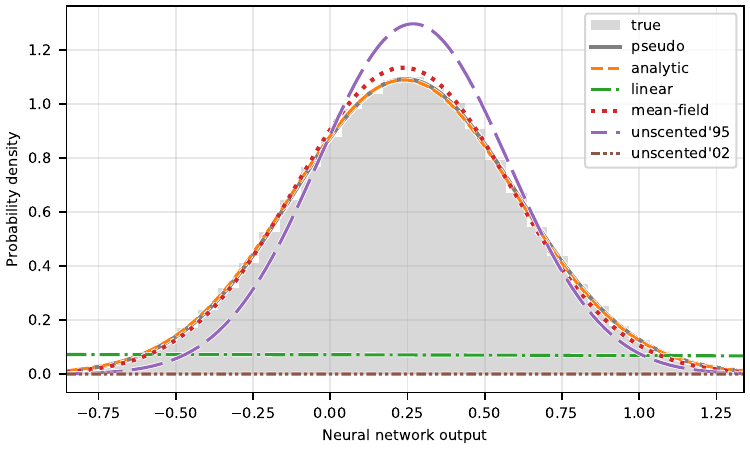}
\end{center}
\caption{Probability distributions for Network(architecture=wide, weights=initialized, activation=sine), variance=large}
\end{figure}\clearpage
\begin{table}[H]\begin{center}\input{generated/tables/moments/RandomNeuralNetworkTestCase__network=wide_trained_sine,variance=Variance.SMALL.tex}
\end{center}
\caption{Comparison of moments for Network(architecture=wide, weights=trained, activation=sine), variance=small}
\end{table}\begin{table}[H]\begin{center}\input{generated/tables/divergences/RandomNeuralNetworkTestCase__network=wide_trained_sine,variance=Variance.SMALL.tex}
\end{center}
\caption{Comparison of statistical distances for Network(architecture=wide, weights=trained, activation=sine), variance=small}
\end{table}\begin{figure}[H]\begin{center}
\includegraphics{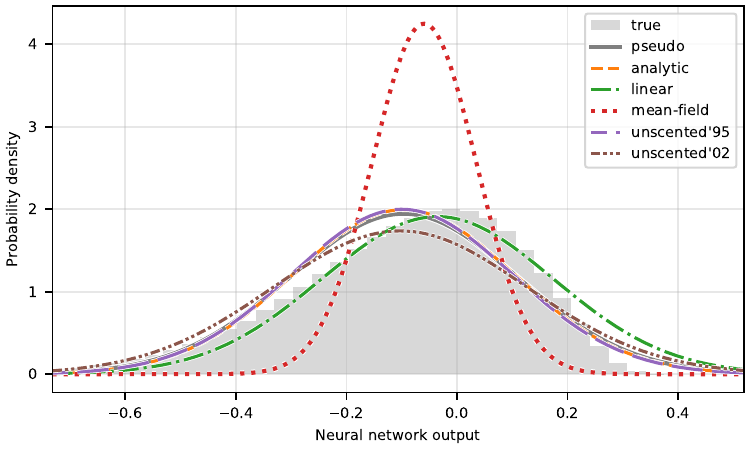}
\end{center}
\caption{Probability distributions for Network(architecture=wide, weights=trained, activation=sine), variance=small}
\end{figure}\clearpage
\begin{table}[H]\begin{center}\input{generated/tables/moments/RandomNeuralNetworkTestCase__network=wide_trained_sine,variance=Variance.MEDIUM.tex}
\end{center}
\caption{Comparison of moments for Network(architecture=wide, weights=trained, activation=sine), variance=medium}
\end{table}\begin{table}[H]\begin{center}\input{generated/tables/divergences/RandomNeuralNetworkTestCase__network=wide_trained_sine,variance=Variance.MEDIUM.tex}
\end{center}
\caption{Comparison of statistical distances for Network(architecture=wide, weights=trained, activation=sine), variance=medium}
\end{table}\begin{figure}[H]\begin{center}
\includegraphics{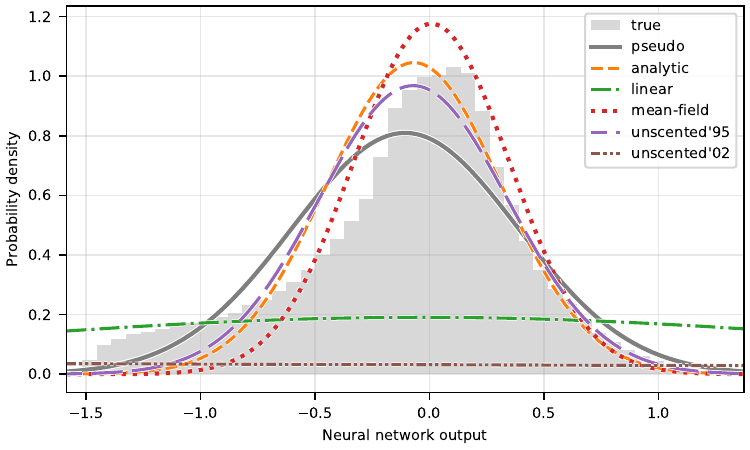}
\end{center}
\caption{Probability distributions for Network(architecture=wide, weights=trained, activation=sine), variance=medium}
\end{figure}\clearpage
\begin{table}[H]\begin{center}\input{generated/tables/moments/RandomNeuralNetworkTestCase__network=wide_trained_sine,variance=Variance.LARGE.tex}
\end{center}
\caption{Comparison of moments for Network(architecture=wide, weights=trained, activation=sine), variance=large}
\end{table}\begin{table}[H]\begin{center}\input{generated/tables/divergences/RandomNeuralNetworkTestCase__network=wide_trained_sine,variance=Variance.LARGE.tex}
\end{center}
\caption{Comparison of statistical distances for Network(architecture=wide, weights=trained, activation=sine), variance=large}
\end{table}\begin{figure}[H]\begin{center}
\includegraphics{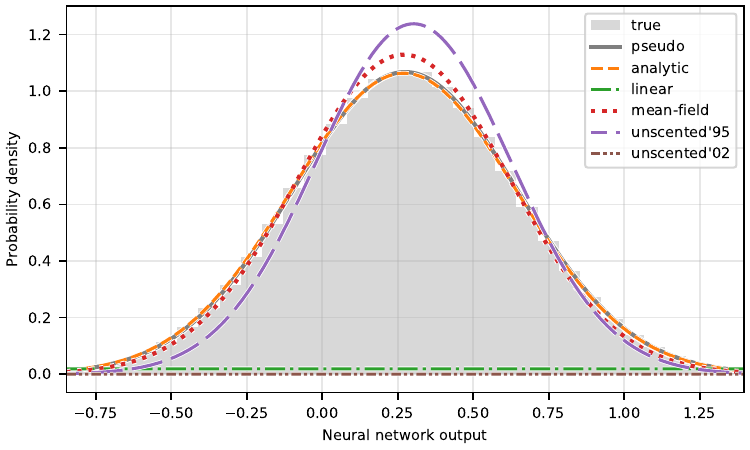}
\end{center}
\caption{Probability distributions for Network(architecture=wide, weights=trained, activation=sine), variance=large}
\end{figure}\clearpage
\begin{table}[H]\begin{center}\input{generated/tables/moments/RandomNeuralNetworkTestCase__network=wide_initialized_sine_residual,variance=Variance.SMALL.tex}
\end{center}
\caption{Comparison of moments for Network(architecture=wide, weights=initialized, activation=sine residual), variance=small}
\end{table}\begin{table}[H]\begin{center}\input{generated/tables/divergences/RandomNeuralNetworkTestCase__network=wide_initialized_sine_residual,variance=Variance.SMALL.tex}
\end{center}
\caption{Comparison of statistical distances for Network(architecture=wide, weights=initialized, activation=sine residual), variance=small}
\end{table}\begin{figure}[H]\begin{center}
\includegraphics{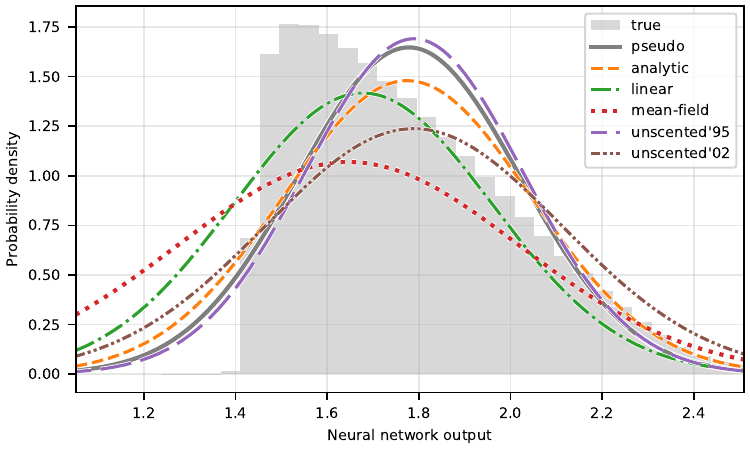}
\end{center}
\caption{Probability distributions for Network(architecture=wide, weights=initialized, activation=sine residual), variance=small}
\end{figure}\clearpage
\begin{table}[H]\begin{center}\input{generated/tables/moments/RandomNeuralNetworkTestCase__network=wide_initialized_sine_residual,variance=Variance.MEDIUM.tex}
\end{center}
\caption{Comparison of moments for Network(architecture=wide, weights=initialized, activation=sine residual), variance=medium}
\end{table}\begin{table}[H]\begin{center}\input{generated/tables/divergences/RandomNeuralNetworkTestCase__network=wide_initialized_sine_residual,variance=Variance.MEDIUM.tex}
\end{center}
\caption{Comparison of statistical distances for Network(architecture=wide, weights=initialized, activation=sine residual), variance=medium}
\end{table}\begin{figure}[H]\begin{center}
\includegraphics{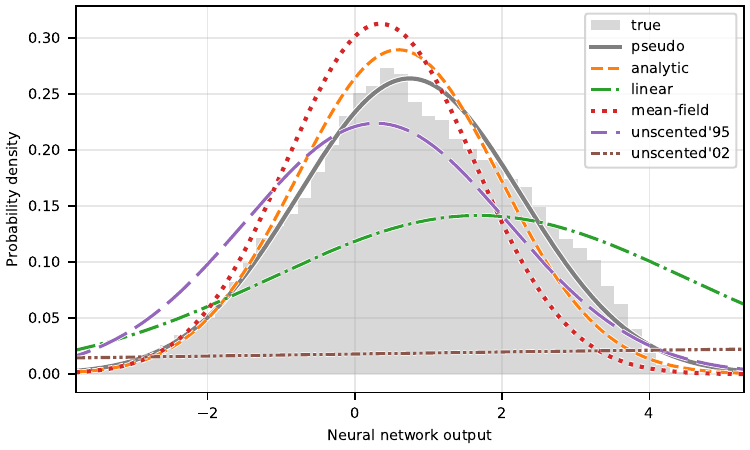}
\end{center}
\caption{Probability distributions for Network(architecture=wide, weights=initialized, activation=sine residual), variance=medium}
\end{figure}\clearpage
\begin{table}[H]\begin{center}\input{generated/tables/moments/RandomNeuralNetworkTestCase__network=wide_initialized_sine_residual,variance=Variance.LARGE.tex}
\end{center}
\caption{Comparison of moments for Network(architecture=wide, weights=initialized, activation=sine residual), variance=large}
\end{table}\begin{table}[H]\begin{center}\input{generated/tables/divergences/RandomNeuralNetworkTestCase__network=wide_initialized_sine_residual,variance=Variance.LARGE.tex}
\end{center}
\caption{Comparison of statistical distances for Network(architecture=wide, weights=initialized, activation=sine residual), variance=large}
\end{table}\begin{figure}[H]\begin{center}
\includegraphics{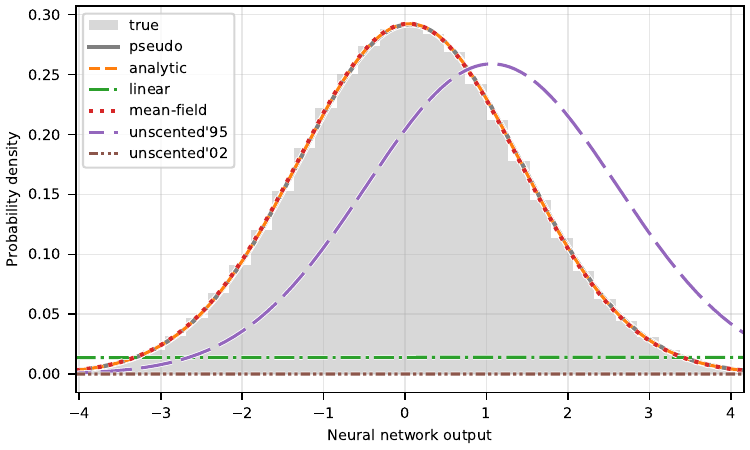}
\end{center}
\caption{Probability distributions for Network(architecture=wide, weights=initialized, activation=sine residual), variance=large}
\end{figure}\clearpage
\begin{table}[H]\begin{center}\input{generated/tables/moments/RandomNeuralNetworkTestCase__network=wide_trained_sine_residual,variance=Variance.SMALL.tex}
\end{center}
\caption{Comparison of moments for Network(architecture=wide, weights=trained, activation=sine residual), variance=small}
\end{table}\begin{table}[H]\begin{center}\input{generated/tables/divergences/RandomNeuralNetworkTestCase__network=wide_trained_sine_residual,variance=Variance.SMALL.tex}
\end{center}
\caption{Comparison of statistical distances for Network(architecture=wide, weights=trained, activation=sine residual), variance=small}
\end{table}\begin{figure}[H]\begin{center}
\includegraphics{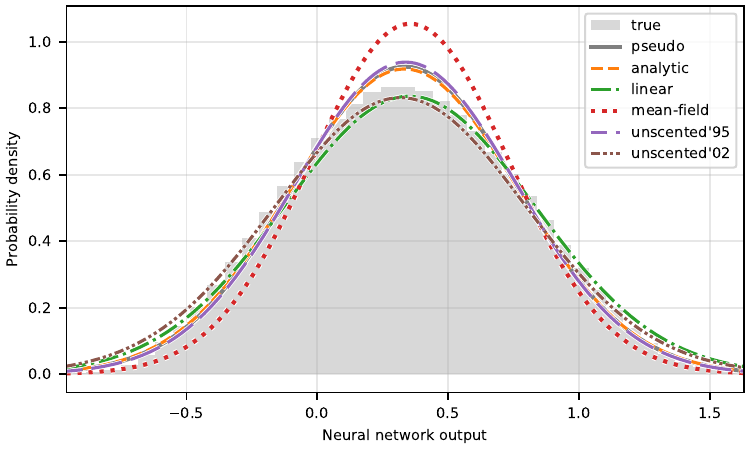}
\end{center}
\caption{Probability distributions for Network(architecture=wide, weights=trained, activation=sine residual), variance=small}
\end{figure}\clearpage
\begin{table}[H]\begin{center}\input{generated/tables/moments/RandomNeuralNetworkTestCase__network=wide_trained_sine_residual,variance=Variance.MEDIUM.tex}
\end{center}
\caption{Comparison of moments for Network(architecture=wide, weights=trained, activation=sine residual), variance=medium}
\end{table}\begin{table}[H]\begin{center}\input{generated/tables/divergences/RandomNeuralNetworkTestCase__network=wide_trained_sine_residual,variance=Variance.MEDIUM.tex}
\end{center}
\caption{Comparison of statistical distances for Network(architecture=wide, weights=trained, activation=sine residual), variance=medium}
\end{table}\begin{figure}[H]\begin{center}
\includegraphics{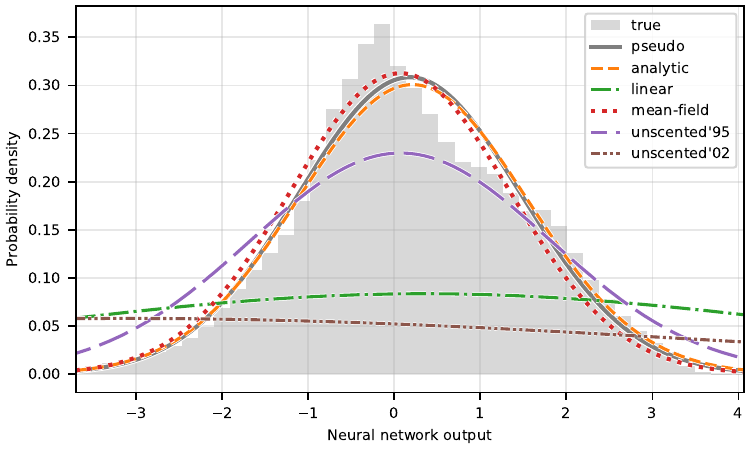}
\end{center}
\caption{Probability distributions for Network(architecture=wide, weights=trained, activation=sine residual), variance=medium}
\end{figure}\clearpage
\begin{table}[H]\begin{center}\input{generated/tables/moments/RandomNeuralNetworkTestCase__network=wide_trained_sine_residual,variance=Variance.LARGE.tex}
\end{center}
\caption{Comparison of moments for Network(architecture=wide, weights=trained, activation=sine residual), variance=large}
\end{table}\begin{table}[H]\begin{center}\input{generated/tables/divergences/RandomNeuralNetworkTestCase__network=wide_trained_sine_residual,variance=Variance.LARGE.tex}
\end{center}
\caption{Comparison of statistical distances for Network(architecture=wide, weights=trained, activation=sine residual), variance=large}
\end{table}\begin{figure}[H]\begin{center}
\includegraphics{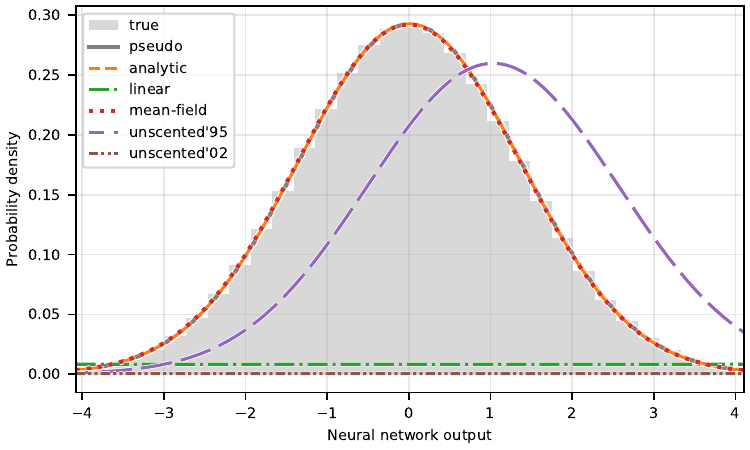}
\end{center}
\caption{Probability distributions for Network(architecture=wide, weights=trained, activation=sine residual), variance=large}
\end{figure}\clearpage
\begin{table}[H]\begin{center}\input{generated/tables/moments/RandomNeuralNetworkTestCase__network=wide_initialized_gelu,variance=Variance.SMALL.tex}
\end{center}
\caption{Comparison of moments for Network(architecture=wide, weights=initialized, activation=gelu), variance=small}
\end{table}\begin{table}[H]\begin{center}\input{generated/tables/divergences/RandomNeuralNetworkTestCase__network=wide_initialized_gelu,variance=Variance.SMALL.tex}
\end{center}
\caption{Comparison of statistical distances for Network(architecture=wide, weights=initialized, activation=gelu), variance=small}
\end{table}\begin{figure}[H]\begin{center}
\includegraphics{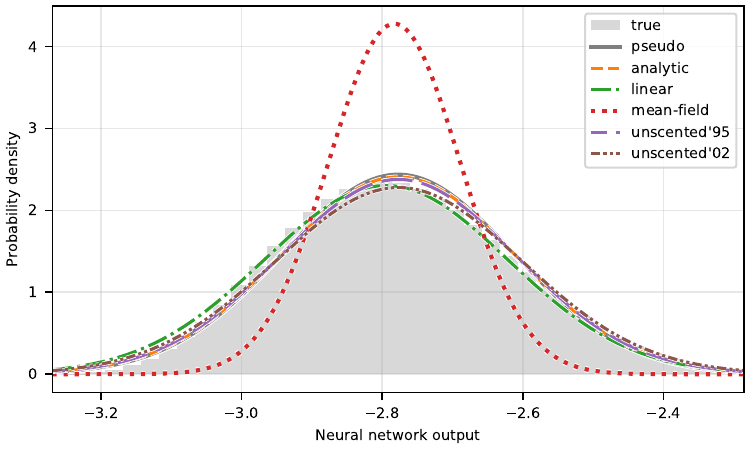}
\end{center}
\caption{Probability distributions for Network(architecture=wide, weights=initialized, activation=gelu), variance=small}
\end{figure}\clearpage
\begin{table}[H]\begin{center}\input{generated/tables/moments/RandomNeuralNetworkTestCase__network=wide_initialized_gelu,variance=Variance.MEDIUM.tex}
\end{center}
\caption{Comparison of moments for Network(architecture=wide, weights=initialized, activation=gelu), variance=medium}
\end{table}\begin{table}[H]\begin{center}\input{generated/tables/divergences/RandomNeuralNetworkTestCase__network=wide_initialized_gelu,variance=Variance.MEDIUM.tex}
\end{center}
\caption{Comparison of statistical distances for Network(architecture=wide, weights=initialized, activation=gelu), variance=medium}
\end{table}\begin{figure}[H]\begin{center}
\includegraphics{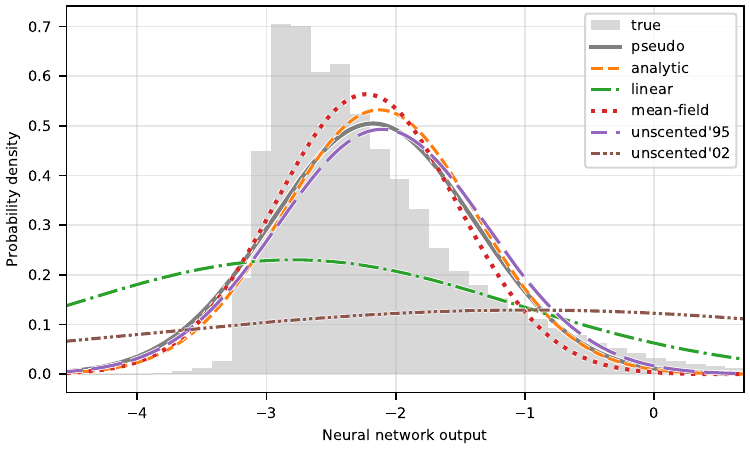}
\end{center}
\caption{Probability distributions for Network(architecture=wide, weights=initialized, activation=gelu), variance=medium}
\end{figure}\clearpage
\begin{table}[H]\begin{center}\input{generated/tables/moments/RandomNeuralNetworkTestCase__network=wide_initialized_gelu,variance=Variance.LARGE.tex}
\end{center}
\caption{Comparison of moments for Network(architecture=wide, weights=initialized, activation=gelu), variance=large}
\end{table}\begin{table}[H]\begin{center}\input{generated/tables/divergences/RandomNeuralNetworkTestCase__network=wide_initialized_gelu,variance=Variance.LARGE.tex}
\end{center}
\caption{Comparison of statistical distances for Network(architecture=wide, weights=initialized, activation=gelu), variance=large}
\end{table}\begin{figure}[H]\begin{center}
\includegraphics{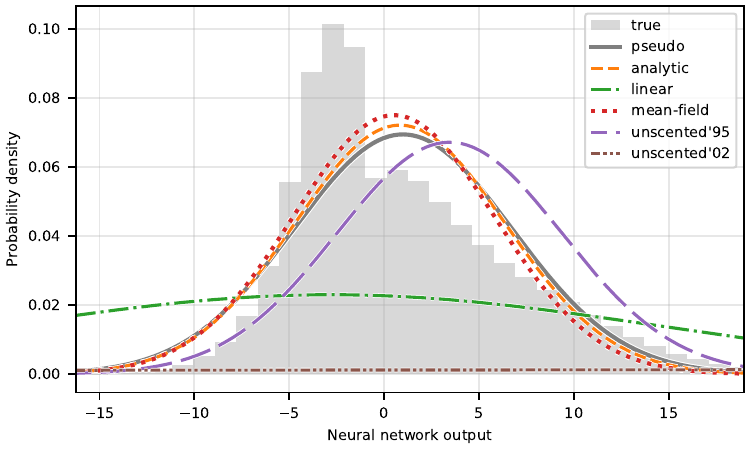}
\end{center}
\caption{Probability distributions for Network(architecture=wide, weights=initialized, activation=gelu), variance=large}
\end{figure}\clearpage
\begin{table}[H]\begin{center}\input{generated/tables/moments/RandomNeuralNetworkTestCase__network=wide_trained_gelu,variance=Variance.SMALL.tex}
\end{center}
\caption{Comparison of moments for Network(architecture=wide, weights=trained, activation=gelu), variance=small}
\end{table}\begin{table}[H]\begin{center}\input{generated/tables/divergences/RandomNeuralNetworkTestCase__network=wide_trained_gelu,variance=Variance.SMALL.tex}
\end{center}
\caption{Comparison of statistical distances for Network(architecture=wide, weights=trained, activation=gelu), variance=small}
\end{table}\begin{figure}[H]\begin{center}
\includegraphics{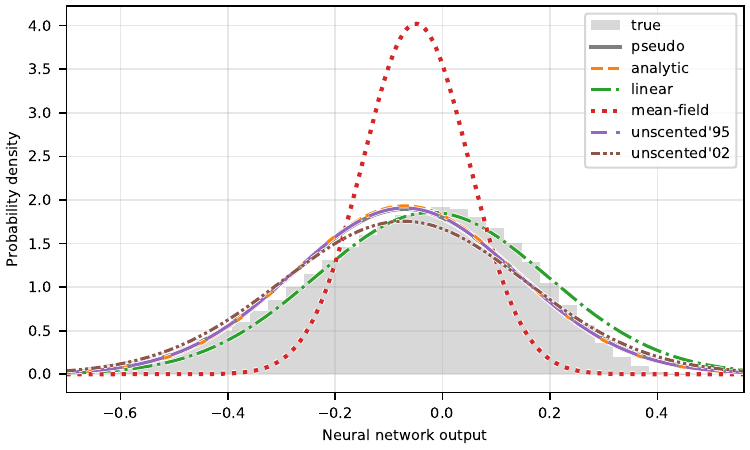}
\end{center}
\caption{Probability distributions for Network(architecture=wide, weights=trained, activation=gelu), variance=small}
\end{figure}\clearpage
\begin{table}[H]\begin{center}\input{generated/tables/moments/RandomNeuralNetworkTestCase__network=wide_trained_gelu,variance=Variance.MEDIUM.tex}
\end{center}
\caption{Comparison of moments for Network(architecture=wide, weights=trained, activation=gelu), variance=medium}
\end{table}\begin{table}[H]\begin{center}\input{generated/tables/divergences/RandomNeuralNetworkTestCase__network=wide_trained_gelu,variance=Variance.MEDIUM.tex}
\end{center}
\caption{Comparison of statistical distances for Network(architecture=wide, weights=trained, activation=gelu), variance=medium}
\end{table}\begin{figure}[H]\begin{center}
\includegraphics{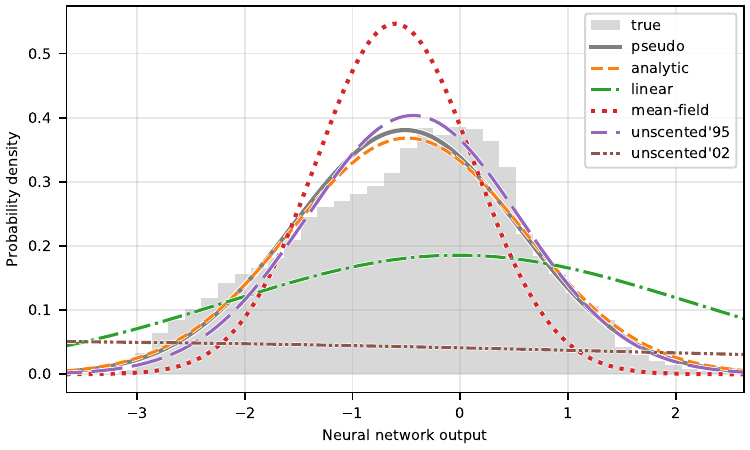}
\end{center}
\caption{Probability distributions for Network(architecture=wide, weights=trained, activation=gelu), variance=medium}
\end{figure}\clearpage
\begin{table}[H]\begin{center}\input{generated/tables/moments/RandomNeuralNetworkTestCase__network=wide_trained_gelu,variance=Variance.LARGE.tex}
\end{center}
\caption{Comparison of moments for Network(architecture=wide, weights=trained, activation=gelu), variance=large}
\end{table}\begin{table}[H]\begin{center}\input{generated/tables/divergences/RandomNeuralNetworkTestCase__network=wide_trained_gelu,variance=Variance.LARGE.tex}
\end{center}
\caption{Comparison of statistical distances for Network(architecture=wide, weights=trained, activation=gelu), variance=large}
\end{table}\begin{figure}[H]\begin{center}
\includegraphics{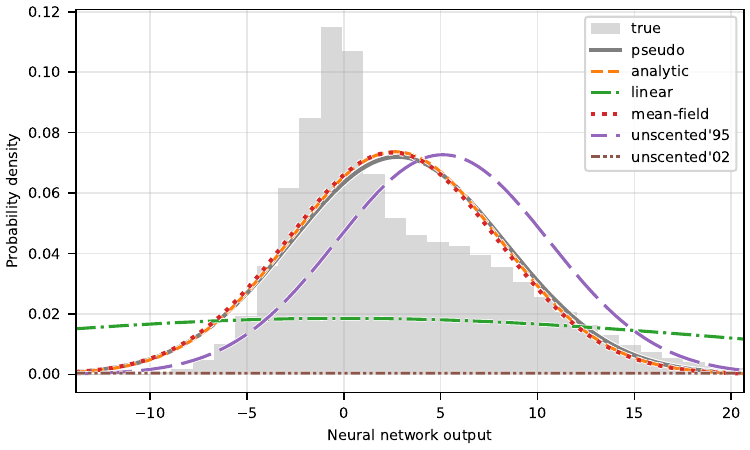}
\end{center}
\caption{Probability distributions for Network(architecture=wide, weights=trained, activation=gelu), variance=large}
\end{figure}\clearpage
\begin{table}[H]\begin{center}\input{generated/tables/moments/RandomNeuralNetworkTestCase__network=wide_initialized_gelu_residual,variance=Variance.SMALL.tex}
\end{center}
\caption{Comparison of moments for Network(architecture=wide, weights=initialized, activation=gelu residual), variance=small}
\end{table}\begin{table}[H]\begin{center}\input{generated/tables/divergences/RandomNeuralNetworkTestCase__network=wide_initialized_gelu_residual,variance=Variance.SMALL.tex}
\end{center}
\caption{Comparison of statistical distances for Network(architecture=wide, weights=initialized, activation=gelu residual), variance=small}
\end{table}\begin{figure}[H]\begin{center}
\includegraphics{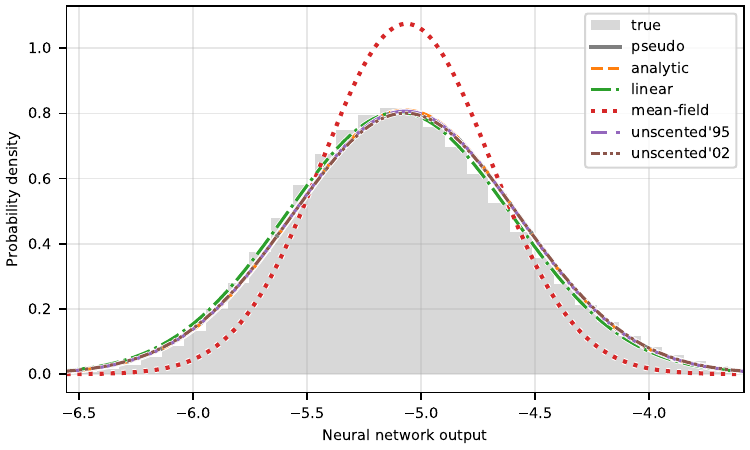}
\end{center}
\caption{Probability distributions for Network(architecture=wide, weights=initialized, activation=gelu residual), variance=small}
\end{figure}\clearpage
\begin{table}[H]\begin{center}\input{generated/tables/moments/RandomNeuralNetworkTestCase__network=wide_initialized_gelu_residual,variance=Variance.MEDIUM.tex}
\end{center}
\caption{Comparison of moments for Network(architecture=wide, weights=initialized, activation=gelu residual), variance=medium}
\end{table}\begin{table}[H]\begin{center}\input{generated/tables/divergences/RandomNeuralNetworkTestCase__network=wide_initialized_gelu_residual,variance=Variance.MEDIUM.tex}
\end{center}
\caption{Comparison of statistical distances for Network(architecture=wide, weights=initialized, activation=gelu residual), variance=medium}
\end{table}\begin{figure}[H]\begin{center}
\includegraphics{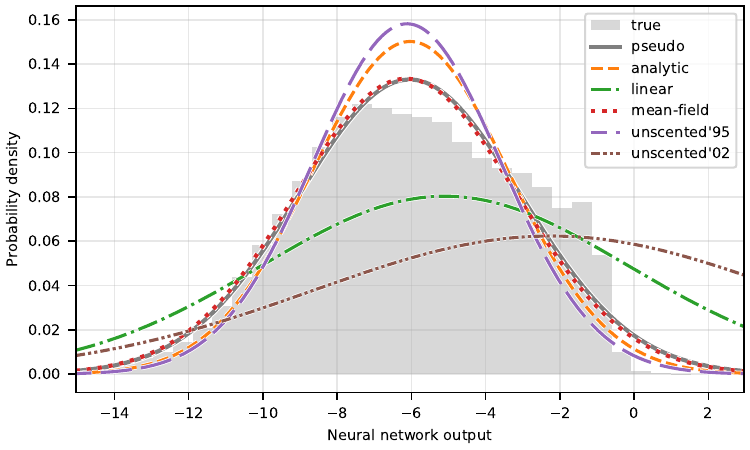}
\end{center}
\caption{Probability distributions for Network(architecture=wide, weights=initialized, activation=gelu residual), variance=medium}
\end{figure}\clearpage
\begin{table}[H]\begin{center}\input{generated/tables/moments/RandomNeuralNetworkTestCase__network=wide_initialized_gelu_residual,variance=Variance.LARGE.tex}
\end{center}
\caption{Comparison of moments for Network(architecture=wide, weights=initialized, activation=gelu residual), variance=large}
\end{table}\begin{table}[H]\begin{center}\input{generated/tables/divergences/RandomNeuralNetworkTestCase__network=wide_initialized_gelu_residual,variance=Variance.LARGE.tex}
\end{center}
\caption{Comparison of statistical distances for Network(architecture=wide, weights=initialized, activation=gelu residual), variance=large}
\end{table}\begin{figure}[H]\begin{center}
\includegraphics{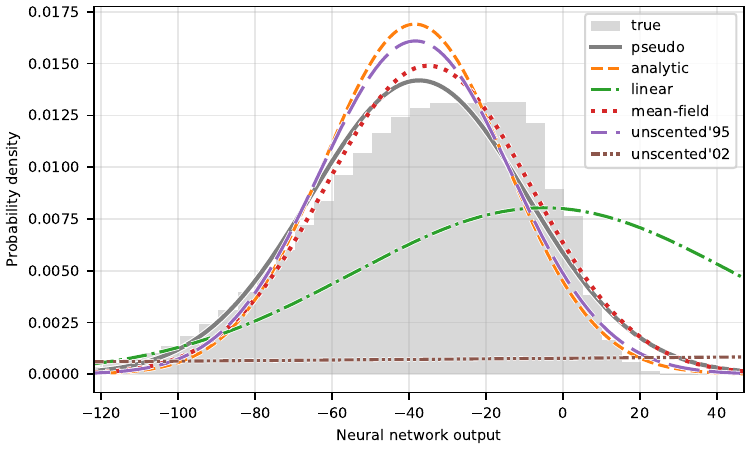}
\end{center}
\caption{Probability distributions for Network(architecture=wide, weights=initialized, activation=gelu residual), variance=large}
\end{figure}\clearpage
\begin{table}[H]\begin{center}\input{generated/tables/moments/RandomNeuralNetworkTestCase__network=wide_trained_gelu_residual,variance=Variance.SMALL.tex}
\end{center}
\caption{Comparison of moments for Network(architecture=wide, weights=trained, activation=gelu residual), variance=small}
\end{table}\begin{table}[H]\begin{center}\input{generated/tables/divergences/RandomNeuralNetworkTestCase__network=wide_trained_gelu_residual,variance=Variance.SMALL.tex}
\end{center}
\caption{Comparison of statistical distances for Network(architecture=wide, weights=trained, activation=gelu residual), variance=small}
\end{table}\begin{figure}[H]\begin{center}
\includegraphics{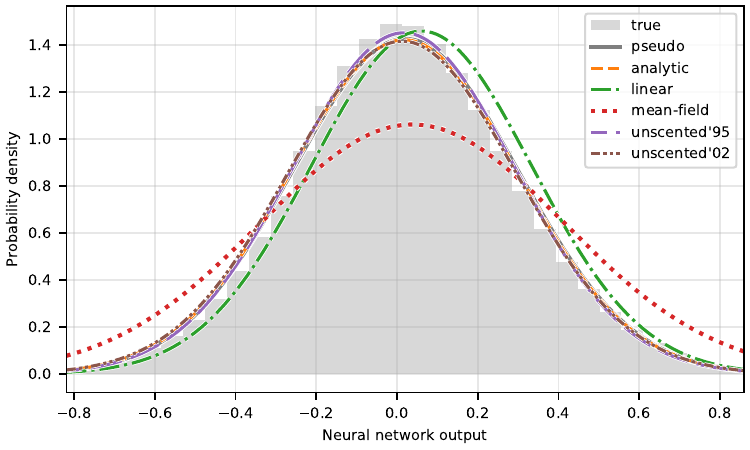}
\end{center}
\caption{Probability distributions for Network(architecture=wide, weights=trained, activation=gelu residual), variance=small}
\end{figure}\clearpage
\begin{table}[H]\begin{center}\input{generated/tables/moments/RandomNeuralNetworkTestCase__network=wide_trained_gelu_residual,variance=Variance.MEDIUM.tex}
\end{center}
\caption{Comparison of moments for Network(architecture=wide, weights=trained, activation=gelu residual), variance=medium}
\end{table}\begin{table}[H]\begin{center}\input{generated/tables/divergences/RandomNeuralNetworkTestCase__network=wide_trained_gelu_residual,variance=Variance.MEDIUM.tex}
\end{center}
\caption{Comparison of statistical distances for Network(architecture=wide, weights=trained, activation=gelu residual), variance=medium}
\end{table}\begin{figure}[H]\begin{center}
\includegraphics{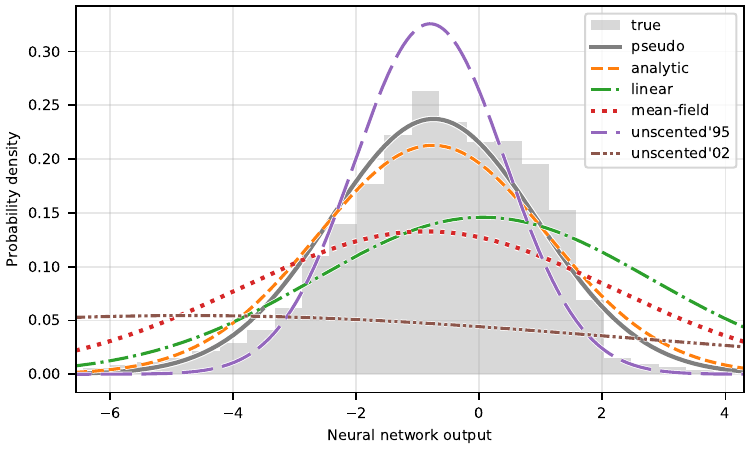}
\end{center}
\caption{Probability distributions for Network(architecture=wide, weights=trained, activation=gelu residual), variance=medium}
\end{figure}\clearpage
\begin{table}[H]\begin{center}\input{generated/tables/moments/RandomNeuralNetworkTestCase__network=wide_trained_gelu_residual,variance=Variance.LARGE.tex}
\end{center}
\caption{Comparison of moments for Network(architecture=wide, weights=trained, activation=gelu residual), variance=large}
\end{table}\begin{table}[H]\begin{center}\input{generated/tables/divergences/RandomNeuralNetworkTestCase__network=wide_trained_gelu_residual,variance=Variance.LARGE.tex}
\end{center}
\caption{Comparison of statistical distances for Network(architecture=wide, weights=trained, activation=gelu residual), variance=large}
\end{table}\begin{figure}[H]\begin{center}
\includegraphics{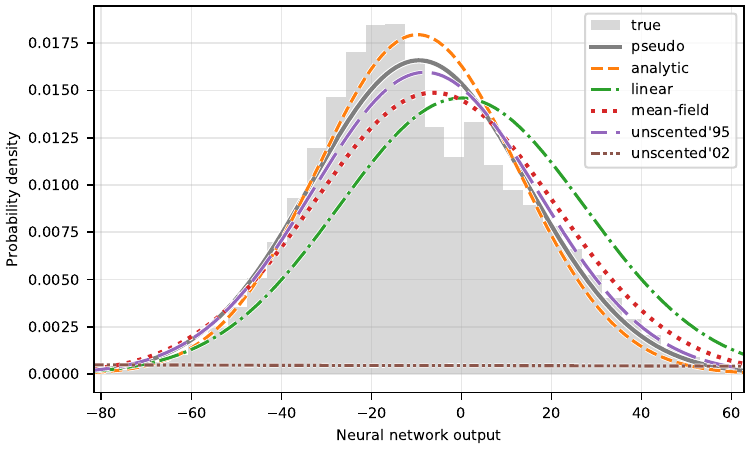}
\end{center}
\caption{Probability distributions for Network(architecture=wide, weights=trained, activation=gelu residual), variance=large}
\end{figure}\clearpage
\begin{table}[H]\begin{center}\input{generated/tables/moments/RandomNeuralNetworkTestCase__network=wide_initialized_relu,variance=Variance.SMALL.tex}
\end{center}
\caption{Comparison of moments for Network(architecture=wide, weights=initialized, activation=relu), variance=small}
\end{table}\begin{table}[H]\begin{center}\input{generated/tables/divergences/RandomNeuralNetworkTestCase__network=wide_initialized_relu,variance=Variance.SMALL.tex}
\end{center}
\caption{Comparison of statistical distances for Network(architecture=wide, weights=initialized, activation=relu), variance=small}
\end{table}\begin{figure}[H]\begin{center}
\includegraphics{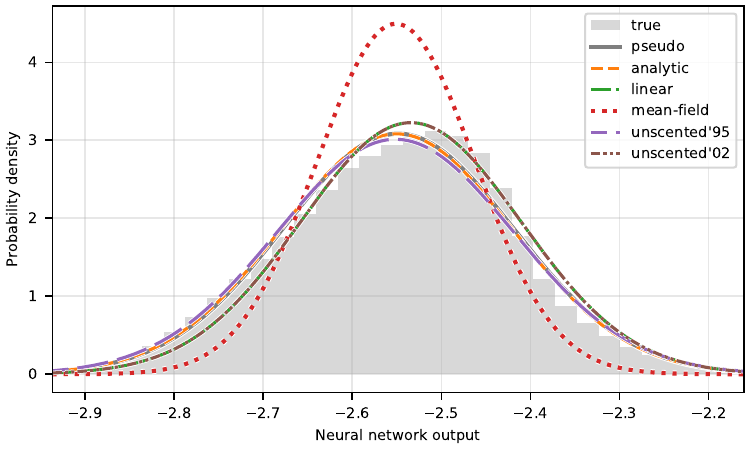}
\end{center}
\caption{Probability distributions for Network(architecture=wide, weights=initialized, activation=relu), variance=small}
\end{figure}\clearpage
\begin{table}[H]\begin{center}\input{generated/tables/moments/RandomNeuralNetworkTestCase__network=wide_initialized_relu,variance=Variance.MEDIUM.tex}
\end{center}
\caption{Comparison of moments for Network(architecture=wide, weights=initialized, activation=relu), variance=medium}
\end{table}\begin{table}[H]\begin{center}\input{generated/tables/divergences/RandomNeuralNetworkTestCase__network=wide_initialized_relu,variance=Variance.MEDIUM.tex}
\end{center}
\caption{Comparison of statistical distances for Network(architecture=wide, weights=initialized, activation=relu), variance=medium}
\end{table}\begin{figure}[H]\begin{center}
\includegraphics{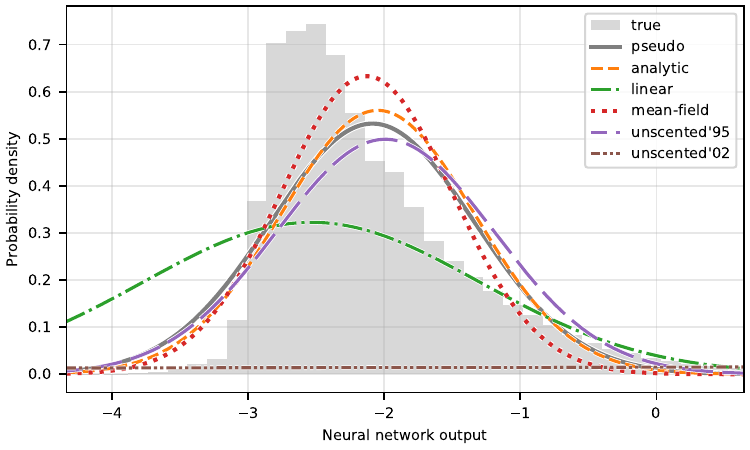}
\end{center}
\caption{Probability distributions for Network(architecture=wide, weights=initialized, activation=relu), variance=medium}
\end{figure}\clearpage
\begin{table}[H]\begin{center}\input{generated/tables/moments/RandomNeuralNetworkTestCase__network=wide_initialized_relu,variance=Variance.LARGE.tex}
\end{center}
\caption{Comparison of moments for Network(architecture=wide, weights=initialized, activation=relu), variance=large}
\end{table}\begin{table}[H]\begin{center}\input{generated/tables/divergences/RandomNeuralNetworkTestCase__network=wide_initialized_relu,variance=Variance.LARGE.tex}
\end{center}
\caption{Comparison of statistical distances for Network(architecture=wide, weights=initialized, activation=relu), variance=large}
\end{table}\begin{figure}[H]\begin{center}
\includegraphics{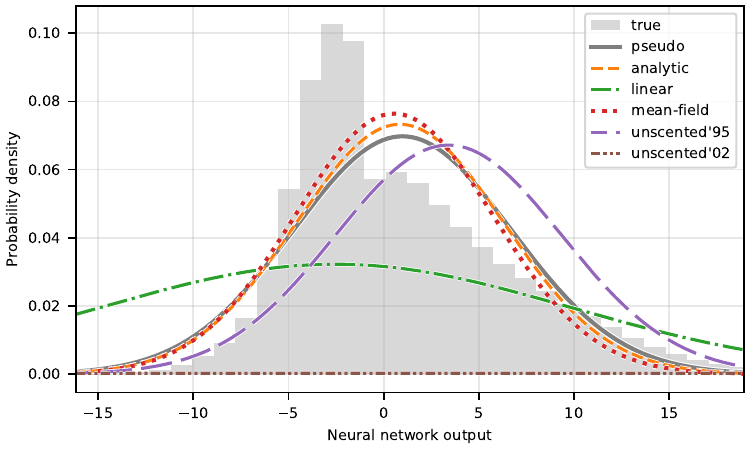}
\end{center}
\caption{Probability distributions for Network(architecture=wide, weights=initialized, activation=relu), variance=large}
\end{figure}\clearpage
\begin{table}[H]\begin{center}\input{generated/tables/moments/RandomNeuralNetworkTestCase__network=wide_trained_relu,variance=Variance.SMALL.tex}
\end{center}
\caption{Comparison of moments for Network(architecture=wide, weights=trained, activation=relu), variance=small}
\end{table}\begin{table}[H]\begin{center}\input{generated/tables/divergences/RandomNeuralNetworkTestCase__network=wide_trained_relu,variance=Variance.SMALL.tex}
\end{center}
\caption{Comparison of statistical distances for Network(architecture=wide, weights=trained, activation=relu), variance=small}
\end{table}\begin{figure}[H]\begin{center}
\includegraphics{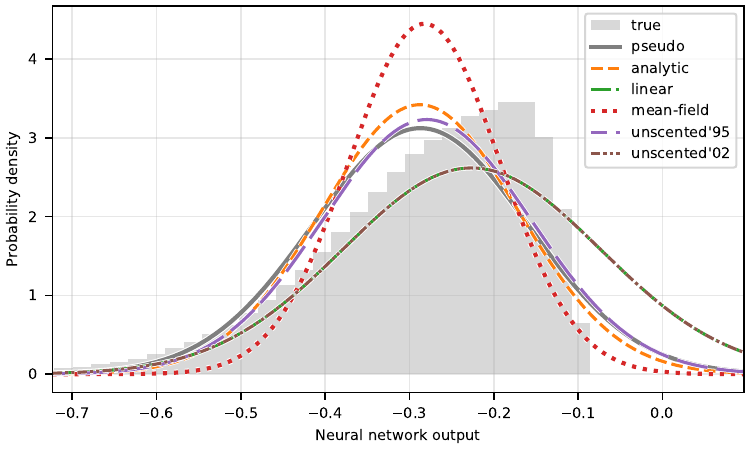}
\end{center}
\caption{Probability distributions for Network(architecture=wide, weights=trained, activation=relu), variance=small}
\end{figure}\clearpage
\begin{table}[H]\begin{center}\input{generated/tables/moments/RandomNeuralNetworkTestCase__network=wide_trained_relu,variance=Variance.MEDIUM.tex}
\end{center}
\caption{Comparison of moments for Network(architecture=wide, weights=trained, activation=relu), variance=medium}
\end{table}\begin{table}[H]\begin{center}\input{generated/tables/divergences/RandomNeuralNetworkTestCase__network=wide_trained_relu,variance=Variance.MEDIUM.tex}
\end{center}
\caption{Comparison of statistical distances for Network(architecture=wide, weights=trained, activation=relu), variance=medium}
\end{table}\begin{figure}[H]\begin{center}
\includegraphics{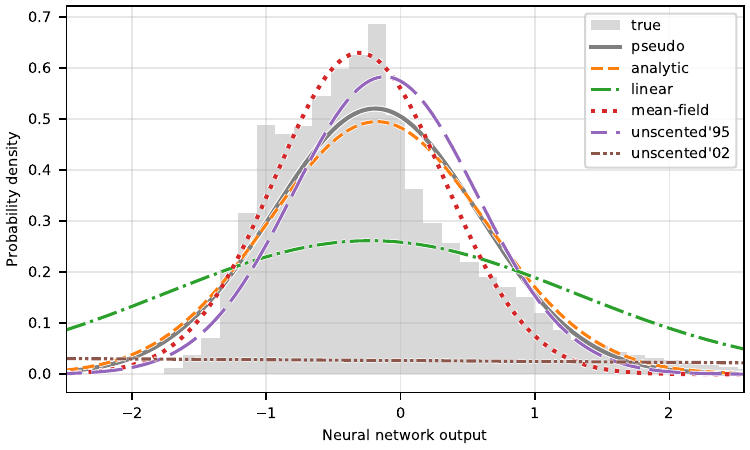}
\end{center}
\caption{Probability distributions for Network(architecture=wide, weights=trained, activation=relu), variance=medium}
\end{figure}\clearpage
\begin{table}[H]\begin{center}\input{generated/tables/moments/RandomNeuralNetworkTestCase__network=wide_trained_relu,variance=Variance.LARGE.tex}
\end{center}
\caption{Comparison of moments for Network(architecture=wide, weights=trained, activation=relu), variance=large}
\end{table}\begin{table}[H]\begin{center}\input{generated/tables/divergences/RandomNeuralNetworkTestCase__network=wide_trained_relu,variance=Variance.LARGE.tex}
\end{center}
\caption{Comparison of statistical distances for Network(architecture=wide, weights=trained, activation=relu), variance=large}
\end{table}\begin{figure}[H]\begin{center}
\includegraphics{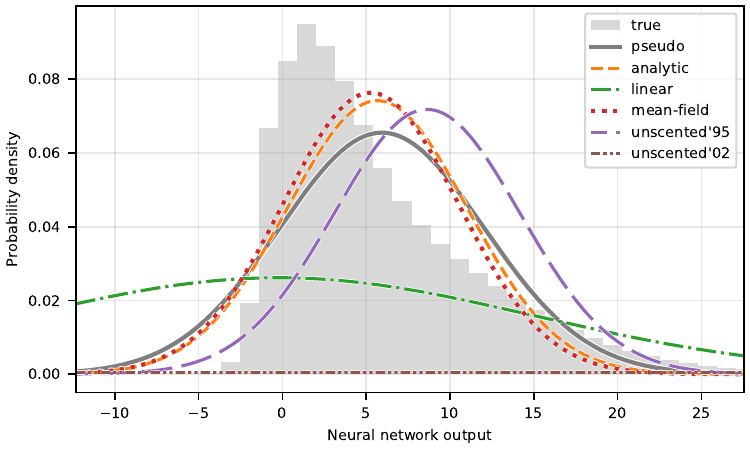}
\end{center}
\caption{Probability distributions for Network(architecture=wide, weights=trained, activation=relu), variance=large}
\end{figure}\clearpage
\begin{table}[H]\begin{center}\input{generated/tables/moments/RandomNeuralNetworkTestCase__network=wide_initialized_relu_residual,variance=Variance.SMALL.tex}
\end{center}
\caption{Comparison of moments for Network(architecture=wide, weights=initialized, activation=relu residual), variance=small}
\end{table}\begin{table}[H]\begin{center}\input{generated/tables/divergences/RandomNeuralNetworkTestCase__network=wide_initialized_relu_residual,variance=Variance.SMALL.tex}
\end{center}
\caption{Comparison of statistical distances for Network(architecture=wide, weights=initialized, activation=relu residual), variance=small}
\end{table}\begin{figure}[H]\begin{center}
\includegraphics{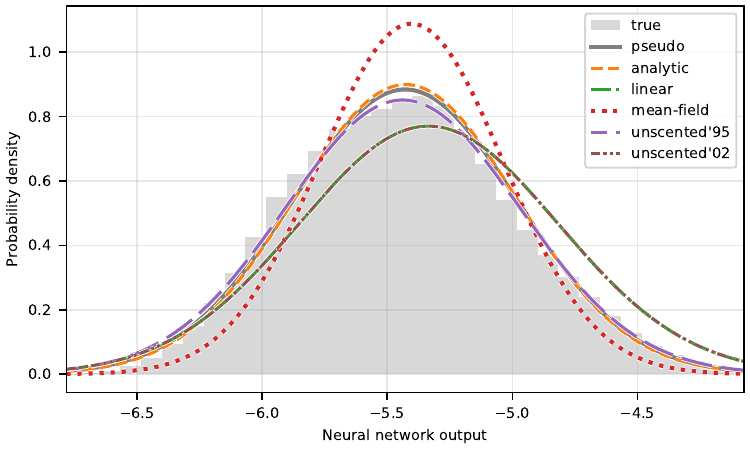}
\end{center}
\caption{Probability distributions for Network(architecture=wide, weights=initialized, activation=relu residual), variance=small}
\end{figure}\clearpage
\begin{table}[H]\begin{center}\input{generated/tables/moments/RandomNeuralNetworkTestCase__network=wide_initialized_relu_residual,variance=Variance.MEDIUM.tex}
\end{center}
\caption{Comparison of moments for Network(architecture=wide, weights=initialized, activation=relu residual), variance=medium}
\end{table}\begin{table}[H]\begin{center}\input{generated/tables/divergences/RandomNeuralNetworkTestCase__network=wide_initialized_relu_residual,variance=Variance.MEDIUM.tex}
\end{center}
\caption{Comparison of statistical distances for Network(architecture=wide, weights=initialized, activation=relu residual), variance=medium}
\end{table}\begin{figure}[H]\begin{center}
\includegraphics{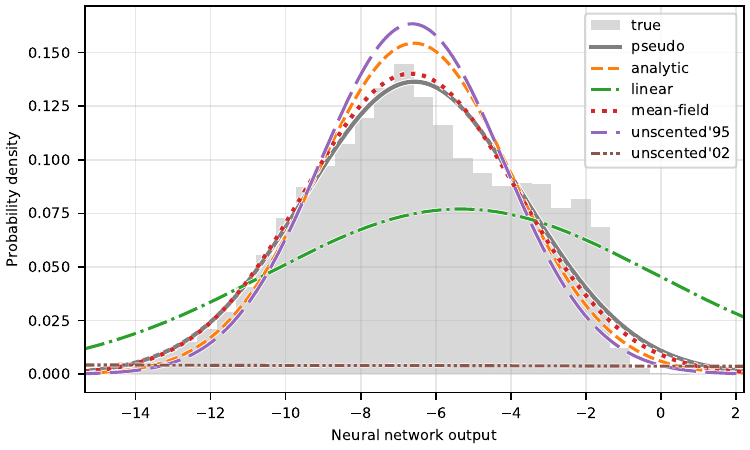}
\end{center}
\caption{Probability distributions for Network(architecture=wide, weights=initialized, activation=relu residual), variance=medium}
\end{figure}\clearpage
\begin{table}[H]\begin{center}\input{generated/tables/moments/RandomNeuralNetworkTestCase__network=wide_initialized_relu_residual,variance=Variance.LARGE.tex}
\end{center}
\caption{Comparison of moments for Network(architecture=wide, weights=initialized, activation=relu residual), variance=large}
\end{table}\begin{table}[H]\begin{center}\input{generated/tables/divergences/RandomNeuralNetworkTestCase__network=wide_initialized_relu_residual,variance=Variance.LARGE.tex}
\end{center}
\caption{Comparison of statistical distances for Network(architecture=wide, weights=initialized, activation=relu residual), variance=large}
\end{table}\begin{figure}[H]\begin{center}
\includegraphics{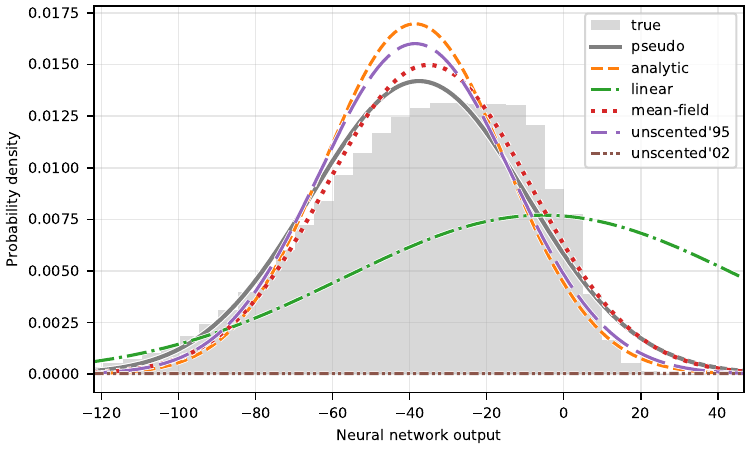}
\end{center}
\caption{Probability distributions for Network(architecture=wide, weights=initialized, activation=relu residual), variance=large}
\end{figure}\clearpage
\begin{table}[H]\begin{center}\input{generated/tables/moments/RandomNeuralNetworkTestCase__network=wide_trained_relu_residual,variance=Variance.SMALL.tex}
\end{center}
\caption{Comparison of moments for Network(architecture=wide, weights=trained, activation=relu residual), variance=small}
\end{table}\begin{table}[H]\begin{center}\input{generated/tables/divergences/RandomNeuralNetworkTestCase__network=wide_trained_relu_residual,variance=Variance.SMALL.tex}
\end{center}
\caption{Comparison of statistical distances for Network(architecture=wide, weights=trained, activation=relu residual), variance=small}
\end{table}\begin{figure}[H]\begin{center}
\includegraphics{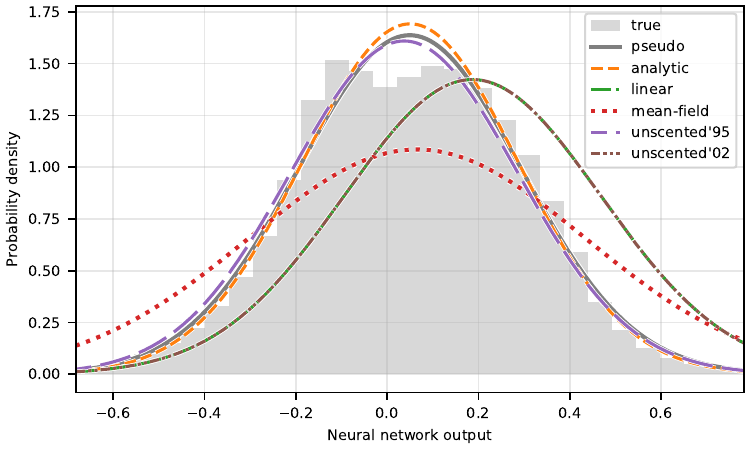}
\end{center}
\caption{Probability distributions for Network(architecture=wide, weights=trained, activation=relu residual), variance=small}
\end{figure}\clearpage
\begin{table}[H]\begin{center}\input{generated/tables/moments/RandomNeuralNetworkTestCase__network=wide_trained_relu_residual,variance=Variance.MEDIUM.tex}
\end{center}
\caption{Comparison of moments for Network(architecture=wide, weights=trained, activation=relu residual), variance=medium}
\end{table}\begin{table}[H]\begin{center}\input{generated/tables/divergences/RandomNeuralNetworkTestCase__network=wide_trained_relu_residual,variance=Variance.MEDIUM.tex}
\end{center}
\caption{Comparison of statistical distances for Network(architecture=wide, weights=trained, activation=relu residual), variance=medium}
\end{table}\begin{figure}[H]\begin{center}
\includegraphics{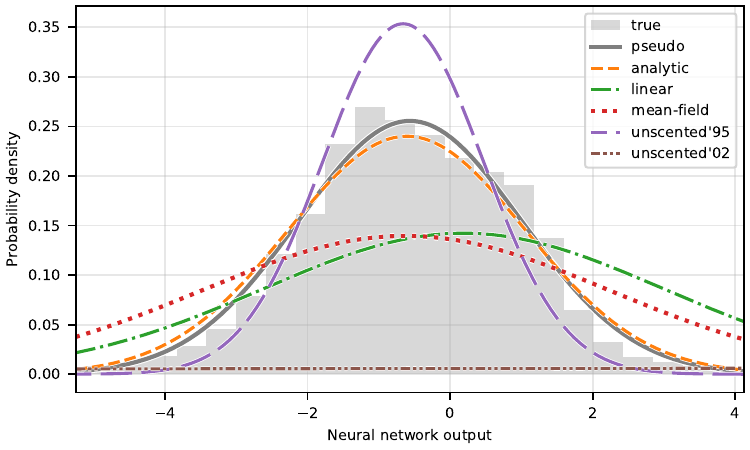}
\end{center}
\caption{Probability distributions for Network(architecture=wide, weights=trained, activation=relu residual), variance=medium}
\end{figure}\clearpage
\begin{table}[H]\begin{center}\input{generated/tables/moments/RandomNeuralNetworkTestCase__network=wide_trained_relu_residual,variance=Variance.LARGE.tex}
\end{center}
\caption{Comparison of moments for Network(architecture=wide, weights=trained, activation=relu residual), variance=large}
\end{table}\begin{table}[H]\begin{center}\input{generated/tables/divergences/RandomNeuralNetworkTestCase__network=wide_trained_relu_residual,variance=Variance.LARGE.tex}
\end{center}
\caption{Comparison of statistical distances for Network(architecture=wide, weights=trained, activation=relu residual), variance=large}
\end{table}\begin{figure}[H]\begin{center}
\includegraphics{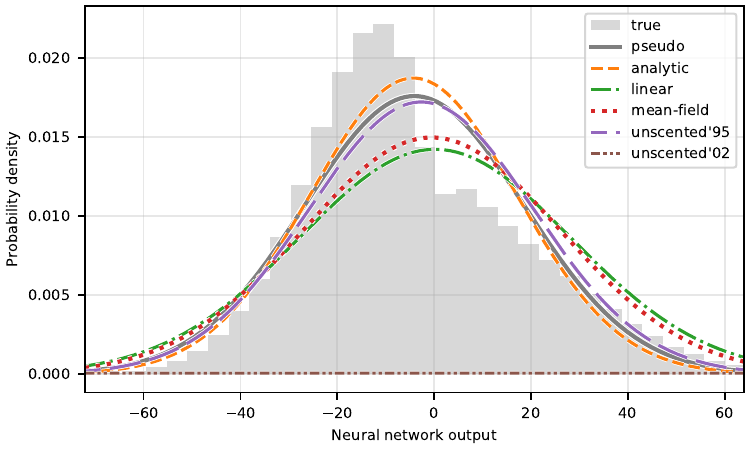}
\end{center}
\caption{Probability distributions for Network(architecture=wide, weights=trained, activation=relu residual), variance=large}
\end{figure}\clearpage
\begin{table}[H]\begin{center}\input{generated/tables/moments/RandomNeuralNetworkTestCase__network=wide_initialized_heaviside,variance=Variance.SMALL.tex}
\end{center}
\caption{Comparison of moments for Network(architecture=wide, weights=initialized, activation=heaviside), variance=small}
\end{table}\begin{table}[H]\begin{center}\input{generated/tables/divergences/RandomNeuralNetworkTestCase__network=wide_initialized_heaviside,variance=Variance.SMALL.tex}
\end{center}
\caption{Comparison of statistical distances for Network(architecture=wide, weights=initialized, activation=heaviside), variance=small}
\end{table}\begin{figure}[H]\begin{center}
\includegraphics{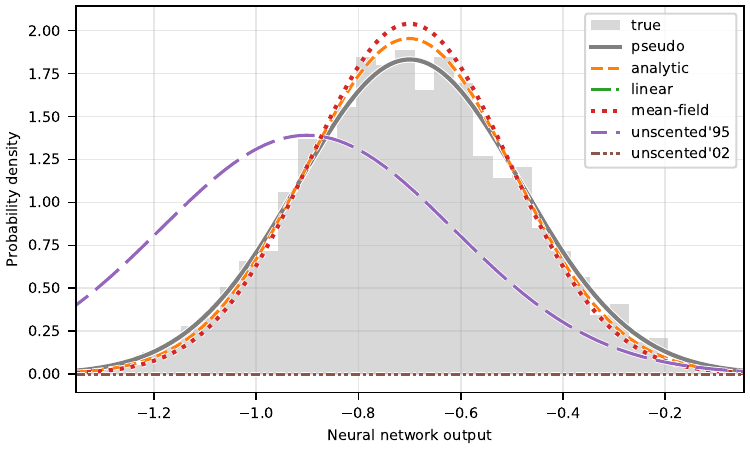}
\end{center}
\caption{Probability distributions for Network(architecture=wide, weights=initialized, activation=heaviside), variance=small}
\end{figure}\clearpage
\begin{table}[H]\begin{center}\input{generated/tables/moments/RandomNeuralNetworkTestCase__network=wide_initialized_heaviside,variance=Variance.MEDIUM.tex}
\end{center}
\caption{Comparison of moments for Network(architecture=wide, weights=initialized, activation=heaviside), variance=medium}
\end{table}\begin{table}[H]\begin{center}\input{generated/tables/divergences/RandomNeuralNetworkTestCase__network=wide_initialized_heaviside,variance=Variance.MEDIUM.tex}
\end{center}
\caption{Comparison of statistical distances for Network(architecture=wide, weights=initialized, activation=heaviside), variance=medium}
\end{table}\begin{figure}[H]\begin{center}
\includegraphics{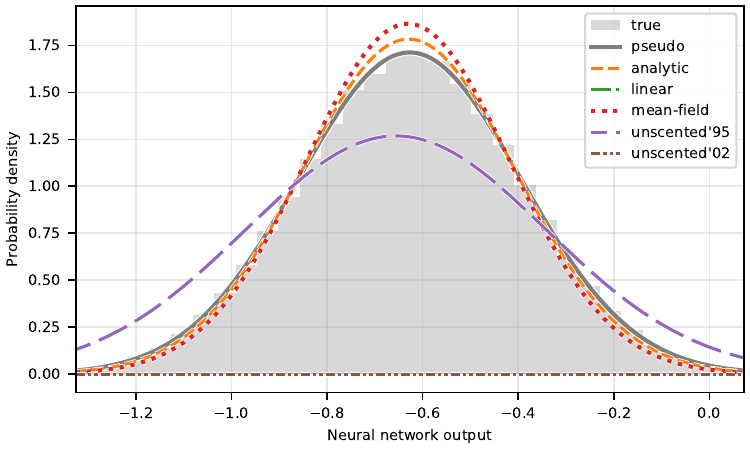}
\end{center}
\caption{Probability distributions for Network(architecture=wide, weights=initialized, activation=heaviside), variance=medium}
\end{figure}\clearpage
\begin{table}[H]\begin{center}\input{generated/tables/moments/RandomNeuralNetworkTestCase__network=wide_initialized_heaviside,variance=Variance.LARGE.tex}
\end{center}
\caption{Comparison of moments for Network(architecture=wide, weights=initialized, activation=heaviside), variance=large}
\end{table}\begin{table}[H]\begin{center}\input{generated/tables/divergences/RandomNeuralNetworkTestCase__network=wide_initialized_heaviside,variance=Variance.LARGE.tex}
\end{center}
\caption{Comparison of statistical distances for Network(architecture=wide, weights=initialized, activation=heaviside), variance=large}
\end{table}\begin{figure}[H]\begin{center}
\includegraphics{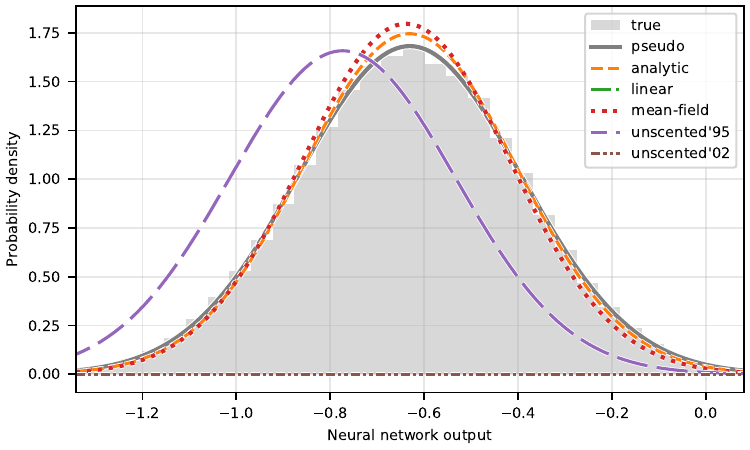}
\end{center}
\caption{Probability distributions for Network(architecture=wide, weights=initialized, activation=heaviside), variance=large}
\end{figure}\clearpage
\begin{table}[H]\begin{center}\input{generated/tables/moments/RandomNeuralNetworkTestCase__network=wide_initialized_heaviside_residual,variance=Variance.SMALL.tex}
\end{center}
\caption{Comparison of moments for Network(architecture=wide, weights=initialized, activation=heaviside residual), variance=small}
\end{table}\begin{table}[H]\begin{center}\input{generated/tables/divergences/RandomNeuralNetworkTestCase__network=wide_initialized_heaviside_residual,variance=Variance.SMALL.tex}
\end{center}
\caption{Comparison of statistical distances for Network(architecture=wide, weights=initialized, activation=heaviside residual), variance=small}
\end{table}\begin{figure}[H]\begin{center}
\includegraphics{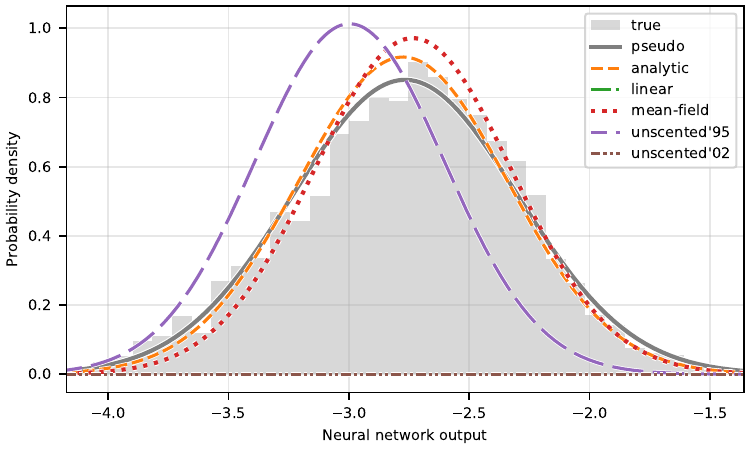}
\end{center}
\caption{Probability distributions for Network(architecture=wide, weights=initialized, activation=heaviside residual), variance=small}
\end{figure}\clearpage
\begin{table}[H]\begin{center}\input{generated/tables/moments/RandomNeuralNetworkTestCase__network=wide_initialized_heaviside_residual,variance=Variance.MEDIUM.tex}
\end{center}
\caption{Comparison of moments for Network(architecture=wide, weights=initialized, activation=heaviside residual), variance=medium}
\end{table}\begin{table}[H]\begin{center}\input{generated/tables/divergences/RandomNeuralNetworkTestCase__network=wide_initialized_heaviside_residual,variance=Variance.MEDIUM.tex}
\end{center}
\caption{Comparison of statistical distances for Network(architecture=wide, weights=initialized, activation=heaviside residual), variance=medium}
\end{table}\begin{figure}[H]\begin{center}
\includegraphics{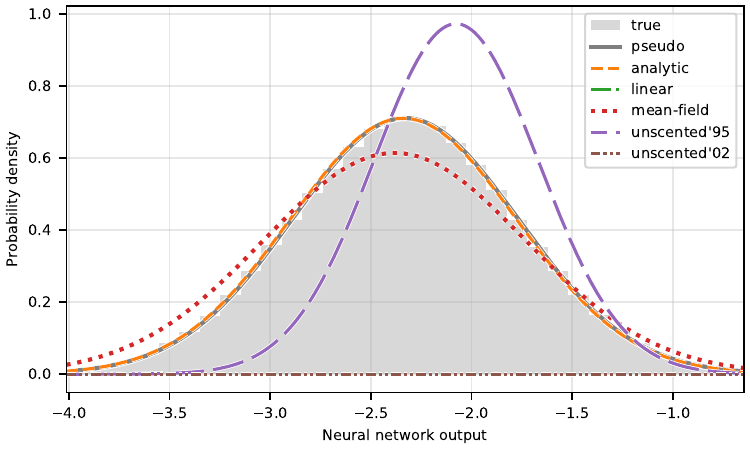}
\end{center}
\caption{Probability distributions for Network(architecture=wide, weights=initialized, activation=heaviside residual), variance=medium}
\end{figure}\clearpage
\begin{table}[H]\begin{center}\input{generated/tables/moments/RandomNeuralNetworkTestCase__network=wide_initialized_heaviside_residual,variance=Variance.LARGE.tex}
\end{center}
\caption{Comparison of moments for Network(architecture=wide, weights=initialized, activation=heaviside residual), variance=large}
\end{table}\begin{table}[H]\begin{center}\input{generated/tables/divergences/RandomNeuralNetworkTestCase__network=wide_initialized_heaviside_residual,variance=Variance.LARGE.tex}
\end{center}
\caption{Comparison of statistical distances for Network(architecture=wide, weights=initialized, activation=heaviside residual), variance=large}
\end{table}\begin{figure}[H]\begin{center}
\includegraphics{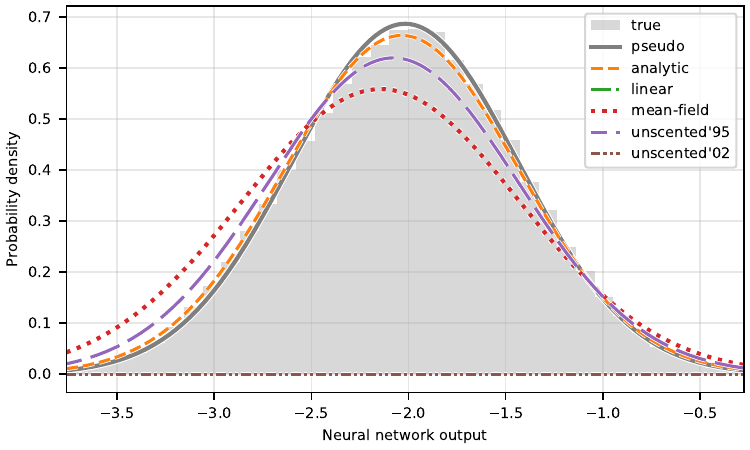}
\end{center}
\caption{Probability distributions for Network(architecture=wide, weights=initialized, activation=heaviside residual), variance=large}
\end{figure}\clearpage
\begin{table}[H]\begin{center}\input{generated/tables/moments/RandomNeuralNetworkTestCase__network=deep_initialized_probit,variance=Variance.SMALL.tex}
\end{center}
\caption{Comparison of moments for Network(architecture=deep, weights=initialized, activation=probit), variance=small}
\end{table}\begin{table}[H]\begin{center}\input{generated/tables/divergences/RandomNeuralNetworkTestCase__network=deep_initialized_probit,variance=Variance.SMALL.tex}
\end{center}
\caption{Comparison of statistical distances for Network(architecture=deep, weights=initialized, activation=probit), variance=small}
\end{table}\begin{figure}[H]\begin{center}
\includegraphics{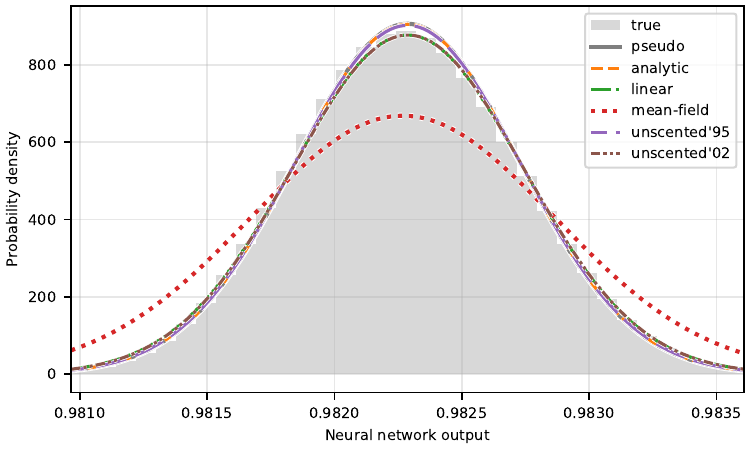}
\end{center}
\caption{Probability distributions for Network(architecture=deep, weights=initialized, activation=probit), variance=small}
\end{figure}\clearpage
\begin{table}[H]\begin{center}\input{generated/tables/moments/RandomNeuralNetworkTestCase__network=deep_initialized_probit,variance=Variance.MEDIUM.tex}
\end{center}
\caption{Comparison of moments for Network(architecture=deep, weights=initialized, activation=probit), variance=medium}
\end{table}\begin{table}[H]\begin{center}\input{generated/tables/divergences/RandomNeuralNetworkTestCase__network=deep_initialized_probit,variance=Variance.MEDIUM.tex}
\end{center}
\caption{Comparison of statistical distances for Network(architecture=deep, weights=initialized, activation=probit), variance=medium}
\end{table}\begin{figure}[H]\begin{center}
\includegraphics{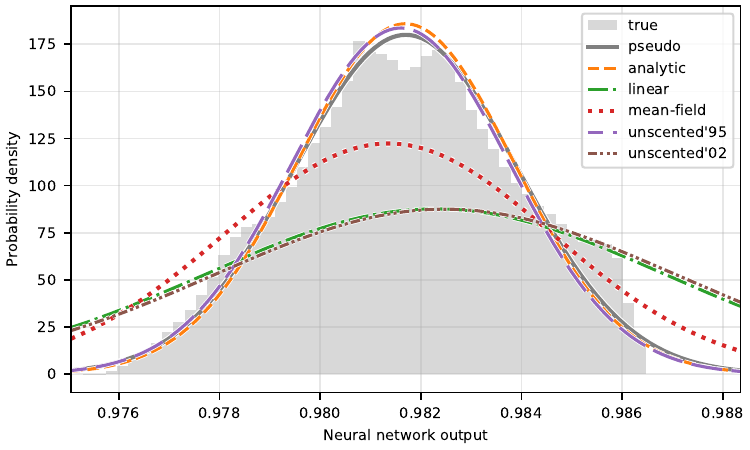}
\end{center}
\caption{Probability distributions for Network(architecture=deep, weights=initialized, activation=probit), variance=medium}
\end{figure}\clearpage
\begin{table}[H]\begin{center}\input{generated/tables/moments/RandomNeuralNetworkTestCase__network=deep_initialized_probit,variance=Variance.LARGE.tex}
\end{center}
\caption{Comparison of moments for Network(architecture=deep, weights=initialized, activation=probit), variance=large}
\end{table}\begin{table}[H]\begin{center}\input{generated/tables/divergences/RandomNeuralNetworkTestCase__network=deep_initialized_probit,variance=Variance.LARGE.tex}
\end{center}
\caption{Comparison of statistical distances for Network(architecture=deep, weights=initialized, activation=probit), variance=large}
\end{table}\begin{figure}[H]\begin{center}
\includegraphics{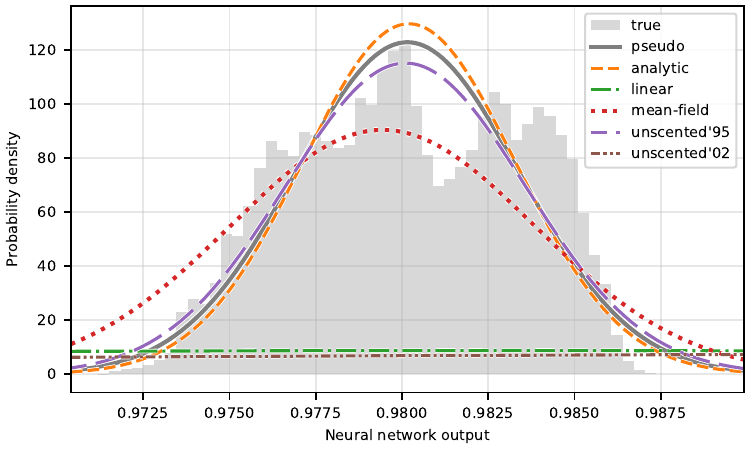}
\end{center}
\caption{Probability distributions for Network(architecture=deep, weights=initialized, activation=probit), variance=large}
\end{figure}\clearpage
\begin{table}[H]\begin{center}\input{generated/tables/moments/RandomNeuralNetworkTestCase__network=deep_trained_probit,variance=Variance.SMALL.tex}
\end{center}
\caption{Comparison of moments for Network(architecture=deep, weights=trained, activation=probit), variance=small}
\end{table}\begin{table}[H]\begin{center}\input{generated/tables/divergences/RandomNeuralNetworkTestCase__network=deep_trained_probit,variance=Variance.SMALL.tex}
\end{center}
\caption{Comparison of statistical distances for Network(architecture=deep, weights=trained, activation=probit), variance=small}
\end{table}\begin{figure}[H]\begin{center}
\includegraphics{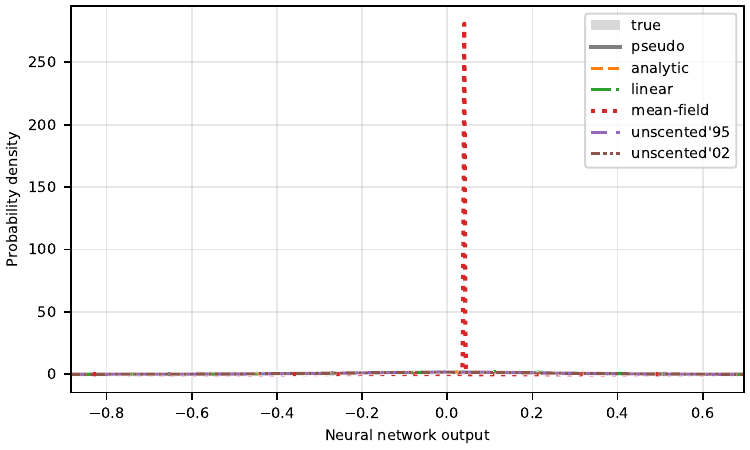}
\end{center}
\caption{Probability distributions for Network(architecture=deep, weights=trained, activation=probit), variance=small}
\end{figure}\clearpage
\begin{table}[H]\begin{center}\input{generated/tables/moments/RandomNeuralNetworkTestCase__network=deep_trained_probit,variance=Variance.MEDIUM.tex}
\end{center}
\caption{Comparison of moments for Network(architecture=deep, weights=trained, activation=probit), variance=medium}
\end{table}\begin{table}[H]\begin{center}\input{generated/tables/divergences/RandomNeuralNetworkTestCase__network=deep_trained_probit,variance=Variance.MEDIUM.tex}
\end{center}
\caption{Comparison of statistical distances for Network(architecture=deep, weights=trained, activation=probit), variance=medium}
\end{table}\begin{figure}[H]\begin{center}
\includegraphics{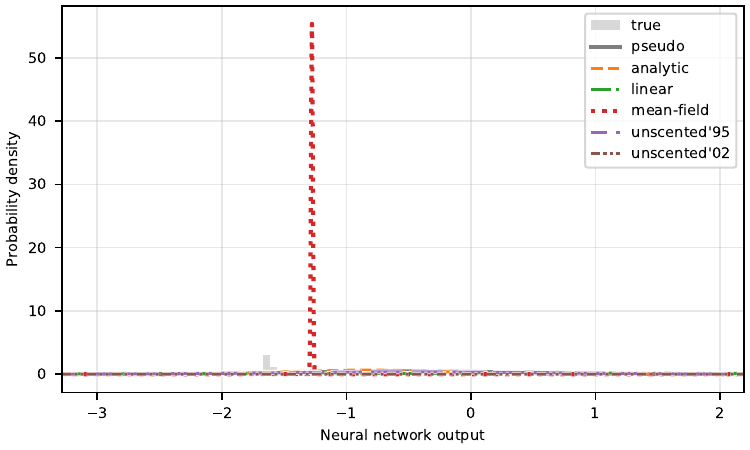}
\end{center}
\caption{Probability distributions for Network(architecture=deep, weights=trained, activation=probit), variance=medium}
\end{figure}\clearpage
\begin{table}[H]\begin{center}\input{generated/tables/moments/RandomNeuralNetworkTestCase__network=deep_trained_probit,variance=Variance.LARGE.tex}
\end{center}
\caption{Comparison of moments for Network(architecture=deep, weights=trained, activation=probit), variance=large}
\end{table}\begin{table}[H]\begin{center}\input{generated/tables/divergences/RandomNeuralNetworkTestCase__network=deep_trained_probit,variance=Variance.LARGE.tex}
\end{center}
\caption{Comparison of statistical distances for Network(architecture=deep, weights=trained, activation=probit), variance=large}
\end{table}\begin{figure}[H]\begin{center}
\includegraphics{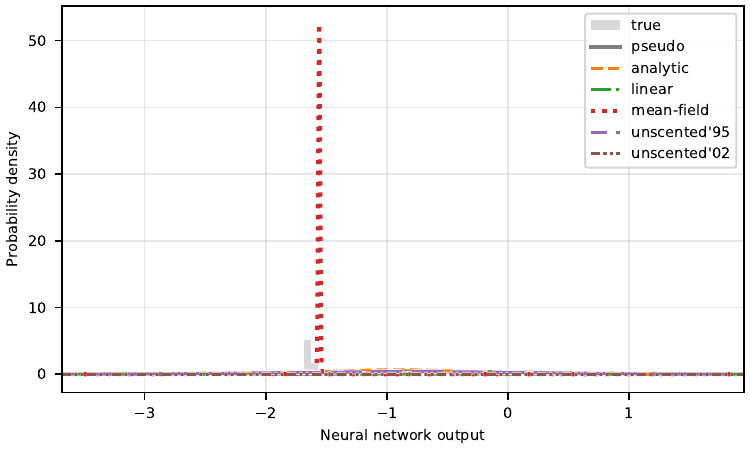}
\end{center}
\caption{Probability distributions for Network(architecture=deep, weights=trained, activation=probit), variance=large}
\end{figure}\clearpage
\begin{table}[H]\begin{center}\input{generated/tables/moments/RandomNeuralNetworkTestCase__network=deep_initialized_probit_residual,variance=Variance.SMALL.tex}
\end{center}
\caption{Comparison of moments for Network(architecture=deep, weights=initialized, activation=probit residual), variance=small}
\end{table}\begin{table}[H]\begin{center}\input{generated/tables/divergences/RandomNeuralNetworkTestCase__network=deep_initialized_probit_residual,variance=Variance.SMALL.tex}
\end{center}
\caption{Comparison of statistical distances for Network(architecture=deep, weights=initialized, activation=probit residual), variance=small}
\end{table}\begin{figure}[H]\begin{center}
\includegraphics{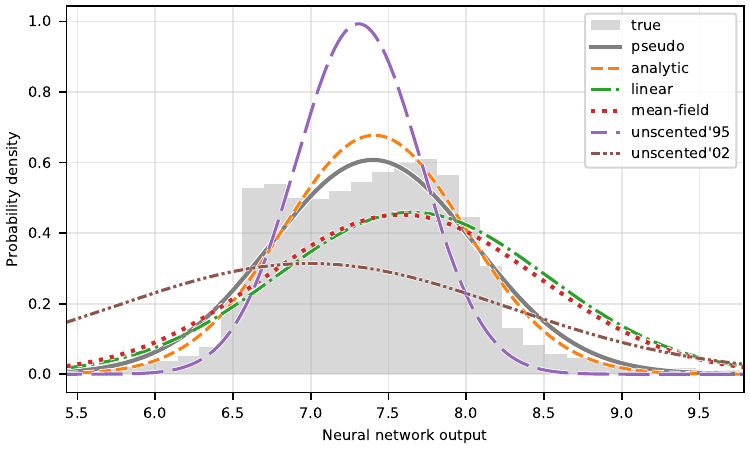}
\end{center}
\caption{Probability distributions for Network(architecture=deep, weights=initialized, activation=probit residual), variance=small}
\end{figure}\clearpage
\begin{table}[H]\begin{center}\input{generated/tables/moments/RandomNeuralNetworkTestCase__network=deep_initialized_probit_residual,variance=Variance.MEDIUM.tex}
\end{center}
\caption{Comparison of moments for Network(architecture=deep, weights=initialized, activation=probit residual), variance=medium}
\end{table}\begin{table}[H]\begin{center}\input{generated/tables/divergences/RandomNeuralNetworkTestCase__network=deep_initialized_probit_residual,variance=Variance.MEDIUM.tex}
\end{center}
\caption{Comparison of statistical distances for Network(architecture=deep, weights=initialized, activation=probit residual), variance=medium}
\end{table}\begin{figure}[H]\begin{center}
\includegraphics{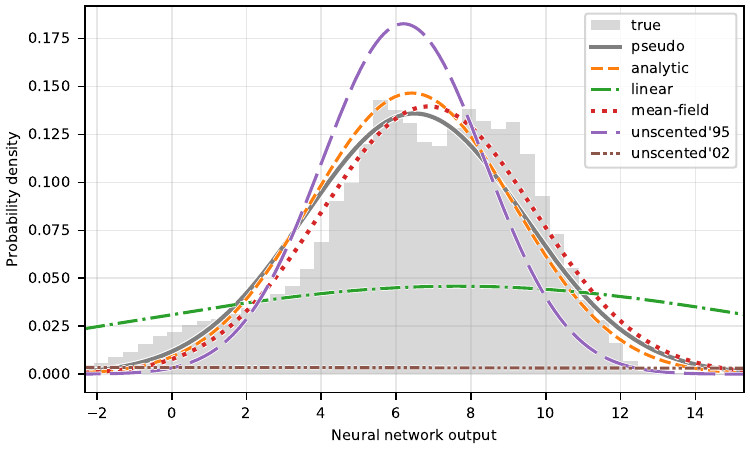}
\end{center}
\caption{Probability distributions for Network(architecture=deep, weights=initialized, activation=probit residual), variance=medium}
\end{figure}\clearpage
\begin{table}[H]\begin{center}\input{generated/tables/moments/RandomNeuralNetworkTestCase__network=deep_initialized_probit_residual,variance=Variance.LARGE.tex}
\end{center}
\caption{Comparison of moments for Network(architecture=deep, weights=initialized, activation=probit residual), variance=large}
\end{table}\begin{table}[H]\begin{center}\input{generated/tables/divergences/RandomNeuralNetworkTestCase__network=deep_initialized_probit_residual,variance=Variance.LARGE.tex}
\end{center}
\caption{Comparison of statistical distances for Network(architecture=deep, weights=initialized, activation=probit residual), variance=large}
\end{table}\begin{figure}[H]\begin{center}
\includegraphics{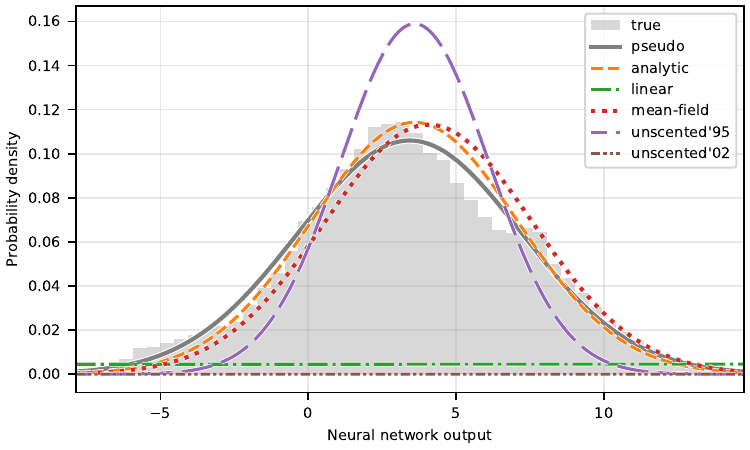}
\end{center}
\caption{Probability distributions for Network(architecture=deep, weights=initialized, activation=probit residual), variance=large}
\end{figure}\clearpage
\begin{table}[H]\begin{center}\input{generated/tables/moments/RandomNeuralNetworkTestCase__network=deep_trained_probit_residual,variance=Variance.SMALL.tex}
\end{center}
\caption{Comparison of moments for Network(architecture=deep, weights=trained, activation=probit residual), variance=small}
\end{table}\begin{table}[H]\begin{center}\input{generated/tables/divergences/RandomNeuralNetworkTestCase__network=deep_trained_probit_residual,variance=Variance.SMALL.tex}
\end{center}
\caption{Comparison of statistical distances for Network(architecture=deep, weights=trained, activation=probit residual), variance=small}
\end{table}\begin{figure}[H]\begin{center}
\includegraphics{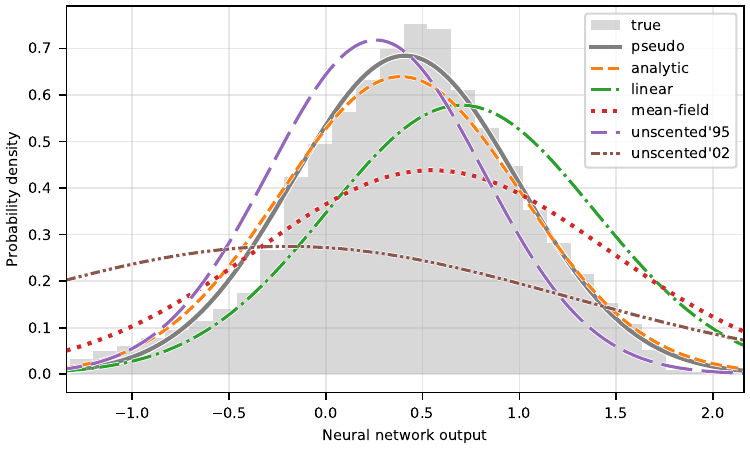}
\end{center}
\caption{Probability distributions for Network(architecture=deep, weights=trained, activation=probit residual), variance=small}
\end{figure}\clearpage
\begin{table}[H]\begin{center}\input{generated/tables/moments/RandomNeuralNetworkTestCase__network=deep_trained_probit_residual,variance=Variance.MEDIUM.tex}
\end{center}
\caption{Comparison of moments for Network(architecture=deep, weights=trained, activation=probit residual), variance=medium}
\end{table}\begin{table}[H]\begin{center}\input{generated/tables/divergences/RandomNeuralNetworkTestCase__network=deep_trained_probit_residual,variance=Variance.MEDIUM.tex}
\end{center}
\caption{Comparison of statistical distances for Network(architecture=deep, weights=trained, activation=probit residual), variance=medium}
\end{table}\begin{figure}[H]\begin{center}
\includegraphics{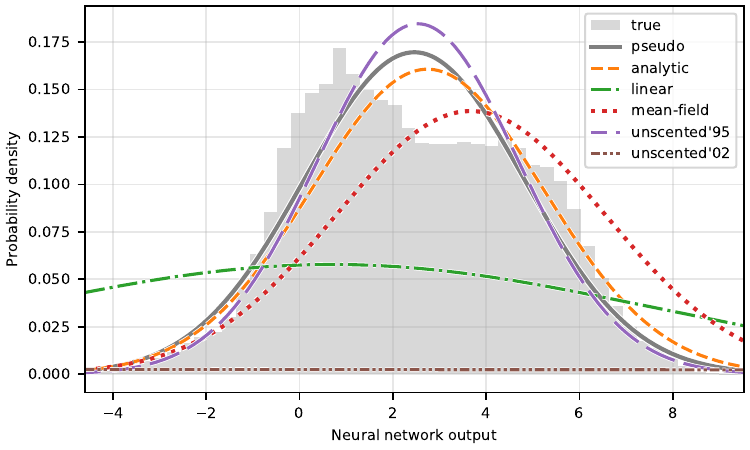}
\end{center}
\caption{Probability distributions for Network(architecture=deep, weights=trained, activation=probit residual), variance=medium}
\end{figure}\clearpage
\begin{table}[H]\begin{center}\input{generated/tables/moments/RandomNeuralNetworkTestCase__network=deep_trained_probit_residual,variance=Variance.LARGE.tex}
\end{center}
\caption{Comparison of moments for Network(architecture=deep, weights=trained, activation=probit residual), variance=large}
\end{table}\begin{table}[H]\begin{center}\input{generated/tables/divergences/RandomNeuralNetworkTestCase__network=deep_trained_probit_residual,variance=Variance.LARGE.tex}
\end{center}
\caption{Comparison of statistical distances for Network(architecture=deep, weights=trained, activation=probit residual), variance=large}
\end{table}\begin{figure}[H]\begin{center}
\includegraphics{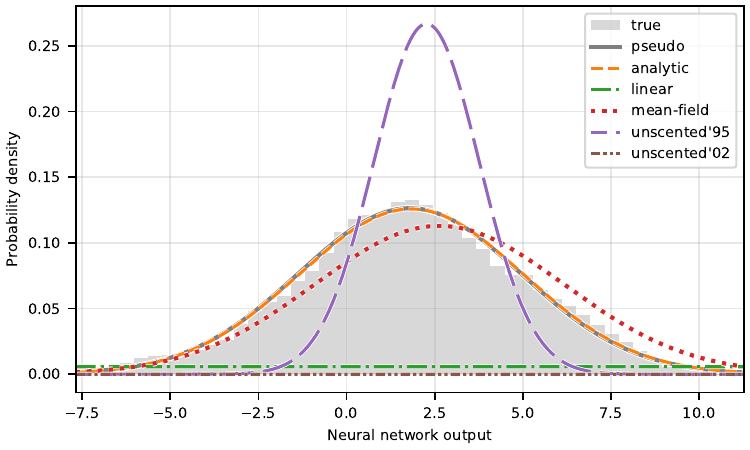}
\end{center}
\caption{Probability distributions for Network(architecture=deep, weights=trained, activation=probit residual), variance=large}
\end{figure}\clearpage
\begin{table}[H]\begin{center}\input{generated/tables/moments/RandomNeuralNetworkTestCase__network=deep_initialized_sine,variance=Variance.SMALL.tex}
\end{center}
\caption{Comparison of moments for Network(architecture=deep, weights=initialized, activation=sine), variance=small}
\end{table}\begin{table}[H]\begin{center}\input{generated/tables/divergences/RandomNeuralNetworkTestCase__network=deep_initialized_sine,variance=Variance.SMALL.tex}
\end{center}
\caption{Comparison of statistical distances for Network(architecture=deep, weights=initialized, activation=sine), variance=small}
\end{table}\begin{figure}[H]\begin{center}
\includegraphics{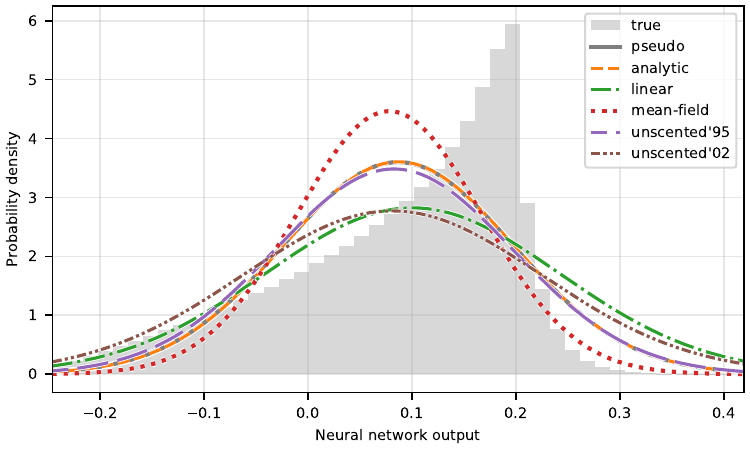}
\end{center}
\caption{Probability distributions for Network(architecture=deep, weights=initialized, activation=sine), variance=small}
\end{figure}\clearpage
\begin{table}[H]\begin{center}\input{generated/tables/moments/RandomNeuralNetworkTestCase__network=deep_initialized_sine,variance=Variance.MEDIUM.tex}
\end{center}
\caption{Comparison of moments for Network(architecture=deep, weights=initialized, activation=sine), variance=medium}
\end{table}\begin{table}[H]\begin{center}\input{generated/tables/divergences/RandomNeuralNetworkTestCase__network=deep_initialized_sine,variance=Variance.MEDIUM.tex}
\end{center}
\caption{Comparison of statistical distances for Network(architecture=deep, weights=initialized, activation=sine), variance=medium}
\end{table}\begin{figure}[H]\begin{center}
\includegraphics{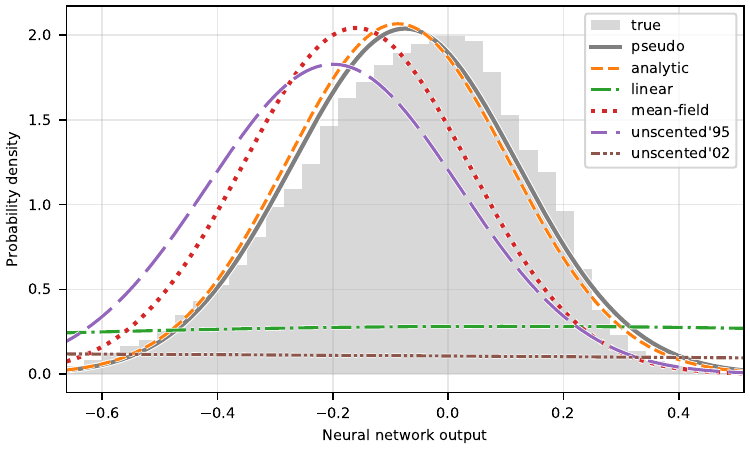}
\end{center}
\caption{Probability distributions for Network(architecture=deep, weights=initialized, activation=sine), variance=medium}
\end{figure}\clearpage
\begin{table}[H]\begin{center}\input{generated/tables/moments/RandomNeuralNetworkTestCase__network=deep_initialized_sine,variance=Variance.LARGE.tex}
\end{center}
\caption{Comparison of moments for Network(architecture=deep, weights=initialized, activation=sine), variance=large}
\end{table}\begin{table}[H]\begin{center}\input{generated/tables/divergences/RandomNeuralNetworkTestCase__network=deep_initialized_sine,variance=Variance.LARGE.tex}
\end{center}
\caption{Comparison of statistical distances for Network(architecture=deep, weights=initialized, activation=sine), variance=large}
\end{table}\begin{figure}[H]\begin{center}
\includegraphics{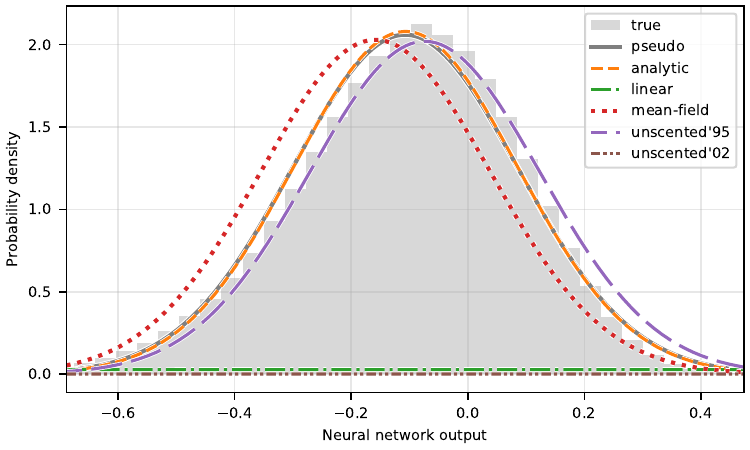}
\end{center}
\caption{Probability distributions for Network(architecture=deep, weights=initialized, activation=sine), variance=large}
\end{figure}\clearpage
\begin{table}[H]\begin{center}\input{generated/tables/moments/RandomNeuralNetworkTestCase__network=deep_trained_sine,variance=Variance.SMALL.tex}
\end{center}
\caption{Comparison of moments for Network(architecture=deep, weights=trained, activation=sine), variance=small}
\end{table}\begin{table}[H]\begin{center}\input{generated/tables/divergences/RandomNeuralNetworkTestCase__network=deep_trained_sine,variance=Variance.SMALL.tex}
\end{center}
\caption{Comparison of statistical distances for Network(architecture=deep, weights=trained, activation=sine), variance=small}
\end{table}\begin{figure}[H]\begin{center}
\includegraphics{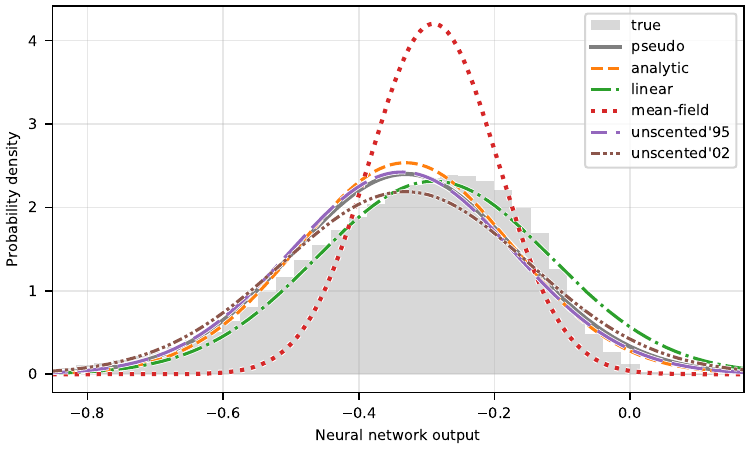}
\end{center}
\caption{Probability distributions for Network(architecture=deep, weights=trained, activation=sine), variance=small}
\end{figure}\clearpage
\begin{table}[H]\begin{center}\input{generated/tables/moments/RandomNeuralNetworkTestCase__network=deep_trained_sine,variance=Variance.MEDIUM.tex}
\end{center}
\caption{Comparison of moments for Network(architecture=deep, weights=trained, activation=sine), variance=medium}
\end{table}\begin{table}[H]\begin{center}\input{generated/tables/divergences/RandomNeuralNetworkTestCase__network=deep_trained_sine,variance=Variance.MEDIUM.tex}
\end{center}
\caption{Comparison of statistical distances for Network(architecture=deep, weights=trained, activation=sine), variance=medium}
\end{table}\begin{figure}[H]\begin{center}
\includegraphics{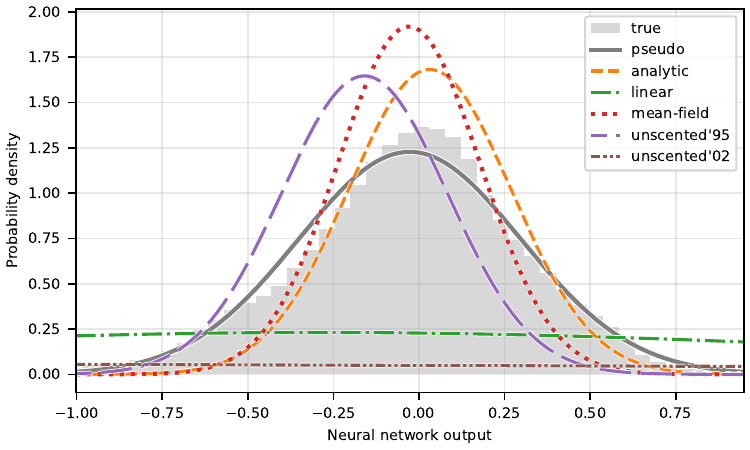}
\end{center}
\caption{Probability distributions for Network(architecture=deep, weights=trained, activation=sine), variance=medium}
\end{figure}\clearpage
\begin{table}[H]\begin{center}\input{generated/tables/moments/RandomNeuralNetworkTestCase__network=deep_trained_sine,variance=Variance.LARGE.tex}
\end{center}
\caption{Comparison of moments for Network(architecture=deep, weights=trained, activation=sine), variance=large}
\end{table}\begin{table}[H]\begin{center}\input{generated/tables/divergences/RandomNeuralNetworkTestCase__network=deep_trained_sine,variance=Variance.LARGE.tex}
\end{center}
\caption{Comparison of statistical distances for Network(architecture=deep, weights=trained, activation=sine), variance=large}
\end{table}\begin{figure}[H]\begin{center}
\includegraphics{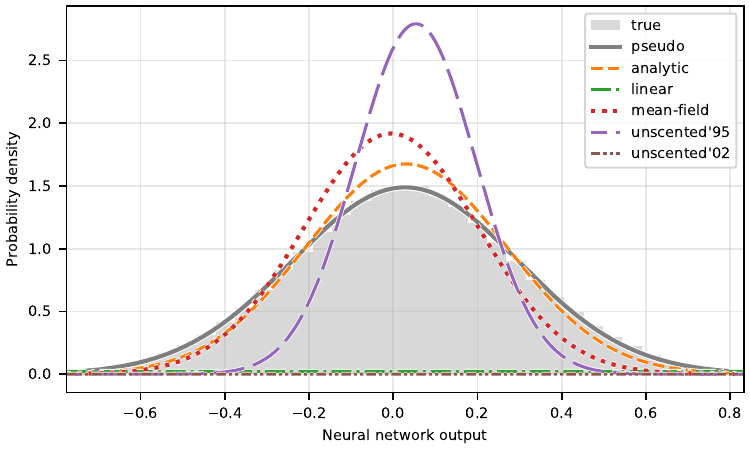}
\end{center}
\caption{Probability distributions for Network(architecture=deep, weights=trained, activation=sine), variance=large}
\end{figure}\clearpage
\begin{table}[H]\begin{center}\input{generated/tables/moments/RandomNeuralNetworkTestCase__network=deep_initialized_sine_residual,variance=Variance.SMALL.tex}
\end{center}
\caption{Comparison of moments for Network(architecture=deep, weights=initialized, activation=sine residual), variance=small}
\end{table}\begin{table}[H]\begin{center}\input{generated/tables/divergences/RandomNeuralNetworkTestCase__network=deep_initialized_sine_residual,variance=Variance.SMALL.tex}
\end{center}
\caption{Comparison of statistical distances for Network(architecture=deep, weights=initialized, activation=sine residual), variance=small}
\end{table}\begin{figure}[H]\begin{center}
\includegraphics{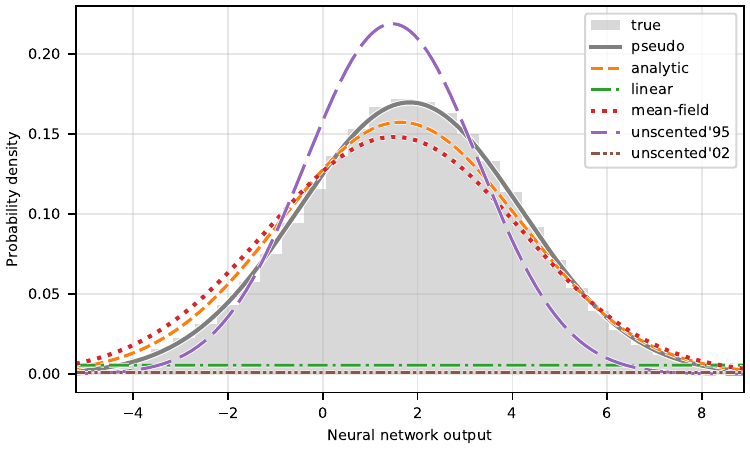}
\end{center}
\caption{Probability distributions for Network(architecture=deep, weights=initialized, activation=sine residual), variance=small}
\end{figure}\clearpage
\begin{table}[H]\begin{center}\input{generated/tables/moments/RandomNeuralNetworkTestCase__network=deep_initialized_sine_residual,variance=Variance.MEDIUM.tex}
\end{center}
\caption{Comparison of moments for Network(architecture=deep, weights=initialized, activation=sine residual), variance=medium}
\end{table}\begin{table}[H]\begin{center}\input{generated/tables/divergences/RandomNeuralNetworkTestCase__network=deep_initialized_sine_residual,variance=Variance.MEDIUM.tex}
\end{center}
\caption{Comparison of statistical distances for Network(architecture=deep, weights=initialized, activation=sine residual), variance=medium}
\end{table}\begin{figure}[H]\begin{center}
\includegraphics{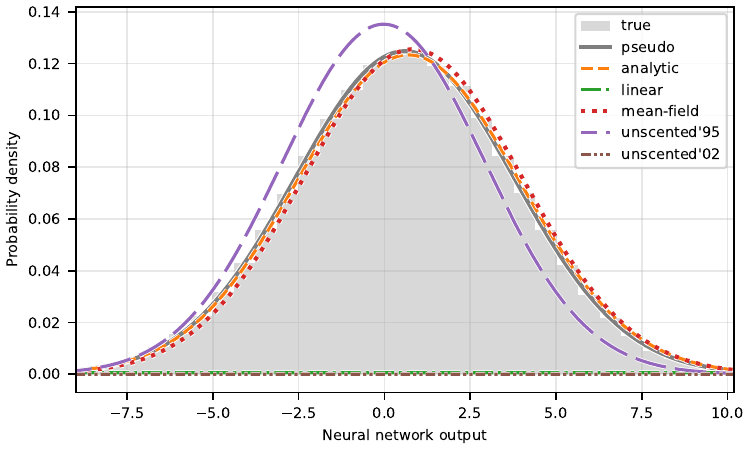}
\end{center}
\caption{Probability distributions for Network(architecture=deep, weights=initialized, activation=sine residual), variance=medium}
\end{figure}\clearpage
\begin{table}[H]\begin{center}\input{generated/tables/moments/RandomNeuralNetworkTestCase__network=deep_initialized_sine_residual,variance=Variance.LARGE.tex}
\end{center}
\caption{Comparison of moments for Network(architecture=deep, weights=initialized, activation=sine residual), variance=large}
\end{table}\begin{table}[H]\begin{center}\input{generated/tables/divergences/RandomNeuralNetworkTestCase__network=deep_initialized_sine_residual,variance=Variance.LARGE.tex}
\end{center}
\caption{Comparison of statistical distances for Network(architecture=deep, weights=initialized, activation=sine residual), variance=large}
\end{table}\begin{figure}[H]\begin{center}
\includegraphics{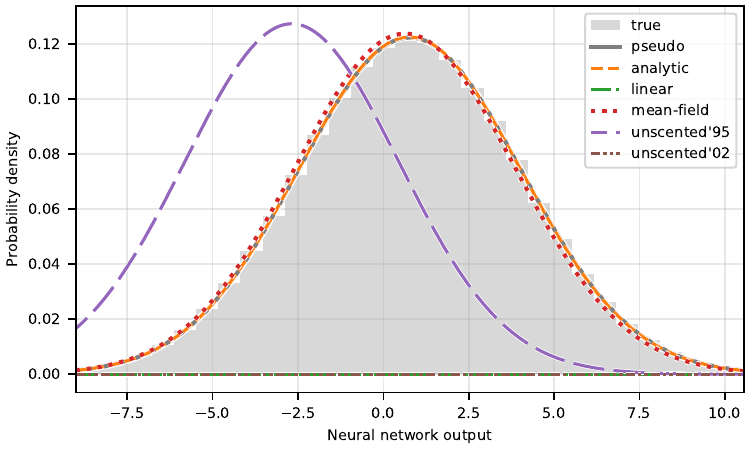}
\end{center}
\caption{Probability distributions for Network(architecture=deep, weights=initialized, activation=sine residual), variance=large}
\end{figure}\clearpage
\begin{table}[H]\begin{center}\input{generated/tables/moments/RandomNeuralNetworkTestCase__network=deep_trained_sine_residual,variance=Variance.SMALL.tex}
\end{center}
\caption{Comparison of moments for Network(architecture=deep, weights=trained, activation=sine residual), variance=small}
\end{table}\begin{table}[H]\begin{center}\input{generated/tables/divergences/RandomNeuralNetworkTestCase__network=deep_trained_sine_residual,variance=Variance.SMALL.tex}
\end{center}
\caption{Comparison of statistical distances for Network(architecture=deep, weights=trained, activation=sine residual), variance=small}
\end{table}\begin{figure}[H]\begin{center}
\includegraphics{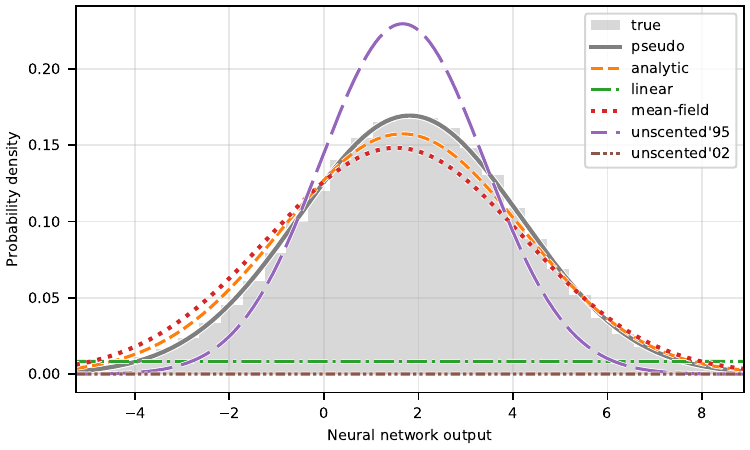}
\end{center}
\caption{Probability distributions for Network(architecture=deep, weights=trained, activation=sine residual), variance=small}
\end{figure}\clearpage
\begin{table}[H]\begin{center}\input{generated/tables/moments/RandomNeuralNetworkTestCase__network=deep_trained_sine_residual,variance=Variance.MEDIUM.tex}
\end{center}
\caption{Comparison of moments for Network(architecture=deep, weights=trained, activation=sine residual), variance=medium}
\end{table}\begin{table}[H]\begin{center}\input{generated/tables/divergences/RandomNeuralNetworkTestCase__network=deep_trained_sine_residual,variance=Variance.MEDIUM.tex}
\end{center}
\caption{Comparison of statistical distances for Network(architecture=deep, weights=trained, activation=sine residual), variance=medium}
\end{table}\begin{figure}[H]\begin{center}
\includegraphics{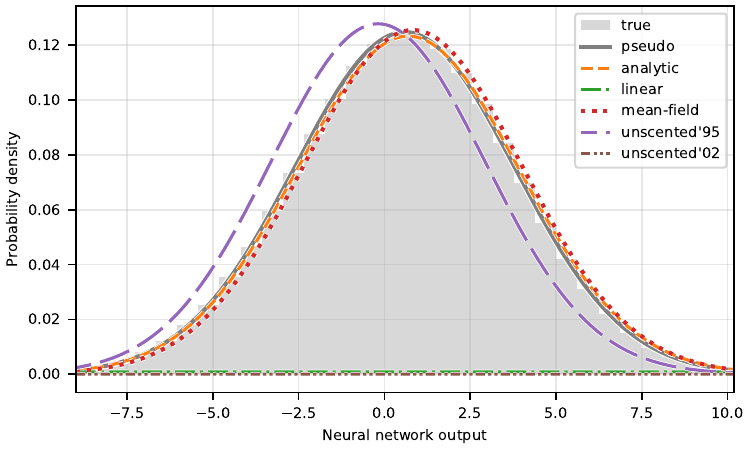}
\end{center}
\caption{Probability distributions for Network(architecture=deep, weights=trained, activation=sine residual), variance=medium}
\end{figure}\clearpage
\begin{table}[H]\begin{center}\input{generated/tables/moments/RandomNeuralNetworkTestCase__network=deep_trained_sine_residual,variance=Variance.LARGE.tex}
\end{center}
\caption{Comparison of moments for Network(architecture=deep, weights=trained, activation=sine residual), variance=large}
\end{table}\begin{table}[H]\begin{center}\input{generated/tables/divergences/RandomNeuralNetworkTestCase__network=deep_trained_sine_residual,variance=Variance.LARGE.tex}
\end{center}
\caption{Comparison of statistical distances for Network(architecture=deep, weights=trained, activation=sine residual), variance=large}
\end{table}\begin{figure}[H]\begin{center}
\includegraphics{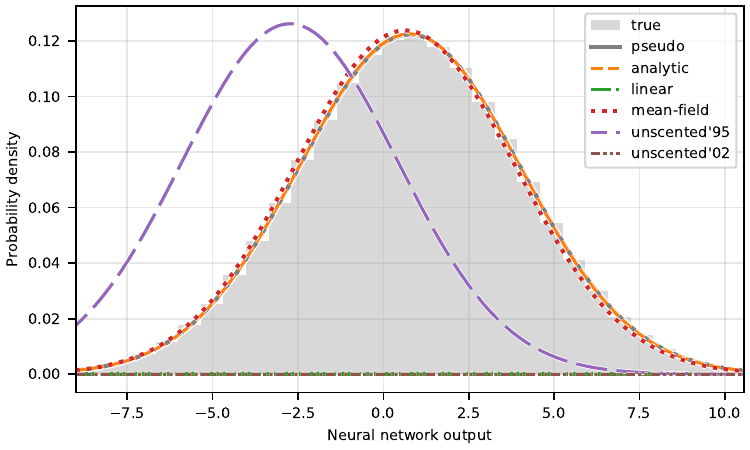}
\end{center}
\caption{Probability distributions for Network(architecture=deep, weights=trained, activation=sine residual), variance=large}
\end{figure}\clearpage
\begin{table}[H]\begin{center}\input{generated/tables/moments/RandomNeuralNetworkTestCase__network=deep_initialized_gelu,variance=Variance.SMALL.tex}
\end{center}
\caption{Comparison of moments for Network(architecture=deep, weights=initialized, activation=gelu), variance=small}
\end{table}\begin{table}[H]\begin{center}\input{generated/tables/divergences/RandomNeuralNetworkTestCase__network=deep_initialized_gelu,variance=Variance.SMALL.tex}
\end{center}
\caption{Comparison of statistical distances for Network(architecture=deep, weights=initialized, activation=gelu), variance=small}
\end{table}\begin{figure}[H]\begin{center}
\includegraphics{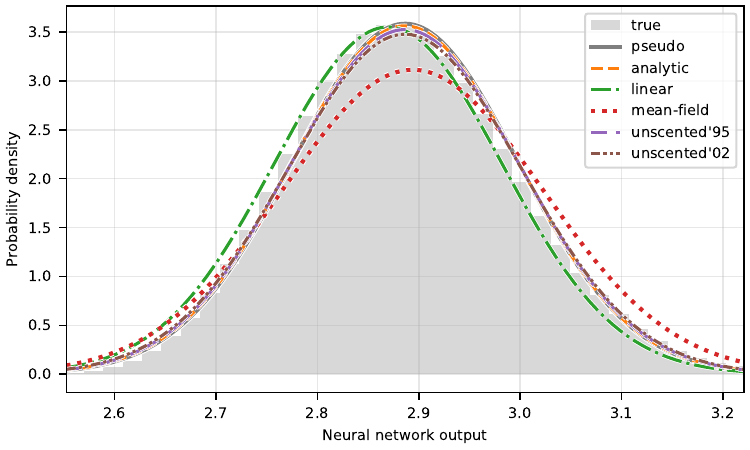}
\end{center}
\caption{Probability distributions for Network(architecture=deep, weights=initialized, activation=gelu), variance=small}
\end{figure}\clearpage
\begin{table}[H]\begin{center}\input{generated/tables/moments/RandomNeuralNetworkTestCase__network=deep_initialized_gelu,variance=Variance.MEDIUM.tex}
\end{center}
\caption{Comparison of moments for Network(architecture=deep, weights=initialized, activation=gelu), variance=medium}
\end{table}\begin{table}[H]\begin{center}\input{generated/tables/divergences/RandomNeuralNetworkTestCase__network=deep_initialized_gelu,variance=Variance.MEDIUM.tex}
\end{center}
\caption{Comparison of statistical distances for Network(architecture=deep, weights=initialized, activation=gelu), variance=medium}
\end{table}\begin{figure}[H]\begin{center}
\includegraphics{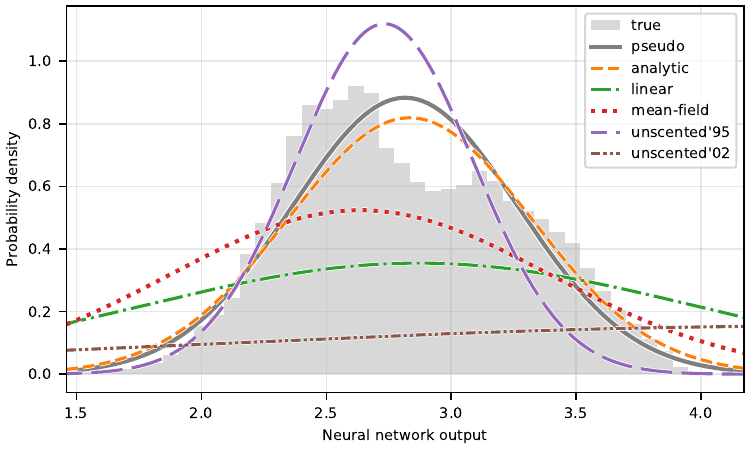}
\end{center}
\caption{Probability distributions for Network(architecture=deep, weights=initialized, activation=gelu), variance=medium}
\end{figure}\clearpage
\begin{table}[H]\begin{center}\input{generated/tables/moments/RandomNeuralNetworkTestCase__network=deep_initialized_gelu,variance=Variance.LARGE.tex}
\end{center}
\caption{Comparison of moments for Network(architecture=deep, weights=initialized, activation=gelu), variance=large}
\end{table}\begin{table}[H]\begin{center}\input{generated/tables/divergences/RandomNeuralNetworkTestCase__network=deep_initialized_gelu,variance=Variance.LARGE.tex}
\end{center}
\caption{Comparison of statistical distances for Network(architecture=deep, weights=initialized, activation=gelu), variance=large}
\end{table}\begin{figure}[H]\begin{center}
\includegraphics{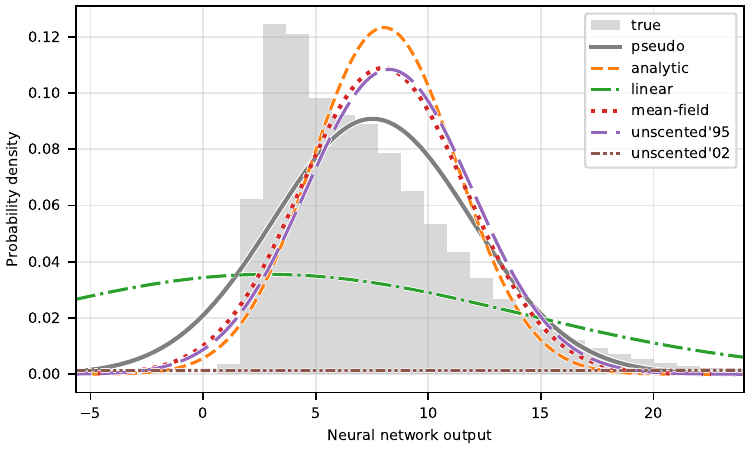}
\end{center}
\caption{Probability distributions for Network(architecture=deep, weights=initialized, activation=gelu), variance=large}
\end{figure}\clearpage
\begin{table}[H]\begin{center}\input{generated/tables/moments/RandomNeuralNetworkTestCase__network=deep_trained_gelu,variance=Variance.SMALL.tex}
\end{center}
\caption{Comparison of moments for Network(architecture=deep, weights=trained, activation=gelu), variance=small}
\end{table}\begin{table}[H]\begin{center}\input{generated/tables/divergences/RandomNeuralNetworkTestCase__network=deep_trained_gelu,variance=Variance.SMALL.tex}
\end{center}
\caption{Comparison of statistical distances for Network(architecture=deep, weights=trained, activation=gelu), variance=small}
\end{table}\begin{figure}[H]\begin{center}
\includegraphics{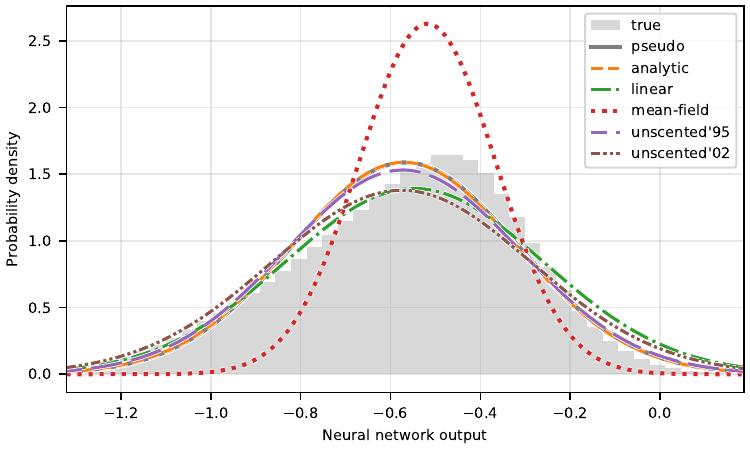}
\end{center}
\caption{Probability distributions for Network(architecture=deep, weights=trained, activation=gelu), variance=small}
\end{figure}\clearpage
\begin{table}[H]\begin{center}\input{generated/tables/moments/RandomNeuralNetworkTestCase__network=deep_trained_gelu,variance=Variance.MEDIUM.tex}
\end{center}
\caption{Comparison of moments for Network(architecture=deep, weights=trained, activation=gelu), variance=medium}
\end{table}\begin{table}[H]\begin{center}\input{generated/tables/divergences/RandomNeuralNetworkTestCase__network=deep_trained_gelu,variance=Variance.MEDIUM.tex}
\end{center}
\caption{Comparison of statistical distances for Network(architecture=deep, weights=trained, activation=gelu), variance=medium}
\end{table}\begin{figure}[H]\begin{center}
\includegraphics{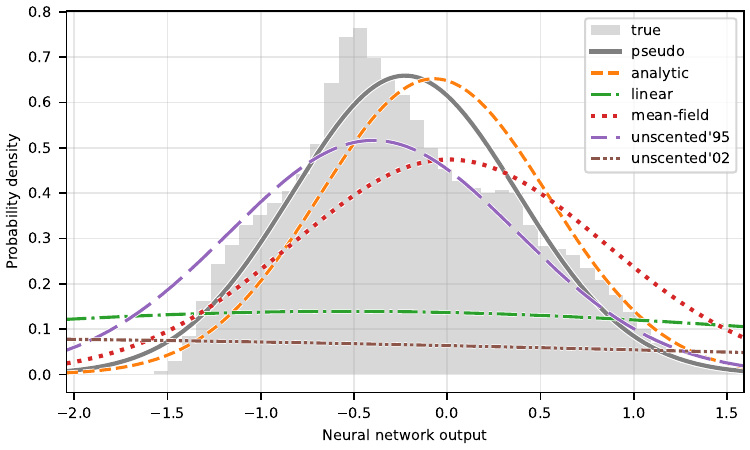}
\end{center}
\caption{Probability distributions for Network(architecture=deep, weights=trained, activation=gelu), variance=medium}
\end{figure}\clearpage
\begin{table}[H]\begin{center}\input{generated/tables/moments/RandomNeuralNetworkTestCase__network=deep_trained_gelu,variance=Variance.LARGE.tex}
\end{center}
\caption{Comparison of moments for Network(architecture=deep, weights=trained, activation=gelu), variance=large}
\end{table}\begin{table}[H]\begin{center}\input{generated/tables/divergences/RandomNeuralNetworkTestCase__network=deep_trained_gelu,variance=Variance.LARGE.tex}
\end{center}
\caption{Comparison of statistical distances for Network(architecture=deep, weights=trained, activation=gelu), variance=large}
\end{table}\begin{figure}[H]\begin{center}
\includegraphics{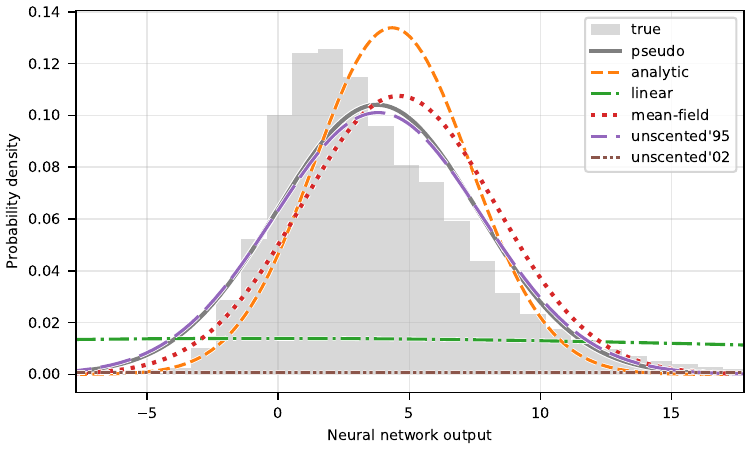}
\end{center}
\caption{Probability distributions for Network(architecture=deep, weights=trained, activation=gelu), variance=large}
\end{figure}\clearpage
\begin{table}[H]\begin{center}\input{generated/tables/moments/RandomNeuralNetworkTestCase__network=deep_initialized_gelu_residual,variance=Variance.SMALL.tex}
\end{center}
\caption{Comparison of moments for Network(architecture=deep, weights=initialized, activation=gelu residual), variance=small}
\end{table}\begin{table}[H]\begin{center}\input{generated/tables/divergences/RandomNeuralNetworkTestCase__network=deep_initialized_gelu_residual,variance=Variance.SMALL.tex}
\end{center}
\caption{Comparison of statistical distances for Network(architecture=deep, weights=initialized, activation=gelu residual), variance=small}
\end{table}\begin{figure}[H]\begin{center}
\includegraphics{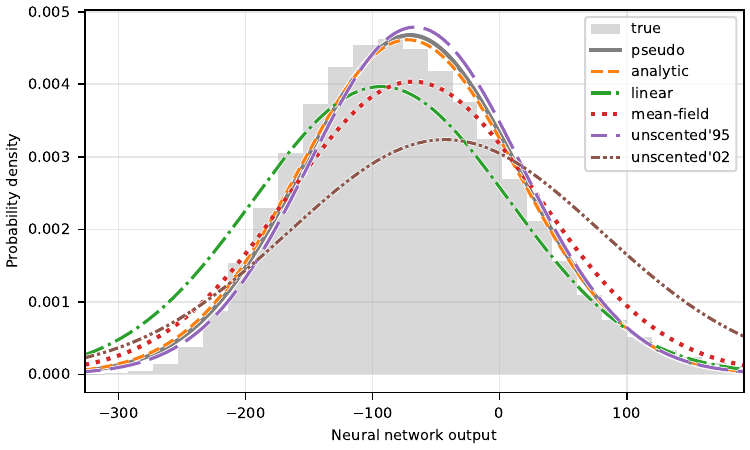}
\end{center}
\caption{Probability distributions for Network(architecture=deep, weights=initialized, activation=gelu residual), variance=small}
\end{figure}\clearpage
\begin{table}[H]\begin{center}\input{generated/tables/moments/RandomNeuralNetworkTestCase__network=deep_initialized_gelu_residual,variance=Variance.MEDIUM.tex}
\end{center}
\caption{Comparison of moments for Network(architecture=deep, weights=initialized, activation=gelu residual), variance=medium}
\end{table}\begin{table}[H]\begin{center}\input{generated/tables/divergences/RandomNeuralNetworkTestCase__network=deep_initialized_gelu_residual,variance=Variance.MEDIUM.tex}
\end{center}
\caption{Comparison of statistical distances for Network(architecture=deep, weights=initialized, activation=gelu residual), variance=medium}
\end{table}\begin{figure}[H]\begin{center}
\includegraphics{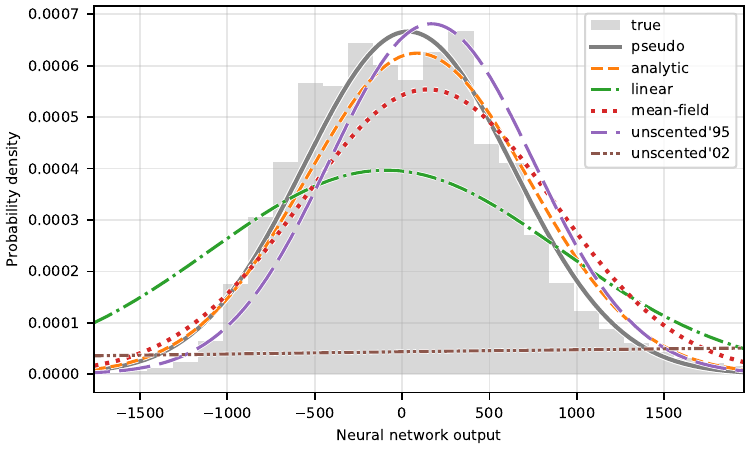}
\end{center}
\caption{Probability distributions for Network(architecture=deep, weights=initialized, activation=gelu residual), variance=medium}
\end{figure}\clearpage
\begin{table}[H]\begin{center}\input{generated/tables/moments/RandomNeuralNetworkTestCase__network=deep_initialized_gelu_residual,variance=Variance.LARGE.tex}
\end{center}
\caption{Comparison of moments for Network(architecture=deep, weights=initialized, activation=gelu residual), variance=large}
\end{table}\begin{table}[H]\begin{center}\input{generated/tables/divergences/RandomNeuralNetworkTestCase__network=deep_initialized_gelu_residual,variance=Variance.LARGE.tex}
\end{center}
\caption{Comparison of statistical distances for Network(architecture=deep, weights=initialized, activation=gelu residual), variance=large}
\end{table}\begin{figure}[H]\begin{center}
\includegraphics{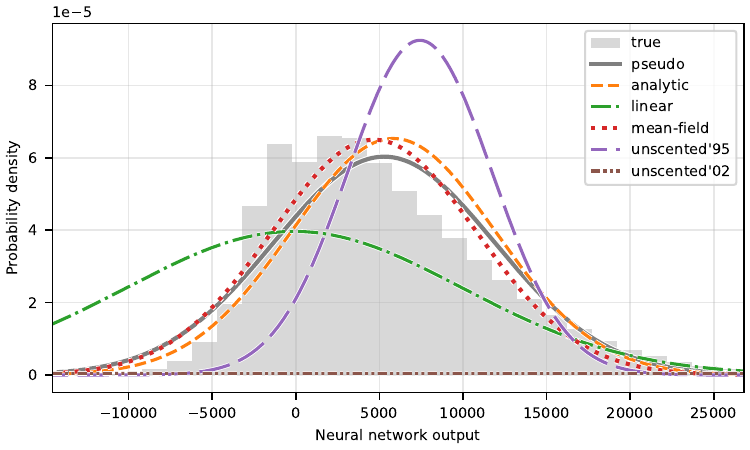}
\end{center}
\caption{Probability distributions for Network(architecture=deep, weights=initialized, activation=gelu residual), variance=large}
\end{figure}\clearpage
\begin{table}[H]\begin{center}\input{generated/tables/moments/RandomNeuralNetworkTestCase__network=deep_trained_gelu_residual,variance=Variance.SMALL.tex}
\end{center}
\caption{Comparison of moments for Network(architecture=deep, weights=trained, activation=gelu residual), variance=small}
\end{table}\begin{table}[H]\begin{center}\input{generated/tables/divergences/RandomNeuralNetworkTestCase__network=deep_trained_gelu_residual,variance=Variance.SMALL.tex}
\end{center}
\caption{Comparison of statistical distances for Network(architecture=deep, weights=trained, activation=gelu residual), variance=small}
\end{table}\begin{figure}[H]\begin{center}
\includegraphics{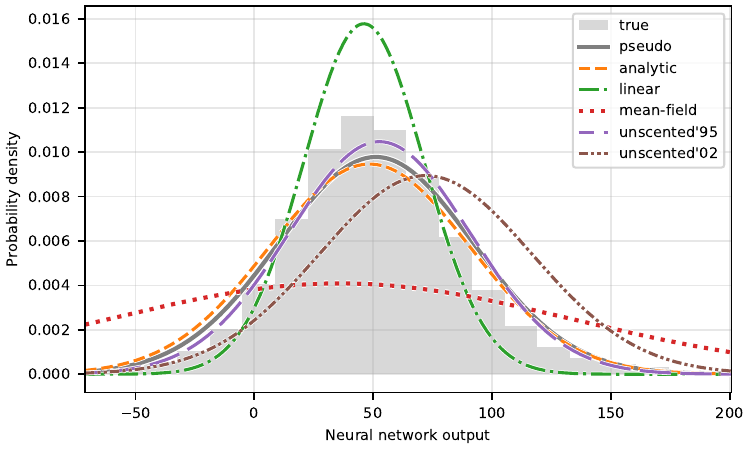}
\end{center}
\caption{Probability distributions for Network(architecture=deep, weights=trained, activation=gelu residual), variance=small}
\end{figure}\clearpage
\begin{table}[H]\begin{center}\input{generated/tables/moments/RandomNeuralNetworkTestCase__network=deep_trained_gelu_residual,variance=Variance.MEDIUM.tex}
\end{center}
\caption{Comparison of moments for Network(architecture=deep, weights=trained, activation=gelu residual), variance=medium}
\end{table}\begin{table}[H]\begin{center}\input{generated/tables/divergences/RandomNeuralNetworkTestCase__network=deep_trained_gelu_residual,variance=Variance.MEDIUM.tex}
\end{center}
\caption{Comparison of statistical distances for Network(architecture=deep, weights=trained, activation=gelu residual), variance=medium}
\end{table}\begin{figure}[H]\begin{center}
\includegraphics{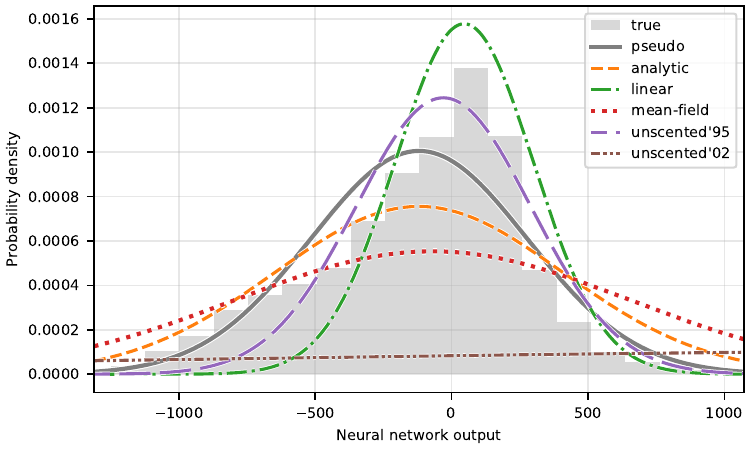}
\end{center}
\caption{Probability distributions for Network(architecture=deep, weights=trained, activation=gelu residual), variance=medium}
\end{figure}\clearpage
\begin{table}[H]\begin{center}\input{generated/tables/moments/RandomNeuralNetworkTestCase__network=deep_trained_gelu_residual,variance=Variance.LARGE.tex}
\end{center}
\caption{Comparison of moments for Network(architecture=deep, weights=trained, activation=gelu residual), variance=large}
\end{table}\begin{table}[H]\begin{center}\input{generated/tables/divergences/RandomNeuralNetworkTestCase__network=deep_trained_gelu_residual,variance=Variance.LARGE.tex}
\end{center}
\caption{Comparison of statistical distances for Network(architecture=deep, weights=trained, activation=gelu residual), variance=large}
\end{table}\begin{figure}[H]\begin{center}
\includegraphics{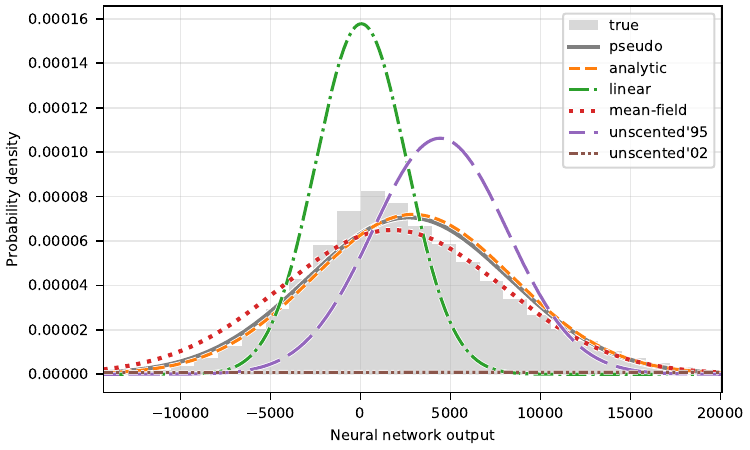}
\end{center}
\caption{Probability distributions for Network(architecture=deep, weights=trained, activation=gelu residual), variance=large}
\end{figure}\clearpage
\begin{table}[H]\begin{center}\input{generated/tables/moments/RandomNeuralNetworkTestCase__network=deep_initialized_relu,variance=Variance.SMALL.tex}
\end{center}
\caption{Comparison of moments for Network(architecture=deep, weights=initialized, activation=relu), variance=small}
\end{table}\begin{table}[H]\begin{center}\input{generated/tables/divergences/RandomNeuralNetworkTestCase__network=deep_initialized_relu,variance=Variance.SMALL.tex}
\end{center}
\caption{Comparison of statistical distances for Network(architecture=deep, weights=initialized, activation=relu), variance=small}
\end{table}\begin{figure}[H]\begin{center}
\includegraphics{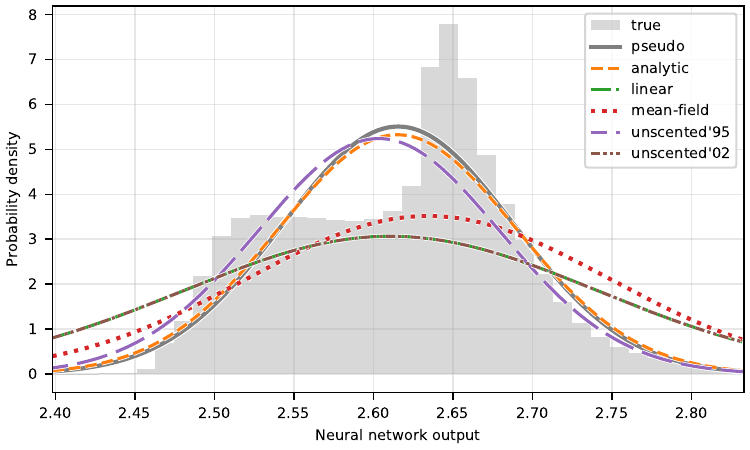}
\end{center}
\caption{Probability distributions for Network(architecture=deep, weights=initialized, activation=relu), variance=small}
\end{figure}\clearpage
\begin{table}[H]\begin{center}\input{generated/tables/moments/RandomNeuralNetworkTestCase__network=deep_initialized_relu,variance=Variance.MEDIUM.tex}
\end{center}
\caption{Comparison of moments for Network(architecture=deep, weights=initialized, activation=relu), variance=medium}
\end{table}\begin{table}[H]\begin{center}\input{generated/tables/divergences/RandomNeuralNetworkTestCase__network=deep_initialized_relu,variance=Variance.MEDIUM.tex}
\end{center}
\caption{Comparison of statistical distances for Network(architecture=deep, weights=initialized, activation=relu), variance=medium}
\end{table}\begin{figure}[H]\begin{center}
\includegraphics{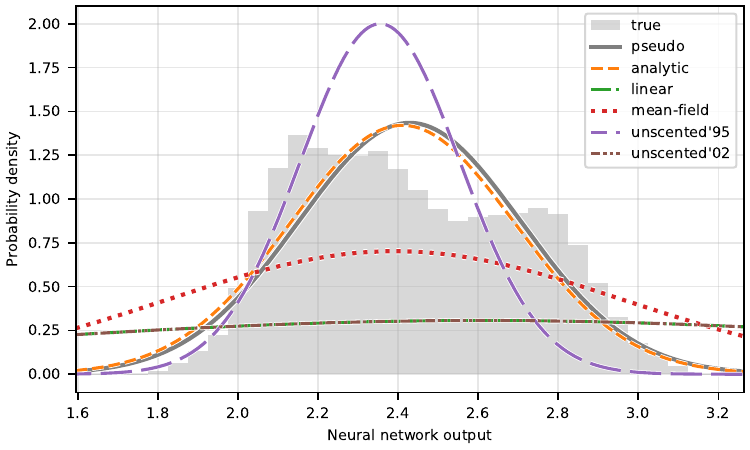}
\end{center}
\caption{Probability distributions for Network(architecture=deep, weights=initialized, activation=relu), variance=medium}
\end{figure}\clearpage
\begin{table}[H]\begin{center}\input{generated/tables/moments/RandomNeuralNetworkTestCase__network=deep_initialized_relu,variance=Variance.LARGE.tex}
\end{center}
\caption{Comparison of moments for Network(architecture=deep, weights=initialized, activation=relu), variance=large}
\end{table}\begin{table}[H]\begin{center}\input{generated/tables/divergences/RandomNeuralNetworkTestCase__network=deep_initialized_relu,variance=Variance.LARGE.tex}
\end{center}
\caption{Comparison of statistical distances for Network(architecture=deep, weights=initialized, activation=relu), variance=large}
\end{table}\begin{figure}[H]\begin{center}
\includegraphics{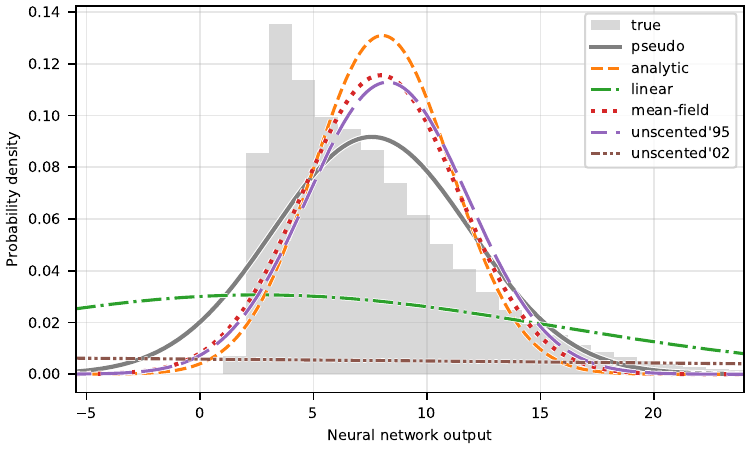}
\end{center}
\caption{Probability distributions for Network(architecture=deep, weights=initialized, activation=relu), variance=large}
\end{figure}\clearpage
\begin{table}[H]\begin{center}\input{generated/tables/moments/RandomNeuralNetworkTestCase__network=deep_trained_relu,variance=Variance.SMALL.tex}
\end{center}
\caption{Comparison of moments for Network(architecture=deep, weights=trained, activation=relu), variance=small}
\end{table}\begin{table}[H]\begin{center}\input{generated/tables/divergences/RandomNeuralNetworkTestCase__network=deep_trained_relu,variance=Variance.SMALL.tex}
\end{center}
\caption{Comparison of statistical distances for Network(architecture=deep, weights=trained, activation=relu), variance=small}
\end{table}\begin{figure}[H]\begin{center}
\includegraphics{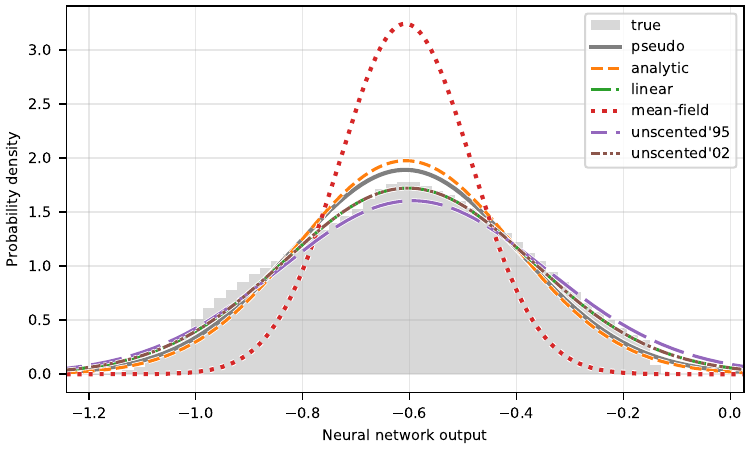}
\end{center}
\caption{Probability distributions for Network(architecture=deep, weights=trained, activation=relu), variance=small}
\end{figure}\clearpage
\begin{table}[H]\begin{center}\input{generated/tables/moments/RandomNeuralNetworkTestCase__network=deep_trained_relu,variance=Variance.MEDIUM.tex}
\end{center}
\caption{Comparison of moments for Network(architecture=deep, weights=trained, activation=relu), variance=medium}
\end{table}\begin{table}[H]\begin{center}\input{generated/tables/divergences/RandomNeuralNetworkTestCase__network=deep_trained_relu,variance=Variance.MEDIUM.tex}
\end{center}
\caption{Comparison of statistical distances for Network(architecture=deep, weights=trained, activation=relu), variance=medium}
\end{table}\begin{figure}[H]\begin{center}
\includegraphics{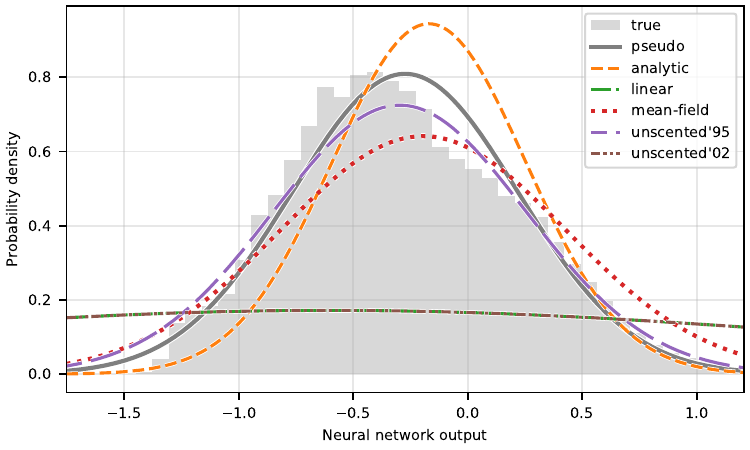}
\end{center}
\caption{Probability distributions for Network(architecture=deep, weights=trained, activation=relu), variance=medium}
\end{figure}\clearpage
\begin{table}[H]\begin{center}\input{generated/tables/moments/RandomNeuralNetworkTestCase__network=deep_trained_relu,variance=Variance.LARGE.tex}
\end{center}
\caption{Comparison of moments for Network(architecture=deep, weights=trained, activation=relu), variance=large}
\end{table}\begin{table}[H]\begin{center}\input{generated/tables/divergences/RandomNeuralNetworkTestCase__network=deep_trained_relu,variance=Variance.LARGE.tex}
\end{center}
\caption{Comparison of statistical distances for Network(architecture=deep, weights=trained, activation=relu), variance=large}
\end{table}\begin{figure}[H]\begin{center}
\includegraphics{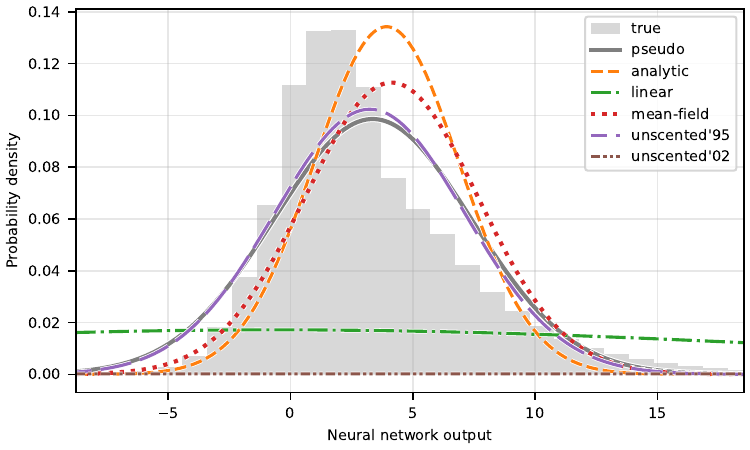}
\end{center}
\caption{Probability distributions for Network(architecture=deep, weights=trained, activation=relu), variance=large}
\end{figure}\clearpage
\begin{table}[H]\begin{center}\input{generated/tables/moments/RandomNeuralNetworkTestCase__network=deep_initialized_relu_residual,variance=Variance.SMALL.tex}
\end{center}
\caption{Comparison of moments for Network(architecture=deep, weights=initialized, activation=relu residual), variance=small}
\end{table}\begin{table}[H]\begin{center}\input{generated/tables/divergences/RandomNeuralNetworkTestCase__network=deep_initialized_relu_residual,variance=Variance.SMALL.tex}
\end{center}
\caption{Comparison of statistical distances for Network(architecture=deep, weights=initialized, activation=relu residual), variance=small}
\end{table}\begin{figure}[H]\begin{center}
\includegraphics{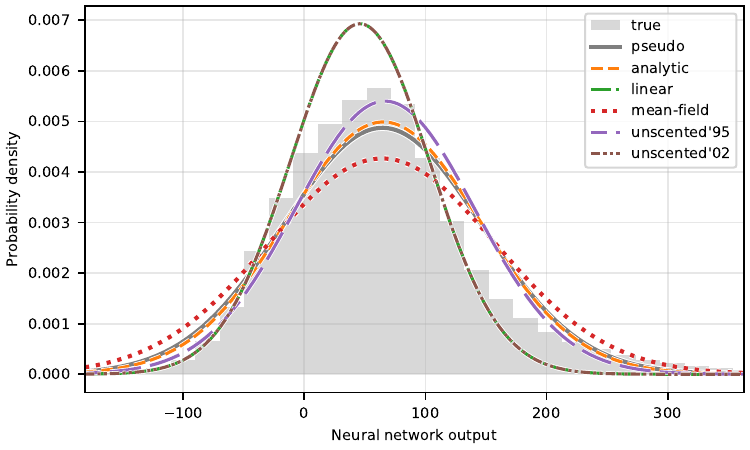}
\end{center}
\caption{Probability distributions for Network(architecture=deep, weights=initialized, activation=relu residual), variance=small}
\end{figure}\clearpage
\begin{table}[H]\begin{center}\input{generated/tables/moments/RandomNeuralNetworkTestCase__network=deep_initialized_relu_residual,variance=Variance.MEDIUM.tex}
\end{center}
\caption{Comparison of moments for Network(architecture=deep, weights=initialized, activation=relu residual), variance=medium}
\end{table}\begin{table}[H]\begin{center}\input{generated/tables/divergences/RandomNeuralNetworkTestCase__network=deep_initialized_relu_residual,variance=Variance.MEDIUM.tex}
\end{center}
\caption{Comparison of statistical distances for Network(architecture=deep, weights=initialized, activation=relu residual), variance=medium}
\end{table}\begin{figure}[H]\begin{center}
\includegraphics{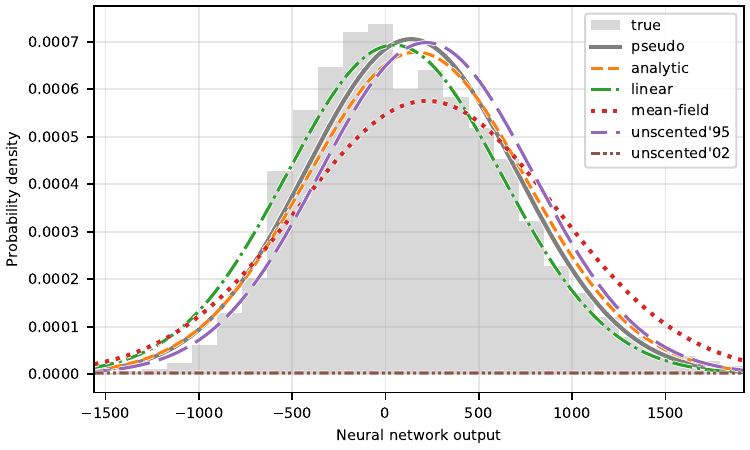}
\end{center}
\caption{Probability distributions for Network(architecture=deep, weights=initialized, activation=relu residual), variance=medium}
\end{figure}\clearpage
\begin{table}[H]\begin{center}\input{generated/tables/moments/RandomNeuralNetworkTestCase__network=deep_initialized_relu_residual,variance=Variance.LARGE.tex}
\end{center}
\caption{Comparison of moments for Network(architecture=deep, weights=initialized, activation=relu residual), variance=large}
\end{table}\begin{table}[H]\begin{center}\input{generated/tables/divergences/RandomNeuralNetworkTestCase__network=deep_initialized_relu_residual,variance=Variance.LARGE.tex}
\end{center}
\caption{Comparison of statistical distances for Network(architecture=deep, weights=initialized, activation=relu residual), variance=large}
\end{table}\begin{figure}[H]\begin{center}
\includegraphics{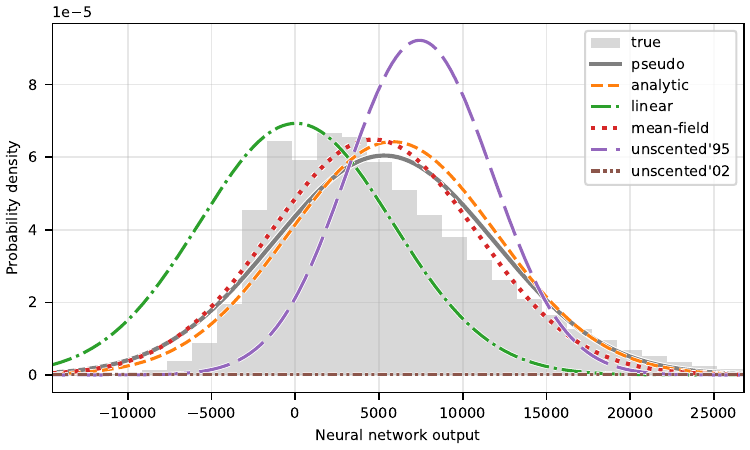}
\end{center}
\caption{Probability distributions for Network(architecture=deep, weights=initialized, activation=relu residual), variance=large}
\end{figure}\clearpage
\begin{table}[H]\begin{center}\input{generated/tables/moments/RandomNeuralNetworkTestCase__network=deep_trained_relu_residual,variance=Variance.SMALL.tex}
\end{center}
\caption{Comparison of moments for Network(architecture=deep, weights=trained, activation=relu residual), variance=small}
\end{table}\begin{table}[H]\begin{center}\input{generated/tables/divergences/RandomNeuralNetworkTestCase__network=deep_trained_relu_residual,variance=Variance.SMALL.tex}
\end{center}
\caption{Comparison of statistical distances for Network(architecture=deep, weights=trained, activation=relu residual), variance=small}
\end{table}\begin{figure}[H]\begin{center}
\includegraphics{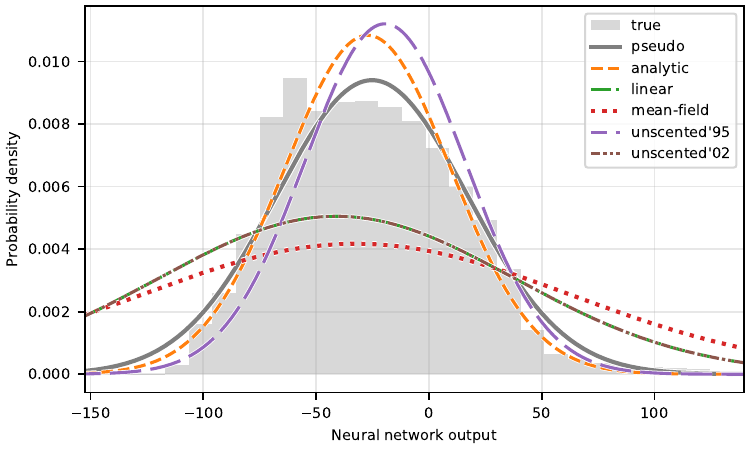}
\end{center}
\caption{Probability distributions for Network(architecture=deep, weights=trained, activation=relu residual), variance=small}
\end{figure}\clearpage
\begin{table}[H]\begin{center}\input{generated/tables/moments/RandomNeuralNetworkTestCase__network=deep_trained_relu_residual,variance=Variance.MEDIUM.tex}
\end{center}
\caption{Comparison of moments for Network(architecture=deep, weights=trained, activation=relu residual), variance=medium}
\end{table}\begin{table}[H]\begin{center}\input{generated/tables/divergences/RandomNeuralNetworkTestCase__network=deep_trained_relu_residual,variance=Variance.MEDIUM.tex}
\end{center}
\caption{Comparison of statistical distances for Network(architecture=deep, weights=trained, activation=relu residual), variance=medium}
\end{table}\begin{figure}[H]\begin{center}
\includegraphics{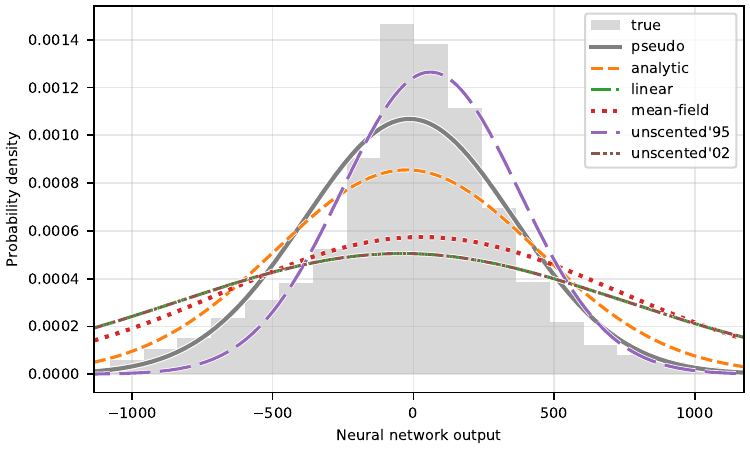}
\end{center}
\caption{Probability distributions for Network(architecture=deep, weights=trained, activation=relu residual), variance=medium}
\end{figure}\clearpage
\begin{table}[H]\begin{center}\input{generated/tables/moments/RandomNeuralNetworkTestCase__network=deep_trained_relu_residual,variance=Variance.LARGE.tex}
\end{center}
\caption{Comparison of moments for Network(architecture=deep, weights=trained, activation=relu residual), variance=large}
\end{table}\begin{table}[H]\begin{center}\input{generated/tables/divergences/RandomNeuralNetworkTestCase__network=deep_trained_relu_residual,variance=Variance.LARGE.tex}
\end{center}
\caption{Comparison of statistical distances for Network(architecture=deep, weights=trained, activation=relu residual), variance=large}
\end{table}\begin{figure}[H]\begin{center}
\includegraphics{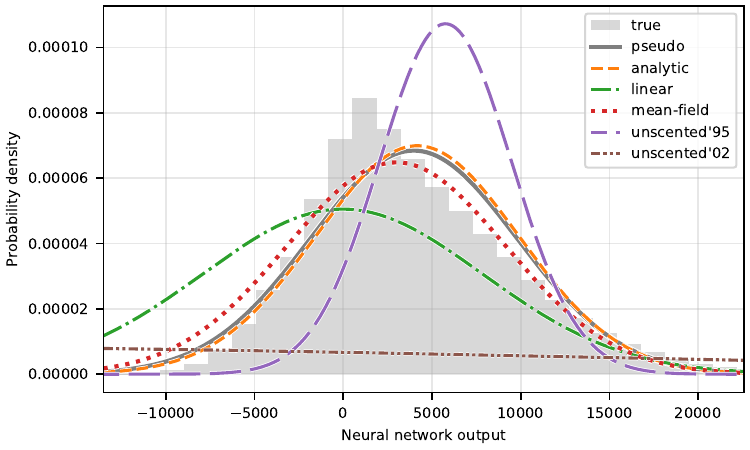}
\end{center}
\caption{Probability distributions for Network(architecture=deep, weights=trained, activation=relu residual), variance=large}
\end{figure}\clearpage
\begin{table}[H]\begin{center}\input{generated/tables/moments/RandomNeuralNetworkTestCase__network=deep_initialized_heaviside,variance=Variance.SMALL.tex}
\end{center}
\caption{Comparison of moments for Network(architecture=deep, weights=initialized, activation=heaviside), variance=small}
\end{table}\begin{table}[H]\begin{center}\input{generated/tables/divergences/RandomNeuralNetworkTestCase__network=deep_initialized_heaviside,variance=Variance.SMALL.tex}
\end{center}
\caption{Comparison of statistical distances for Network(architecture=deep, weights=initialized, activation=heaviside), variance=small}
\end{table}\begin{figure}[H]\begin{center}
\includegraphics{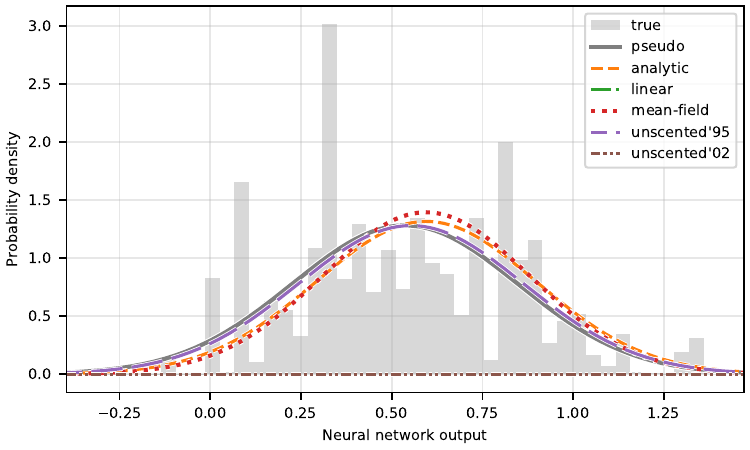}
\end{center}
\caption{Probability distributions for Network(architecture=deep, weights=initialized, activation=heaviside), variance=small}
\end{figure}\clearpage
\begin{table}[H]\begin{center}\input{generated/tables/moments/RandomNeuralNetworkTestCase__network=deep_initialized_heaviside,variance=Variance.MEDIUM.tex}
\end{center}
\caption{Comparison of moments for Network(architecture=deep, weights=initialized, activation=heaviside), variance=medium}
\end{table}\begin{table}[H]\begin{center}\input{generated/tables/divergences/RandomNeuralNetworkTestCase__network=deep_initialized_heaviside,variance=Variance.MEDIUM.tex}
\end{center}
\caption{Comparison of statistical distances for Network(architecture=deep, weights=initialized, activation=heaviside), variance=medium}
\end{table}\begin{figure}[H]\begin{center}
\includegraphics{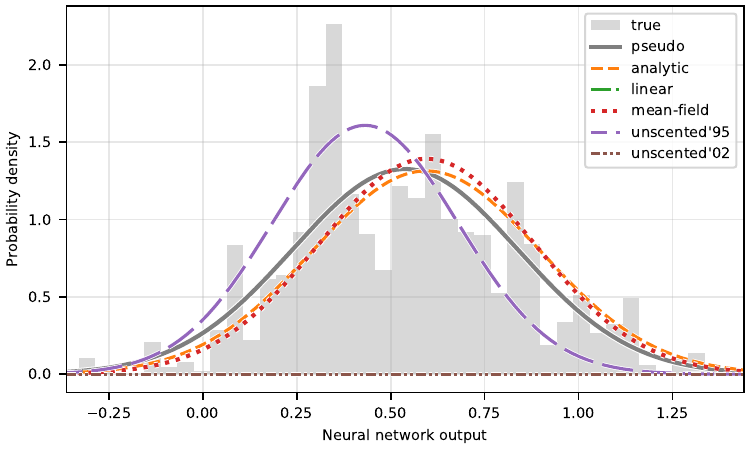}
\end{center}
\caption{Probability distributions for Network(architecture=deep, weights=initialized, activation=heaviside), variance=medium}
\end{figure}\clearpage
\begin{table}[H]\begin{center}\input{generated/tables/moments/RandomNeuralNetworkTestCase__network=deep_initialized_heaviside,variance=Variance.LARGE.tex}
\end{center}
\caption{Comparison of moments for Network(architecture=deep, weights=initialized, activation=heaviside), variance=large}
\end{table}\begin{table}[H]\begin{center}\input{generated/tables/divergences/RandomNeuralNetworkTestCase__network=deep_initialized_heaviside,variance=Variance.LARGE.tex}
\end{center}
\caption{Comparison of statistical distances for Network(architecture=deep, weights=initialized, activation=heaviside), variance=large}
\end{table}\begin{figure}[H]\begin{center}
\includegraphics{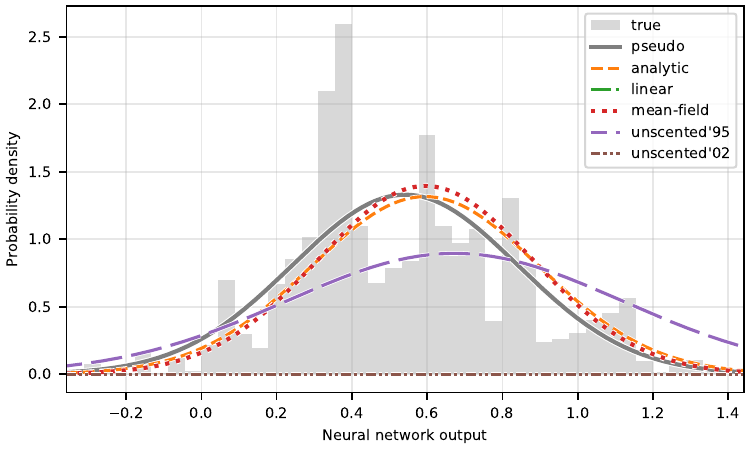}
\end{center}
\caption{Probability distributions for Network(architecture=deep, weights=initialized, activation=heaviside), variance=large}
\end{figure}\clearpage
\begin{table}[H]\begin{center}\input{generated/tables/moments/RandomNeuralNetworkTestCase__network=deep_initialized_heaviside_residual,variance=Variance.SMALL.tex}
\end{center}
\caption{Comparison of moments for Network(architecture=deep, weights=initialized, activation=heaviside residual), variance=small}
\end{table}\begin{table}[H]\begin{center}\input{generated/tables/divergences/RandomNeuralNetworkTestCase__network=deep_initialized_heaviside_residual,variance=Variance.SMALL.tex}
\end{center}
\caption{Comparison of statistical distances for Network(architecture=deep, weights=initialized, activation=heaviside residual), variance=small}
\end{table}\begin{figure}[H]\begin{center}
\includegraphics{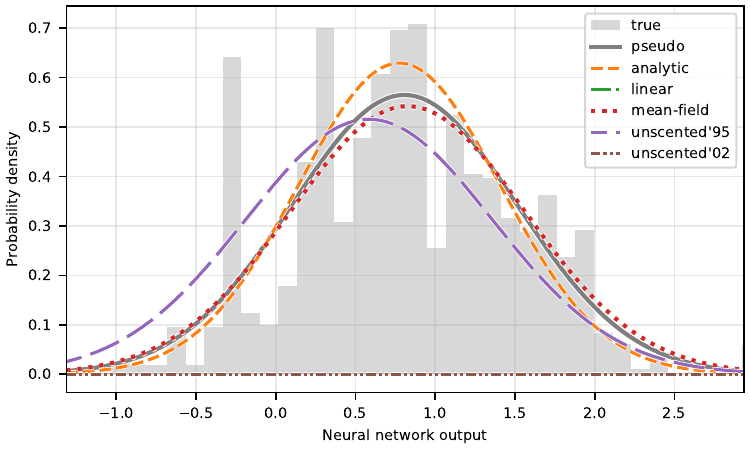}
\end{center}
\caption{Probability distributions for Network(architecture=deep, weights=initialized, activation=heaviside residual), variance=small}
\end{figure}\clearpage
\begin{table}[H]\begin{center}\input{generated/tables/moments/RandomNeuralNetworkTestCase__network=deep_initialized_heaviside_residual,variance=Variance.MEDIUM.tex}
\end{center}
\caption{Comparison of moments for Network(architecture=deep, weights=initialized, activation=heaviside residual), variance=medium}
\end{table}\begin{table}[H]\begin{center}\input{generated/tables/divergences/RandomNeuralNetworkTestCase__network=deep_initialized_heaviside_residual,variance=Variance.MEDIUM.tex}
\end{center}
\caption{Comparison of statistical distances for Network(architecture=deep, weights=initialized, activation=heaviside residual), variance=medium}
\end{table}\begin{figure}[H]\begin{center}
\includegraphics{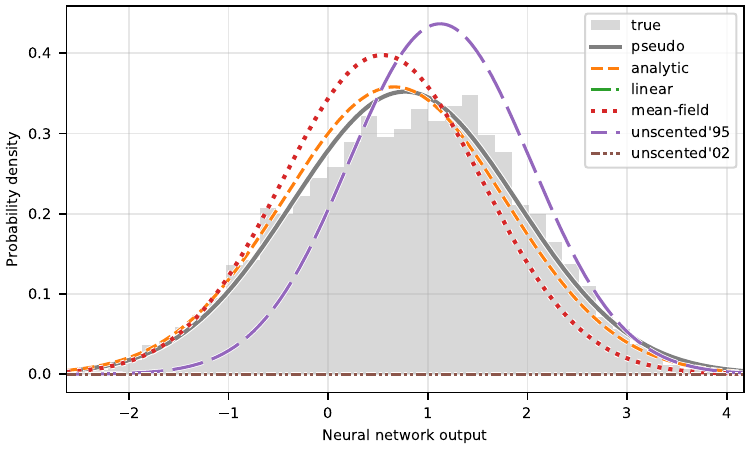}
\end{center}
\caption{Probability distributions for Network(architecture=deep, weights=initialized, activation=heaviside residual), variance=medium}
\end{figure}\clearpage
\begin{table}[H]\begin{center}\input{generated/tables/moments/RandomNeuralNetworkTestCase__network=deep_initialized_heaviside_residual,variance=Variance.LARGE.tex}
\end{center}
\caption{Comparison of moments for Network(architecture=deep, weights=initialized, activation=heaviside residual), variance=large}
\end{table}\begin{table}[H]\begin{center}\input{generated/tables/divergences/RandomNeuralNetworkTestCase__network=deep_initialized_heaviside_residual,variance=Variance.LARGE.tex}
\end{center}
\caption{Comparison of statistical distances for Network(architecture=deep, weights=initialized, activation=heaviside residual), variance=large}
\end{table}\begin{figure}[H]\begin{center}
\includegraphics{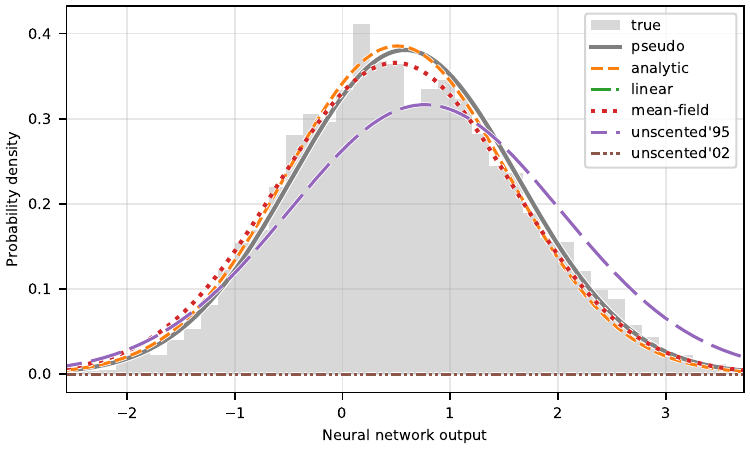}
\end{center}
\caption{Probability distributions for Network(architecture=deep, weights=initialized, activation=heaviside residual), variance=large}
\end{figure}\clearpage

\clearpage
\section{Comments on computational complexity}
\label{app:computational}
The cost of moment matching, like that of linearized covariance propagation, is cubic in the number of hidden neurons due to the need to evaluate ``sandwich'' covariances such as \(A \Sigma A^\intercal\).
\footnote{
  Contrary to occasional claims that the cost of propagating a covariance matrix scales \emph{quadratically} with the number of hidden neurons \citep[\S5]{akgul_deterministic_2025}.
}
By contrast, the computational complexity of evaluating a neural network is quadratic in the number of hidden neurons.
Thus, theoretically, moment matching is one polynomial order of magnitude slower than point prediction, on the same order as linearization.
For neural networks with high input dimension, moment matching is much faster than Monte Carlo, which scales exponentially with dimension.

\begin{after}
  We performed a simple experiment using Jupyter Notebook's \texttt{\%\%timeit} to compare the computational cost of moment matching and Monte Carlo.
  We compare our method with the ground truth method, which runs quasi-Monte Carlo (QMC) on \(2^{16}\) samples.
  We use GeLU because it has the most complicated covariance expression (see Appendix~\ref{app:gelu}).
  
  On ``wide residual'' networks (defined in Appendix~\ref{app:random-neural-networks}):
  \begin{itemize}
    \item Our method takes 88.4 ms ± 5.96 ms.
    \item QMC takes 1.79 s ± 83 ms.
  \end{itemize}

  On ``deep residual'' networks (defined in Appendix~\ref{app:random-neural-networks}):
  \begin{itemize}
    \item Our method takes 23.7 ms ± 957 \(\mu\)s.
    \item QMC takes 878 ms ± 61.8 ms.
  \end{itemize}
\end{after}



\newpage
\section*{NeurIPS Paper Checklist}

The checklist is designed to encourage best practices for responsible machine learning research, addressing issues of reproducibility, transparency, research ethics, and societal impact. Do not remove the checklist: {\bf The papers not including the checklist will be desk rejected.} The checklist should follow the references and follow the (optional) supplemental material.  The checklist does NOT count towards the page
limit. 

Please read the checklist guidelines carefully for information on how to answer these questions. For each question in the checklist:
\begin{itemize}
    \item You should answer \answerYes{}, \answerNo{}, or \answerNA{}.
    \item \answerNA{} means either that the question is Not Applicable for that particular paper or the relevant information is Not Available.
    \item Please provide a short (1--2 sentence) justification right after your answer (even for \answerNA). 
\end{itemize}

{\bf The checklist answers are an integral part of your paper submission.} They are visible to the reviewers, area chairs, senior area chairs, and ethics reviewers. You will also be asked to include it (after eventual revisions) with the final version of your paper, and its final version will be published with the paper.

The reviewers of your paper will be asked to use the checklist as one of the factors in their evaluation. While \answerYes{} is generally preferable to \answerNo{}, it is perfectly acceptable to answer \answerNo{} provided a proper justification is given (e.g., error bars are not reported because it would be too computationally expensive'' or ``we were unable to find the license for the dataset we used''). In general, answering \answerNo{} or \answerNA{} is not grounds for rejection. While the questions are phrased in a binary way, we acknowledge that the true answer is often more nuanced, so please just use your best judgment and write a justification to elaborate. All supporting evidence can appear either in the main paper or the supplemental material, provided in appendix. If you answer \answerYes{} to a question, in the justification please point to the section(s) where related material for the question can be found.

IMPORTANT, please:
\begin{itemize}
    \item {\bf Delete this instruction block, but keep the section heading ``NeurIPS Paper Checklist"},
    \item  {\bf Keep the checklist subsection headings, questions/answers and guidelines below.}
    \item {\bf Do not modify the questions and only use the provided macros for your answers}.
\end{itemize}


\begin{enumerate}

\item {\bf Claims}
    \item[] Question: Do the main claims made in the abstract and introduction accurately reflect the paper's contributions and scope?
    \item[] Answer: \answerYes{} 
    \item[] Justification: The theoretical convergence claim in particular is supported by an experiment.
    \item[] Guidelines:
    \begin{itemize}
        \item The answer \answerNA{} means that the abstract and introduction do not include the claims made in the paper.
        \item The abstract and/or introduction should clearly state the claims made, including the contributions made in the paper and important assumptions and limitations. A \answerNo{} or \answerNA{} answer to this question will not be perceived well by the reviewers. 
        \item The claims made should match theoretical and experimental results, and reflect how much the results can be expected to generalize to other settings. 
        \item It is fine to include aspirational goals as motivation as long as it is clear that these goals are not attained by the paper. 
    \end{itemize}

\item {\bf Limitations}
    \item[] Question: Does the paper discuss the limitations of the work performed by the authors?
    \item[] Answer: \answerYes{} 
    \item[] Justification: The main limitations of this work are that 1) it is theoretical rather than applied, and 2) it leaves softmax-type activations as an open problem.
    \item[] Guidelines:
    \begin{itemize}
        \item The answer \answerNA{} means that the paper has no limitation while the answer \answerNo{} means that the paper has limitations, but those are not discussed in the paper. 
        \item The authors are encouraged to create a separate ``Limitations'' section in their paper.
        \item The paper should point out any strong assumptions and how robust the results are to violations of these assumptions (e.g., independence assumptions, noiseless settings, model well-specification, asymptotic approximations only holding locally). The authors should reflect on how these assumptions might be violated in practice and what the implications would be.
        \item The authors should reflect on the scope of the claims made, e.g., if the approach was only tested on a few datasets or with a few runs. In general, empirical results often depend on implicit assumptions, which should be articulated.
        \item The authors should reflect on the factors that influence the performance of the approach. For example, a facial recognition algorithm may perform poorly when image resolution is low or images are taken in low lighting. Or a speech-to-text system might not be used reliably to provide closed captions for online lectures because it fails to handle technical jargon.
        \item The authors should discuss the computational efficiency of the proposed algorithms and how they scale with dataset size.
        \item If applicable, the authors should discuss possible limitations of their approach to address problems of privacy and fairness.
        \item While the authors might fear that complete honesty about limitations might be used by reviewers as grounds for rejection, a worse outcome might be that reviewers discover limitations that aren't acknowledged in the paper. The authors should use their best judgment and recognize that individual actions in favor of transparency play an important role in developing norms that preserve the integrity of the community. Reviewers will be specifically instructed to not penalize honesty concerning limitations.
    \end{itemize}

\item {\bf Theory assumptions and proofs}
    \item[] Question: For each theoretical result, does the paper provide the full set of assumptions and a complete (and correct) proof?
    \item[] Answer: \answerYes{} 
    \item[] Justification: Proofs are in the appendix.
    \item[] Guidelines:
    \begin{itemize}
        \item The answer \answerNA{} means that the paper does not include theoretical results. 
        \item All the theorems, formulas, and proofs in the paper should be numbered and cross-referenced.
        \item All assumptions should be clearly stated or referenced in the statement of any theorems.
        \item The proofs can either appear in the main paper or the supplemental material, but if they appear in the supplemental material, the authors are encouraged to provide a short proof sketch to provide intuition. 
        \item Inversely, any informal proof provided in the core of the paper should be complemented by formal proofs provided in appendix or supplemental material.
        \item Theorems and Lemmas that the proof relies upon should be properly referenced. 
    \end{itemize}

    \item {\bf Experimental result reproducibility}
    \item[] Question: Does the paper fully disclose all the information needed to reproduce the main experimental results of the paper to the extent that it affects the main claims and/or conclusions of the paper (regardless of whether the code and data are provided or not)?
    \item[] Answer: \answerYes{} 
    \item[] Justification: We give detailed instructions as well as Python code.
    \item[] Guidelines:
    \begin{itemize}
        \item The answer \answerNA{} means that the paper does not include experiments.
        \item If the paper includes experiments, a \answerNo{} answer to this question will not be perceived well by the reviewers: Making the paper reproducible is important, regardless of whether the code and data are provided or not.
        \item If the contribution is a dataset and\slash or model, the authors should describe the steps taken to make their results reproducible or verifiable. 
        \item Depending on the contribution, reproducibility can be accomplished in various ways. For example, if the contribution is a novel architecture, describing the architecture fully might suffice, or if the contribution is a specific model and empirical evaluation, it may be necessary to either make it possible for others to replicate the model with the same dataset, or provide access to the model. In general. releasing code and data is often one good way to accomplish this, but reproducibility can also be provided via detailed instructions for how to replicate the results, access to a hosted model (e.g., in the case of a large language model), releasing of a model checkpoint, or other means that are appropriate to the research performed.
        \item While NeurIPS does not require releasing code, the conference does require all submissions to provide some reasonable avenue for reproducibility, which may depend on the nature of the contribution. For example
        \begin{enumerate}
            \item If the contribution is primarily a new algorithm, the paper should make it clear how to reproduce that algorithm.
            \item If the contribution is primarily a new model architecture, the paper should describe the architecture clearly and fully.
            \item If the contribution is a new model (e.g., a large language model), then there should either be a way to access this model for reproducing the results or a way to reproduce the model (e.g., with an open-source dataset or instructions for how to construct the dataset).
            \item We recognize that reproducibility may be tricky in some cases, in which case authors are welcome to describe the particular way they provide for reproducibility. In the case of closed-source models, it may be that access to the model is limited in some way (e.g., to registered users), but it should be possible for other researchers to have some path to reproducing or verifying the results.
        \end{enumerate}
    \end{itemize}

\item {\bf Open access to data and code}
    \item[] Question: Does the paper provide open access to the data and code, with sufficient instructions to faithfully reproduce the main experimental results, as described in supplemental material?
    \item[] Answer: \answerYes{} 
    \item[] Justification: We provide code in the supplement.
    \item[] Guidelines:
    \begin{itemize}
        \item The answer \answerNA{} means that paper does not include experiments requiring code.
        \item Please see the NeurIPS code and data submission guidelines (\url{https://neurips.cc/public/guides/CodeSubmissionPolicy}) for more details.
        \item While we encourage the release of code and data, we understand that this might not be possible, so \answerNo{} is an acceptable answer. Papers cannot be rejected simply for not including code, unless this is central to the contribution (e.g., for a new open-source benchmark).
        \item The instructions should contain the exact command and environment needed to run to reproduce the results. See the NeurIPS code and data submission guidelines (\url{https://neurips.cc/public/guides/CodeSubmissionPolicy}) for more details.
        \item The authors should provide instructions on data access and preparation, including how to access the raw data, preprocessed data, intermediate data, and generated data, etc.
        \item The authors should provide scripts to reproduce all experimental results for the new proposed method and baselines. If only a subset of experiments are reproducible, they should state which ones are omitted from the script and why.
        \item At submission time, to preserve anonymity, the authors should release anonymized versions (if applicable).
        \item Providing as much information as possible in supplemental material (appended to the paper) is recommended, but including URLs to data and code is permitted.
    \end{itemize}

\item {\bf Experimental setting/details}
    \item[] Question: Does the paper specify all the training and test details (e.g., data splits, hyperparameters, how they were chosen, type of optimizer) necessary to understand the results?
    \item[] Answer: \answerYes{} 
    \item[] Justification: This is specified in the methodology appendices.
    \item[] Guidelines:
    \begin{itemize}
        \item The answer \answerNA{} means that the paper does not include experiments.
        \item The experimental setting should be presented in the core of the paper to a level of detail that is necessary to appreciate the results and make sense of them.
        \item The full details can be provided either with the code, in appendix, or as supplemental material.
    \end{itemize}

\item {\bf Experiment statistical significance}
    \item[] Question: Does the paper report error bars suitably and correctly defined or other appropriate information about the statistical significance of the experiments?
    \item[] Answer: \answerYes{} 
    \item[] Justification: Where applicable, we report uncertainty over Monte Carlo or quasi-Monte Carlo repetitions.
    \item[] Guidelines:
    \begin{itemize}
        \item The answer \answerNA{} means that the paper does not include experiments.
        \item The authors should answer \answerYes{} if the results are accompanied by error bars, confidence intervals, or statistical significance tests, at least for the experiments that support the main claims of the paper.
        \item The factors of variability that the error bars are capturing should be clearly stated (for example, train/test split, initialization, random drawing of some parameter, or overall run with given experimental conditions).
        \item The method for calculating the error bars should be explained (closed form formula, call to a library function, bootstrap, etc.)
        \item The assumptions made should be given (e.g., Normally distributed errors).
        \item It should be clear whether the error bar is the standard deviation or the standard error of the mean.
        \item It is OK to report 1-sigma error bars, but one should state it. The authors should preferably report a 2-sigma error bar than state that they have a 96\% CI, if the hypothesis of Normality of errors is not verified.
        \item For asymmetric distributions, the authors should be careful not to show in tables or figures symmetric error bars that would yield results that are out of range (e.g., negative error rates).
        \item If error bars are reported in tables or plots, the authors should explain in the text how they were calculated and reference the corresponding figures or tables in the text.
    \end{itemize}

\item {\bf Experiments compute resources}
    \item[] Question: For each experiment, does the paper provide sufficient information on the computer resources (type of compute workers, memory, time of execution) needed to reproduce the experiments?
    \item[] Answer: \answerYes{} 
    \item[] Justification: All experiments are laptop-scale, and compute resources are described in the appendices.
    \item[] Guidelines:
    \begin{itemize}
        \item The answer \answerNA{} means that the paper does not include experiments.
        \item The paper should indicate the type of compute workers CPU or GPU, internal cluster, or cloud provider, including relevant memory and storage.
        \item The paper should provide the amount of compute required for each of the individual experimental runs as well as estimate the total compute. 
        \item The paper should disclose whether the full research project required more compute than the experiments reported in the paper (e.g., preliminary or failed experiments that didn't make it into the paper). 
    \end{itemize}
    
\item {\bf Code of ethics}
    \item[] Question: Does the research conducted in the paper conform, in every respect, with the NeurIPS Code of Ethics \url{https://neurips.cc/public/EthicsGuidelines}?
    \item[] Answer: \answerYes{} 
    \item[] Justification: This research does not involve human-subjects research or new data collection.
    \item[] Guidelines:
    \begin{itemize}
        \item The answer \answerNA{} means that the authors have not reviewed the NeurIPS Code of Ethics.
        \item If the authors answer \answerNo, they should explain the special circumstances that require a deviation from the Code of Ethics.
        \item The authors should make sure to preserve anonymity (e.g., if there is a special consideration due to laws or regulations in their jurisdiction).
    \end{itemize}

\item {\bf Broader impacts}
    \item[] Question: Does the paper discuss both potential positive societal impacts and negative societal impacts of the work performed?
    \item[] Answer: \answerNA{} 
    \item[] Justification: We believe that the technical contribution of this paper does not have immediate consequences for society.
    \item[] Guidelines:
    \begin{itemize}
        \item The answer \answerNA{} means that there is no societal impact of the work performed.
        \item If the authors answer \answerNA{} or \answerNo, they should explain why their work has no societal impact or why the paper does not address societal impact.
        \item Examples of negative societal impacts include potential malicious or unintended uses (e.g., disinformation, generating fake profiles, surveillance), fairness considerations (e.g., deployment of technologies that could make decisions that unfairly impact specific groups), privacy considerations, and security considerations.
        \item The conference expects that many papers will be foundational research and not tied to particular applications, let alone deployments. However, if there is a direct path to any negative applications, the authors should point it out. For example, it is legitimate to point out that an improvement in the quality of generative models could be used to generate Deepfakes for disinformation. On the other hand, it is not needed to point out that a generic algorithm for optimizing neural networks could enable people to train models that generate Deepfakes faster.
        \item The authors should consider possible harms that could arise when the technology is being used as intended and functioning correctly, harms that could arise when the technology is being used as intended but gives incorrect results, and harms following from (intentional or unintentional) misuse of the technology.
        \item If there are negative societal impacts, the authors could also discuss possible mitigation strategies (e.g., gated release of models, providing defenses in addition to attacks, mechanisms for monitoring misuse, mechanisms to monitor how a system learns from feedback over time, improving the efficiency and accessibility of ML).
    \end{itemize}
    
\item {\bf Safeguards}
    \item[] Question: Does the paper describe safeguards that have been put in place for responsible release of data or models that have a high risk for misuse (e.g., pre-trained language models, image generators, or scraped datasets)?
    \item[] Answer: \answerNA{} 
    \item[] Justification: We don't use pre-trained language models, image generators, or scraped datasets.
    \item[] Guidelines:
    \begin{itemize}
        \item The answer \answerNA{} means that the paper poses no such risks.
        \item Released models that have a high risk for misuse or dual-use should be released with necessary safeguards to allow for controlled use of the model, for example by requiring that users adhere to usage guidelines or restrictions to access the model or implementing safety filters. 
        \item Datasets that have been scraped from the Internet could pose safety risks. The authors should describe how they avoided releasing unsafe images.
        \item We recognize that providing effective safeguards is challenging, and many papers do not require this, but we encourage authors to take this into account and make a best faith effort.
    \end{itemize}

\item {\bf Licenses for existing assets}
    \item[] Question: Are the creators or original owners of assets (e.g., code, data, models), used in the paper, properly credited and are the license and terms of use explicitly mentioned and properly respected?
    \item[] Answer: \answerYes{} 
    \item[] Justification: This information is stated in the methodological appendix where the asset is introduced.
    \item[] Guidelines:
    \begin{itemize}
        \item The answer \answerNA{} means that the paper does not use existing assets.
        \item The authors should cite the original paper that produced the code package or dataset.
        \item The authors should state which version of the asset is used and, if possible, include a URL.
        \item The name of the license (e.g., CC-BY 4.0) should be included for each asset.
        \item For scraped data from a particular source (e.g., website), the copyright and terms of service of that source should be provided.
        \item If assets are released, the license, copyright information, and terms of use in the package should be provided. For popular datasets, \url{paperswithcode.com/datasets} has curated licenses for some datasets. Their licensing guide can help determine the license of a dataset.
        \item For existing datasets that are re-packaged, both the original license and the license of the derived asset (if it has changed) should be provided.
        \item If this information is not available online, the authors are encouraged to reach out to the asset's creators.
    \end{itemize}

\item {\bf New assets}
    \item[] Question: Are new assets introduced in the paper well documented and is the documentation provided alongside the assets?
    \item[] Answer: \answerYes{} 
    \item[] Justification: Assets include a large collection of experimental results. We explain how they are generated and provide code to reproduce them.
    \item[] Guidelines:
    \begin{itemize}
        \item The answer \answerNA{} means that the paper does not release new assets.
        \item Researchers should communicate the details of the dataset\slash code\slash model as part of their submissions via structured templates. This includes details about training, license, limitations, etc. 
        \item The paper should discuss whether and how consent was obtained from people whose asset is used.
        \item At submission time, remember to anonymize your assets (if applicable). You can either create an anonymized URL or include an anonymized zip file.
    \end{itemize}

\item {\bf Crowdsourcing and research with human subjects}
    \item[] Question: For crowdsourcing experiments and research with human subjects, does the paper include the full text of instructions given to participants and screenshots, if applicable, as well as details about compensation (if any)? 
    \item[] Answer: \answerNA{} 
    \item[] Justification: The paper does not involve crowdsourcing nor research with human subjects.
    \item[] Guidelines:
    \begin{itemize}
        \item The answer \answerNA{} means that the paper does not involve crowdsourcing nor research with human subjects.
        \item Including this information in the supplemental material is fine, but if the main contribution of the paper involves human subjects, then as much detail as possible should be included in the main paper. 
        \item According to the NeurIPS Code of Ethics, workers involved in data collection, curation, or other labor should be paid at least the minimum wage in the country of the data collector. 
    \end{itemize}

\item {\bf Institutional review board (IRB) approvals or equivalent for research with human subjects}
    \item[] Question: Does the paper describe potential risks incurred by study participants, whether such risks were disclosed to the subjects, and whether Institutional Review Board (IRB) approvals (or an equivalent approval/review based on the requirements of your country or institution) were obtained?
    \item[] Answer: \answerNA{} 
    \item[] Justification: The paper does not involve crowdsourcing or research with human subjects.
    \item[] Guidelines:
    \begin{itemize}
        \item The answer \answerNA{} means that the paper does not involve crowdsourcing nor research with human subjects.
        \item Depending on the country in which research is conducted, IRB approval (or equivalent) may be required for any human subjects research. If you obtained IRB approval, you should clearly state this in the paper. 
        \item We recognize that the procedures for this may vary significantly between institutions and locations, and we expect authors to adhere to the NeurIPS Code of Ethics and the guidelines for their institution. 
        \item For initial submissions, do not include any information that would break anonymity (if applicable), such as the institution conducting the review.
    \end{itemize}

\item {\bf Declaration of LLM usage}
    \item[] Question: Does the paper describe the usage of LLMs if it is an important, original, or non-standard component of the core methods in this research? Note that if the LLM is used only for writing, editing, or formatting purposes and does \emph{not} impact the core methodology, scientific rigor, or originality of the research, declaration is not required.
    \item[] Answer: \answerNA{} 
    \item[] Justification: The core method development in this research does not involve LLMs as any important, original, or non-standard components.
    \item[] Guidelines:
    \begin{itemize}
        \item The answer \answerNA{} means that the core method development in this research does not involve LLMs as any important, original, or non-standard components.
        \item Please refer to our LLM policy in the NeurIPS handbook for what should or should not be described.
    \end{itemize}

\end{enumerate}

\end{document}